\def\thickhline{\noalign{\hrule height.9pt}}
\def\medhline{\noalign{\hrule height.7pt}}
\newcolumntype{?}{!{\vrule width 1pt}}
\theoremstyle{thmstyleone}%
\newtheorem{theorem}{Theorem}
\theoremstyle{thmstyletwo}%
\theoremstyle{thmstylethree}%
\begin{document}


\renewcommand{\headeright}{}
\renewcommand{\undertitle}{}


\title{\hspace*{-.11in}Tracking Changing Probabilities via Dynamic Learners}



\date{} 					

\author{Omid Madani\thanks{Much of this work was conducted while at Cisco Secure Workload.}\\
  omidmadani@yahoo.com
}


\newcommand{\ie}{{\em i.e.~}}





\newcommand{\sect}{Sect. }
\newcommand{\sects}{Sections }
\newcommand{\secs}{Sections }
\renewcommand{\sec}{Sect. } 
\newcommand{\fig}{Fig. }
\newcommand{\figs}{Figs. }

\renewcommand{\lg}{\mbox{log}} %
\renewcommand{\log}{\ln} 

\newcommand{\prior}{\mbox{prior}}

\newcommand{\tp}{{p^*}} 



\newcommand{\tpo}[1]{\bar{P}(#1)}

\newcommand{\ep}{\hat{p}}

\newcommand{\epo}[1]{\hat{P}(#1)}


\newcommand{\dev}{dev} 
\newcommand{\dvc}{deviates} 

\newcommand{\mdvc}{multidev} 

\newcommand{\dvt}{d}

\newcommand{\oat}[1]{#1^{(t)}}

\newcommand{\oaj}[1]{#1^{(j)}} 

\newcommand{\ott}[2]{#1^{(#2)}}

\newcommand{\fcapbf}{{\bf FC}}
\newcommand{\fcap}{FC}
\newcommand{\fc}{\fcap} 

\newcommand{\scdbf}{{\bf ScaleDrop}}
\newcommand{\scd}{\mbox{ScaleDrop}}

\newcommand{\qcap}{qcap }
\newcommand{\qcapn}{qcap}


\newcommand{\llsnbf}{\mbox{\bf LogLossNS}}
\newcommand{\llnsbf}{\mbox{\bf LogLossNS}}

\newcommand{\llns}{\mbox{LogLossNS}}
\newcommand{\llsn}{\mbox{LogLossNS}}

\newcommand{\llnssbf}{\mbox{{\bf LogLossNS }}}
\newcommand{\llnss}{\mbox{ LogLossNS }}

\newcommand{\llnsrbf}{\mbox{{\bf loglossRuleNS}}}
\newcommand{\llnsr}{\mbox{ loglossRuleNS}}

\newcommand{\seq}[3]{[#1]_{#2}^{#3}}

\newcommand{\todo}[1]{}
\newcommand{\done}[1]{}

\newcommand{\emamap}{EmaMap}

\newcommand{\emapr}{emaPR}
\newcommand{\qpr}{qPR}
\newcommand{\qprob}{qPR}

\newcommand{\lrmap}{rateMap}

\newcommand{\base}{\mbox{baseline}}

\newcommand{\score}{\mbox{score}}

\newcommand{\itm}[1]{{$#1$}}

\newcommand{\corep}{CORE} 
\newcommand{\core}{CORE } 
\newcommand{\coma}{CORE } 
\newcommand{\comap}{CORE} 

\newcommand{\sd}{SD }
\newcommand{\sdn}{SD}  
\newcommand{\sds}{SDs }  
\newcommand{\sdsn}{SDs}  

\newcommand{\sm}[1]{\mbox{a}(#1)}
\renewcommand{\u}[1]{\mbox{u}(#1)} 

\renewcommand{\sp}[1]{\mbox{sup}(#1)} 

\newcommand{\sma}{SMA } 
\newcommand{\sman}{SMA}
\newcommand{\smas}{SMAs }
\newcommand{\smasn}{SMAs}

\newcommand{\di}{DI }
\newcommand{\din}{DI}  
\newcommand{\dis}{DIs }  
\newcommand{\disn}{DIs}  

\newcommand{\dix}[1]{\mbox{DI}(#1)}

\newcommand{\pgs}{Prediction Games }
\newcommand{\pgsn}{Prediction Games}

\newcommand{\pr}{PR }
\newcommand{\prn}{PR}  
\newcommand{\prs}{PRs }  
\newcommand{\prsn}{PRs}  

\newcommand{\pscore}{\mbox{$\mathcal{L}$}}

\newcommand{\empscore}{\mbox{$\tilde{\mathcal{L}}$} }

\newcommand{\empllns}{\mbox{AvgLogLossNS}}
\newcommand{\empllnsp}{\mbox{AvgLogLossNS }}
\newcommand{\llnsemp}{\mbox{AvgLogLossNS}}

\newcommand{\iscore}{\mbox{ScoringRule}}

\renewcommand{\O}{O} 

\newcommand{\markns}{\mbox{{\bf isNS}}}


\newcommand\iverson[1]{\!\left[\left[#1\right]\right]}

\newcommand{\expd}[1]{\mathbb{E}(#1)}
\newcommand{\E}[1]{\expd{#1}}
\newcommand{\expdb}[2]{{\mathbb{E}_{#1}}(#2)} 
\newcommand{\Eb}[2]{\expdb{#1}{#2}}

\newcommand{\var}[1]{\mathbb{V}(#1)}

\newcommand{\exped}{Expedition }
\newcommand{\expedn}{Expedition}

\newcommand{\ra}{\rightarrow}
\newcommand{\la}{\leftarrow}

\newcommand{\rv}{r.v. }
\newcommand{\rvn}{r.v.}
\newcommand{\rvs}{r.v.'s}

\renewcommand{\c}{C}
\newcommand{\C}{C}
\newcommand{\Y}{Y}

\newcommand{\fontsmaller}{\fontsize{7pt}{7pt}\selectfont}
\newcommand{\fontsmall}{\fontsize{8pt}{9pt}\selectfont}
\newcommand{\fontnormal}{\fontsize{10pt}{12pt}\selectfont}

\newcommand{\ebl}{\mbox{BrierAvg} }
\newcommand{\ebln}{\mbox{BrierAvg}} 

\newcommand{\bitem}{\mbox{QRule}} 

\newcommand{\bl}{\mbox{$\mathcal{L}_2$}}

\newcommand{\brier}{\bl \ }
\newcommand{\briern}{\bl} 

\newcommand{\logloss}{\mbox{log-loss }}
\newcommand{\loglossn}{\mbox{log-loss}}

\newcommand{\llsp}{\mbox{LogLoss }} 
\newcommand{\llspn}{\mbox{LogLoss}} 
\newcommand{\lln}{\mbox{LogLoss}}

\newcommand{\bdist}{\mbox{QuadDist}} 

\renewcommand{\P}{\mathcal{P}}
\newcommand{\Q}{\mathcal{W}} 
\newcommand{\pmap}{W} 

\newcommand{\sdisro}{subdistribution } 
\newcommand{\sdisrop}{subdistribution}

\newcommand{\sdistro}{subdistribution} 

\newcommand{\distro}{\mathcal{D}}

\newcommand{\I}{\mathcal{I}}

\newtheorem{defn}{Definition} 


\newtheorem*{theorem*}{Theorem} 

\newtheorem{prop}{Proposition} 

\newtheorem{lemma}{Lemma}
\newtheorem*{lemma*}{Lemma} 

\newtheorem{corollary}{Corollary}
\newtheorem*{corollary*}{Corollary}

\newcommand{\maxprob}{P_{max}}

\newcommand{\minobs}{O_{min}}

\newcommand{\minlen}{L_{min}}

\newcommand{\NS}{NS }
\newcommand{\ns}{NS }
\newcommand{\nsn}{NS}

\newcommand{\nsm}{NS-marker }
\newcommand{\nsmn}{NS-marker}

\newcommand{\pmin}{p_{min}} 
\newcommand{\minprob}{p_{min}} 

\newcommand{\pns}{p_{NS}}
\newcommand{\oovprob}{p_{NS}}

\newcommand{\pt}{p_0}

\newcommand{\ps}{p_z}
\newcommand{\pg}{p_g} 

\newcommand{\oovthrsh}{c_{NS}}
\newcommand{\nstrsh}{c_{NS}}
\newcommand{\nsthrsh}{c_{NS}}

\newcommand{\up}{\mathcal{U}} 
\newcommand{\lo}{\mathcal{L}} 

\newcommand{\sema}{\mbox{SEMA }} 
\newcommand{\semap}{\mbox{SEMA}} 

\newcommand{\dial}{\mbox{DIAL }}
\newcommand{\dialp}{\mbox{DIAL}}

\newcommand{\size}{\mbox{size}}

\newcommand{\pma}{\alpha} 

\newcommand{\pred}{\mbox{pred}}


\newcommand{\eps}{\epsilon} 

\newcommand{\eq}{Eq. }  
\newcommand{\eqn}{Eq.}

\newcommand{\nodes}{|\nodeset|}  

\newcommand{\edgeset}{E}  

\newcommand{\V}{V}  

\newcommand{\G}{G}  

\newcommand{\n}{n}  
\newcommand{\m}{|\edgeset|}  






\newcommand{\grp}{g}  
\newcommand{\g}{\grp}  

\newcommand{\gsize}{|g|}  

\newcommand{\p}{p}  

\newcommand{\pe}{p_{edge}}  

\newcommand{\pn}{p_{node}}  

\newcommand{\qu}{Qs }  
\newcommand{\qs}{Qs }  
\newcommand{\qun}{Qs}  

\newcommand{\cz}{cell0 }
\newcommand{\czn}{cell0}

\newcommand{\ck}{cellk } 
\newcommand{\ckn}{cellk} 

\newcommand{\qmap}{qMap}
\newcommand{\q}{q} 

\newcommand{\qd}{DYAL } 
\newcommand{\dy}{DYAL } 
\newcommand{\dyal}{DYAL } 
\newcommand{\qdn}{DYAL} 

\newcommand{\bx}{Box } 
\newcommand{\bxn}{Box} 


\renewcommand{\i}{\mathcal{I}}  

\newcommand{\qi}{Modularity}  
\newcommand{\modularityr}{\qi}  

\newcommand{\Qn}{Q_{node}}  

\newcommand{\U}{U}  

\newcommand{\uscore}{u}  
\newcommand{\lscore}{l}  
\newcommand{\bscore}{\tilde{b}}  

\newcommand{\lrule}{LogRule} 

\newcommand{\lr}{\beta} 
\newcommand{\minlr}{\beta_{min}} 
\newcommand{\lrmin}{\beta_{min}} 
\newcommand{\lrmax}{\beta_{max}} 
\newcommand{\maxlr}{\beta_{max}} 

\newcommand{\bsn}{b_{node}}  
\newcommand{\bse}{b_{edge}}  

\newcommand{\bff}{}  

\newcommand{\ebf}{\bf \em}

\newcommand{\avgPR}{avgPR~} 
\newcommand{\topPR}{topPR}

\newenvironment{mysubfloat}[2][]{\subfloat[#1]{#2}}{}

\newcommand{\mysim}{\raise.17ex\hbox{$\scriptstyle\sim$}}
\newcommand{\mytilde}{\raise.17ex\hbox{$\scriptstyle\sim$}}

\newcommand{\KL}{\mbox{KL}}  
\newcommand{\kl}{\mbox{KL}}  
\newcommand{\klsp}{\mbox{KL }}  

\newcommand{\klb}{\mbox{KL}_{b}}  

\newcommand{\klns}{\mbox{KL}_{NS}}  

\newcommand{\ent}{\mbox{H}}  

\newcommand{\vs}{{\em vs.~}}  
\newcommand{\eg}{{\em e.g.~}} 
\newcommand{\etal}{{\em et~al.~}} 
\newcommand{\al}{{\em al~}} 

\newcommand{\co}[1]{}

\newcommand{\squeeze}{\vspace{-1em}}
\newcommand{\vsp}{\vspace}

\newcommand{\D}{\mathbf{D}}
\newcommand{\X}{\mathbf{X}}
\newcommand{\Z}{\mathbf{Z}}
\newcommand{\cgrp}{\hat{g}}  
\newcommand{\rgrp}{g^o}  

\newcommand{\refset}{S^o}  
\newcommand{\cset}{\hat{S}}  

\newcommand{\match}{\mbox{match}}  

\newcommand{\gscore}{s^o}  

\newcommand{\Prec}{precision}  
\newcommand{\R}{recall}  


\maketitle



\begin{abstract} 
Consider a predictor, a learner, whose input is a stream of discrete
items.  The predictor's task, at every time point, is {\em
  probabilistic multiclass prediction}, \ie to predict which item may
occur next by outputting zero or more candidate items, each with a
probability, after which the actual item is revealed and the predictor
learns from this observation.  To output probabilities, the predictor
keeps track of the proportions of the items it has seen.  The stream
is unbounded (lifelong), and the predictor has finite
limited space.
The task is {\em open-ended}: the set of items is unknown to the
predictor and their totality can also grow unbounded.  Moreover, there
is {\em non-stationarity}: the underlying frequencies of items may
change, substantially, from time to time. For instance, new items may
start appearing and a few recently frequent items may cease to occur
again. The predictor, being space-bounded, need only provide
probabilities for
those items which, at the time of prediction, have {\em sufficiently high}
frequency, \ie the {\em salient} items. This
problem is motivated in the setting of {\em \pgsn}, a
self-supervised learning regime where concepts serve as {\em both the
  predictors and the predictands}, and the set of concepts grows over
time, resulting in non-stationarities as new concepts are generated
and used. We
design and study a number of predictors, {\em sparse moving averages
  (\smasn)}, for the task.
 One \sma adapts the sparse exponentiated moving average (EMA) and
 another is based on queuing a few count snapshots,
 %
 keeping
 dynamic per-item histories. Evaluating the predicted probabilities,
 under noise and non-stationarity, presents challenges, and we discuss
 and develop evaluation methods, one based on
 bounding log-loss.  We show that a combination of ideas,
 %
 supporting {\em dynamic predictand-specific} learning
 rates, offers advantages in terms of faster adaption to change (plasticity),
 while also supporting low variance (stability). Code is made available.
\end{abstract}  
  

\vspace*{.25in}

\hspace*{0.5in} \mbox{\em "Occasionally, a new knot of significations
  is formed.. and our natural powers suddenly }\\ \hspace*{0.65in}
\mbox{merge with a richer signification.''}\footnote{When a new
concept, \ie a recurring pattern represented in the system, is well
predicted by the context predictors that it tends to co-occur with
(other concepts in an interpretation of an episode, such as a visual
scene), we take that to mean the concept has been incorporated or
integrated (into a learning and developing system), and we are taking
"skill", in the quote, to mean a pattern too (a sensorimotor
pattern). We found the quote first in \cite{Paolo2017SensorimotorLA},
and a more complete version is: {\em ``Occasionally, a new knot of
  significations is formed: our previous movements are integrated into
  a new motor entity, the first visual givens are integrated into a
  new sensorial entity, and our natural powers suddenly merge with a
  richer signification'' ("knot of significations" has also been
  translated as "cluster of meanings") \cite{MerleauPonty1964ThePO}.} } \ \ \ \ Maurice Merleau-Ponty
\cite{MerleauPonty1964ThePO}

\keywords{Learning and Development, Developmental Non-Stationarity,
  Sparse Moving Averages, Online Learning, Lifelong Open-Ended
  Learning, Multiclass Prediction, Probabilistic Prediction,
  Probability Reliability, Categorical (Semi)Distributions, Change
  Detection, Dynamic Learning Rates, Stability vs Plasticity}


%


\section{Introduction}

The external world is ever changing, and change is everywhere. This
non-stationarity takes different forms and occurs at different time
scales,
including periodic changes, with different periodicities, and abrupt
changes due to unforeseen events. In a human's life, daylight changes
to night, and back, and so is with changes in seasons taking place
over weeks and months \cite{ingold2000}.  Appearances of friends
change, daily and over years.  Language use and culture evolve over
years and decades: words take on new meanings, and new words, phrases,
and concepts are introduced.  Change can be attributed to physical
processes, or agents' actions (oneself, or others).  A learning system
that continually predicts its sensory streams, originating from the
external world, to build and maintain models of its external world,
needs to respond and adapt to such {\em \bf external non-stationarity}
in a timely fashion. As designers of such systems, we do not have
control over the external world, but hope that such changes are not
too rapid and drastic that render learning futile
\cite{nonstationarity2015,Schlimmer1986IncrementalLF,Kuh1990LearningTC,Helmbold1991TrackingDC,
  Widmer1996LearningIT,conceptDrift2023}.

We posit that complex adaptive learning systems need to cope with
another source of change too: that of their own {\bf \em internal}, in particular
{\bf \em developmental, non-stationarity}.
Complex learning systems have multiple adapting subsystems,
homogeneous and heterogeneous, and these parts evolve and develop
over time, and thus their behavior changes over time.  As designers of
such systems, we have some control over this internal change.  For
instance, we may define and set parameters that influence how fast a part can
change. To an extent, modeling the change could be possible too. Of
course, there are tradeoffs involved, for example when setting change
parameters that affect internal change,
favoring adaptability to the external world over stability of internal
workings.

A concrete example of internal \vs external non-stationarity occurs
within the {\em  \pgs} approach
\cite{pgs3,pgs1,expedition1}, where
we are investigating systems and algorithms
to help shed light on 1) how a learning system could develop
high-level (perceptual, structured) {\em concepts} from low-level sensory
information, over time and many learning episodes, primarily in an
unsupervised (self-supervised) continual learning fashion, and 2) how
such concept (structure) learning, and imposing one's structures onto
the input episodes, could be beneficial for subsequent efficient
learning, prediction, and reasoning applications.  In the course of
exploring such systems,
we have found that making a conceptual distinction
between internal (developmental or endogenous) and external
(exogenous) nonstationarity is fruitful. In \pgsn, the
system repeatedly inputs an episode, such as a line of text, and determines {\em which of its higher level
  concepts}, such as words (n-grams of characters), are applicable. This
process of mapping stretches of input characters into higher level
concepts is called {\em  interpretation}.  The system begins its
learning with a low level initial (hard-wired) concept vocabulary, for example the
characters, and over many thousands and millions of episodes of
interpretation (of practice), grows its
vocabulary of represented concepts, which in turn enable it to better
interpret and predict its world.  In \pgsn, each concept,
or an explicitly represented pattern (imagine ngrams of characters for
simplicity), is {\em  both a predictor and a predictand}, and a
concept can both be composed of parts as well as take part in the
creation of new (higher-level) concepts (a {\em  part hierarchy}).
After a new concept is generated, it is used in subsequent
interpretations, and existing concepts co-occurring with it need to
learn to predict it, as part of the process of integrating a new
concept within the system.  Prediction is probabilistic: each
predictor provides a probability with each predictand (item that it
predicts). Supporting probabilistic predictions allows for making
appropriate
decisions based on information utility \cite{pgs3}, such as answering the question of
which higher-level concepts should be
used  in the
interpretation of the current episode.  Thus, the generation and usage of new
concepts leads to internal non-stationarity from the
vantage point of the existing concepts, \ie the existing
predictors.\footnote{Internal \vs external are of course relative to a
point of reference.} From time to time, each concept, as a predictor,
will see a change in its input stream due to the creation and usage of
new concepts (see also \sec \ref{sec:internal}). This change is due to
the collective workings of the different system parts. We envision
such systems to contain different parts (of learning, inference, and
so on) working and developing together, a
growing collective intelligence. In this paper, we focus on advancing
solutions for the prediction task.




%


Previously, we addressed the non-stationarity faced by the predictors
(due to the discovery and use of new concepts) via, periodically,
cloning existing concepts and learning from scratch for all, new and
cloned, concepts \cite{expedition1}. However, this retraining from
scratch can be slow and inefficient, and leads to undesired
non-smoothness or abrupt behavior change in the output of the
system. It also requires extra complexity in implementation (cloning,
support for multiple levels), and ultimately is inflexible if the
external world also exhibits similar non-stationary patterns.  In this
paper, we develop {\em \bf sparse
  moving average (\sman)} techniques that handle such
transitions more gracefully.  We note that the structure of concepts
learned can be probabilistic too, and non-stationarity can be present
there as well, \ie {\em both intra-concept as well as inter-concept
  relations can be non-stationary}. The algorithms developed may
find applications in other non-stationary multiclass domains, such as
in recommenders and personal assistants (continual personalization),
and we conduct experiments on several data sources: Unix command
sequences entered by users, reflecting their changing daily tasks and
projects, and natural language text (in addition to experiments on
synthesized sequences).



%

Thus our
task is online multiclass probabilistic prediction under
non-stationarity, and we use the
term ``item'' in place of ``concept'' or ``class'' for much of the
paper.  We develop techniques for a {\em finite-space predictor},
one that
has limited space (memory), a constant independent of stream length, which
translates to a limit on how low a probability the predictor can
support and predict well.  In this paper, we focus on learning
probabilities above a minimum probability (support) threshold of a given $\pmin$
parameter (set to $0.01$ or $0.001$ in our experiments).
Supporting lower probabilities requires larger space in
general, and the utility of learning lower probabilities also depends
on the input (how stationary the world is).\footnote{We note that the
problem can also be formulated as each predictor having a fixed space
budget, and subject to that space, predict the top $k$ (recently)
highest proportions items. With this, the successfully predicted
probabilities, depending on the input stream, can be lower than a
fixed $\pmin$ threshold. }

The input stream to a predictor is unbounded (or indefinite), and the
set of items the predictor will see is unknown to it, and this set can
also grow without limit. In other words, the task is {\bf \em
  life-long} and {\bf \em open-ended}.  From time to time the
proportion of an item, a predictand, changes.  For instance, a new
item appears with some frequency, \ie an item that had hitherto 0
probability (non-existant), starts appearing with some significant
frequency above $\pmin$. Possibly, a few other items' probabilities
may need to be reduced at around the same time.  The predictand
probabilities that a single predictor needs to support simultaneously
can span several orders of magnitude (\eg supporting both 0.5 and
0.02).  As a predictor adapts to changes, we want it, to the extent
feasible, to change only the probabilities of items that are affected.
As we explain, this is a form of (statistical) stability \vs
plasticity dilemma
\cite{Abraham2005MemoryR,fpsyg2013,driftingDistros2024}.  While fast
convergence or adaptation is a major goal and evaluation criterion,
specially because we require probabilities, which take some time to
learn, we allow for a {\em grace period}, or an allowance for {\em
  delayed response} when evaluating different techniques: a predictor
need not provide a (positive) probability as soon as it has observed
an item. Only after
a few sufficiently recent observations of the item, do we
require
prediction.\footnote{For internal non-stationarity, we have some
control over the rate of change (\eg see \sec \ref{app:simpc}).} We
show how each predictor keeping its own {\em \bf predictand-specific
  learning-rates} and supporting {\em \bf dynamic rate changes}, \ie
both rate increases and decreases (rather than keeping them fixed, or
only allowing change in one direction, such as monotone decreasing),
improves convergence (after a change) and prediction accuracy,
compared to the simpler moving average techniques. These extensions
entail extra space and update-time overhead,
but we show that the flexibility that such extensions afford can be
worth the costs.


We list our assumptions and goals, {\em practical probabilistic
  prediction}, for non-stationary multiclass lifelong streams, as
follows:
\begin{itemize} 
\item (Assumption 1: Salience) Aiming at learning probabilities that
  are sufficiently high, \eg $p \ge 0.001$.  Such a range will be
  useful and adequate in real domains and applications (and 
  a consequence of the space constraints of a predictor).
  \item (Assumption 2: Approximation) Approximately learning the
    probabilities is adequate, for instance, to within a deviation
    ratio of 2 (the relative error can also depend on the magnitude of
    the target probability, \sec \ref{sec:logloss2}).
  \item (Performance Goal: Practical Convergence) Strive for efficient
    learners striking a good balance between speed of convergence and
    eventual accuracy, \eg a method taking 10 item observations to
    converge to within vicinity of a target probability is preferred
    over one requiring 100, even if the latter is eventually
    more stable or precise.
\end{itemize} 


Our focus is therefore neither convergence in the limit nor highly
precise estimation. We also
note
that in practical
applications, there may not be 'real' or true
probabilities,\footnote{In \pgsn, higher-level concepts and their
co-occurrence probabilities are the constructs of the system itself.}
but probabilities are useful foundational {\em means} to achieve
system functionalities beyond non-probabilistic prediction (\eg a
plain ranking) of the
future
possibilities. If a new target event of interest occurs every few
seconds, then a system that converges to sufficiently accurate
probabilities withing a few 10s of seconds is preferred to a system
requiring several minutes or hours. Similarly, if the event occurs
once a day, then convergence in days is strongly preferred over many
weeks or months.\footnote{Higher level concepts in \pgs are less
frequent than the lower-level ones, and the useful conditional
probabilities (co-occurrences) often can span below 0.1 down to 0.001,
and perhaps lower.} To the best of our knowledge, both the task and
problem formulation as well as much of the \sma and evaluation
techniques developed are novel (building on much previous work, see \sec
\ref{sec:related}).





This paper is organized as follows. We next present the formal problem
setting and introduce notation. This includes
the
 idealized non-stationary generation setting (\sec
\ref{sec:ideal}). We have this setting in mind when developing the
algorithms.
\sec \ref{sec:eval} explains the ways we evaluate
and compare the
prediction techniques. 
We review proper scoring,
motivate \logloss (logarithmic loss) over quadratic loss, and adapt
\logloss
to the challenges of non-stationarity, noise, and incomplete
distributions, where we also define drawing from (sampling) and taking
expectations with respect to incomplete distributions. To handle
infrequent (``noise'') and new items, we develop a bounded variant of
\logloss and show when it remains useful, \ie approximately proper,
for evaluation (\sec \ref{sec:llns} and Appendix \ref{sec:np}).
%
%
We next present and develop several \smasn:
The Sparse EMA \sma uses a learning rate (\sec
\ref{sec:ema}), while the \qs \sma is
based on counting and averaging (\sec \ref{sec:qus}), followed by a
hybrid (\sec \ref{sec:qdyal}) which combines the benefits that each
offers: the stability of EMA and the plasticity of \qun.
%
%
%
We begin with sparse EMA and present a convergence property under the
stationary scenario, quantifying the worst-case expected number of
time steps to convergence in terms of a (fixed) learning rate.
Here, we motivate why we seek to enhance EMA, to have a
variable rate, even though it could
handle some, small, drifts. This change alone is not sufficient.
%
We then present the \qu method, based on queuing of count-based
snapshots, which is more responsive to (large) change than
EMA, but has more variance (\sec \ref{sec:qus}). We present analyses
(unbiased estimation), and two variants of queuing.
This is followed by
the hybrid approach we call \qd (\sec \ref{sec:qdyal}), which keeps
predictand-specific learning rates for EMA and uses queuing to
%
adapt to changes in a timely manner.  We conduct both synthetic
experiments, in \sec \ref{sec:syn_exps}, where the input sequences are
generated by known and controlled changing distributions, and on
real-world data, in \sec \ref{sec:real}.
These sequences can exhibit a variety of phenomena (inter-dependencies),
including external non-stationarity.
We find that the hybrid \qd does well in a range of situations,
providing evidence that the flexibility it offers is worth the added
cost of keeping
per-perdictand learning rates and queues.
Appendices contain further material, in particular, the proofs of
technical results,
and additional properties and experiments. Python code for \smas and
experimentation is available on GitHub \cite{dyal_code}.


%









\section{Preliminaries: Problem Setting, Notation, and Evaluation}
\label{sec:prems}
\begin{table}[t]  \center
  \begin{tabular}{ |c| }     \hline 
    $\seq{o}{}{}, \seq{o}{1}{N}$, etc. : infinite \& finite item sequences,
    $\seq{o}{1}{N} := [\ott{o}{1},\cdots, \ott{o}{N}]$, 
   $\oat{o}$ is the item observed at time $t$.   \\ \hline  
  \prn s, \din s, \sdn s : \underline{\bf pr}obabilities (\prsn),  prob. \underline{\bf di}stributions (\disn), and \underline{\bf s}emi-\underline{\bf d}istributions (\sdsn).    \\ \hline 
    SMA : a Sparse Moving Average predictor, such as the sparse EMA and \qu predictors
    (\sect \ref{sec:ema}, \ref{sec:qus}, \ref{sec:qdyal}). \\ \hline
   $\P$, $\Q$, $\Q_1, \Q_2$, etc. : \co{Variables
  denoting \sdsn.} $\P$ is used for a true underlying \sdn, and $\Q$, $\Q_1$, $\Q_2$ are
  estimates (\eg output of an SMA). \\ \hline
    $\seq{\Q}{1}{N}$: \co{ The sequence of} first $N$ outputs of an SMA (predictor).
    Each prediction, $\oat{\Q}$, is a \sd (or converted to one).  \\ \hline
    $\sp{\Q}$: the support set \co{(non-zero \pr items)} of \sd $\Q$. Every \sd has
    finite support  (subset of all items $\I$). \\  \hline
    $\sm{\Q},\u{\Q}$: $\sm{\Q}$
    the \underline{a}llocated \pr mass,
    $\sum_{i \in\sp{\Q}} \Q(i)$,
    $\u{\Q} := 1-\sm{\Q}$ \co{the} (\underline{u}nallocated)\co{ mass)}  \\  \hline
     $\tp$ : For a binary event, observation $o \in \{0, 1\}$, the true probability of the positive, 1, outcome. \\ \hline
     positive outcome, 1 : whenever the target item of interest is observed (0 otherwise).  \\ \hline
    \NS item: a \underline{\bf N}oi\underline{\bf s}e item, \ie {\em \underline{\bf n}ot \underline{\bf s}een recently in the input stream sufficiently often} (currently). \\ \hline 
    $\minprob$ and $\pns$ : 
    used for filtering and scaling, and bounding \logloss on
    \ns items (\sec \ref{sec:evalns}). \\ 
    $\pmin$ can be interpretted as the minimum \pr supported by an \sman. \\ \hline
    $\nsthrsh$ : when
    evaluating, \co{the} count-threshold used by a
    practical \nsm algorithm 
    (\ref{sec:evalns}). \\ \hline
    $\minobs$ : in synthetic experiments,
    observation count threshold on salient items
    (\sec \ref{sec:syn_non_one},
    \ref{sec:syn_multi}). \\ \hline
    EMA : The Exponentiated Moving Average (EMA).
    We develop sparse multiclass variants of it (\sect
    \ref{sec:ema}). \\ \hline
    $\lr$, $\minlr$, $\maxlr$, ... : 
    learning rates (for EMA variants), $\lr \in [0,1]$, $\minlr$ is
     minimum allowed, etc.  \\ \hline
  \end{tabular}
  \vsp{.1in}
\caption{A summary of the main notations, abbreviations and
  terminology, parameters, etc. See \sect \ref{sec:prems}. }
\label{tab:notates}
\end{table}

\begin{figure}[tb]
\begin{center}
\hspace*{-0.41cm}  \subfloat[An example stream of observations, left to right (items:
    \itm{B}, \\ \itm{A}, \itm{J}, ...), and prediction outputs, the boxes,
    by a hypothetical predictor at a few of the time points
    shown.]{{\includegraphics[height=4cm,width=8cm]{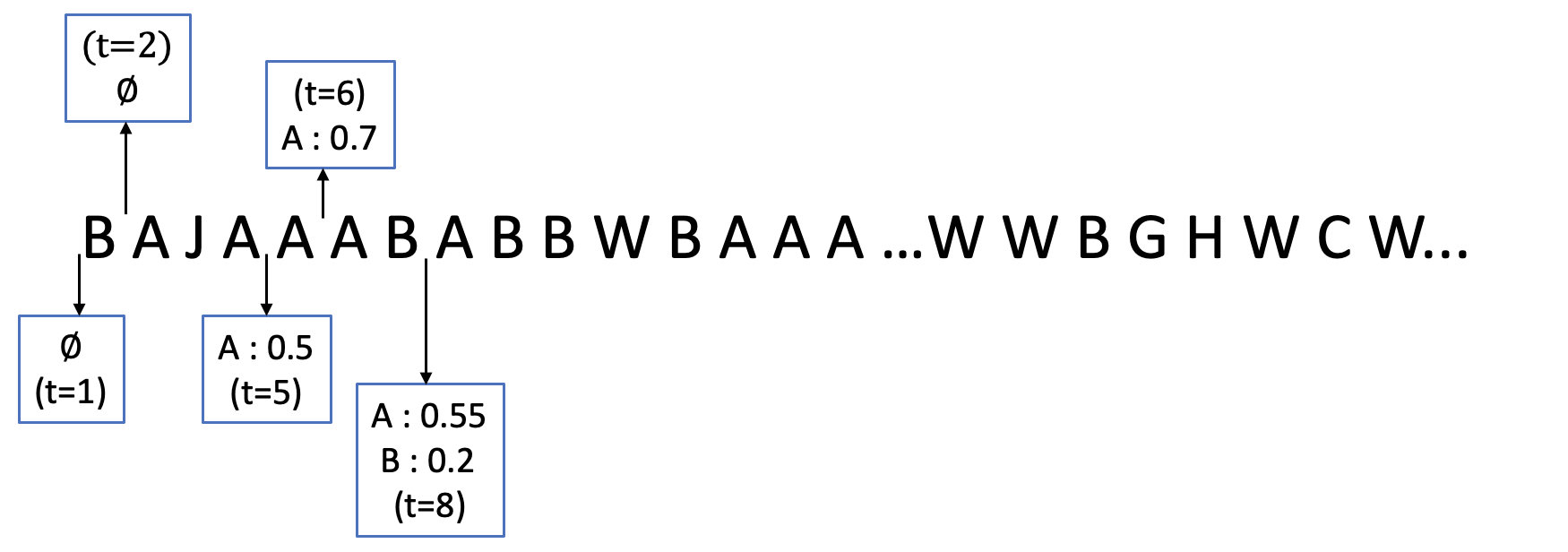}
  }} \hspace*{0.5cm}\subfloat[A categorical distribution generates the (synthetic)
    data stream, in an idealized setting (iid draws), but also itself
    changes every so often (here, at time $t=700$, $1999$,
    etc).]{{\includegraphics[height=4cm,width=5cm]{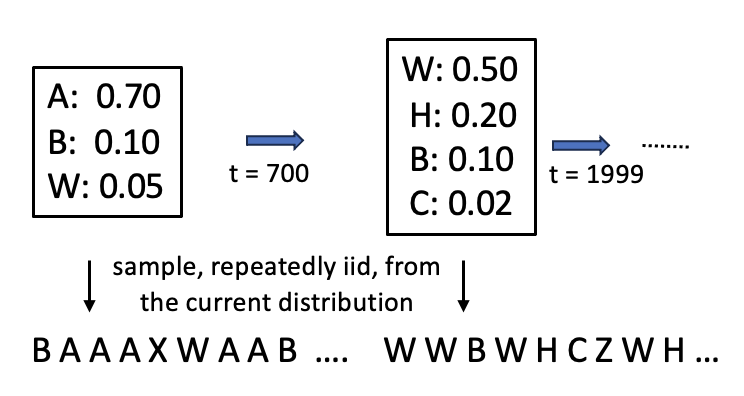}
  }}
\end{center}
\vspace{.2cm}
\caption{(a) An example sequence of items (the capital letters)
  together with several prediction outputs of a hypothetical
  predictor, in rectangular boxes, shown for a few time points (not
  all outputs are shown to avoid clutter). The sequence is observed
  from left to right, thus at time $t=1$, item \itm{B} is observed
  ($o^{(1)}=$ \itm{B}), and respectively at times $2$, $3$, and $4$,
  items \itm{A}, \itm{J}, and again \itm{A} are observed ($o^{(3)}=$
  \itm{J}, $o^{(4)}=$ \itm{A}, etc). A prediction output is a map of
  item to probability (\prn), and can be empty. Mathematically, it is
  a semi-distribution \sd (\sec \ref{sec:sds}). At each time point,
  before the observation, the predictor predicts, \ie provides a \sd
  (zero or more items, each with a \prn). In this example, at times $t
  \le 4$, nothing is predicted (empty maps, or
  $\ott{\Q}{1}=\ott{\Q}{4}=\{\}$, and only two empty outputs, at $t=1$
  and $t=2$, are shown). At $t=8$, $\ott{\Q}{8}$ is predicted, where
  $\ott{\Q}{8} = \{$\itm{A}:0.55, \itm{B}:0.2$\}$ (\ie \itm{A} is
  predicted with \pr 0.55, and \itm{B} with \pr 0.2).
  (b) The input sequence can be imagined as
  being generated by a \sd $\P$:
  at each time point, for the next entry of the sequence, an item is
  drawn, iid (\sec \ref{sec:ideal}). However, the \sd $\P$ changes
  from time to time, such as certain item(s) being removed and new
  item(s) inserted in $\P$. In the above example, in changing from the
  left (initial) $\ott{\P}{1}$ to the right distribution,
  $\ott{\P}{2}$ (at $t=700$), \itm{A} is dropped (becomes 0 \prn),
  while \itm{H} and \itm{C} are inserted, and \itm{W} increases in \pr
  while \itm{B} is unchanged.}
\label{fig:asequence} 
\end{figure}


Our setting is online discrete open-ended
probabilistic prediction under non-stationarity.
%
%
Time is discrete and is denoted by the variable $t$, $t=1,2,3,\cdots$.
A predictor {\em processes} a stream of items (observations): at each
time, it predicts then observes and updates.
Exactly one (discrete) item occurs and is observed
at each time $t$ in the stream.
We use the parenthesized superscript notation for the variable
corresponding to the item observed at time $t$, \ie $\oat{o}$, and
similarly
\co{
The item or observation at time 
$t$ is denoted by the
variable $\oat{o}$.
%
}
for other sequenced objects as well, for instance an estimated
probability $\ep$ at time $t$ is denoted $\oat{\ep}$, though we may
drop the superscript
whenever the context is sufficiently clear (that it is at a particular
time point is implicit).
Each item is represented and identified by an integer (or string) id
in the various data structures (sequences, maps, ...), and can be viewed
as belonging to
an infinite unordered (categorical and nominal\footnote{In this work,
we assume no ordering on the items. However, a future direction is
when the items have known associated costs or rewards. The term
``categorical'' may have finiteness connotations, but not here.})  set
(\eg the set of finite strings).  We use letters \itm{A}, \itm{B},
\itm{C}, $\cdots$ and also integers $0, 1, 2, 3, \cdots$ (but no
ordering implied) for referring to specific items in examples.  We use
the words ``sequence'' and ``stream'' interchangeably, but 'stream' is
used often to imply the infinite or indefinite aspect of the task,
while 'sequence' is used for the finite cases, \eg for evaluations and
comparisons of prediction techniques.
A sequence is denoted by the brackets, or array, notation
$\seq{\ }{}{}$, thus $\seq{o}{1}{N}$ denotes a sequence of $N$
observations: $[\ott{o}{1}$, $\ott{o}{2}$, ..., $\ott{o}{N}]$ (also
shown without commas $[\ott{o}{1}\ott{o}{2}\cdots\ott{o}{N}]$).  \fig
\ref{fig:asequence}(a) shows an example sequence. Online processing,
of an (infinite) stream or a (finite) sequence, means repeating the
{\em predict-observe-update} cycle, \fig\ref{fig:cycle}(a): at each
time point, predicting (a semi-distribution, see next
\sec\ref{sec:sds}) and then observing. The predictor is an online
learner that updates its data structures after each observation. This
mode of operation (and evaluation) has been named {\em prequential} in
probabilistic sequence forecasting \cite{prequential_dawid1984} (see
also \cite{bifet2015Efficient}).
Our problem setting is similar to another recent work on predicting
(infinite-support) changing distributions in computational learning
theory \cite{driftingDistros2024}. In that work, the authors do not
focus on space-bounded predictors (\eg the history can grow with the
stream) and the proposed algorithm, with a dynamic history size, is
not empirically assessed (see \sects \ref{sec:box} and
\ref{sec:related} for further details).  Table \ref{tab:notates}
provides a summary of the main notations and terminology used in this
paper.  \sec \ref{sec:code_description} describes the format we use
for pseudocode, when we have presented pseudocode for a few functions.

%

%
%



\subsection{Probabilities (\prn s), Distributions (\din s), and Semi-distributions (\sdn s)}
\label{sec:sds}


We use the abbreviation \pr to refer to a probability, \ie a real
number in $[0, 1]$. Prediction is probabilistic, and in particular in
terms of {\em semi-distributions}:\footnote{We are borrowing the
naming used in
\cite{Dupont2005LinksBP}.  Other
names include sub-distribution and partial distribution. } 
A (categorical) semi-distribution (\sdn) $\Q$, in this paper, is a
real-valued (or \prn-valued) function defined over a finite or
infinite set of items $\I=\{0, 1, 2,\cdots\}$, such that $\forall i,
\Q(i)\in [0,1]$, and $\sum_{i \in\I} \Q(i) \le 1.0$.  Whether or not
$\I$ is infinite, the {\em support}, denoted $\sup(\Q)$, \ie the set
of items $i$ with $\Q(i) > 0$, is finite in the \sdn s that we work
with in our experiments: In particular, $|\sup(\Q)|$, at any time $t$,
does not exceed 100s to 1000s depending on space limits on the
predictor (or algorithmic parameters, \sec \ref{sec:ns}). Let $\sm{\Q}
:= \sum_{i \in\I} \Q(i)$, \ie the allocated probability mass of $\Q$,
and let $\u{\Q}:=1-\sm{\Q}$, \ie the unallocated or free mass of
$\Q$.\footnote{To help clarity, we often use the symbol ':=' to stand
for "defined as" when defining and introducing notation. We will use
the plain equal sign, '=', for making claims or stating properties,
for instance $\u{\Q} + \sm{\Q}=1$. } When $\sm{\Q} > 0$ we say $\Q$ is
non-empty, and when $0 < \sm{\Q} < 1$, we call $\Q$ a {\em strict}
\sdn. Let $\min(\Q):=\min_{i \in \sup(\Q(i))} \Q(i)$, and defined as 0
when $\Q$ is empty. A useful view (when evaluating, see \sec
\ref{sec:evalns3}, and in proofs, see {\em augmenting} in \sec
\ref{sec:np}) is to imagine that the free mass $\u{\Q}$ is given to a
special item (not seen in the input stream), yielding a {\em
  distribution}: A probability distribution (\din) $\Q$ is a special
case of a \sdn, where $\sm{\Q}=1$. From a geometric viewpoint, \dis
are points on the surface of a unit-simplex, while \sds can reside in
the
interior of the simplex too: a typical predictor's predictions \sd
$\Q$ starts at 0, the empty \sdn, and, with many updates to it, moves
towards the surface of the simplex.  The prediction
output\footnote{Following literature on partially observed Markov
decision processes, we could also say that each predictor maintains a
{\em belief state}, a \sd here, which it reveals or outputs whenever
requested.  } at a certain time $t$ is denoted $\oat{\Q}$, and we can
imagine that a prediction method is any online technique for
converting a sequence of observations, $\seq{o}{1}{N}$, one
observation at a time, into a sequence of predictions $\seq{\Q}{1}{N}$
(of equal length $N$), as shown in \fig \ref{fig:cycle}(b). But note
that $\oat{\Q}$ is output before observing $\oat{o}$ (prediction
occurs before observing and updating).  We use $\P$ to refer to an
underlying or actual \sd generating the observations, in the ideal
setting (see next), and $\Q$ to refer to estimates and prediction
outputs (often strict \sdn s), and $\oat{\Q}$ form {\em moving}
(changing) \sdsn.


\co{
borrowing the naming used
in
\cite{Dupont2005LinksBP}, we refer to the prediction output of a
predictor, at any time point, as a {\em semi-distribution}
\cite{Dupont2005LinksBP},
for which we review definitions and introduce notation now.\footnote{Other
candidate names include sub-distribution and truncated distribution. }
}


\co{
one can imagine that feeding a
sequence of observations, $\seq{o}{1}{N}$, to a prediction method,
yields a sequence of predictions of the same length, $\seq{\Q}{1}{N}$,
as shown in \fig \ref{fig:cycle}(b).

$\seq{o}{1}{N}$

feeding a
sequence of observations, $\seq{o}{1}{N}$, to any prediction method,
yields a sequence of predictions of the same length, $\seq{\Q}{1}{N}$,
as shown in \fig \ref{fig:cycle}(b).
}

A \sd is implemented via a map data structure:\footnote{Although a
list or array of key-value pairs may suffice when the list is not
large or efficient random-access to an arbitrary item is not
required. In this paper, prediction provides the \prs for all items
represented, thus all items are visited, and updating takes similar
time. See also \sec \ref{sec:timestamps}.} item $\rightarrow$
\pr (or $\I \rightarrow [0,1]$), where the keys are item ids and the
values are \prn s (usually positive). The map can be empty, \ie no
predictions (for instance, initially at $t=1$). $\Q(i)$ is 0 if item
$i$ is not in the map.  Periodically, the maps are pruned, items with
smallest \prs dropped, for space and time efficiency.

We use braces and colons for presenting example \sdn s (as in the
Python programming language).  For instance, with $\P=\{0$:0.5,
1:$0.2\}$, $\P$ has support size of 2, corresponding to a binary event,
 and item $0$ has \pr 0.5, and $1$ has probability
$0.2$, $min(\P)=0.2$, $\sm{\P}=0.7$, and $\u{\P}=0.3$, so $\P$ is a
strict \sd in this example.




\co{

Prediction is probabilistic, and the predictions output of a
predictor, at each time point, is represented with a (finite) map data
structure: item $\rightarrow$ probability. The map can be empty, \ie
no predictions (for instance, initially).

Mathematically, the predictions output of a predictor, at any time
point, is a {\em semi-distribution} (or a sub-distribution), often denoted
$\Q$, for which we review and introduce notation.

A semi-distribution, in this paper, is a real-valued function defined
over a finite or infinite set of items $\I=\{1, 2, 3\cdots\}$, such
that $\forall i, \Q(i)\in [0,1]$ (or $0 \le \Q(i) \le 1$), and
$\sum_{i \in\I} \Q(i) \le 1.0$. We use $\sm{\Q}$ to denote $\sum_{i
  \in\I} \Q(i)$. When $\I$ is infinite (\eg the set of positive
integers), the {\em support} of \sd $\Q$, \ie the set of items $i$
with $\Q(i) > 0$, is finite in the semi-distributions that we deal
with (prediction outputs).  A probability distribution $\P$ is a
special (more strict) variant of a semi-distribution, such that
$sum(\P)=1$.

We use the abbreviation \di to refer to a probability
distribution $\P$, and \sd for a semi-distribution, thus, a \di $\P$
and a \sd $\Q$.

The prediction output at time $t$ is denoted
$\oat{\Q}$.

}

\subsubsection{Generating Sequences: an Idealized Stream}
\label{sec:ideal}

We develop prediction algorithms with
an idealized non-stationary setting in mind, which we describe next.
Under the {\bf \em stationary iid assumption} or generation setting,
an idealized setting, we imagine the sequence of observations being
generated via independently and identically drawing (iid) items, at
each time point, from a true (actual or underlying categorical) \sd
$\P$. Because $\sm{\P}$ can be less than 1, not exhaustively covering
the probability space, we explain what it means to generate from a \sd
below.  Under an {\em \bf idealized but non-stationary} setting, we
assume the sequence of observations is the concatenation of
subsequences, of possibly different lengths, where each subsequence
follows the stationarity setting, and is long enough for
(approximately) learning its \prsn. Thus one can imagine a sequence of
\sdn s $\seq{\P}{}{}=\ott{\P}{1},\ott{\P}{2},\cdots$ (moving \sdsn,
\fig \ref{fig:asequence}(b)), and the $j$th {\em stable subsequence}
of observations, $\seq{o}{t_j}{t_{j+1}}$, is generated by
$\ott{\P}{j}$, for the duration $[t_{j}, t_{j+1}]$ (and
$\ott{\P}{j+1}$ generates for $[t_{j+1}+1, \cdots]$). Each (stable)
subsequence corresponds to a {\em stable} period or phase, stationary
iid setting, where the underlying \sd $\P$ is not changing.  This
assumption of periods of stability are also similarly described in
\cite{multicp19}, who study keeping track of a changing multinomial
distribution with fixed known number of items $k$ (see \sec
\ref{sec:related} for further discussion).  We assume that each stable
subsequence
is long enough for learning the new \prsn: any item with a new \pr
should remain stable for some time, \ie {\em be observed sufficiently many
times}, before its \pr changes in a subsequent \sd $\P$. In synthetic
experiments (\sec \ref{sec:syn_non_one} and \ref{sec:syn_multi}), we
use an {observation-count threshold} $\minobs$ as a way of
controlling the degree of non-stationarity, and we report the
prediction performance of various predictors under different $\minobs$
settings.  See also \sec \ref{app:simpc} on how the concept generation
speed (the rate of generation of new items) can be controlled within
Prediction Games, which influences the extent of (internal)
non-stationarity.

\done{Generating from a strict \sd  will be explained shortly}



\subsubsection{Salient and Noise (\nsn) Items, and Generating with Noise}
\label{sec:ns}

\done{Define $\pmin$ .. Distinguish between definition of noise items
  when we assume there is a true $\P$ generating the data, and when we
  don't assume such. Also say that salient items can become noise
  items and vice versa, and this conversion could occur several times
  during a (long) sequence. }

In this paper, we are interested in predicting and evaluating \prn s
$p$ that are sufficiently large, $p \ge \pmin$, where the support
threshold $\pmin=0.01$ in most experiments. With respect to a \sd
$\Q$, an item $i$ is said to be salient iff $\Q(i) \ge \pmin$, and
otherwise it is \ns (\underline{n}on-\underline{s}alient, or
\underline{n}oi\underline{s}e, or not seen recently sufficiently
often). If $\Q$ is the output of a predictor, we say the predictor
(currently) regards $i$ as salient or \ns as appropriate.  In
synthetic experiments, when the underlying \sd $\P$ is known, an item
is either truly salient or \nsn, and as $\P$ is changed, the same item
$i$ may be \ns at one point ($\oat{\P}(i) < \pmin$), and become
salient at another ($\ott{\P}{t+1}(i) \ge \pmin$), and change status
several times.\footnote{The concept of out-of-vocabulary (OOV) in
language modeling is similar. Here, the idea of \ns items is more
dynamic.}  In practice, in evaluations, when we do not have access to
a true $\P$, we use a simple \ns marker based on whether an item has
occurred sufficiently often recently (see \ref{sec:evalns}). A
challenge for any predictor is to quickly learn to predict new salient
items while ignoring the noise.\footnote{Noise, as we have defined, is
relative and practical, and arises from the finiteness of the memory
of a predictor along with other computational constraints (applicable
to any practical predictor, and similar to
randomness defined relative to computational constraints). Given a finite-space predictor with $k$ bits of space, an
item can have positive but sufficiently low frequency, below $2^{-k}$,
therefore not be technically noise, but that would foil the
finite-space predictor. Identifying true noise would require access to
the infinite stream. }


For sequence generation, when drawing from a \sd $\P$, with
probability $\u{\P}=1-\sm{\P}$ we generate a noise (\nsn) item. A
simple option that we use in most experiments is to generate a unique
noise id: the item will appear only once in the sequence. \sec
\ref{sec:syn_multi} explains how we generate sequences with \ns items
in the synthetic experiments, and discusses alternative methods (see
also \ref{sec:llns} on drawing items from \sdsn). For evaluation too, \ns
items need to be handled carefully, specially when using
\loglossn. See \sec \ref{sec:evalns}.

\subsubsection{Binary Sequences, and the Stationary Binary Setting}
\label{sec:binary}

The {\em stationary binary setting}, and binary sequences generated in
such settings, will be useful in the empirical and theoretical
analyses of the prediction techniques we develop.  A stationary binary
sequence can be generated via iid drawing from a \di $\P=\{$ 1$:\tp$,
0$:1-\tp\}$, where $\tp > 0$ is the {\em true or target} \pr to
estimate well via a predictor that processes the binary sequence. A
binary sequence also arises whenever we focus on a single item, say item
$A$, in a given original sequence containing many (unique) items.  The
original sequence $\seq{o}{}{}$ is converted to a binary
$\seq{o_b}{}{}$, $\seq{o}{}{} \rightarrow \seq{o_b}{}{}$, in the
following manner: if $\oat{o}=A$ then $\oat{o_b}=1$, and $\oat{o_b}=0$
otherwise (or $\oat{o_b}=\iverson{\oat{o}=A}$, where $\iverson{x}$
denotes the Iverson bracket on the Boolean condition $x$).

\subsubsection{Examples and Possibilities}
\label{sec:example_seqs}

The ideal generation model described encompasses a variety of
challenging possibilities: a salient item's \pr may change suddenly
and by small (drifts) or large amounts, an extreme case being its \pr
changing to zero or below the support threshold $\pmin$
(disappearance). One salient item, with say probability $0.03$ may
remain stable for a long time, while other items, possibly with higher
\prs, may keep changing (shifting or oscillating or disappearing and
reappearing). \sec \ref{sec:syn_non_one} explores a setting involving
oscillations between two values. A good predictor makes (substantial) changes to its \pr
estimates only when appropriate (keeping \pr estimates of stable items
stable). The proportion of the noise items in the stream can change
too, \eg from 0.01 to 0.05 (proportion of the stream), to 0.1 and
back.  An online predictor needs to cope with and track such changes
effectively. A good evaluation technique should also appropriately
reward robust tracking, for detecting changes not only in the \pr
estimations of the salient items but also the mass allocated to the
\ns (noise) portion of the stream (\sec \ref{sec:evalns}).

Of course, real-world sequences can be even more complex and deviate
from the above idealization in multiple (non-iid sampling) ways,
motivating empirical comparisons of the \smas that we develop on
real-world datasets.

\subsection{Prediction Techniques: Sparse Moving Averages (SMAs)}
\label{sec:smas}

We refer to the predictors we develop as {\em sparse (multiclass)
  moving averages} (SMAs), such as the sparse EMA predictor (\sec
\ref{sec:ema}) and the \qu predictor (\sec \ref{sec:qus}). A moving
average in its most basic form tracks the changes of a scalar value by
keeping a running (recency-biased) average.  Here, instead of a
scalar, the observation at time $t$ can be viewed as a sparse
(possibly infinite) vector of 0s with a single 1 at the dimension
equal to the id of the observed item
\cite{updateskdd08,weib2012},\footnote{Variations of the task are
plausible and useful too, that is zero or more than a single 1, but
relatively few, in the input vector.  We focus on the multiclass case
(exactly one 1) in this paper.} and the techniques developed and
studied here are different ways of keeping {\em several} running
averages, \ie estimated proportions of the items deemed salient, to
predict the future probabilistically. The number of proportions that
are tracked is also kept under a limit, \ie kept sparse, for space and
time efficiency.


Note that there are two major motivations for having a limited
(sparse) memory: one stemming from resource boundedness or the need
for space and time efficiency, and another for adapting to
non-stationarities. The two goals or considerations can be consistent
or can trade off.

Note also that with an all-knowing adversary generating the input
sequence, in particular knowing the workings of the predictor, as soon
as positive \prs are output by the predictor for an item, the
adversary can change the subsequent proportion of that item in the
remainder of the stream, for instance to 0, rendering the \pr
predictions of the predictor, as soon as they become available,
useless.  Our basic assumption is that realistic streams have
structure and are not generated by such adversaries.

\co{
  
estimating the probability
of a single item $A$: Take any sequence, and convert it to a
corresponding sequence

A sequence of items can be viewed as a binary
sequence when we take an

when the item appears in the original
sequence, it is a positive event, and in the corresponding binary
sequence we observe a 1, and other wise.

Assume there is a single item that is our target of interest, and when
it occurs, it is considered a positive outcome (a 1, or a heads
outcome from tossing a 2-sided coin), and the occurrence of any other
item is regarded as a negative outcome (a 0, or a tails).

}

\co{
  under evaluation
  
we thus imagine $\ott{\P}{j}$ to be \sdn
s. When drawing from a \sd $\P$, with probability $1-sum(\P)$ we
generate (or observe) a noise item.

Certain items may appear once in the input stream or more generally
have a small occurrence probability $p$, $p < \pmin$, rendering them
{\em non-salient}. Some such items may increase in probability at some
point, from low probability (\eg 0) to $p \ge \pmin$, \ie and thus
become newly salient.  A challenge for any predictor is to quickly
learn to predict new salient items while ignoring the noise.  We use
\ns (\underline{n}oi\underline{s}e, or "\underline{n}ot
\underline{s}een sufficiently often recently") to refer to low
probability items. When thinking of sequence generation, we thus
imagine $\ott{\P}{j}$ to be \sdn s. When drawing from a \sd $\P$, with
probability $1-sum(\P)$ we generate (or observe) a noise item. \sec
\ref{sec:syn_multi} explains noise generation further. For evaluation
too, \ns items need to be handled appropriately, specially under
evaluation with \logloss (\sec \ref{sec:evalns}).
}








\begin{figure}[]
\begin{center}
  \centering
  \subfloat[Online learning (prequential): the predict-observe-update cycle.]{{\includegraphics[height=3cm,width=6cm]{
        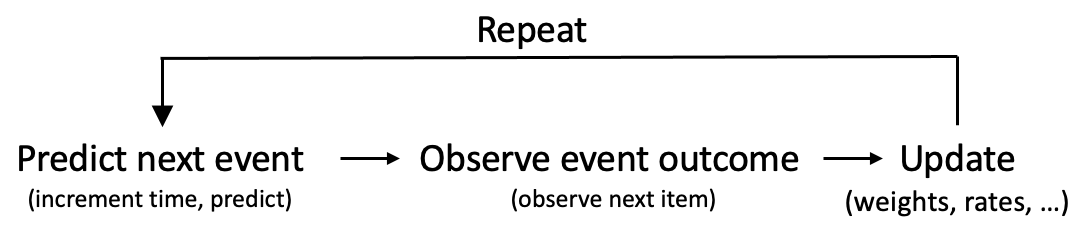} }}\hspace*{1cm}
  \subfloat[An SMA
    converts an item stream into a stream of (moving) \sdn s, $\seq{\Q}{}{}$, the
    predictions.]{{\includegraphics[height=3.4cm,width=6cm]{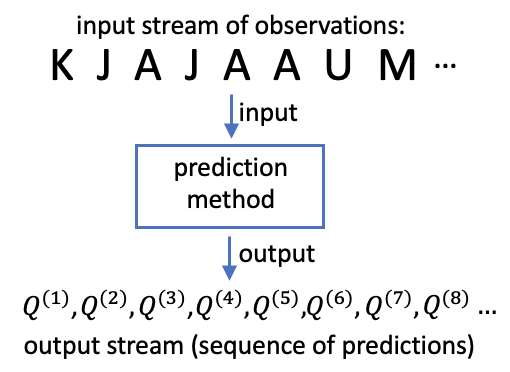}
  }}
\end{center}
\vspace{.12cm}
\caption{(a) Online processing a stream or sequence means repeating
  the prequential {\em predict-observe-update} cycle (b) An
  SMA converts a stream of item observations, $\seq{o}{}{}$, to a
  stream of predictions, $\seq{\Q}{}{}$, or $\seq{o}{1}{N} \rightarrow
  \seq{\Q}{1}{N}$. }
\label{fig:cycle} 
\end{figure}





\section{Evaluating Probabilistic Predictors}
\label{sec:eval}



In this section we develop and discuss techniques for evaluating
open-ended probabilistic multiclass sequential prediction in the face of
non-stationarity, in particular so that we can compare different
prediction techniques. As \fig \ref{fig:cycle}(b) shows, a prediction
technique is any method that converts a sequence of observations,
$\seq{o}{1}{N}$, into a same-length sequence, $\seq{\Q}{1}{N}$, of
predictions, each sequence member $\oat{\Q}$ is an \sdn, where
$\oat{\Q}$ is output before $\oat{o}$ is observed. We develop methods
for evaluating the predictions $\seq{\Q}{1}{N}$, given
$\seq{o}{1}{N}$, and possibly other extra information. A minimal
desirable would be that in the stationary iid setting, when there is a
true non-changing model (a distribution) that is generating the data,
that a predictor quickly converges in its predictions to a good
approximation of the true model. We develop this further next. We then
describe and motivate proper scoring (under unknown true
probabilities) and explain why we prefer \logloss (logarithmic loss)
over quadratic loss, and adapt \logloss
to \sdsn, and the challenges of noise and non-stationarity.

\subsection{Deviation Rates: when True \prs are Known}

For the purposes of evaluating the estimated \prsn, we first assume we
know the true \prsn, which is the case in our synthetic experiments
(\sec \ref{sec:syn_exps}). In particular, we begin with the binary iid
setting of \sec \ref{sec:binary} where we have a generated binary
sequence with $\tp$ being the \pr of 1.
We take a prediction method and feed it this sequence: it processes
this sequence and generates a sequence of probability estimates. Let
$\seq{\ep}{1}{N}$ denote its sequence of estimates for the positive,
1, outcome. Thus $\oat{\ep}$ is the estimated \pr at time $t$,
$\oat{\ep} \ge 0$.  Here, we motivate performance or error measures
(quality of output probabilities) that are based on relative error or
ratio, a function of $\frac{\oat{\ep}}{\tp}$, \vs based on the
difference in magnitude $\tp-\oat{\ep}$ (such as the quadratic
$(\tp-\oat{\ep})^2$).


In many applications, including the \pgs setting,
different items carry different rewards, and the system needs to
select a subset of items to optimize an expected utility measure
(expected reward). Sufficient accuracy of the \prs that the
different items receive is important.  While we do not specify the
rewards in this paper, nor the particular way the expected reward is
computed, in general, we seek to limit the {\em relative} error in
estimating probabilities. For instance, as we are computing
expectations, confusing a one-in-twenty event (true \pr or target
probability $\tp=0.05$) with a one-in-one-hundred event ($\tp=0.01$),
could be considered a worse error, compared to predicting an event
with $\tp=0.55$ with an estimated \pr of $0.5$ (even though the
absolute difference of the latter two is higher).

In particular, with the true probability $\tp$, $\tp > 0$, and the
estimates $\oat{\ep}$ forming a sequence $\seq{\ep}{1}{N}$, for a
choice of a deviation bound $d$, eg $d=2$, we define the {\em
  deviation-rate} $\dev(\seq{\ep}{1}{N}, \tp, d)$ as the following sequence
average:


\begin{align}
\hspace*{-.35in}  \label{eq:devr}
   \dev(\seq{\ep}{1}{N}, \tp, d) = \frac{1}{N}\sum_{t=1}^N \dvc(\oat{\ep}, \tp, d) \hspace*{.21in} \mbox{(the deviation rate, with  $\tp > 0$)}
\end{align}
\begin{align}
\hspace*{-.35in}  \label{eq:devc}
   \dvc(\ep, \tp, d) = \iverson{\ep = 0 \mbox{, or (otherwise), }  \max(
     \frac{\tp}{\ep}, \frac{\ep}{\tp} ) > d } \hspace*{.21in} \mbox{(it is 0 or 1)} 
\end{align}


where $\iverson{x}$ is the Iverson bracket, yielding value 1 when
condition $x$ is true, and 0 otherwise. Note that when $\ep$ is 0, the
condition is true and $\dvc(0, \tp, d)$ is 1 for any $d$.  We seek a
small {\em deviation-rate}, \ie a large deviation, when
$\max(\frac{\tp}{\oat{\ep}}, \frac{\oat{\ep}}{\tp} ) > d$, occurs on a
sufficiently small fraction of times $t$ only, \eg say 10\% of the
time or less. While initially, the first few times a predictor
observes an item, we may get large deviations, we seek methods that
reach lowest possible deviation-rates as fast as possible.  We do not
specify what $d$ should be, nor what an acceptable deviation rate
would be (which depends on the sequence, \ie what is feasible, in
addition to the prediction method), but, in our synthetic experiments
(\sec \ref{sec:syn_exps}), where we know the true \prsn, we report
deviation rates for a few choices of $d$. More generally, we track
multiple item \prn s, and we will report two variations: at any time
$t$, when the observed item's estimated \pr deviates, and when {\em
  any} (salient) item in $\P$ suffers too large of a deviation (\sec
\ref{sec:syn_multi}).

Deviation rates are easy to understand (interpretable), but often in
real-world settings, we do not have access to true \prn s.  Therefore,
we also report on {\em logarithmic loss (log-loss)}, and this approach
is discussed and developed next, in particular for the case of \sds
and \ns items.









\co{
These probabilities are used to determine how a system sees or
interprets its input, in terms of the concepts (patterns), that it has
built so far.

Because the concepts are built by the system, the question of the
probability of their occurrence may not be well defined.

but one challenge is that

we don't know how to measure accurate accuracy and how how accurate we
need to be.
}

%


\subsection{Unknown True \prsn: Proper Scoring}

Data streams from a real world application are often the
result of the confluence of multiple interacting and changing
(external) complex processes. But even if we assume there is a
(stationary) \di $\P$ that is sampled iid, and thus true \prs exist,
remaining valid for some period of time, see \sec \ref{sec:ideal}),
%
most often, we do not have access to $\P$.
This is a major challenge to evaluating probabilistic predictions.
There exist much work and considerable published research literature,
in diverse domains such as meteorology and finance,
addressing the issues of unknown underlying \prsn, when predicting
with (estimated) \prsn. In particular, the concept of {\em proper
  scoring rules} has been developed to encourage {\em reliable}
probability forecasts\footnote{Other terms, or related concepts, used in the literature
include calibrated, honest, trust-worthy, and incentive-compatible
probabilities. }
\cite{brier1950,good1952,toda63,winkler69,gneiting2007,ml_calibration_survey21,rev2022}. A
proper scoring rule would lead to the best possible score (lowest if
defined as costs, as we do here) if the technique's \pr outputs
matched the true \prsn.  Not all scoring rules are proper, for
instance, plain absolute loss (and linear scoring) is not proper
\cite{brier1950}. And some are {\em strictly} proper, \ie the true \di
is the unique minimizer of the score \cite{gneiting2007}. Two major
families of proper scoring rules are either based on (simple functions
of) the predicted \prs raised to a power, a quadratic variant is named
after Brier \cite{brier1950}, or based on the logarithm (log
probability ratios) \cite{good1952,toda63}: \logloss and
variants. Propriety may not be sufficient, and a scoring rule also
implicitly imposes measures of closeness or (statistical) distance on
the space of candidate \sdn s, and we may prefer one score over
another based on their differences in sensitivity.

We review proper scoring and the two major proper scorers below,
\logloss and quadratic loss, specializing their distance property for
our more general setting of \sds in particular, and we argue below,
that in our utility-based or reward-based application, we strongly
favor \logloss over quadratic scoring, for its better support for \prs
with different scales and decision theoretic use of the estimations.
This is despite several very desirable properties of the quadratic
(such as boundedness and symmetry) in contrast to \loglossn, and that
we have to do extra work to adapt \logloss for handling 0 \prs (\ns
items). See also \cite{local2020,univ2018} for other considerations in
favor of \loglossn. We note that our proposal for evaluating the noise
portion of the stream is useful under either loss (see \sec
\ref{sec:evalns3}), and our code reports either loss \cite{dyal_code}.

\co{ After describing scoring further, we go over a few of the
  properties and argue that the Brier score, despite its attractive
  properties, does not meet our evaluation goals. Log-Loss is better
  suited, but has to be adapted for our non-stationarity setting, in
  particular the issues of unseen or noise items, and we discuss and
  motivate our choices.  }

\subsection{Scoring Semi-Distributions (\sdn s)}
\label{sec:scoring}

For scoring (evaluating) a candidate distribution, we assume there
exists the actual or true, but unknown, distribution (\din) $\P$. We
are given a candidate \sd $\Q$ \ (\eg, the output of a predictor), as
well as $N$ data points, or a sequence of $N$ observations
$\seq{o}{1}{N}$, $t=1, \cdots, N$, to (empirically) score the
candidate.  Recall that we always assume that the support set of a \sd
is finite, and here we can assume the set $\I$ over which $\P$ and
$\Q$ are defined is finite too, \eg the union of the supports of
both. It is possible that the supports are different, \ie there is
some item $j$ that $\Q(j) > 0$, but $\P(j) = 0$, and vice versa.

A major point of scoring is to assess which of several \sd candidates
is best, \ie closest to the true $\P$ in some (acceptable) sense, even
though we do not have access to the \di $\P$.  It is assumed that for
the time period of interest, the true generating \di $\P$ is not
changing, and the data points are drawn iid from it. The score of a
\sd $\Q$ is an expectation, and different scorers define the
real-valued function $\iscore(\Q, o)$ (\ie the ``scoring rule'')
differently:
  \begin{align}
    \hspace*{-.2in}  \pscore(\Q|\P) = \expdb{o \sim \P}{\iscore(\Q, o)} \hspace*{0.3in}
  \mbox{(loss of a \sd $\Q$, given true \di $\P$)} 
\end{align}


Thus the \pscore() (score) of a \sd $\Q$ is the expected value of the
function \iscore(), where the expectation is taken with respect to
(wrt) the true \di $\P$.\footnote{Following \cite{selten98}, we have
used the conditional probability notation (the vertical bar), but with
the related meaning here that the loss (or score) is wrt an assumed
underlying \di $\P$ generating the observations (instead of being wrt an event).}  In practice, we
compute the average \empscore() \ as an empirical estimate of \pscore:

\begin{align}  
 \hspace*{-.25in}\empscore(\Q,\seq{o}{1}{N}) = \frac{1}{N} \sum_{t=1}^N  \iscore(\Q, \oat{o})
\hspace*{0.25in} \mbox{(an empirical estimate of loss)}
\end{align}
computed on the $N$ data points $\seq{o}{1}{N}$ (assumed drawn iid from $\P$).  Thus,
for scoring a candidate \sd $\Q$, \ie computing its (average) score,
we do not need to know the true \di $\P$, but only need a sufficiently
large sample of $N$ points drawn iid from it. However, for
understanding the properties of a scoring rule, the actual \sd \ $\P$
is important. In particular, as further touched on below, the score is
really a function of an (implicit) statistical distance
between the true and candidate distributions. The details of the
scoring rule definition, \iscore(), determines this
distance. Different scoring functions differ on the scoring rule
$\iscore(\Q, o)$, \ie how the \pr outputs of \sdn, given the observed
item was $o$, is scored. We also note that the minimum possible score,
when $\Q=\P$ (for strictly proper losses), can be nonzero. We next
develop a bounded version of \loglossn. Appendix \ref{sec:brier}
explains why the \brier (Brier) loss is not sufficiently sensitive and
does not suit our task, despite several attractive properties.


\subsection{On the Sensitivity of log-loss}
\label{sec:logloss2}



The \llsp and the log rule are based on simply taking the $\log()$
(the natural log) of the probability estimated for the observation $o$
\cite{good1952,toda63}, and thus should be sensitive to relative
changes in the \prn s (\pr ratios):
\begin{align}
\hspace*{-.35in}   \label{eq:logrule}
\lln(\Q|\P) := \expdb{o \sim \P}{\lrule(\Q, o)}, \mbox{ where \ }  \lrule(\Q, o) = -\log(\Q(o)).
\end{align}



The scoring rule can be viewed as the well-known KL (Kulback-Leibler)
divergence of $\Q$ from the Kronecker vector, \ie the distribution
vector with all 0s and a 1 on the dimension corresponding to the
observed item $o$. This boils down to $-\log(\Q(o))$.
%
%
Similar to the development for \bl, where \bitem \ is shown in terms
of the Euclidean distance (Appendix \ref{sec:brier}, Lemma
\ref{lem:quad_eq}), here \llsp can be shown to correspond to the KL
divergence \cite{kl1951,cover91} on \sdsn:




\begin{defn} The entropy of a non-empty \sd $\P$ is defined as:%
\begin{align}
  \hspace*{1cm} \ent(\P)=-\sum_{i \in \I} \P(i)\log(\P(i))  
\end{align}
The KL divergence of $\Q$ from $\P$ (asymmetric), also known as the
relative entropy, denoted $\kl(\P || \Q)$, is a functional, defined
here for non-empty \sd $\P$ and \sd $\Q$: 
\begin{align}\label{def:kl}
\hspace*{1cm} \kl(\P || \Q) := \sum_{i \in \I} 
\P(i)\log\frac{\P(i)}{\Q(i)}.
\end{align}
\end{defn}

Note that in both definitions, by convention (similar to the case for
\dis in \cite{cover91}), when $\P(i)=0$, we take the product
$\P(i)\log(x)$ to be 0 (or $i$ need only go over $\sup(\P)$, in the
above definitions). The divergence $\kl(\P || \Q)$ can also be
infinite (denoted $+\infty$) when $\P(i)>0$ and $\Q(i)=0$.

\begin{lemma} \label{lem:ll_kl}
  Given \di $\P$ and \sd $\Q$, defined over the same finite set $\I$, 
\begin{align*}
\hspace*{1cm} \lln(\Q|\P) = \ent(\P) + \kl(\P || \Q).
\end{align*}
\end{lemma}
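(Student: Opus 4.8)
The plan is to expand the definition of $\lln(\Q|\P)$ directly and split the summand into two pieces whose sums are exactly $\ent(\P)$ and $\kl(\P\|\Q)$. By \eqref{eq:logrule}, the loss is $\lln(\Q|\P) = \expdb{o\sim\P}{-\log\Q(o)}$; since $\P$ is a distribution over the finite set $\I$, the expectation is the finite sum $\sum_{i\in\I}\P(i)\bigl(-\log\Q(i)\bigr)$. This is where I need to be a little careful about the meaning of $o\sim\P$: because $\P$ is a genuine \din\ here (not a strict \sdn), sampling from $\P$ reduces to the ordinary categorical draw with no noise outcome, so the expectation really is just this weighted sum over $\sup(\P)$. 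For indices $i$ with $\P(i)=0$ the term is $0$ by the convention recalled just after the KL definition, so restricting to $\sup(\P)$ or summing over all of $\I$ gives the same value.

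Next I would perform the algebraic identity $-\log\Q(i) = \log\frac{\P(i)}{\Q(i)} - \log\P(i)$, valid whenever $\P(i)>0$, and multiply through by $\P(i)$:
\begin{align*}
\lln(\Q|\P) = \sum_{i\in\I}\P(i)\Bigl(\log\frac{\P(i)}{\Q(i)} - \log\P(i)\Bigr) = \sum_{i\in\I}\P(i)\log\frac{\P(i)}{\Q(i)} \;-\; \sum_{i\in\I}\P(i)\log\P(i).
\end{align*}
The first sum is $\kl(\P\|\Q)$ by \eqref{def:kl} and the second is $-\ent(\P)$ by the entropy definition, so $\lln(\Q|\P) = \kl(\P\|\Q) + \ent(\P)$, which is the claim. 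The splitting of the sum into two separate sums is legitimate term-by-term on $\sup(\P)$ (a finite set), and then each sum is independently extended to all of $\I$ using the zero-term convention, so there is no convergence subtlety.

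The only real obstacle is the bookkeeping around infinities and the zero convention: if there is an item $i$ with $\P(i)>0$ but $\Q(i)=0$, then $\lrule(\Q,i) = -\log 0 = +\infty$, so $\lln(\Q|\P) = +\infty$, and simultaneously $\kl(\P\|\Q)=+\infty$ while $\ent(\P)$ is finite, so the identity $+\infty = \ent(\P) + \infty$ still holds; I would note this case explicitly so the term-by-term rearrangement above is understood to take place in $[0,+\infty]$ with the usual conventions. When every item in $\sup(\P)$ has $\Q(i)>0$, all terms are finite reals and the rearrangement is elementary. I would also remark that, unlike the \brier\ case, no auxiliary ``remainder'' item is needed: the loss $\lrule(\Q,o)$ only ever looks at the single coordinate $\Q(o)$, so the fact that $\Q$ may be a strict \sdn\ (with $\sm{\Q}<1$) plays no role in the computation beyond possibly making some $\Q(i)$ small; this is exactly the sensitivity-to-ratios behavior advertised in \sec \ref{sec:logloss2}.
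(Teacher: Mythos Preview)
Your proof is correct and follows essentially the same route as the paper: expand $\lln(\Q|\P)$ as $-\sum_{i\in\sup(\P)}\P(i)\log\Q(i)$, then add and subtract $\ent(\P)$ (equivalently, insert $\log\P(i)-\log\P(i)$ into each summand) to split off the $\kl$ term. Your additional remarks on the $\Q(i)=0$ case and the zero-term convention are more explicit than the paper's treatment but do not change the argument.
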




This is established when both are \disn, for example by
\cite{selten98}, and the derivation does not change when $\Q$ is a
\sdn: $\lln(\Q|\P) = -\sum_{i \in \sup(\P)} \P(i)\log(\Q(i)) =
-\sum_{i \in \sup(\P)} \P(i)\log(\Q(i)) - \ent(\P) + \ent(\P)$ (\ie
add and subtract $\ent(\P)$), and noting that the first two terms is a
rewriting of the \kl(), establishes the lemma.

Note that the entropy term in \logloss is a fixed positive offset:
only \kl() changes when we try different candidate \sds with the same
underlying $\P$.  As in \sect \ref{sec:brier} (for \brier and
Euclidean), this connection to distance
helps us see several properties of \lln. From the properties of \klsp
\cite{cover91}, it follows that \llsp is strictly proper (and
asymmetric, unlike \briern). Furthermore, the expression for KL
divergence helps us see that \logloss is indeed sensitive to the
ratios (of true to candidate probabilities). In particular,
\logloss can grow without limit or simply {\bf explode} (\ie become
infinite) when $\P(i) > 0$ and $\Q(i) = 0$, for instance, in our
sequential prediction task, when an item has not been seen before.  We
discuss handling such cases in the next section.  On the other hand,
we note that the dependence on the ratio is dampened with a $\log()$
and weighted by the magnitude of the true \prsn, as we take an
expectation. In particular, we have the following property, a corollary
of the connection to \kl(), which quantifies and highlights the extent
of this ratio dependence:


\vspace*{.2in}
\begin{corollary} (\llsp is much more sensitive to relative changes in larger \prsn)
  Let $\P$ be a \di over two or more items (dimensions), with $\P(1) >
  \P(2) > 0$, and let $m=\frac{\P(1)}{\P(2)}$ (thus $m > 1$).
  Consider two \sdn s $\Q_1$ and $\Q_2$, with $\Q_1$ differing with
  $\P$ on item 1 only, in particular $\P(1) > \Q_1(1) > 0$, and let
  $m_1=\frac{\P(1)}{\Q_1(1)}$ (thus, $m_1 > 1$). Similarly, assume
  $\Q_2$ differs with $\P$ on item 2 only, and that $\P(2) > \Q_2(2) >
  0$, and let $m_2=\frac{\P(2)}{\Q_2(2)}$.  We have $\lln(\Q_2|\P) <
  \lln(\Q_1|\P)$ for any $m_2 < m_1^{m}$.
\end{corollary}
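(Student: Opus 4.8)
The plan is to invoke Lemma~\ref{lem:ll_kl} to reduce the claim to a one-line comparison of two single-term KL divergences. Since $\P$ is a \din, Lemma~\ref{lem:ll_kl} gives $\lln(\Q_1|\P) = \ent(\P) + \kl(\P || \Q_1)$ and $\lln(\Q_2|\P) = \ent(\P) + \kl(\P || \Q_2)$. The entropy offset $\ent(\P)$ is a fixed finite quantity (finite support), so it cancels in the difference and it suffices to prove $\kl(\P || \Q_2) < \kl(\P || \Q_1)$.

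Next I would evaluate each divergence using the hypothesis that $\Q_1$ and $\Q_2$ each differ from $\P$ in exactly one coordinate. From Definition~\ref{def:kl}, $\kl(\P || \Q_1) = \sum_{i\in\I}\P(i)\log\frac{\P(i)}{\Q_1(i)}$, and every summand with $i\ne 1$ vanishes because $\Q_1(i)=\P(i)$ there; this leaves $\kl(\P||\Q_1) = \P(1)\log\frac{\P(1)}{\Q_1(1)} = \P(1)\log m_1$. The identical argument gives $\kl(\P||\Q_2) = \P(2)\log m_2$. Both quantities are finite and positive: $\Q_1(1),\Q_2(2) > 0$ by hypothesis (so no explosion), and $m_1, m_2 > 1$ since each of $\Q_1(1),\Q_2(2)$ is strictly below the corresponding true probability. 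Note that although $\Q_1$ and $\Q_2$ are here strict \sdn s (their mass sums to less than $1$), this causes no difficulty, as the proof of Lemma~\ref{lem:ll_kl} quoted in the text does not use $\sm{\Q}=1$.

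Finally, the desired inequality $\kl(\P||\Q_2) < \kl(\P||\Q_1)$ reads $\P(2)\log m_2 < \P(1)\log m_1$. Dividing through by $\P(2) > 0$ and substituting $m = \P(1)/\P(2)$, this is equivalent to $\log m_2 < m\log m_1 = \log\bigl(m_1^{m}\bigr)$, and by strict monotonicity of $\log$ this is in turn equivalent to $m_2 < m_1^{m}$, which is exactly the stated hypothesis. (The step of multiplying the inequality $\log m_1 > 0$ by $m > 0$ is legitimate precisely because $m_1 > 1$; equivalently $m_1^m > 1$ is well defined.)

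There is no real obstacle here — the argument is essentially a two-line computation once Lemma~\ref{lem:ll_kl} is in hand; the only points requiring a word of care are (i) that the semi-distribution case is already subsumed by that lemma, and (ii) that every logarithm appearing has argument $>1$, so all signs are unambiguous and no degenerate zero/infinite cases intervene. I would close by remarking that the corollary makes transparent the sense in which \llsp \emph{under-penalizes} relative errors on small-probability items: a multiplicative error of size $m_2$ on item $2$ incurs no more loss than the much smaller multiplicative error $m_2^{1/m}$ on the larger item $1$, which is the quantitative form of the section's thesis that \logloss is far more sensitive to relative changes in larger \prsn.
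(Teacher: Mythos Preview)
Your proof is correct and follows essentially the same route as the paper's: both invoke the $\ent+\kl$ decomposition from Lemma~\ref{lem:ll_kl}, reduce each divergence to the single nonzero term $\P(1)\log m_1$ and $\P(2)\log m_2$, substitute $m=\P(1)/\P(2)$, and read off the condition $m_2<m_1^{m}$. Your version is a bit more explicit about finiteness/positivity and the semi-distribution case, but the argument is the same.
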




The proof is simple and goes by subtracting the losses, $\lln(\Q_1|\P)
- \lln(\Q_2|\P)$, and using the KL() expressions for the \logloss
terms and simplifying (Appendix \ref{sec:app_eval}).  Thus, with a
reasonably large multiple $m$, $m_2$ needs to be very large for
$\lln(\Q_2|\P)$ to surpass $\lln(\Q_1|\P)$.  As an example, let \di
$\P=\{$1:0.5, 2:0.05, 3:0.45$\}$, then $m=10$, and reduction by $m_1
\ge 1$, only in \pr of item 1, yields \llsp $l_1$ that is
$\frac{m_1^{10}}{m_2}$ higher than \llsp $l_2$ that would result from
reducing \pr of item 2 only, by $m_2$.  For example, with
$\Q_1=\{$1:0.25, 2:0.05, 3:0.45$\}$, and with $\Q_2=\{$1:0.5, 2:$p$,
3:0.45$\}$, as long as $p > \frac{0.05}{2^{10}}\approx 10^{-4}$,
$\lln(\Q_1|\P) > \lln(\Q_2|\P)$ (or we have to reduce 0.05 a thousand
times, $2^{10}$, to match the impact of just halving 0.5 to 0.25).




When we report deviation-rates in synthetic experiments (\eq
\ref{eq:devr}), we ignore the magnitude of the \prs involved (as long
as greater than $\pmin$), but log-loss incorporates and is
substantially sensitive to such.  Because items with larger \prs are
seen in the stream more often, such a dependence is warranted,
specially when we use the \prs to optimize expected rewards (different
predictands having different rewards). However, when a predictor has
to support predictands with highly different \prs (\eg at $0.5$ and
$0.05$), we should anticipate that \llsp scores will be substantially
more sensitive to better estimating the higher \prsn.

\co{
predictor predicts items with highly different \prs (\eg at $0.5$ and
$0.05$), we should anticipate that \llsp scores will be substantially
more sensitive to better estimating the higher \prsn.

We also note that \brier is not necessarily as sensitive to changes in
larger probabilities (as it does not involve an expectation), although
larger probabilities have the potential to lead to larger
shifts. \brier is only directly dependent on the magnitude of a
shift. Thus, with $\P=\{$1:0.8, 2:0.2$\}$, $\Q_1=\{$1:0.7, 2:0.2$\}$,
and $\Q_2=\{$1:0.8, 2:0.09$\}$, we have $\lln(\Q_1|\P) >
\lln(\Q_2|\P)$, while $\bl(\Q_1|\P) < \bl(\Q_2|\P)$ (larger shift of
0.15 in obtaining $\Q_2$).
}

\co{
points to make:
\begin{itemize}
  \item Sensitive to ratios , but also to how large $p$ is..
  \item In particular, to $r^{r2-r1}$
  \item \brier is not even necessarily sensitive to higher
    probabilities..
  \item the issue of unseen (not recently seen) items..
\end{itemize}
}

\subsection{Developing \logloss  for \ns (Noise) Items}
\label{sec:evalns}

\begin{figure}[t]
\hspace*{0.15in}\begin{minipage}[t]{0.5\linewidth}
  \small{
    \fcapbf($\Q$) // Filter \& cap $\Q$. \\
  \hspace*{0.2cm}  // $\Q$ is an item to \pr map.\\
  \hspace*{0.2cm} // Parameters: $\pns, \pmin$, both in $[0, 1)$\\
  \hspace*{0.2cm} $\Q' \leftarrow$ \scd($\Q$, 1.0, $\pmin$) // Filter \\ 
  \co{
  \hspace*{0.3cm} For item $i$, probability $prob$  in $\Q$: \\
  \hspace*{0.6cm}  If $prob < \pmin$: \\
  \hspace*{0.9cm}     continue // Filter low weight items.\\
  \hspace*{0.6cm}  $filtered[i] \leftarrow prob$ \\
  }
  \hspace*{0.2cm} If $\sm{\Q'} \le 1.0 - \oovprob$: // Already capped? \\
  \hspace*{0.6cm}   Return $\Q'$ // Nothing left to do. \\
  \hspace*{0.2cm} $\alpha \la \frac{1-\oovprob}{\sm{\Q'}}$ // Scale down by $\alpha$. \\
  \hspace*{0.2cm} Return \scd($\Q'$, $\alpha$, $\pmin$) \\ \\
  \co{
  \hspace*{0.3cm} $capped  \leftarrow \{\}$  // The scaled \& filtered map.\\
  \hspace*{0.3cm} For item $i$, probability $prob$ in $filtered$:  \\
  \hspace*{0.6cm}    $prob \leftarrow \alpha * prob$ / / // Scale by $\alpha$.  \\
  \hspace*{0.6cm}    If $prob \ge \minprob:$ // Keep only the salient.\\
  \hspace*{0.9cm}  $capped[i] \leftarrow prob$ \\
  \hspace*{0.3cm}  Return $capped$ \\ 
  }
  \scdbf($\Q$, $\alpha$, $\minprob$) // Filter and scale. \\ 
  \hspace*{0.3cm} $scaled \la \{\}$ \\
  \hspace*{0.3cm} For item $i$, \pr $prob$ in $\Q$:  \\
  \hspace*{0.6cm}    $prob \leftarrow \alpha * prob$  \\
  \hspace*{0.6cm}    If $prob \ge \minprob:$ \ \ // Keep the salient.\\
  \hspace*{0.9cm}  $scaled[i] \leftarrow prob$ \\
  \hspace*{0.3cm}  Return $scaled$ \\ \\
  }
  \hspace*{-.3cm} (a) Filtering \& capping a map $\Q$,
  yielding a \\ constrained \sdn: no small \prs and the sum is capped. 
\end{minipage}
\hspace*{0.cm}\begin{minipage}[t]{0.5\linewidth}
  \small{
  \llnsrbf($o, \Q, markedNS$)  \\
  \hspace*{0.2cm} // Parameters: $\oovprob, \pmin$. The current \\
  \hspace*{0.2cm} // observation is $o$, and $\Q$ is the \\
  \hspace*{0.2cm} // predictions (an item to \pr map). \\
  \hspace*{0.3cm}    $\Q' \leftarrow \fcap(\Q)$ // filter and cap $\Q$. \\
  \hspace*{0.3cm}    $prob \leftarrow \Q'$.get$(o, 0.0)$ \\
  \hspace*{0.3cm}  If $prob \ge \pmin$: // $o \in \sup{\Q'}$?\\
  \hspace*{0.6cm}    Return $-\log(prob)$ // plain log-loss.\\
  \hspace*{0.3cm}  // $o \not\in \sup{\Q'}$: the predictor thinks $o$ is\\
  \hspace*{0.3cm}  // a new/noise item. The \nsm \\
  \hspace*{0.3cm}  If not $markedNS$: // disagrees. \\ 
  \hspace*{0.6cm}     Return $-\log(\oovprob)$ // highest loss. \\
  \hspace*{0.3cm}  // They both agree $o$ is \ns\\
  \hspace*{0.3cm}  // Use the unallocated \pr mass, $\u{\Q'}$. \\ 
  \hspace*{0.3cm}  $p_{noise} \leftarrow \u{\Q'}$ // note: $p_{noise} \ge \oovprob$ \\
  \hspace*{0.3cm}  Return $-log(p_{noise})$ \\
 \hspace*{0.3cm}  \\ 
 \hspace*{-0.1cm} (b) When computing log-loss, handling new or \\ noise items (bounded loss
 when $\pns > 0$).
 }
\end{minipage}
\vspace*{0.2cm}
\caption{Functions used in evaluating the probabilities.  (a)
  CapAndFilter() is applied to the output of any predictor, at every
  time $t$, before evaluation, performing filtering (dropping small
  \prs below $\pmin$) and, if necessary, explicit capping, \ie
  normalizing or scaling down, meaning that the final output will be a
  \sd $\Q'$, where $\sm{\Q'}\le 1-\pns$ (or $\u{\Q'}\ge
  \pns$). $\pns=\pmin=0.01$ in experiments. (b) Scoring via
  \loglossn, handling \NS items (bounded \loglossn). }
\label{code:norming_etc}
\end{figure}


In our application, every so often a predictor observes new items that
it should learn to predict well. For such an item $o$, $\P(o)$ was 0
before some time $t$, and $\P(o) \ge \pmin$ after. Some proportion of
the input stream will also consist of infrequent items, \ie truly \ns
(noise) items, those whose frequency, within a reasonable recent time
window, falls below $\pmin$. A challenge for any predictor is to
quickly learn to predict new salient items while ignoring the noise.
Plain \logloss is infinite when the predictor assigns a zero \pr to an
observation $o$ ($\Q(o)=0$), and more generally, large losses on
infrequent items can dominate overall \logloss rendering comparing
predictors using \logloss uninformative and useless.  We need more
care in handling such cases, if \logloss is to be useful for
evaluation.

Our evaluation solution has 3 components:
\begin{enumerate}
\item Ensuring that the output map of any predictor, whenever used for
  evaluation, is a \sd $\Q$ such that $\sm{\Q}\le 1-\pns$, with $\pns
  > 0$, thus some mass,  $\u{\Q} \ge \pns$, is left for possibly
  observing an \ns item (the function \fcap() in \fig
  \ref{code:norming_etc}(a)).
\item Using a {\em simple \nsm} algorithm, a {\em (``third party'') 
  ``referee"}, for determining \ns status, as in \fig
  \ref{fig:lowest_loss}(a).
\item A scoring rule (a policy) specifying how to score in various cases,
  \eg based on whether or not an item is marked \ns by the referee
  (the function \llnsr() in \fig
  \ref{code:norming_etc}(b), a bounded version of \logloss).
\end{enumerate}

\co{
Our evaluation solution consists of first ensuring that the final
output of every predictor is a \sd and follows a few constraints,
achieved via what we call filtering and capping: the function \fcap() in \fig
\ref{code:norming_etc}(a).  Second we use a simple algorithm, what we
call an \nsmn, serving as a {\em "3rd-party referee"}. During
evaluation,
\nsm marks each item $\oat{o}$ as \ns or not, depending on whether its
seen-count surpasses a threshold $\nsthrsh$. \fig
\ref{fig:lowest_loss}(a) shows pseudocode for an implementation of
the \nsmn. Finally we need to specify a policy of how to score
depending on whether or not an item is marked \ns, the function
\llns() in \fig \ref{code:norming_etc}(b).
}

\fig \ref{code:norming_etc}(a) shows the pseudocode for filtering and
capping, \fcap(), applied to the output of any prediction method,
before we evaluate its output. The only requirement on the input map
$\Q$ passed to \fcap() is that the map values be non-negative.  In
this paper, all prediction methods output \pr maps, \ie with values in
$[0, 1]$, though their sum can exceed 1.0. The function removes small
values and ensures that the sum of the entries is no more than $1.0 -
\oovprob$, where $\oovprob=0.01$ in our experiments. Thus the output
of \fcap(), a map, corresponds to a \sd $\Q'$ such that $\sm{\Q'}\le
1-\pns$ and $\min(\Q') \ge \pmin$.



%

\begin{figure}[t]
  \begin{minipage}[t]{0.6\linewidth}
  \markns($o$) // whether observation $o$ is \ns or not. \\
  \hspace*{0.3cm} // Parameters: $\oovthrsh$ (\ns count threshold).   \\
  \hspace*{0.3cm} $flaggedNS \leftarrow recentFrqMap.get(o, 0) \le \oovthrsh$ \\
  \hspace*{0.3cm} // Increment count of $o$. \\ 
  \hspace*{0.3cm} Increment($o, recentFrqMap$) \\
  \hspace*{0.3cm} Return $flaggedNS$ // Return the 0 or 1 decision. \\ \\
  \hspace*{0.25cm} (a) Pseudo code for a simple \nsmn. \\  
  \subfloat{{ \includegraphics[height=2.5cm,width=7.5cm]{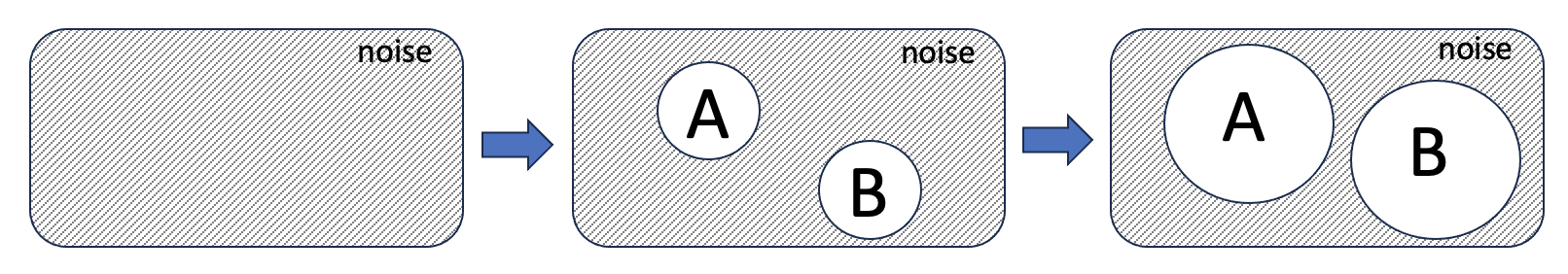}
  }} \\
  \hspace*{0.15cm} (b) Venn diagram examples, depicting underlying \sd
  $\P$ with a background of noise, two salient items \itm{A} and
  \itm{B} growing in \prs as we go from left to right (middle
  $\P^{(2)}$ could be $\{$\itm{A}:$0.1, \ $\itm{B}:$0.1\}$), when
  plotting the lowest achievable loss in \fig
  \ref{fig:lowest_loss}(c). \\ 
\end{minipage}
\hspace*{.5cm}
\begin{minipage}[t]{0.35\linewidth}
  \subfloat{{\includegraphics[height=5cm,width=5cm]{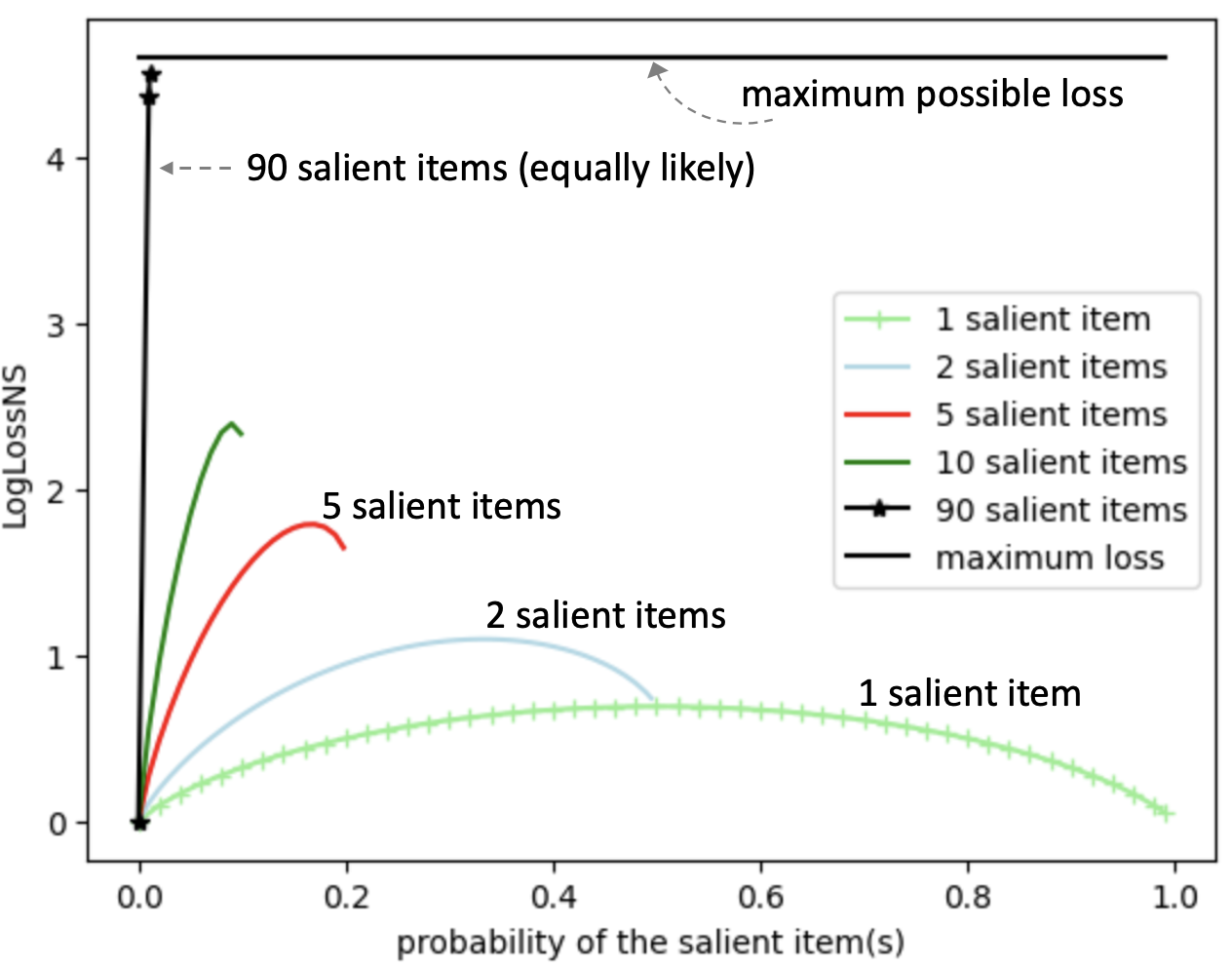} }}\\
    \small{
  \hspace*{0.1cm} (c) Lowest achievable \loglossn, via the 
  \llsn() function, under a  few generation
  regimes where salient items are equally likely.
  }
\end{minipage}
\caption{(a) A simple \nsmn, a referee to mark an item \ns or not,
  via a count map. In particular we used the \bx
  technique of \sec \ref{sec:box} (often with no limit on history size).
  (b) Venn diagrams, with background noise, are useful in picturing
  how sequences are generated, \eg in synthetic experiments. Here, as
  we go from left to right in generating a lowest achievable loss plot
  of \fig \ref{fig:lowest_loss}(c), three Venn digrams of the
  underlying \sds are shown for the case of two salient items (left
  $\P^{(1)}=\{\}$ while right $\P^{(3)}$ could be $\{$\itm{A}:$0.25,
  $\itm{B}:$0.25\}$).  (c) Optimal (lowest achievable) \logloss using
  the \llsn() function of \fig \ref{code:norming_etc}, as the \pr of
  $k$ salient items, all equally likely, is increased to maximum
  possible ($1/k$), from left to right. The lowest loss is (near) 0
  when any observed item is noise (on the left) or, on the right, when
  there is a single salient item with \pr 1.0. Maximum loss, of any
  predictor (not just the optimum), never exceeds $-\log(\pmin)$
  ($\approx 4.6$ in this paper, when $\pmin=0.01$),
  and for the optimum here, it is reached when there are
  $k\approx\frac{1}{\pmin}$ salient items, each with max \pr $\approx
  \pmin$.  }
\label{fig:lowest_loss}
\end{figure}

\subsection{Notes on  Pseudocode}
\label{sec:code_description}


We briefly describe how we present pseudocode, such as the examples of
\fig \ref{code:norming_etc} and \ref{fig:lowest_loss}(b). The format
is very similar to Python, \eg using indentation to delimit a block
such as function or loop body, and how we show iteration over the key
and value pairs of a map.  Function names are boldfaced. Importantly,
not all parameters, such as flags, or state variables of a function,
is given in its declaration to avoid clutter, but in the comments
below its declaration, we explain the various variables used. In some
cases, it is useful to think of a function as a method for an object,
in object-oriented programming, with its state variables (or fields),
some represented by data structures such as array or maps.  For a map
$\Q$, $\Q[i]$ is its value for key (item) $i$, but we also use
$\Q$.get($i, 0$) if we want to specify returning 0 when key $i$ does
not exist in the map. $\{\}$ denotes an empty map and $\sm{\Q}$ is the
sum of its values (0 if empty map). We use $\la$ for assignment, and
'//' to begin the comment lines (like C++). We favored simplicity and
brevity at the cost of some loss in efficiency (\eg some loops may be
partially redundant). Not all the functions or the details are given
(\eg for the \nsm of \fig \ref{fig:lowest_loss}), but hopefully enough
of such is presented to clarify how each function can work.  The code,
including several SMAs and a few sequences and evaluation utilities,
in Python, is available on GitHub \cite{dyal_code}.


\subsection{Empirical \logloss  for \ns (Noise) Items}
\label{sec:evalns2}

Once we have an \nsm and the \fcap() function, given any observation
$\oat{o}$, and the predictions $\oat{\Q}$ (where we have applied
\fcap() to get the \sd $\oat{\Q}$) and given the \ns status of $\oat{o}$
(via the \nsmn), we evaluate $\oat{\Q}$ using \llnsr(), and take the
average over the sequence:

\begin{align}
\label{eq:empllns}
\empllns(\seq{\Q}{1}{N}, \seq{o}{1}{N}) = \frac{1}{N} \sum_{t=1}^N
\llnsr(\oat{o}, \oat{\Q}, \markns(\oat{o}) ) 
\end{align}

The scoring rule \llnsr($o, \Q, \markns(o)$) checks if there is a \pr
$p, p > 0$ assigned to $o$, $p=\Q(o)$. If so, we must have $p \ge
\pmin$ (from the definition of \fcap()), and the loss is
$-\log(p)$. Otherwise, if $o$ is also marked \ns (there is agreement),
the loss is $\log(1-\sm{\Q})$. From the definition of \fcap(),
$1-\sm{\Q} \ge \oovprob$. Finally, if $o$ is not marked \nsn, the loss
is $-\log(\oovprob)$. In all cases, the maximum loss is
$-\log(\pns)$. Therefore, if we are interested in obtaining bounded
losses then $\pns$ should be set to a positive value. On the other
hand, if we are interested in learning a \pr in a range down to a
smallest value $p > 0$, then we should set $\pmin$ to a value not
higher than $p(1-\pns)$ so items with \pr as low as $p$ are not
possibly filtered in \fcap() (see also Appendix \ref{sec:pt} on the
threshold for distortion). $\pns$ should be set equal to $\pmin$ in
general: setting it lower unnecessarily punishes predictors for not
predicting below $\pmin > \pns$ (in \llnsr()), and if set above
$\pmin$, then a predictor need not bother learning to estimate around
$\pmin < \pns$ (the cost incurred for predicting noise or \nsn,
$-\log(\pns)$ would be better/lower).  \co{
\footnote{In
synthetic experiments, if the infrequent or \ns items occur only once
in the sequence, \ie have measure 0, which is the case in our
experiments, then $\pns$ need not be constrained by $\pmin$. If we
want to allow "borderline" \ns items, then the \pr of any such item
should not exceed $\pmin > 0$, \ie no non-salient item should have a
\pr higher than the salient, and we should have $\pns \le \pmin$. }
}
In our experiments, $\pmin=\pns=0.01$.

There are two parameters for the \nsmn, the size of the history (or
window size) kept and the count threshold $\nsthrsh$. For simplicity,
no bound on length of the history is imposed in most evaluations (with
an exception in \sec \ref{sec:unix_pairing}). By default, we set
$\nsthrsh=2$ for the \nsmn. This means an item is marked \ns iff it
has been seen $k \le 2$ so far in the test sequence. Thus, to have low
loss, a predictor should provide a (positive) \pr after the 2nd
observation of the item (for the 3rd and subsequent observations).  We
also report comparisons for other values in a few experiments to
assess sensitivity of our comparisons to $\nsthrsh$, and \sec
\ref{sec:unix_pairing} reports on comparisons where the history is
limited to the last 200.  When comparing different predictors,
computing and comparing \empllns() scores, we will often compare on
the same exact sequences (with an identical \nsmn).  We discuss a few
alternative options to bounding \logloss and using a referee in the
appendix, \sect \ref{sec:alts}.


\subsection{The Near Propriety of \llnsr}
\label{sec:llns}

We first formalize
drawing items using a \sd $\P$, which enables us to define taking
expectations wrt a \sd $\P$. For this, it is helpful to define the
operation of multiplying a scalar with a \sdn, or scaling a \sdn, which
is also extensively used in the analysis of approximate propriety,
Appendix \ref{sec:np}.



\setlength{\leftmargini}{15pt}  

\begin{defn} Definitions related to generating sequences using a \sdn,
  and perfect filtering and \nsm wrt to a \sdn:
  \begin{itemize}
\item (scaling a \sdn) Let \sd $\P$ be non-empty, \ie $\sm{\P}\in (0,
  1]$. Then, for $\alpha > 0, \alpha \le \frac{1}{\sm{\P}}$,
  $\P':=\alpha \P$ means the \sd where $\sup(\P'):=\sup(\P)$ and
  $\forall i \in \sup(\P)$, $\P'(i):=\alpha\P(i)$. When
  $\alpha=\frac{1}{\sm{\P}}$, $\alpha \P$ is a \di and one can
  repeatedly draw iid from it.
\item Drawing an item from a non-empty \sd $\P$, denoted $o\sim\P$,
  means
  $\sm{\P}$ of the time, drawing from $\alpha \P$, where
  $\alpha=\frac{1}{\sm{\P}}$ (\sd $\P$ scaled up to a \din),
  and $1-\sm{\P}$ of the time generating a unique noise item.
  Repeatedly drawing items in this manner $N$ times 
  generates a sequence
  $\seq{o}{1}{N}$ using $\P$ ("iid drawing" from $\P$),
  and is denoted $\seq{o}{1}{N}\sim\P$.
\item (perfect marker) Given a non-empty \sd $\P$, a perfect \nsm wrt
  to $\P$, denoted $isNS_{\P}()$, 
  marks an item $i$ as \ns iff $i \not \in \sup(\P)$.
  \end{itemize}
\end{defn}

\co{
\item (prefect filters and perfect markers) Given a non-empty \sd $\P$, a
  perfect \fcap() wrt to $\P$ uses $\pmin=\min(\P)$ and
  $\oovprob=1-\sm{\P}$. A perfect \nsm wrt to $\P$ marks an item $i$
  as \ns iff $i \not \in \sup(\P)$.

Note that with a perfect \fcap() wrt $\P$, $\fcap(\P)=\P$, but there
can be many other \sds $\Q$ with $\fcap(\Q)=\P$.  For a \di $\P$, a
perfect \fcap() is the identity function and a perfect \nsm generates
no \ns markings.
}

Therefore, for a \di $\P$, the perfect \nsm $isNS_{\P}()$ generates no
\ns markings on any sequence $\seq{o}{1}{N}\sim\P$. More generally,
for a \sd $\P$, the perfect marker generates \ns markings at about
$\u{\P}$ fraction of the time on the stream $\seq{o}{}{} \sim
\P$. Appendix \ref{sec:practical} also defines a closer to practical
threshold-marker.


Given a non-empty \sd $\P$, \fcap() and a perfect \nsm $isNS_{\P}()$,
$\llns(\Q|\P)$ is defined as:
\begin{align}
\hspace*{-0.25in}  \llns(\Q|\P) := \expdb{o\sim \P}{\llnsr(o, \Q, isNS_{\P}(o))} \hspace{.1in}
  \mbox{(\logloss for NS)}
\end{align}

Thus \empllns() is the empirical average version of \llns(), where we
use a practical \nsm instead of the perfect one.

\co{
  In almost all experiments, iff an item has been seen $k \le
  \oovthrsh$ so far in the test sequence (unbounded history), it is
  marked \nsn, where $\oovthrsh=2$ by default. Thus, to have low loss,
  a predictor should provide a (positive) \pr after the 2nd
  observation of the item (for the 3rd and subsequent
  observations). \sec \ref{sec:unix_pairing} reports on comparisons
  where the history is limited to the last 200.}

\co{
We show in Appendix \ref{sec:np} that in the iid generation setting
based on \sd $\P$, and when using \fcap() with parameters $\pns$ and
$\pmin$, $\P$ will score highest in several cases, and otherwise, we
provide evidence showing a minimizer of $\llns(\Q|\P)$ \sd $\Q^*$
}

\co{
scoring high is close to $\P$
when the total probability mass in $\P$ assigned to low \pr items
(below or near $\pns$) is small.
}

\co{
For instance, if this mass is $\le 0.1$ (the
rest assigned to items with \pr well above $\pns$ or otherwise
unassigned in $\P$, \ie to pure noise), then a $\Q$ scoring near or
better than $\P$ could have this $0.1$ mass transferred roughly
proportionately 
}



\co{
If we scale only, there is no distortion.
\vspace*{.2in}
\begin{lemma} Scoring with \empllns() is proper but not necessarily strictly
  proper for \sds when $\pmin = 0$, $\alpha \in (0, 1]$, and a perfect
  \nsmn.
\end{lemma}

}
  \co{
If we drop below $\pmin \in (0, 1)$, there is distortion, limited by
$\pmin$:

\begin{lemma}
  For a non-empty \sd $\P$, let $\Q^* = \argmax_{\Q} \expdb{o \sim
    \P}{\llns(\Q, o$, isNS($o$)$ )}$. Then $\forall i \in \I,$ if
  $\P(i) \ge \pmin$, then $\Q^*(i) \ge \P(i)$, and if $\Q^*(i) \ge \pmin$,
  then $\frac{\Q(i)}{\P(i)} \le 1+\pmin$.
\end{lemma}

}


\co{
\vspace*{.2in}
\begin{lemma}
  (Scoring with \empllns() is proper for \sds when using perfect
  filtering and marking) Assume we generate sequences with a non-empty
  \sd $\P$ and use a perfect \fcap() and \ns marker to score, then
  $\P$ is a minimizer of \llns(), and more generally any \sd $\Q$ with
  $\fcap(\Q)=\P$ minimizes \llns().  
\end{lemma}
}

\subsection{Understanding the Behavior of \llsn}

\fig \ref{fig:lowest_loss}(c) shows
the lowest achievable loss, the optimal loss, when using \llsn() in a
few synthetic scenarios: as the \pr of $k \ge 1$ equally likely
salient items is raised from 0 to near $k^{-1}$ each, and where we
assume an item is marked seen only when its true \pr is above $\pmin$
(and $\pmin=\oovprob=0.01$). On the left extreme, there are no salient
items (all \prs at 0 or below $\pmin$), and a predictor achieves 0
loss by predicting an empty map, \ie no salient items
($\sm{\Q}=0$). Here, both the \nsm and the predictor agree that every
observed item is noise.  Next, consider the case of one (salient) item
as its \pr is raised from $\pmin$. At the right extreme, where its
probability nears 1.0, the minimum achievable loss approaches 0 as
well. Note that due to the capping, we never get 0 loss (with
$\pns=0.01$, we get $-\log(0.99)\approx 0$). Midway, when $\tp$ is
around 0.5, we get the maximum of the optimum curve, yielding \llnss
around $-\log(0.5)$ (the salient item is observed half the time, and
half the time both the \nsm and predictor agree that the item observed
is noise, with $p_{noise}=0.5$). Similarly, for the scenarios with $k
> 1$ items, the maximum optimal loss is reached when the items are
assigned their maximum allowed probability, \ie
$\frac{1}{k}$.


Appendix \ref{sec:np} gives other concrete examples of $\llns(\Q|\P)$
evaluation and develops some of its properties.
For example, the minimizer of $\llns(\Q|\P)$ may not be $\P$ (and may
not be unique).
We show that any minimizer $\Q^*$ cannot deviate from $\P$ much, and has a
certain form: certain low \pr items of $\P$ can be zeroed in $\Q^*$,
and their mass {\em proportionately} spread onto other (higher \prn)
items. This proportional spread implies low deviation when the total
probability mass $p$ of the low \pr items of $\P$ (near or below
$\pmin=\pns$) is small. For instance, if the total such mass $p$ is
$0.1$ in $\P$, then for any minimizer $\Q^*$, the \pr of any (salient)
item $i$ in $\Q^*$, including the unassigned $\u{\Q^*}$, cannot
deviate from its corresponding \pr in $\P$ by more than about $10\%$
in a ratio sense: $\frac{\Q^*(i)}{\P(i)}\le 1.1$ (in general
$\frac{1}{1-p}$, see \ref{sec:extent} and \ref{sec:practical}) . Note
that comparators using $\llns(\Q|\P)$ would prefer a minimizer $\Q^*$
over other $\Q$ with high likelihood (would score it better, given a
sufficiently large evaluation sample), motivating our focus on
characterizing $\Q^*$.

\co{
As another example, assume the true probability \di is $\P=\{A:0.78,
B:0.02\}$, thus $0.2$ of the time, a noise item is generated. Then
with $\pmin=\oovprob=0.01$ in \fcap(), $\P$ scores lowest at
$\llsn(\P|\P)=-0.78\log(0.78)-0.02\log(0.02)-0.2\log(0.20)\approx
0.594$, while with $\Q=\{A:0.8, B:0.02\}$,
$\llsn(\Q|\P)=-0.78\log(0.8)-0.02\log(0.02)-0.2\log(0.18)\approx
0.595$, or $\llsn(\Q|\P) > \llsn(\P|\P)$. Let $\Q_2=\{A:0.78\}$, then
$\llsn(\Q_2|\P)=-0.78\log(0.78)-0.02\log(0.01)-0.2\log(0.22)$.
}

\subsubsection{Visualizing Underlying \sds with Venn Diagrams}
\fig \ref{fig:lowest_loss}(b) shows a useful way to pictorially
imagine a \sd $\P$ that is generating the observation sequence as a
Venn diagram. Salient items (of $\P$) are against a background of
noise, and when we change $\P$, for instance increase the \pr of a
salient item \itm{A}, the area of its corresponding blob increases,
and this increase can be obtained from reducing the area assigned to
the background noise (as in \fig \ref{fig:lowest_loss}(b) and the
experiments of \fig \ref{fig:lowest_loss}(c)), or one or more other
salient item can shrink in area (while total area remaining constant
at 1.0). When drawing, each item, including a noise item, is picked
with probability proportional to its area.

\subsection{Evaluating a Space-Bounded Predictor under Nonstationarity}
\label{sec:evalns3}

We can use the same empirical evaluation \logloss formula of
\empllns() (Eq. \ref{eq:empllns}) in the idealized non-stationary
case, and as long as each subsequence is sufficiently long, a
convergent prediction method should do well. In synthetic experiments,
we try different minimum frequency requirements, $\minobs$: an item
needs to occur $\minobs$ times in a sequence before its probability
can be changed. The lower the underlying $\tp$ the more time points
required before $\tp$ is changed, as we expect one positive
observation every $\frac{1}{\tp}$ time points. Note that if $\minobs$
is set too low, \eg below 2 or 3, this would not allow sufficient time
for learning a \pr well, and the underlying $\tp$ loses its meaning.
We want a predictor that tracks changes in salient \prs well: a high
\pr item or items (\eg above 0.1) may change in \pr (\eg disappear),
while a low \pr item (\eg 0.03) may remain stable for a long time, or
vice versa. The proportion of the mass that should be allotted to the
noise portion of the stream
can change too (\eg from 0.1 to 0.5, then to 0.01, etc.) and good
predictors should track such too (and, for example, not allocate all
their mass onto the salient items).  A good evaluation technique
should score such honest and adaptive predictors well (see Appendix
\ref{sec:practical}). We note that evaluating proper \pr assignment to
the noise portion of the stream is applicable whether we use \logloss
or \brier loss.


With real-world sequences, a variety of phenomena such as periodicity
and other dependencies can violate the iid assumptions, and it is an
empirical question whether, on real sequences, the predictors compare
as anticipated based on their various, presumed or expected, strengths
and weaknesses stemming from consideration of the ideal setting of \sec
\ref{sec:ideal}. This underscores the importance of empirical
experiments on different real-world sequences.




\section{The (Sparse) EMA Predictor}
\label{sec:ema}

Our first \sma is the sparse exponentiated moving average (EMA), which
gets its name from exponentially decreasing multipliers over the past
values (as a function of time, see part (a) of the comparison picture
\fig \ref{fig:histories}).  In past work we developed a variant
specially designed for multiclass ranking and classification,
where we observed its advantages
for non-stationary tasks \cite{updateskdd08,ijcai09}. Here we develop
and analyze sparse EMA, focusing on probability prediction. \fig
\ref{fig:ema} presents pseudo code.  The \sma keeps a map, of item to
\prn, and it can be shown that the map is always a \sd
\cite{updateskdd08}. Initially, at $t=1$, the map is empty. Each
map-entry can be viewed as a connection, or a prediction relation, a
weighted directed edge, from the predictor to a predictand, where the
weight is the current \pr estimate for the predictand. Prediction is
straightforward: output the map's key-value (or key-\prn) pairs.  An
EMA update is a convex combination of the present observation with the
past (running) average.\footnote{EMA (also, EWMA) and other moving
averages can also be viewed as a type of filter in signal theory and a
(time) convolution \cite{wikip1}, and find a variety of applications,
such as in time-series analysis and forecasting (\eg
\cite{intro_time_series2018,coherent_forecasting2015}), and in
reinforcement learning for value function updates (in temporal
difference learning, \cite{rl1,wikip1}). } When learning proportions
as in this paper, the observation is either 0 or 1, and the update can
be broken into two phases, first a weakening of the weights, of edges
to existing predictands, which can be seen as \prs flowing from
existing edges into an implicit (available) \pr reserve, and then a
strengthening of the edge to the observed item, \pr flowing from the
reserve to the target edge, as pictured in \fig
\ref{fig:ema_phases}. This picture is useful when we extend EMA (\sec
\ref{sec:qdyal}). We next describe \pr and \sd estimation and their
challenges, in particular speed of convergence \vs variance, and
handling non-stationarity (versions of plasticity \vs stability
trade-offs \cite{Abraham2005MemoryR,fpsyg2013})\footnote{A similar
trade-off has been termed drift erorr \vs statistical (variance) error
in computational learning theory (\eg
\cite{driftingDistros2024,drifting_distros_2012}).}, motivating
extensions of plain static (fixed-rate) EMA (\sec \ref{sec:harm}),
further developing techniques used later in \sec \ref{sec:qdyal}.


\co{

plain EMA ("Emma"), pictures of its two phases, semi distro, what we
know...

NOTE: the two phases are important/useful for understanding \qd later
...

A few detailed examples of change:

\begin{itemize}
\item new concepts need to change their estimates fast: harmonic decay
(start high, lower the rate) can do that.. but insufficient..

\item .. how about existing concepts to new concepts:

\item plasticity vs stability: we need a low rate for good probability
estimates (if we want to lower distortion), but we may have too slow
of a convergence ..

\item Imagine the binary case, a true probability being sampled iid ...

\item --> case for low rate: the rate determines the fraction of times (time
points) that the estimate substantially deviates from the the true
probability.. once converged we prefer low rate

\item --> high rate: allows to quickly converge, specially for high
probabilities.. (agile)

\end{itemize}

}

\begin{figure}[t]
  \hspace*{.1in}\begin{minipage}[t]{0.55\linewidth}
  {\bf EmaUpdate}($o$) // latest observation $o$. \\
  \hspace*{0.3cm} // $\emamap$ is item to \prn, learning rate $\lr \in (0, 1]$. \\ 
  \hspace*{0.3cm} // Other param: $doHarmonicDecay$ flag. \\
  \hspace*{0.3cm} For each item $i$  in $\emamap$: \\
        \hspace*{0.6cm}    $\emamap[i] \leftarrow (1 - \lr) * \emamap[i]$ // Weaken.\\
  \hspace*{0.3cm} // Strengthen edge to $o$ (insert edge if not there).\\
  \hspace*{0.3cm}$\emamap[o] \leftarrow \emamap[o] + \lr$ // Boost. \\
  \hspace*{0.3cm}If $doHarmonicDecay$: // reduce rate?\\
  \hspace*{0.6cm} $\lr \leftarrow DecayRate(\lr)$ // see (b) and \sec \ref{sec:harm} \\ \\
 \hspace*{1cm} (a) Plain EMA updating.
\end{minipage}
\begin{minipage}[t]{0.55\linewidth}
        {\bf DecayRate}($\lr$)  \\
        \hspace*{0.3cm} // Other parameters: $\lrmin \in (0,1)$, is the \\ 
        \hspace*{0.3cm} // minimum allowed learning rate. \\
        \hspace*{0.3cm} // $\lrmax \in (0,1]$, is the maximum and \\
    \hspace*{0.3cm} // the initial learning rate for EMA with \\
    \hspace*{0.3cm} // harmonic decay. \\
 \hspace*{0.3cm}     $\lr \leftarrow \frac{1}{\frac{1}{\lr}+1.0}$ // harmonic decay.\\
 \hspace*{0.3cm}     Return $\max(\lr, \minlr)$ \\ \\
 \hspace*{1cm} (b) Harmonic decay of rate.
\end{minipage}
\caption{Pseudo code of (a) sparse EMA with a single learning rate $\lr$,
  either fixed ("static" EMA), or (b) decayed with a harmonic schedule
  down to a minimum $\lrmin$ ("harmonic" EMA, \sec \ref{sec:harm}).
  The working of an EMA update can be split in two steps (\fig
  \ref{fig:ema_phases}): 1) weaken, \ie weaken all existing edge
  weights (entries of the map $\emamap$), and 2) strengthen, \ie boost (the weight of)
  the edge to the observed item (target).  The map entry is created if
  it doesn't already exist (edge insertion). Initially (at $t=1$),
  there are no edges.  EMA enjoys a number of desirable properties,
  such as the probabilities in $\emamap$ forming a \sdn, and approximate
  convergence (\sect \ref{sec:ema}).  }
\label{fig:ema}
\end{figure}



\begin{figure}[t]
\begin{center}
  \centering
  \subfloat[First phase of EMA: weaken all edges.]{{\includegraphics[height=4cm,width=6cm]{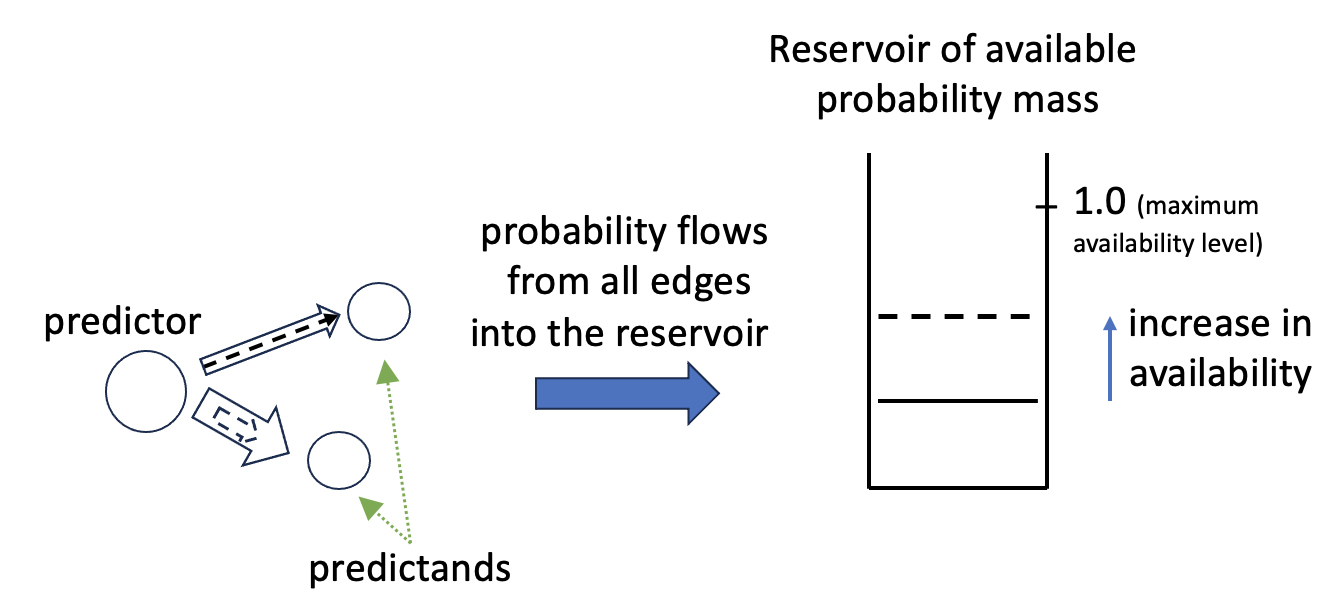}
  }}
\hspace*{.51in}  \subfloat[Second phase: boost weight of target.]{{\includegraphics[height=4cm,width=6cm]{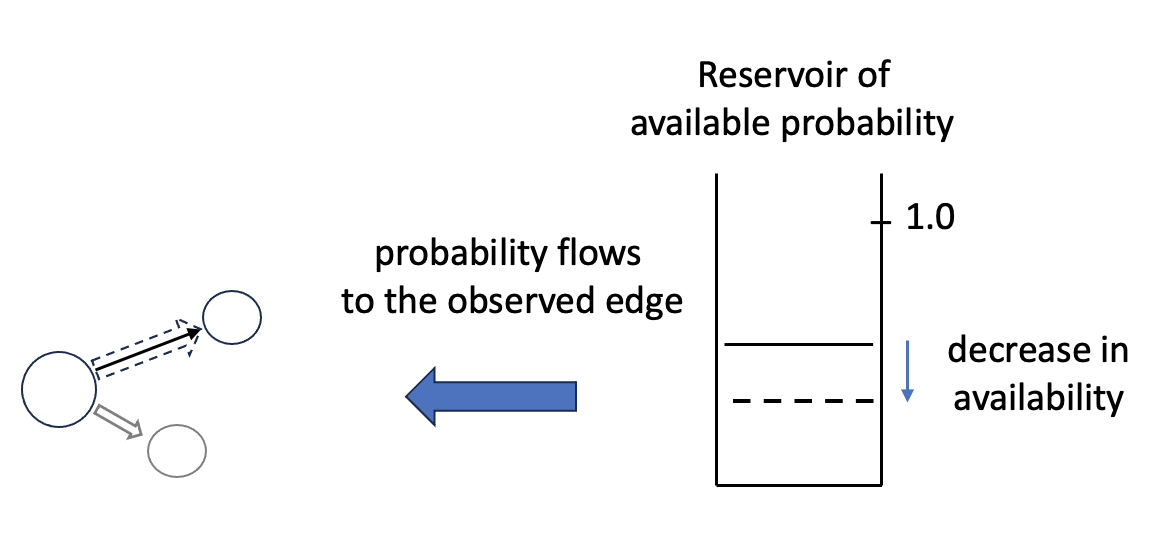}
  }}
\end{center}
\vspace{.2cm}
\caption{An EMA update can be seen as having two phases: {\em
    weaken-and-boost}. In the first phase, all existing edges from the
  predictor to the predictands (entries in the map) are weakened,
  which can be viewed as probability flowing from the edges to the
  reservoir of (unused or available) \pr mass (the edges, after the
  weakening, reduced in weight, are shown smaller and dotted). In the
  2nd phase, the edge to the observed item is strengthened. The
  reservoir is implicit: with \sd $\Q$ corresponding to the map, the
  reservoir has \pr mass $1-\sm{\Q}$. Initially (at $t=1$), with an
  empty map (no edges), the reservoir is full at 1.0. }
\label{fig:ema_phases}
\end{figure}

\co{
motivate extensions to EMA that better handle certain types of
non-stationarity.

We next motivate extensions to EMA that better handle certain types of
non-stationarity.
}

\begin{figure}[t]
\hspace*{.2in}
  \subfloat[EMA convergence under different (fixed) rates, $\lr=0.01$
    and $\lr=0.001$ (binary iid setting).  Higher rates can yield faster convergence, but also 
    higher variance.]  {{\includegraphics[height=4.5cm,width=6.5cm]
      {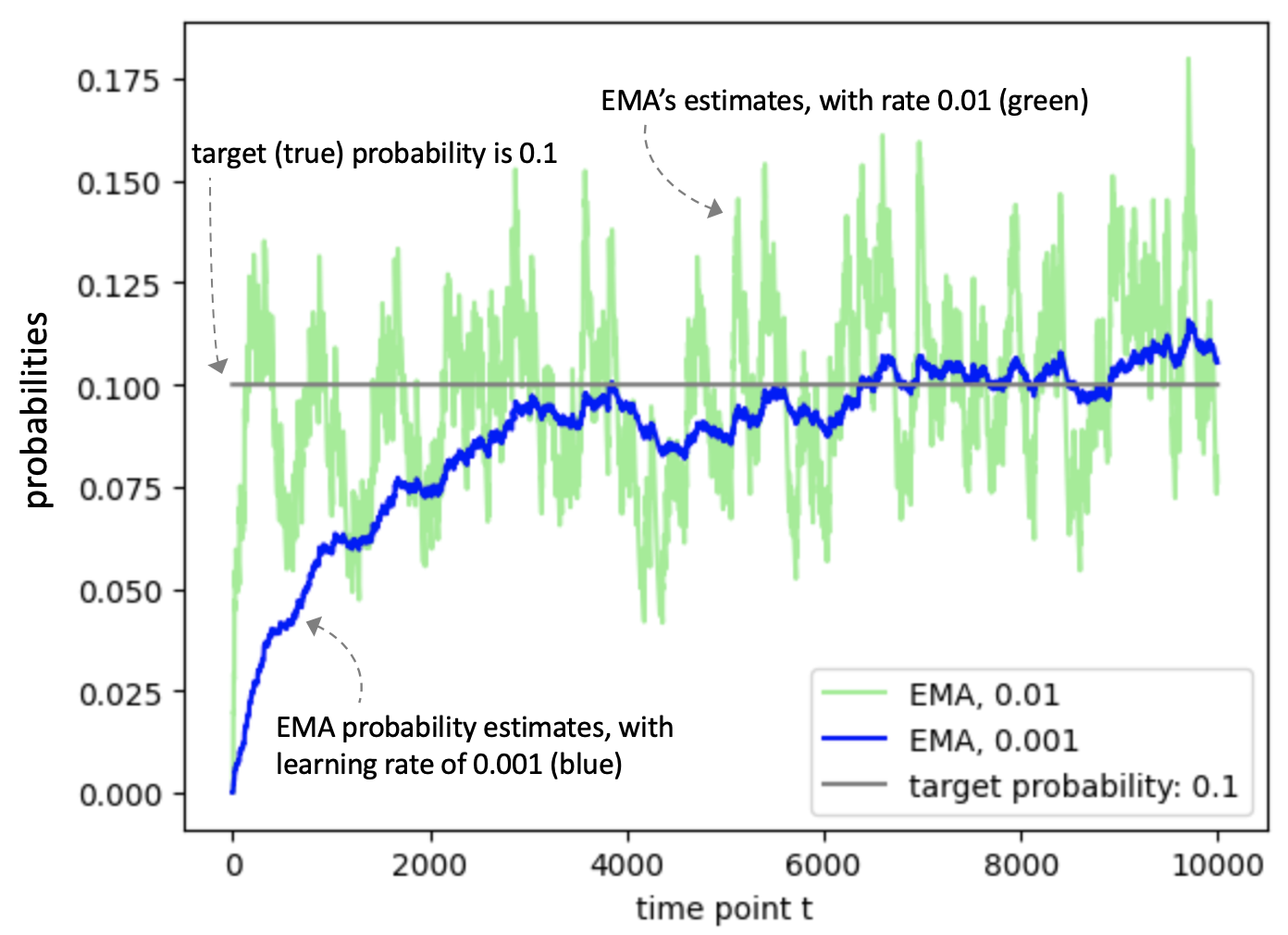} }}
  \hspace*{.21in}  \subfloat[Harmonic decay of rate speeds convergence
for higher target \prs such as $\tp=0.1$    (\vs static EMA).
  ]{{\includegraphics[height=4.5cm,width=6.5cm]{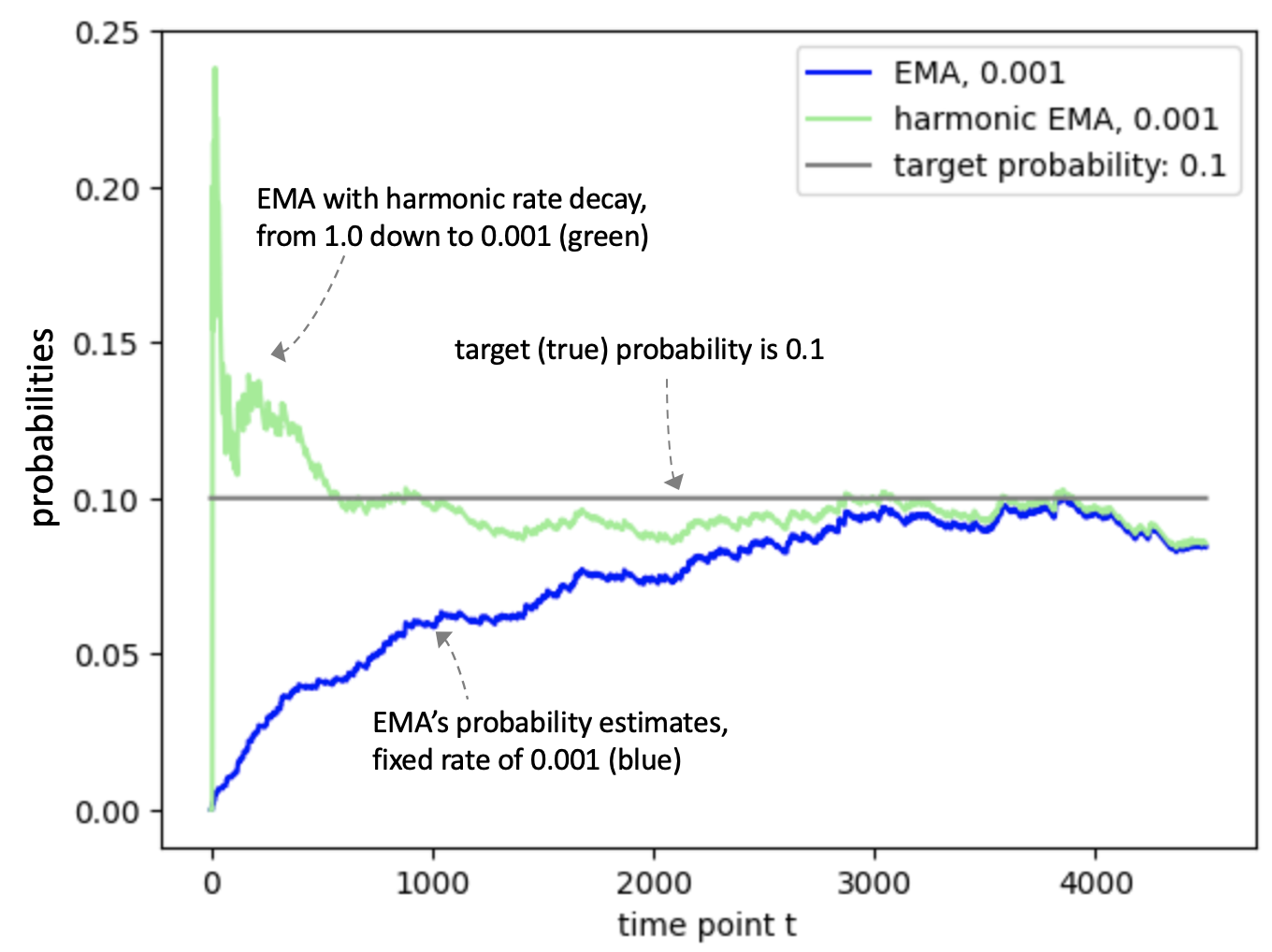}
  }}
\vspace{.2cm}
\caption{Given a single binary sequence, 10k long, with target
  $\tp=0.1$ (the true \pr of observing 1), convergence of the \pr
  estimates, $\oat{\ep}$, by EMA variants: (a) for the static or
  fixed-rate version of EMA, under two different learning rates,
  $\lr=0.01$ and $\lr=0.001$. A higher rate, $\lr=0.01$, can lead to
  faster convergence, but also causes high variance. (b) With harmonic
  decay, the estimates converge faster (possibly with a high-variance
  initial period). }
\label{fig:ema_convergence}
\end{figure}

\co{
\begin{table}[t]\center
  \begin{tabular}{ |c|c|c|c|c| }     \hline
    & \multicolumn{2}{c|}{ $\lr=\tp/5.0$ \co{$\frac{\tp}{5}$} } &
    \multicolumn{2}{c|}{ $\lr=\tp/20.0$ \co{$\lr=\frac{\tp}{20}$} } \\
     \cline{2-5}
     (deviation threshold) $\dvt \rightarrow$  &  1.5 & 2.0 & 1.5 & 2.0  \\ \hline
 $\tp=0.1$  & 47, 0.19 $\pm 0.02$  & 35, 0.04$\pm 0.01$ & 213, 0.01$\pm 0.01$ & 140, 0.001$\pm 0.002$ \\ \hline
 $\tp=0.01$ & 520, 0.20$\pm 0.08$ & 350, 0.05$\pm 0.04$ & 2200, 0.04$\pm 0.05$ & 1400, 0.02$\pm 0.03$ \\ \hline
  \end{tabular}
  }

\begin{table}[t] \center
  \begin{tabular}{ |c|c|c|c|c| }     \hline
     deviation threshold $\rightarrow$ &
    \multicolumn{2}{c|}{ $\dvt=1.5$ \co{$\lr=\frac{\tp}{20}$} } &
    \multicolumn{2}{c|}{ $\dvt=2.0$ \co{$\lr=\frac{\tp}{20}$} }  \\ \hline
      learning rate $\rightarrow$   &  $\lr=\tp/5$ & $\lr=\tp/20$
    & $\lr=\tp/5$ & $\lr=\tp/20$  \\ \hline
    $\tp=0.1$  & 47, 0.19 $\pm 0.02$  
    & 213, 0.01$\pm 0.01$
    & 35, 0.04$\pm 0.01$ 
    & 140, 0.001$\pm 0.002$ \\ \hline 
    $\tp=0.01$ & 520, 0.20$\pm 0.08$
    & 2200, 0.04$\pm 0.05$
    & 350, 0.05$\pm 0.04$
      & 1400, 0.02$\pm 0.03$ \\ \hline
  \end{tabular}
  \vspace*{.1in}
  \caption{10k-long sequences, 500 trials, for two target
    probabilities $\tp$, $\tp=0.1$ and $\tp=0.01$, and for two $\lr$
    settings for EMA, $\frac{\tp}{5}$ and $\frac{\tp}{20}$, the first
    time EMA's estimate $\oat{\ep}$ reaches to within deviation
    threshold $\dvt$ (averaged over the trials), and the
    deviation-rate thereafter (and the std over the 500 trials) are
    reported. The initial estimate is 0 ($\ott{\ep}{1}=0$). For
    instance, when $\tp=0.1$, EMA with $\lr=\frac{\tp}{5}=0.02$ takes
    47 time points on average to reach within $\dvt=1.5$ of $\tp=0.1$
    and thereafter it has deviation-rate of 19\% (while with
    $\lr=\frac{\tp}{20}$, it has only 1\% deviation-rate). A higher
    rate leads to faster convergence for EMA, but also higher
    variance. }
  \label{tab:ema_convergences}
\end{table}











\subsection{Convergence}
\label{sec:convergence}


It can be shown that the EMA update, with $\lr \in [0, 1]$, preserves
the \sd property: when the \pr map before the update corresponds to a
\sd $\oat{\Q}$, $\ott{\Q}{t+1}$ (the map after the update) is a \sd
too.  Furthermore, EMA enjoys several convergence properties under
appropriate conditions, \eg the sum of the edge weights (map entries)
increases, converging to 1.0 (\ie to a \din), and an EMA update can be
seen to be following the gradient of quadratic loss
\cite{updateskdd08}. Here, we are interested in convergence of the map
weights as individual \pr estimates.



In the stationary iid setting (\sec \ref{sec:ideal}), we show that the
\pr estimates of EMA converge, probabilistically, to the vicinity of
the true \prsn.  We focus on the EMA estimates for one item or
predictand, call it \itm{A}: at each time point $t$, a positive update
occurs when \itm{A} is observed, and otherwise it is a negative update
(a weakening), which leads to the stationary binary setting (\sec
\ref{sec:binary}), where the target \pr is denoted $\tp$. The
estimates of EMA for item \itm{A}, denoted $\oat{\ep}$, form a random
walk. While often, \ie at many time points, it can be more likely that
$\oat{\ep}$ moves away from $\tp$, such as when $\ep \le \tp < 0.5$,
when $\ep$ does move closer to $\tp$ (upon a positive observation),
the gap reduces by a relatively large amount. The following property,
in particular, helps us show that the expected gap, $|\tp - \ep|$,
shrinks with an EMA update, under an appropriate condition. It also
connects EMA's random walk, the evolution of the gaps for instance, to
discrete-time martingales, fundamental to the analysis of probability
and stochastic processes (\eg the gap is a supermartingale when
$\oat{\ep} < \tp$) \cite{martingales91,probgames1}.\\

\co{ the probability that the gap between $\ep$ and $\tp$ expanding
  may be more likely than shrinking, such as when $\ep \le \tp < 0.5$,
  when there is a progress, for instance, when there is a positive
  update and $\ep$ moves closer to $\tp$, the gap reduces
  substantially.  }

\begin{lemma} \label{lem:ema_shirnk}
  EMA's movements, \ie changes in the estimate $\oat{\ep}$,
  satisfy
  the following properties, where $\lr \in [0, 1]$:
\setlength{\leftmargini}{15pt} 
  \begin{enumerate}
  \item Maximum movement, or step size, no more than $\lr$: $\forall t,
    |\ott{\ep}{t+1}-\oat{\ep}| \le \lr$.
  \item Expected movement is toward $\tp$: Let $\oat{\Delta} := \tp - \oat{\ep}$. Then,
     $\expd{\Delta^{(t+1)}|\oat{\ep}=p} =
    (1-\lr)(\tp-p) = (1-\lr) \oat{\Delta}$.
  \item Minimum expected progress size: With $\ott{\delta}{t} :=
    |\oat{\Delta}|-|\ott{\Delta}{t+1}|$, $\expd{\oat{\delta}}\ge \lr^2$
    whenever $|\oat{\Delta}|\ge\lr$ (\ie whenever $\ep$ is sufficiently
    far from $\tp$).
  \end{enumerate}
\end{lemma}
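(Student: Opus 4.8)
The plan is to reduce to the stationary binary setting of \sec \ref{sec:binary}, tracking item \itm{A}: writing $p=\oat{\ep}$, the EMA update replaces $p$ by $(1-\lr)p+\lr$ on a positive observation (which happens with probability $\tp$) and by $(1-\lr)p$ on a negative observation (probability $1-\tp$). Throughout I would use that $\oat{\ep}\in[0,1]$ at every time $t$, which follows from the already-stated fact that the EMA update preserves the semi-distribution property for $\lr\in[0,1]$ (so each map entry, in particular the estimate for \itm{A}, stays in $[0,1]$). Parts 1 and 2 are then pure algebra, and I would dispatch them first; part 3 is the one requiring a small observation, so I would leave it to the end.

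For part 1 I would simply check the two cases: on a positive update $\ott{\ep}{t+1}-\oat{\ep}=\lr(1-\oat{\ep})\in[0,\lr]$, and on a negative update $\ott{\ep}{t+1}-\oat{\ep}=-\lr\oat{\ep}\in[-\lr,0]$, both using $\oat{\ep}\in[0,1]$; hence $|\ott{\ep}{t+1}-\oat{\ep}|\le\lr$. For part 2 I would compute the conditional expectation as the $(\tp,1-\tp)$-weighted average of the two post-update values, $\expd{\ott{\ep}{t+1}\mid\oat{\ep}=p}=\tp\big((1-\lr)p+\lr\big)+(1-\tp)(1-\lr)p=(1-\lr)p+\lr\tp$, and then subtract from $\tp$ and factor: $\expd{\ott{\Delta}{t+1}\mid\oat{\ep}=p}=\tp-(1-\lr)p-\lr\tp=(1-\lr)(\tp-p)=(1-\lr)\oat{\Delta}$, as claimed (two lines of arithmetic in the write-up).

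For part 3 the key point — and the only non-mechanical ingredient — is that when the gap is already at least $\lr$ the update cannot overshoot $\tp$, so the sign of the gap is preserved and the absolute value can be pulled out of the expectation. Concretely: conditioned on $\oat{\ep}=p$ with $|\oat{\Delta}|\ge\lr$, part 1 gives $|\ott{\Delta}{t+1}-\oat{\Delta}|=|\ott{\ep}{t+1}-\oat{\ep}|\le\lr$, so if $\oat{\Delta}>0$ then $\ott{\Delta}{t+1}\ge\oat{\Delta}-\lr\ge 0$ and hence $|\ott{\Delta}{t+1}|=\ott{\Delta}{t+1}$, while if $\oat{\Delta}<0$ then $|\ott{\Delta}{t+1}|=-\ott{\Delta}{t+1}$ by the same reasoning (and $|\oat{\Delta}|\ge\lr>0$ rules out $\oat{\Delta}=0$ except in the degenerate $\lr=0$ case, where both sides of the claim are $0$). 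Either way $\expd{|\ott{\Delta}{t+1}|\mid\oat{\ep}=p}=\big|\expd{\ott{\Delta}{t+1}\mid\oat{\ep}=p}\big|=(1-\lr)|\oat{\Delta}|$ by part 2, so $\expd{\oat{\delta}\mid\oat{\ep}=p}=|\oat{\Delta}|-(1-\lr)|\oat{\Delta}|=\lr|\oat{\Delta}|\ge\lr^2$, using $|\oat{\Delta}|\ge\lr$. Averaging over $\oat{\ep}$ on the event $|\oat{\Delta}|\ge\lr$ finishes it.

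I do not anticipate a real obstacle here: parts 1 and 2 are routine, and the ``hard'' part of part 3 amounts to noticing the sign-preservation consequence of the step-size bound from part 1. The only things to be careful about in the final write-up are stating the $\oat{\ep}\in[0,1]$ invariant explicitly (citing the earlier claim) and handling the boundary/degenerate cases ($\lr=0$, and $\ott{\Delta}{t+1}$ landing exactly on $0$) cleanly.
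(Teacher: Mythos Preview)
Your proposal is correct and follows essentially the same approach as the paper: parts 1 and 2 are handled by the same direct case analysis and expectation computation, and for part 3 the paper likewise uses the step-size bound from part 1 to observe that $\ott{\ep}{t+1}$ stays on the same side of $\tp$ (your ``sign preservation''), then applies part 2 to get $\expd{\oat{\delta}}=\lr|\oat{\Delta}|\ge\lr^2$. The only cosmetic difference is that the paper computes $\expd{\ott{\Delta}{t+1}\mid\oat{\ep}=p}$ directly rather than first computing $\expd{\ott{\ep}{t+1}\mid\oat{\ep}=p}$ and subtracting.
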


\co{
\begin{lemma} EMA's movements, \ie changes in the estimate $\oat{\ep}$,
  enjoy the following two properties, where $\lr \in [0, 1]$:
  \label{lem:ema_shirnk}
\setlength{\leftmargini}{15pt} 
  \begin{enumerate}
  \item Maximum movement, or step size, no more than $\lr$ change: $\forall t,
    |\ott{\ep}{t+1}-\oat{\ep}| \le \lr$.
  \item Expected movement: Let $\oat{\Delta} = \tp - \oat{\ep}$. Then,
     $\expd{\Delta^{(t+1)}|\oat{\ep}=p} =
    (1-\lr)(\tp-p) = (1-\lr) \oat{\Delta}$.
  \end{enumerate}
\end{lemma}
}
\co{

  \begin{proof}
  (proof of the first part) On a negative update,
  $\oat{\ep} - \ott{\ep}{t+1} = \oat{\ep} - (1-\lr)\oat{\ep} =\lr\oat{\ep}\le \lr$, and on a positive update,
  $\ott{\ep}{t+1}-\oat{\ep} = (1-\lr)\oat{\ep}+\lr-\oat{\ep}=\lr-\oat{\ep}\lr\le \lr$ (as $\oat{\ep}\in[0,1]$).

  (proof of the second part) We write the expression for the expectation and simplify:
 $\tp$ of the time, we have a positive update, \ie both weaken and
  boost ($(1-\lr)p+\lr$), and the rest, $1-\tp$ of the time, we have
  weaken only ($(1-\lr)p$). In both cases, the term $(1-\lr)p$, is
  common and is factored:
  \begin{align*}
    \hspace*{-.4in}\expd{\Delta^{(t+1)}|\oat{\ep}=p} & = \tp\left(\tp - ((1-\lr)p+\lr) \right)+(1-\tp)\left(\tp-(1-\lr)p)\right) \\
    &= (\tp+(1-\tp))(\tp - (1-\lr)p) - \tp\lr \hspace*{.3in} \mbox{
      ( $\tp - (1-\lr)p$ is common and is factored )}\\
    &= \tp - (1-\lr)p - \tp\lr = \tp(1-\lr) - (1-\lr)p  \\
    &= (1-\lr)(\tp-p)
  \end{align*}
\end{proof}

}

The lemma is established by writing down the expressions, for an EMA
update and the gap expectation, and simplifying (see Appendix
\ref{app:ema}). It follows that
$\expd{\ott{\ep}{t+1}|\oat{\ep}=\tp}=\tp$ (or
$\expd{\ott{\Delta}{t+1}|\oat{\Delta}=0}=0$, and the {\em expected
  direction} of movement is always toward $\tp$. The above property
would imply that a positive gap should always shrink in expectation,
\ie $\expd{|\ott{\Delta}{t+1}|} < |\oat{\Delta}|$ when
$|\oat{\Delta}|> 0$, but there is an exception when $\ep$ is close to
$\tp$, \eg when $\ep=\tp$. This also occurs with a high $\lr=1.0$
rate: the expectation of distance $|\ott{\Delta}{t+1}|$ often expands
when $\lr=1.0$.  However, it follows from the maximum movement
property, that when $\ep$ is sufficiently far, at least $\lr$ away
from $\tp$, the sign of $\oat{\Delta}$ does not change, and property 2
implies that the gap is indeed reduced in expectation (property 3),
which we can use to show probabilistic convergence:

\co{
, within a bounded number of time
points, to within $\lr$ of $\tp$:


or more
}

\begin{theorem}
  \label{thm:converge}
  EMA, with a fixed rate of $\lr \in (0, 1]$, has an expected
    first-visit time bounded by $O(\lr^{-2})$ to within the band
    $\tp\pm\lr$.  The required number of
    updates, for first-visit time, is lower bounded below by
    $\Omega(\lr^{-1})$.
\end{theorem}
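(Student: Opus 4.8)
The plan is to work entirely in the stationary binary reduction already set up in \sec\ref{sec:ema}: fix a single predictand \itm{A} with target \pr $\tp$, and track its scalar estimate $\oat{\ep}$, which at each step moves to $(1-\lr)\oat{\ep}+\lr$ with probability $\tp$ and to $(1-\lr)\oat{\ep}$ with probability $1-\tp$, starting from $\ott{\ep}{1}=0$ and remaining in $[0,1]$. The general multiclass statement then follows at once, since every predictand's estimate evolves as such a walk. The whole argument uses Lemma \ref{lem:ema_shirnk} as a drift (Lyapunov / supermartingale) inequality for the gap $\oat{\Delta}:=\tp-\oat{\ep}$.

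For the upper bound, let $\tau:=\inf\{t\ge 1:\ |\oat{\Delta}|\le\lr\}$ be the first time the estimate lies in the band $\tp\pm\lr$. The key point is that for $t<\tau$ we have $|\oat{\Delta}|>\lr$, so by parts (1) and (3) of Lemma \ref{lem:ema_shirnk} a single step cannot flip the sign of $\oat{\Delta}$ and the potential $|\oat{\Delta}|\in[0,1]$ decreases in conditional expectation by at least $\lr^2$: $\expd{|\oat{\Delta}|-|\ott{\Delta}{t+1}|\mid\oat{\ep}}\ge\lr^2$ (indeed it equals $\lr\,|\oat{\Delta}|$). I would then consider the stopped process $Z_t:=|\ott{\Delta}{t\wedge\tau}|+\lr^2\,(t\wedge\tau)$, check that it is a nonnegative integrable supermartingale (conditional increment $\le -\lr^2+\lr^2=0$ while $t<\tau$, and $0$ once $t\ge\tau$), and take expectations to get $\lr^2\,\expd{t\wedge\tau}\le\expd{Z_t}\le\expd{Z_1}\le 1+\lr^2$. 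Letting $t\to\infty$ by monotone convergence yields $\expd{\tau}\le 1+\lr^{-2}=O(\lr^{-2})$, which also shows $\tau<\infty$ almost surely.

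For the lower bound, part (1) of Lemma \ref{lem:ema_shirnk} suffices, and deterministically: from $\ott{\ep}{1}=0$ and $|\ott{\ep}{t+1}-\oat{\ep}|\le\lr$ we get $\oat{\ep}\le(t-1)\lr$, so entering the band (which requires $\oat{\ep}\ge\tp-\lr$) forces $t\ge\tp/\lr$. Hence, for any target \pr bounded away from $0$, the number of updates before the first visit is at least $\lceil\tp/\lr\rceil=\Omega(\lr^{-1})$, and this holds surely rather than merely in expectation.

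The part I expect to require the most care is not the arithmetic but pinning the drift down correctly: one must verify that while the estimate is outside the band the step is small enough that $\oat{\Delta}$ retains its sign, so that the one-step contraction ``$\expd{\ott{\Delta}{t+1}\mid\oat{\ep}}=(1-\lr)\oat{\Delta}$'' of Lemma \ref{lem:ema_shirnk}(2) transfers to the absolute value $|\oat{\Delta}|$. This is exactly why the band is taken with half-width $\lr$ equal to the maximum step size, and why parts (1) and (3) of the lemma must be combined. The remaining pieces — the supermartingale / optional-stopping bookkeeping over the bounded truncated times $t\wedge\tau$, and the $\Omega(\lr^{-1})$ lower bound — are routine.
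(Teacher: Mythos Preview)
Your proposal is correct and follows essentially the same drift argument as the paper: both use Lemma~\ref{lem:ema_shirnk}(3) for the $\lr^2$ per-step expected progress toward $\tp$ (valid outside the band since part~(1) prevents $\oat{\Delta}$ from changing sign), and part~(1) again for the deterministic $\Omega(\lr^{-1})$ lower bound. Your supermartingale packaging via $Z_t=|\ott{\Delta}{t\wedge\tau}|+\lr^2(t\wedge\tau)$ and optional stopping is the standard rigorous way to convert a drift inequality into a hitting-time bound, and is cleaner than the paper's more informal telescoping of expected increments, but the substance is identical.
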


\co{
  
\begin{proof}
We are interested in maximum of first-time $k$ when the expected
$\expd{\ott{\ep}{k}} \in [\tp-\lr, \tp+\lr]$.  Using the maximum
movement constraint, as long as $|\tp - \oat{\ep}| > \lr$, an EMA
update does not change the sign of $\tp - \oat{\ep}$ ($\ep$ does not
switch sides wrt $\tp$, \eg if greater than $\tp$, it remains greater
after the update).  Before an estimate $\oat{\ep} < \tp$ changes
sides, and exceed $\tp$, it has to come within the band
$\tp\pm\lr$. Therefore, start with an arbitrary location
$\ott{\ep}{1}$ outside the band, say $\ott{\ep}{1} < \tp - \lr$
(similar arguments apply when $\ott{\ep}{1} > \tp + \lr$), and
consider the sequence, $\ott{\ep}{1}, \ott{\ep}{2}, \ott{\ep}{3},
\cdots, \ott{\ep}{k}$, where $\forall t, 1\le t \le k, \ott{\ep}{t} <
\tp-\lr$.  We now show each time step is a positive progress, in
expectation, towards the band $\tp\pm\lr$ (a step up), in particular
$\expd{\ott{\ep}{t+1}-\ott{\ep}{t}}\ge \lr^2$:
\begin{align*}
\hspace*{-.2in}\expd{\ott{\ep}{t+1}-\ott{\ep}{t}}&=
\expd{\tp-\ott{\ep}{t}-(\tp-\ott{\ep}{t+1})} =
\expd{\tp-\ott{\ep}{t}} - \expd{\tp-\ott{\ep}{t+1}}
\mbox{\ \ (linearity of expectation)} \\
&= \expd{\tp-\ott{\ep}{t}} - (1-\lr)\expd{\tp-\ott{\ep}{t}}
= \lr \expd{\tp-\oat{\ep}} \mbox{\ \  (use of Lemma \ref{lem:ema_shirnk}, property 2)} \\
& \ge \lr^2 \mbox{\ \  (from our assumption on the sequence:
  $\forall t\le k, \ott{\ep}{t} \le \tp-\lr$)}
\end{align*}

We can now lower bound the expected position of $\ott{\ep}{k}, k\ge 2$
wrt $\ott{\ep}{1}$, to be at least $(k-1)\lr^2$ above $\ott{\ep}{1}$:

\begin{align*}
\expd{\ott{\ep}{k} - \ott{\ep}{1}| \ott{\ep}{1}=p} &=
\expd{\ott{\ep}{k} - \ott{\ep}{k-1} + \ott{\ep}{k-1} - \ott{\ep}{1} | \ott{\ep}{1}=p} \\
&= \expd{\sum_{2\le t\le k}\ott{\ep}{t} - \ott{\ep}{t-1} | \ott{\ep}{1}=p}
 \mbox{\ \  (insert all intermediate sequence members)} \\
&= \sum_{2\le t\le k} \expd{\ott{\ep}{t} - \ott{\ep}{t-1} | \ott{\ep}{1}=p} \ge (k-1)\lr^2,
\end{align*}
where we used the linearity of expectation, and the $\lr^2$ lower
bound for the last line.  With $\tp\le 1$, an upper bound of
$\frac{1}{\lr^2}$ on maximum first-visit time follows: $k$ cannot be
larger than $\frac{1}{\lr^2}$ if we want to satisfy $\forall 1\le t\le
k, \expd{\ott{\ep}{t}}< \tp-\lr$, or one of $t \in \{1, \cdots,
\frac{1}{\lr^2}+1\}$ has to be within the band $\tp\pm\lr$.

The lower bound $\Omega(\lr^{-1})$ on $k$ follows from the upperbound
of $\lr$ on any advancement towards $\tp$.
\end{proof}

}

The proof, presented in Appendix \ref{app:ema}, works by using
property 3 that expected progress toward $\tp$ is at least $\lr^2$
while our random walker $\oat{\ep}$ is outside the band
($|\tp-\oat{\ep}|>\beta$).  It would be good to tighten the gap
between the lower and upper bounds. Table \ref{tab:ema_convergences}
suggests that the upper bound may be subquadratic, perhaps linear
$\Theta(\lr^{-1})$. We expect that one can use techniques such as
Chernoff bounds to show that the number of steps to a first time visit
of the bound is also bounded by $\lr^{-2}$ with very high probability,
and that there exists a stationary distribution for the walk
concentrated around $\tp$. We leave further characterizing the random
walk to future work.

\co{
Therefore, we can now use the second part of Lemma
\ref{lem:ema_shirnk}, on expected movement, to lower-bound the
expected number of steps (advancements) towards the band $\tp\pm\lr$.

Assume $\ott{\ep}{1} \le \tp - \lr$, \eg $\ott{\ep}{1}=0$ (similar
arguments apply when $\ott{\ep}{1} > \tp + \lr$).
}



\co{
Convergence to the target is probabilistic and approximate (to within
a $\dvt$, given appropriate $\lr$), and in this paper we empirically
show the convergence for some values of $\tp$ and $\lr$.  \fig
\ref{fig:ema_convergence} shows the convergence of EMA under two
different rates, $\lr=0.01$ and $\lr=0.001$, on a single binary
sequence with $\tp=0.1$.  Table \ref{tab:ema_convergences} shows first
time convergences and deviation rates for a few choices of $\lr$ and
for two different $\tp$ values. One notes that convergence is roughly
linear in $\lr^{-1}$.
}



\subsection{Convergence Speed \vs Accuracy Tradeoff}

\co{
Consider the case when a new item starts appearing in the stream with
probability $\tp$ (\eg, $\tp=0.1$), \ie a change in probability from 0
(so far) to $\tp$ (from now on). This is also the situation that any
predictor faces initially, starting with an empty map, as it learns to
predict items in its stream.  We now derive a rough lower estimate on
the number of positives and total observations needed for the EMA
estimate, to get close to $\tp$ (\eg within a factor of 2), starting
from $\ep=0$.  We are interested in $\tp \ge 0.01$, and assume we use
$\lr \ll \tp$ (\eg $\lr < \tp/k$, with $k\ge 2$).
Each positive observation leads to an advancement in \pr estimate
$\ep$ close to $\lr$ towards the target $\tp$. Note that as $\ep$ gets
larger, the advancement delta, $\lr(1- \ep)$, is reduced, going to 0
when the estimate is 1. A weakening corresponds to a proportional
reduction (by a fixed factor $1-\lr$). We are ignoring the weakening
step, $1-\lr$, in deriving this lower estimate. Thus the number of
positive outcomes (observations) needed to get close to the target
probability, starting from 0, is at least $\frac{\tp}{\lr}$, and the
expected {\em total} number of observations (time points) required
(irrespective of negative or positive), since a positive is observed
every $\frac{1}{\tp}$ times on average, is thereby at least
$\frac{\tp}{\lr}\frac{1}{\tp}=\frac{1}{\lr}$ (or
$\Omega(\frac{1}{\lr}$)). Interestingly, this lower bound is
independent of $\tp$: the number of positives required is dependent,
but the total time points required is not, and only depends on $\lr$,
which is consistent with the realization that while a higher $\tp$ is
more distant from 0 compared to a smaller $\tp$, the number of
positive updates will be higher for higher $\tp$ as well.  In any
case, as may be expected, the smaller the rate $\lr$, the longer it
takes for $\ep$ to converge (get close).  A higher rate $\lr$, \eg
$0.01$, require 10 times fewer observations than $0.001$, or leads to
faster convergence, for higher target \prs (with $\tp \gg \lr$). As
seen Table \ref{tab:ema_convergences} suggests, this lower bound may
be exact. However, a higher rate leads to a higher variance as well,
as discussed next.
}






It follows from the lower bound, $\Omega(\frac{1}{\lr})$, that the
smaller the rate $\lr$ the longer it takes for $\ep$ to get close to
$\tp$.  A higher rate $\lr$, \eg $0.01$, require 10 times fewer
observations than $0.001$, or leads to faster convergence, for higher
target \prs (with $\tp \gg \lr$). On the other hand, once sufficiently
near the target \pr $\tp$, we desire a low rate for EMA, to keep
estimating the probability well (\fig
\ref{fig:ema_convergence}). Alternatively, a high rate would cause
unwanted jitter or variance. Consider when $\lr \gtrapprox \tp$, \ie
the $\lr$ is itself near or exceeds the target $\tp$ in
magnitude. Then when the estimate is also near, $\ep \approx \tp$,
after an EMA positive update, we get $\ep \gtrapprox 2\tp$, or the
relative error shoots from near 0 to near 100\% (an error with the
same magnitude as the target being estimated).\footnote{When the
estimate is at target, $\oat{\ep}=\tp$, the only situation when there
is no possible movement away from $\tp$ is at the two extremes when
$\tp=1$ or $\tp=0$ ($\ott{\ep}{t+1}=\tp$ for any $\lr$, if $\tp\in
\{0,1\}$ and $\ott{\ep}{t}=\tp$).}

Once $\ep$ is sufficiently near, we can say we desire stability,
better achieved with lower rates. At an extreme, if we knew that $\tp$
would not change, and we were happy with our estimate $\ep$, one could
even set $\lr$ to $0$. In situations when we expect some
non-stationarity, \eg drifts in target \prsn, this is not wise.  A
rule of thumb that reduces high deviation rates while being sensitive
to target \pr drifts, is to set $\lr$ to, say, $\frac{\tp}{10}$, so
that the deviation rate at $\dvt=1.5$ is no more 10\% (some acceptable
percentage of the time). As we don't know $\tp$, if we are interested
in learning \prs in a range, \eg $[0.01, 1.0]$, and we are using plain
EMA, then we should consider setting $\lr$ to $\frac{0.01}{10}$ (a
function of the minimum of the target \pr range).




\subsection{From $\Omega(\lr^{-1})$ to $O(\frac{1}{\tp})$, via Rate Decay }
\label{sec:harm}








The above analysis tells us that when we want to learn target \prs in
a diverse range, such as $[0.01, 1.0]$, and when using plain
fixed-rate (static) EMA, we need to use a low rate to make sure
smaller \prs are learned well, which sacrifices speed of convergence
for larger $\tp$ for accuracy on smaller ones: a relatively large
target, say $\tp > 0.1$, requires 100s or 1000s of time points to
converge to an acceptable deviation-rate with a low $\lr \approx
0.001$,
instead of 10s, with
$\lr \lessapprox 0.01$.  This motivates considering alternatives, such as EMA with a changing
rate. A variant of EMA, which we will refer to as {\em harmonic EMA},
has a rate decaying over time with each update, in a {\em harmonic}
manner.  the rate $\oat{\lr}$ starts at a high value
$\ott{\lr}{1}=\lrmax$ (\eg $\lrmax=1.0$), and is reduced gradually
with each (positive or negative) update, via: 
\begin{align*}
  \hspace*{-.25in}
  \ott{\lr}{t+1}\leftarrow \left(\frac{1}{\oat{\lr}}+1\right)^{-1}
  \mbox{\ \ (the harmonic decay of the learning rate: a double reciprocal) }
\end{align*}
For instance, with $\ott{\lr}{1}=1$, then $\ott{\lr}{2} =
(1+1)^{-1}=\frac{1}{2}$,
$\ott{\lr}{3}=((\frac{1}{2})^{-1}+1)^{-1}=\frac{1}{3}$,
$\ott{\lr}{4}=\frac{1}{4}$, and so on, yielding the fractions in the
harmonic series. We let the $\lr$ go down to no lower than the floor
$\lrmin\in(0,1)$ as shown in \fig \ref{fig:ema}(b), though making the
minimum a fraction of the \pr estimates may work better. \fig
\ref{fig:histories}(a) shows how past observations are down-weighted
by this variant (and compared to plain static EMA and other \smasn).
We have observed in prior work that such a decay regime is beneficial
for faster learning of the higher \prs (\eg $\tp \gtrsim 0.1$), as it
is equivalent to simple counting and averaging to compute
proportions.\footnote{This manner of reducing the rate is equivalent
to reduction based on the update count of EMA, and we referred to it
as a count-based (or frequency-based) decay and it was shown
equivalent to averaging \cite{expedition1}. To see the equivalence,
one can expand and note the telescoping product, or use induction: The
estimate after the update at $t, t\ge2$ is
$\oat{\ep}=\oat{\lr}\oat{o}+(1-\oat{\lr})\ott{\ep}{t-1}$, where
$\ott{\lr}{1}=1,\cdots,\ott{\lr}{t}=\frac{1}{t}$, or
$1-\oat{\lr}=\frac{t-1}{t}$, and we have
$\ott{\ep}{t-1}=\frac{\sum_{i=1}^{t-1}\ott{o}{i}}{t-1}$ (induction
hypothesis), and combining, we obtain the simple average:
$\oat{\ep}=\frac{\sum_{i=1}^{t}\ott{o}{i}}{t}$.} This does not
negatively impact convergence or the error-rate on the lower \prs
(such as $\tp \lesssim 0.05$).  In particular, with harmonic-decay,
one requires $O(\frac{1}{\tp})$ time points instead of
$\Omega(\frac{1}{\lr})$ for convergence to within a positive
(constant) multiplicative deviation $d$ (Appendix A in
\cite{expedition1}). Note that the changing rate $\lr$ also indicates
the predictor's confidence in its current estimate, assuming the
target \pr does not change: the lower the $\lr$ compared to an
estimate $\ep$, the less likely $\ep$ will change substantially in
subsequent updates.



This manner of changing the rate also establishes a connection between
the learning-rate based EMA, and the counting-based methods of \sec
\ref{sec:qus}.






\co{

In particular, it could be useful to start with a high learning
rate and to lower it over time.

Previously (see Appendix A,
\cite{expedition1}), we have observed that starting with a high rate
$\lrmax$, \eg $\lrmax=1$, and then gradually reducing the learning
rate with each update, \eg via a harmonic decay\footnote{This manner
of reducing the rate is equivalent to reduction based on the update
count of EMA, and we initially referred to it as a count-based (or
frequency-based) decay in \cite{expedition1}.}  schedule (\fig
\ref{fig:ema}(b)), is beneficial for faster learning of the higher
\prs (\eg $\tp$ above 0.1), while not impacting convergence or the
error-rate on the lower \prs (such as $\tp$ below 0.05).  In
particular, with harmonic-decay, one requires $O(\frac{1}{\tp})$ time
points instead of $O(\frac{1}{\lr})$ for convergence to within $d$
(Appendix A, \cite{expedition1}).
}


\subsection{Non-Stationarity Motivates Per-Edge Learning Rates}


The harmonic decay technique is, however, beneficial only initially
for a predictor: once the learning rate is lowered, it is not raised
again in plain EMA of \fig \ref{fig:ema}.
%
If one could increase the rate, the predictor could learn to predict a
new item's \pr faster. However, that disrupts the \pr estimates for
existing currently stable items.
%
Different items should not, in general, interfere
with one another: imagine a predictor already predicting an item $A$
with a certain \pr sufficiently well. Ideally learning to predict a
new item $B$ should not impact the \pr of an existing item $A$, unless
$A$ and $B$ are related, or correlated, such as when $B$ is replacing
$A$. This consideration motivates keeping a separate learning rate for
each predictand, or per prediction edge
%
(supporting edge-specific rates). Such extensions would be valuable if
one could achieve them without adding substantial overhead, while
keeping the
desired convergence and \sd preservation properties of EMA (\sec
\ref{sec:convergence}). Section \ref{sec:qdyal} describes a way of
achieving this goal.  To support the extension, we also need to detect
changes, for when to increase a learning rate, and we next develop an
\sma
that can be used both for change detection, as well as for giving us
the initial estimate for a (new) predictand's \pr together with an
initial learning rate. We expect that this type of predictor would
have other uses as well (\eg see \sec \ref{sec:timestamps}).

\co{
But before that, we explain another moving
average technique that can be used both for detecting changes, as well
as giving us an initial estimate for a (new) predictand's probability
together with an initial learning rate. We expect that this type of
predictor would have other uses as well (\eg see \sec
\ref{sec:timestamps}).
}



\section{The Queues Predictor}
\label{sec:qus}

We begin with the stationary setting for a binary event, continually
estimating the \pr of outcome 1 from observing a sequence of 0s and
1s. We then
adapt the counting technique to the non-stationary case and present
the queuing technique for efficiently tracking the \prs of multiple
items. We conclude in \sec \ref{sec:box} with a counting variant we call the
\bx predictor, that could also naturally be implemented via a queue
for efficiency, and discuss a dynamic (adaptive windowing) variant of
it. Appendices \ref{sec:qanalysis} and \ref{app:qvars} contain further
analyses as well as a more efficient variant of queuing.


\subsection{The Stationary Binary Case}
\label{sec:simple_counting}


To keep track of the proportion of positive occurrences, two counters
can be kept, one for the count of total observations, simply time $t$,
and another for the count of positive observations, $\oat{N_p}$. The
probability of the target item, or the proportion of positives, at any
point, could then be the ratio $\oat{\ep}=\frac{\oat{N_p}}{t}$, which
we just write as $\ep=\frac{N_p}{t}$, when it is clear that an
estimate $\ep$ is at a time snapshot. This proportion estimator is
unbiased with minimum variance (MVUE), and is also the maximum
likelihood estimator (MLE) \cite{math_stats18,math_stats16} (see also
Appendix \ref{sec:qanalysis}).  Table \ref{tab:plain_counting}
presents deviation ratios defined here as fraction of the 20k
generated binary sequences, each generated via iid drawing from
$\P=\{$ 1$:\tp$, 0$:1-\tp\}$, for a few $\tp \in \{0.3, 0.1, 0.05,
0.01\}$.  The value of $\dvc(\tp, \ep, d)$ is either 0 or 1 (\eq
\ref{eq:devc}), and is snapshotted at the times $t$ when $N_p$ {\em
  first} hits 10, 50, or 200 along the sequence.  We get a deviation
fraction (ratio) once we divide the violation count, the number of
sequences with $\dvc$ value of 1, by the total number of sequences
(20k). As the number of positive observations increases (the higher
the $N_p$), from 10 to 50 to 200, the estimates $\oat{\ep}$ improve,
and there will be fewer violations, and the deviation ratios go down.
These ratios are also particularly helpful in understanding the
deviation rates of the queuing technique that we describe next.




The above counting approach is not sensitive to changes in the
proportion of the target.\footnote{In addition, with a fixed memory,
there is the potential of counting overflow problems. See \sec
\ref{sec:timestamps}.}  We need a way to keep track of only recent
history or limiting the window over which we do the averaging. Thus,
we are seeking a {\em moving average} of the proportions.  A challenge
is that we are interested in a fairly wide range of \prsn, such as
tracking both 0.1 or higher (a positive in every ten occurrences) as
well as lower proportions such as 0.01 (one in a hundred), and a fixed
history window of size $k$, a ``box'' predictor (\sec \ref{sec:box}
and \fig \ref{fig:histories}), of all observations for the last $k$ time points, is not
feasible in general unless $k$ is very small, \ie the space and
update-time requirements can be prohibitive
computationally.\footnote{Many predictors, thousands or millions and
beyond, could execute the same updating algorithm, \eg in \pgsn.}



\begin{table}[t]  \center
  \begin{tabular}{ |c|c|c|c| c|c|c| c|c| }     \hline
    (positives observed) $\rightarrow$  & \multicolumn{3}{c|}{$N_p$=10} & \multicolumn{3}{c|}{$N_p$=50} & \multicolumn{2}{c|}{$N_p$=200} \\ \hline
     (deviation thresh.) $\dvt \rightarrow$  & 1.1 & 1.5 & 2.0 & 1.1 & 1.5 & 2.0 & 1.1 & 1.5 \\ \hline
  $\tp=0.30$ & 0.735 $\pm$ 0.002 & 0.13 & 0.009 $\pm$ 0.001 & 0.42 & 0.001 & 0.00 & 0.11 & 0.00  \\ \hline
  $\tp=$ 0.10 & 0.742 $\pm$ 0.004 & 0.18 & 0.024 $\pm$ 0.001 & 0.48 & 0.002 & 0.00 & 0.15 & 0.00  \\ \hline
  $\tp = $ 0.05 & 0.75 $\pm$ 0.004 & 0.20 & 0.030 $\pm$ 0.001 & 0.49 & 0.003 & 0.00 & 0.17 & 0.00  \\ \hline
  $\tp = $ 0.01 & 0.76 $\pm$ 0.004 & 0.20 & 0.036 $\pm$ 0.001 & 0.50 & 0.004 & 0.00 & 0.18 & 0.00  \\ \hline
  \end{tabular}
  \vspace*{.1in}
\caption{Deviations in the plain stationary setting.  In this
  simplified setting, a binary sequence is generated iid using a fixed
  but unknown target probability $\tp$, or $\P=\{$1:$\tp$,
  0:$1-\tp\}$, and we use the plain count-based proportion
  $\ep=\frac{N_p}{N}$ to estimate $\tp$ (\sec \ref{sec:qus}).
  20k sequences are generated, each long enough so that there are 200
  positive, 1, outcomes. Fractions of sequences in which the deviation
  of the estimation $\ep$ from the target probability $\tp$ exceeded a
  desired threshold $\dvt$, \ie when $\max(\frac{\tp}{\ep},
  \frac{\ep}{\tp}) > \dvt$ (see \eq \ref{eq:devc}), is reported at a
  few time snapshots.  Thus, with $\tp=0.1$ (2nd row), in about $74\%$
  of the 20k sequences, immediately after $N_p=10$ positive outcomes
  has been observed in the sequence so far, we have either
  $\frac{\tp}{\ep} > 1.1$ or $\frac{\ep}{\tp} > 1.1$ (thus, either the
  estimate $\ep < \frac{0.1}{1.1}$ or $\ep > 0.11$). This deviation
  ratio goes down (improves) to $48\%$ after $N_p=50$ positive
  observations, and further down to $15\%$ when $N_p=200$.  In
  summary, we observe that as more time is allowed, the deviations go
  down, and for smaller $\tp$ the problem becomes somewhat harder
  (larger deviation rates). }
\label{tab:plain_counting}
\end{table}

\subsection{Queuing Counts}

Motivated by the goal of keeping track of only recent proportions, we
present a technique based on queuing a few simple count bins, which we
refer to as the {\bf \em \qu} method. Here, the predictor keeps, for
each item it tracks, a small number of count snapshots (instead of
just one counter), arranged as cells of a queue. Each positive
observation triggers allocation of a new counter, or queue cell.  Each
queue cell yields an estimate of the proportion of the target item,
and the counts over multiple cells can be combined to obtain a single
\pr for that item.  Old queue cells are discarded as new cells are
allocated, keeping the queue size within a capacity limit, and to
adapt to non-stationarities.  We next explain the queuing in more
detail. Figure \ref{fig:queues} presents pseudocode, and \sec
\ref{sec:extraction} discusses techniques for extracting \prs and
their properties such as convergence.


\begin{figure}[t]
\hspace*{-.1in}\begin{minipage}[t]{0.5\linewidth}
    \fontsmall{
      {\bf PredictUpdateViaQs}($\seq{o}{}{}$) \\
      \hspace*{0.3cm} // Input is sequence $\seq{o}{}{}=[\ott{o}{1},\ott{o}{2},\cdots]$. \\  
\hspace*{0.3cm} $\qmap \leftarrow \{\}$ // An empty map, item$\rightarrow$queue. \\
\hspace*{0.3cm} $t \leftarrow 0$ // Discrete time. \\
\hspace*{0.3cm}  Repeat // Increment time, predict, observe.\\
\hspace*{0.5cm} $t \leftarrow t + 1$ // Increment time.\\
\hspace*{0.5cm} GetPredictions($\qmap$) // Get the predictions. \\
\hspace*{0.5cm} // Use observation at time $t$, $\oat{o}$, to \\
\hspace*{0.5cm} UpdateQueues($\qmap$, $\oat{o}$) update $\qmap$. \\ 
\hspace*{0.5cm} If t \% 1000 == 0: // Periodically prune $\qmap$. \\
\hspace*{0.8cm} PruneQs($\qmap$) // (a heart-beat method).\\ \\ 
{\bf GetPredictions}($\qmap$) \\
\hspace*{0.3cm} // Returns a map: item $\rightarrow$ \pr. \\
\hspace*{0.3cm} $\Q \leftarrow \{\}$ // allocate an empty map. \\
\hspace*{0.3cm} For each item $i$ and its queue $q$ in $\qmap$: \\
\hspace*{0.6cm} // One could remove 0 \prs here. \\
\hspace*{0.6cm} $\Q[i] \leftarrow$ GetPR$(q)$ \\
\hspace*{0.3cm} Return $\Q$ // The final predictions. \\ \\ \\
{\bf UpdateQueues}($\qmap, o$) // latest observation $o$. \\
\hspace*{0.3cm} If item $o \not \in \qmap$: // when $o \not \in \qmap$, insert.\\
\hspace*{0.6cm}   // Allocate \& insert q for $o$. \\
\hspace*{0.6cm}   $\qmap[o] \leftarrow Queue()$ \\ 
\hspace*{0.3cm} For each item $i$ and its queue $q$ in $\qmap$: \\
\hspace*{0.6cm} If $i \ne o$: \\ 
\hspace*{0.9cm} // All but one will be negative updates.\\
\hspace*{0.9cm} NegativeUpdate(q) // Increments a count. \\
\hspace*{0.6cm} Else: // Exactly one positive update. \\
\hspace*{0.9cm} // Add a new cell, with count 1. \\
\hspace*{0.9cm} PositiveUpdate(q) \\ 
    }
\end{minipage}
\hspace{0.3cm}
\begin{minipage}[t]{0.55\linewidth}
    \fontsmall{
        {\bf Queue}($cap=3$) // Allocates a queue object. \\
\hspace*{0.3cm}  // \mbox{Allocate $q$ with various fields (capacity, cells, etc.)} \\
\hspace*{0.3cm}  $q.qcap \leftarrow cap$ // max size $cap, cap > 1$. \\
\hspace*{0.3cm}  // Array (or linked list) of counts.\\
\hspace*{0.3cm}  $q.cells \leftarrow [0, \cdots, 0]$ \\
\hspace*{0.3cm}  // Current size or number of cells ($\le cap$).\\
\hspace*{0.3cm}  $q.nc \leftarrow 0$\\
\hspace*{0.3cm}  Return $q$  \\ \\
        {\bf GetPR}($q$) // Extract a probability, from the number \\
\hspace*{.3cm}         // of cells in $q$, $q.nc$, and their total count. \\
\hspace*{0.3cm}  If $q.nc \le 1$: \mbox{// Too few cells (grace period).}\\
\hspace*{0.7cm}  Return 0 \\
\hspace*{0.3cm}  Return $\frac{q.nc - 1}{GetCount(q) - 1}$ \\ \\ 
        {\bf GetCount}($q$) // \mbox{Get total count of all cells in $q$.} \\
\hspace*{0.3cm}  Return $\sum\limits_{0\le j < q.nc} q.cells[j]$ // sum over all the cells. \\ \\
        {\bf NegativeUpdate}($q$) \mbox{// Increments $\cz$ count.}\\
\hspace*{0.25cm} // The back (latest) cell of $q$ is incremented. \\
\hspace*{0.3cm}  If $q.nc \ge 1$: \mbox{// Do nothing, if no cells.}\\
\hspace*{0.45cm}   q.cell[0] $\leftarrow$ q.cells[0] + 1 \\ \\
        {\bf PositiveUpdate($q$) } // Adds a new (back) cell. \\ 
\hspace*{0.25cm} // Existing cells shift one position. Oldest cell is \\
\hspace*{0.25cm} // discarded, in effect, when $q$ is at capacity. \\
        \hspace*{0.3cm}If $q.nc < q.qcap$:\\
        \hspace*{0.7cm}         $q.nc \leftarrow q.nc + 1$ // Grow the queue $q$.\\
        \hspace*{0.3cm}For $i$ in [1, $\cdots, q.nc-1$]: // Inclusive.\\
        \hspace*{0.7cm} $q.cells[i] \leftarrow q.cells[i - 1]$ // shift (counts). \\
        \hspace*{0.3cm} $q.cells[0] \leftarrow 1$ // set newest, $\cz$, to count 1. \\ \\
    }
\end{minipage}

\caption{Pseudo code of the main functions of the \qu method
  (left), and individual queue operations (right). Each predictor
  keeps a one-to-one map of items to queues, $\qmap$, where each queue
  is a small list of counts, yielding a probability for a single item.
  For methods of extracting probabilities from the counts and their
  properties (variations of GetPR()), see \sec \ref{sec:extraction},
  and for pruning, see \sec \ref{sec:qspace} and \fig \ref{fig:qflow}. }
\label{fig:queues}
\end{figure}







\subsection{The \qu Method: Keep a Map of Item $\ra$ Queue}
\label{sec:qs}

The \qu method keeps a one-to-one map, $\qmap$, of items to (small)
queues.  At each time point $t$, $t\ge 1$, after outputting predictions
using the existing queues in the map, it updates all the queues in the
map.\footnote{And other functions can be supported, such as querying for
a single item and obtaining its probability and/or the counts. We are
describing the main functions of updating and prediction.} If the item
observed at $t$, $\ott{o}{t}$, does not have a queue, a queue is
allocated for it and inserted in the map first, before the updating of
all queues.  For any queue corresponding to an item $i \ne
\ott{o}{t}$, a {\bf \em negative update} is performed, while a {\bf
  \em positive update} is performed for the observed item
$\ott{o}{t}$. Thus at every time point, exactly one positive update
and zero or more negative updates occur. Every so often, the map is
pruned (\sec \ref{sec:qspace}). Operations on a single queue are
presented in \fig \ref{fig:queues}(b) and described next, and \fig
\ref{fig:q_pics} illustrates these queue operations with examples.




A new cell, {\bf \cz}, at the back of the queue, is allocated each
time a positive outcome is observed, and its counter is initialized to
1.  With every subsequent (negative) observation, \ie until the next
positive outcome, \cz increments its counter.  The other, {\bf \em
  completed}, cells are in effect frozen (their counts are not
changed). Before a new back cell \cz is allocated, the existing \czn,
if any, is now regarded as completed and all completed cells shift to
the ``right'' one position, in the queue (Figure \ref{fig:q_pics}(b)),
and the oldest cell, {\bf \ckn}, is discarded if the queue is at
capacity {\bf \qcapn}.  Each cell corresponds to one positive
observation and the remainder in its count $c$, $c-1$, corresponds to
consecutive negative outcomes (or the {\em time 'gap'} between one
positive outcome and the next). Thus, the number of positives
corresponding to a cell is 1 (a positive observation was made that led
to the cell's creation).  Each cell yields one estimate of the
proportion, for instance $\frac{1}{c}$, and by combining these
estimates, or their counts, from several queue cells, we can attain a
more reliable \pr estimate, discussed next. \fig
\ref{fig:histories}(d) shows that, in effect, queues of different
items with different occurrence probabilities, correspond to different
lengths of history, the lower the \prn, the longer the history (but
the same maximum number, \qcapn, of positive observations).




\subsection{Extracting \prs from the Queue Cells}
\label{sec:extraction}

The process that yields the final count of a completed queue cell is
equivalent to repeatedly tossing a two-sided coin with an unknown but
positive heads probability $\tp > 0$, and counting the tosses until
and including the toss that yields the first heads (positive)
outcome. Here, we first assume $\tp$ does not change (the stationary
case). The expected number of tosses until the first heads is observed
is $1/\tp$. We are interested in the reverse estimation problem:
Assuming the observed count of tosses, until and including the first
head, is $c$, the reciprocal estimator $\frac{1}{c}$ is a {\em
  biased}, upper estimator of $\tp$, for $\tp < 1$, also known to be
the MLE. The positive bias can be shown by looking at the expectation
expression (Appendix \ref{sec:qanalysis}). The (bias) ratio,
$\frac{c^{-1}}{\tp}$, gets larger for smaller $\tp$ (as $\tp
\rightarrow 0$). See Appendix \ref{sec:qanalysis} which contains
derivations and further analyses. 

More generally, with $k$ completed cells, $k\ge 2$, cell $j$ having
count $\c_j$, we can pool all their counts and define the statistic
(random variable) $G_k=\frac{k-1}{\sum_{1\le j\le k} \c_j - 1}$. $G_k$
is shown to be {\em the minimum variance unbiased estimator} of $\tp$
\cite{marengo2021}, meaning in particular that $\expd{G_k}=\tp$. The
powerful technique of Rao-Blackwellization is a well-known
tool\footnote{Many thanks to J. Bowman for pointing us to the paper
\cite{marengo2021}, and describing the proof based on
Rao-Blackwellization,
\href{https://stats.stackexchange.com/questions/624192/tossing-until-first-heads-outcome-and-repeating-as-a-method-for-estimating-pro}{in
  the Statistics StackExchange.}}  in mathematical statistics,
used to derive an improved estimator (and possibly optimal, in several
senses) starting from a crude estimator, and to establish the
minimum-variance property
\cite{rbk_encp_math,rao45,blackwell47,lehman50,wikip1}. Note
that we need at least two completed cells for appropriately using this
estimator.  Appendix \ref{app:rb} contains further description of how
Rao-Blackwellization is applied here.


The back cell, \czn, of the queue, with count $\c_0$, is incomplete,
and the reciprocal estimator $\frac{1}{\c_0}$ can generally be even
higher (worse) than the simple biased MLE estimator derived from a
completed cell.  However, as we explain, due to non-stationarity, in
our implementation of \pr estimation via the \qu method, we use \cz as
well (function {\bf GetPR}($q$) in \fig \ref{fig:queues}), thus we use
$k/(\sum_{0\le j\le k} \c_j - 1)$ (where the queue has $k+1$ total
cells, and the count in the denominator includes {\em all} queue
cells). We have found that with small capacity \qcapn, using an extra
cell (even if often incomplete) can noticeably lower the variance of
the estimate. More importantly, in the presence of non-stationarity,
the estimate $\frac{1}{\c_0}$ is crucial for providing an upper bound
estimate on \prn: imagine $\tp$ has a sudden (discrete) drop from
$0.1$ to $0$. It is only \cz that would reflect this reduction over
time: the other completed cells are unaffected no matter how many
subsequent negative observations take place.\footnote{An alternative
for incorporating \cz is to separate the MLE \pr $p_0$ from \cz from
the MVUE \pr $p_1$ derived from the completed cells, and use $p_0$
only when it is lower than $p_1$, for example, when it is
significantly lower according to the binomial-tail test, \sec
\ref{sec:binom}. This is an efficient constant-time test. } See also
\sec \ref{sec:qspace} on pruning.

With our GetPR() function, the \qu technique needs to see two
observations of an item, in sufficiently close time proximity, to start
outputting positive \prs for the item.


%
\co{
However, this upper bound is useful both for space management
(removing infrequent/noise items, see Section \ref{sec:qspace}) and,
when faced with potential non-stationarity, for obtaining a
potentially better estimate of probability, when the probability goes
down substantially, \eg from $0.5$ to $0.05$.
}

\co{
}

\co{
todo and structure:

1) picture for a sequence such as JAABAEBAAABBEA... (perhaps this
goes into the intro?) (should it be all capital letters as item
examples? lower case and upper case? ids, eg integers in parentheses?
I am thinking capital letters is fine...

2) Present the problem and the queue approach.  Pseudo code for the
queue algorithm(s), for the multiclass case.

3) Possible discussion: variations, eg that it stems from the
'stationary boolean' case that could be solved with two
counters.. it's a generalization (note some discussion points will be
made as you present/explain the algorithm too.. and maybe we'll decide
this is unnecessary)..

And alternative is like below: start with the boolean case 1st ..

}
  




\begin{figure}[t]
\begin{center}
  \centering
  \hspace*{-.8cm }\subfloat[An example sequence and its binary version for item \itm{A}.]
          {{\includegraphics[height=4cm,width=5cm]{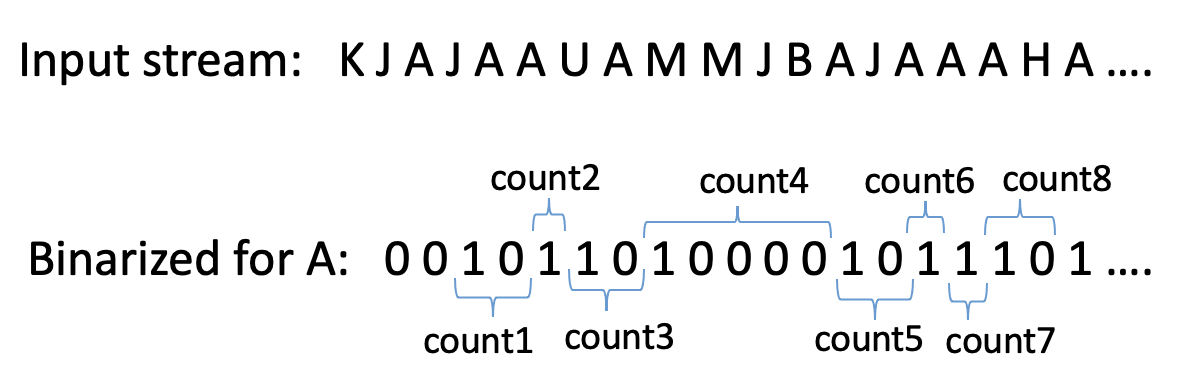} }}
  \hspace*{.3cm}\subfloat[Contents of q(\itm{A}) at a few times.]
           {{\includegraphics[height=4cm,width=4cm]{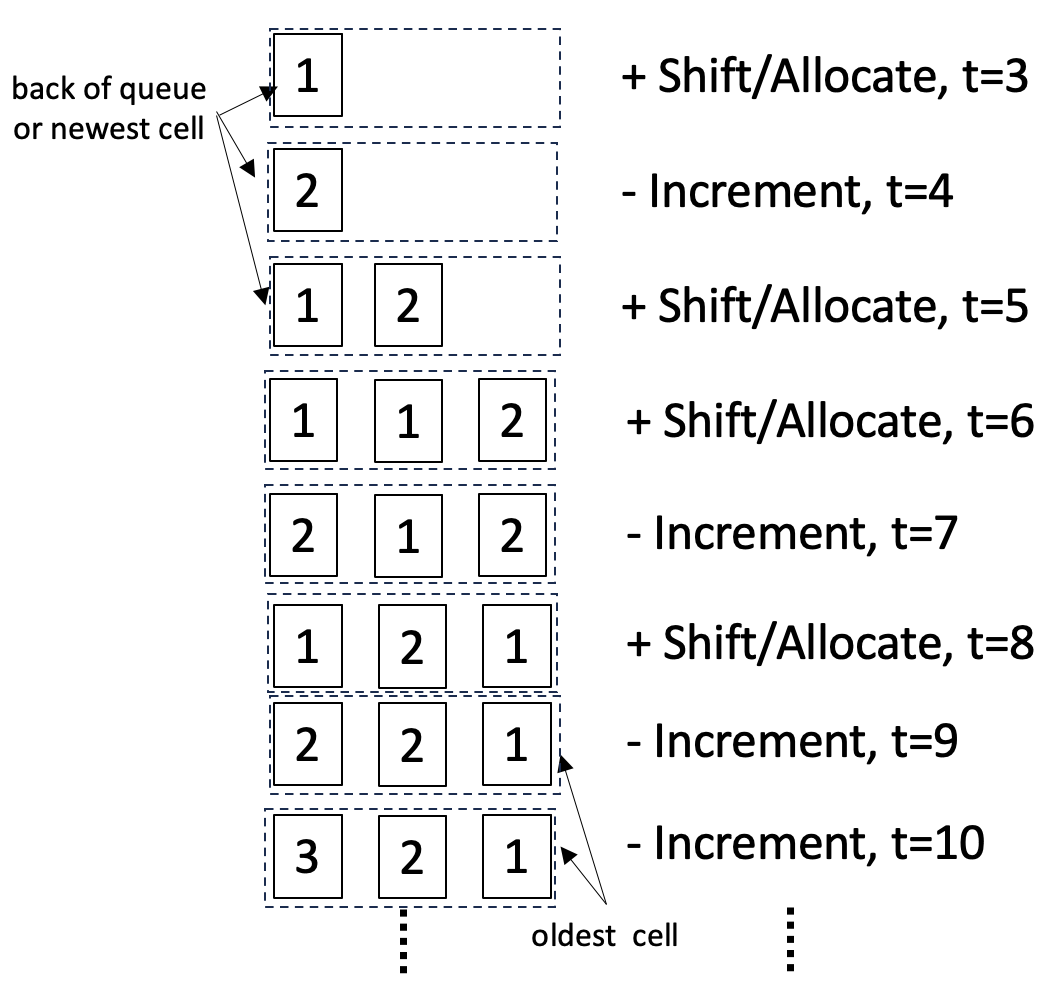} }}
  \hspace*{.5cm}\subfloat[Cells (counts) yield probabilities.]
           {{\includegraphics[height=4cm,width=4cm]{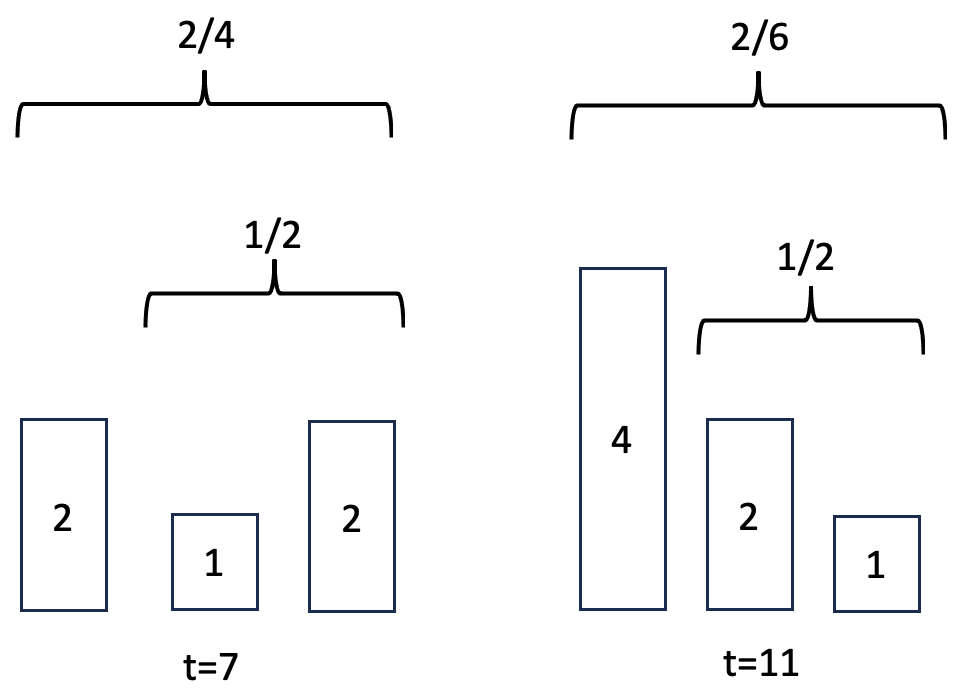} }}
\end{center}
\caption{Illustrating the workings of the \qs predictor for
  tracking the probability of a single item: An example sequence (top,
  (a)), made binary for item \itm{A} (bottom), \ie whenever A occurs,
  the entry in the binarized sequence is a 1, leading to a positive
  update of the queue for \itm{A}, otherwise it is a 0, leading to a negative
  update (once the queue is allocated).
  Part (b) shows another view of the queue for \itm{A} upon queue
  creation and the first few updates (horizontal at each time point,
  left (newest cell, \czn) to right-most (oldest, \ckn).  At $t=3$
  when the first \itm{A} is observed (the first positive update), the
  queue for \itm{A} is created with \cz (newest cell of the queue),
  initialized with count 1. At time $t=4$ a negative update is
  performed and \cz (its counter) is incremented. At $t=5$, another
  positive update, existing \cz (its count content) is shifted one
  position to right, and a new \cz is allocated with value 1, becoming
  the new back of the queue. More generally, on positive updates,
  existing cells shift to right and a new \cz with an initial count of
  1 is allocated, and on negative updates, the existing \cz is
  incremented.  At
  $t=8$, the value in \ck is discarded upon another positive update
  (assuming capacity 3), and so on.  (c) Cell counts, shown at two
  snapshots, can be used to estimate a probability. There are several
  options for pooling the cells to get a probability. For instance,
  the back (left-most) partial cell can be ignored (see Section
  \ref{sec:extraction}). }
\label{fig:q_pics} %
\end{figure}

\co{

(b) Another view: cell contents of the queue for \itm{A} upon creation
  and on the first few updates (positive and negative observations,
  \itm{A} and not \itm{A}).

The following develops this idea: the predictor keeps a queue of
cells, with some maximum capacity, for each item (or item-type).

several counter
cells, in a queue, one queue for each item (class or type), as
follows.

Each time a positive observation of an item is made, a new cell is
generated, initialized with a count of 1, and is added to the back of
the queue.  If the predictor is seeing the item for the first time, a
queue object is created for it, and inserted into a map of
item\_to\_queue that the predictor keeps. Thus a predictor keeps an
item

}

\subsection{Predicting with \sdn s}
\label{sec:qu_sds}

Each queue in $\qmap$ provides a \pr for its item, but the \prs over
all the items in a map may sum to more than 1 and thus violate the \sd
property. For instance, take the sequence
\itm{A}\itm{A}\itm{A}\itm{A}\itm{B}\itm{B}\itm{B}\itm{B} (several
consecutive \itm{A}s followed by several \itm{B}s), and assume
\qcap$=3$: after processing this sequence, the \pr for \itm{B} is 1.0,
while for \itm{A} is $\frac{2}{6}$.  See also Appendix \ref{app:qsum_etc}. The \sd
property is important for down-stream uses of the \prs provided by a
method, such as computing expected utilities (and for a fair
evaluation under \kl() divergence). For evaluations, we apply the
\fcap() function in \fig \ref{code:norming_etc}(a) which works to
normalize (scale down) and convert any input map into an \sd as long
as the map has non-negative values only.

\begin{figure}[t]
\hspace*{0.0cm}  \begin{minipage}[t]{3in}
 \subfloat{{ \includegraphics[height=4cm,width=7cm]{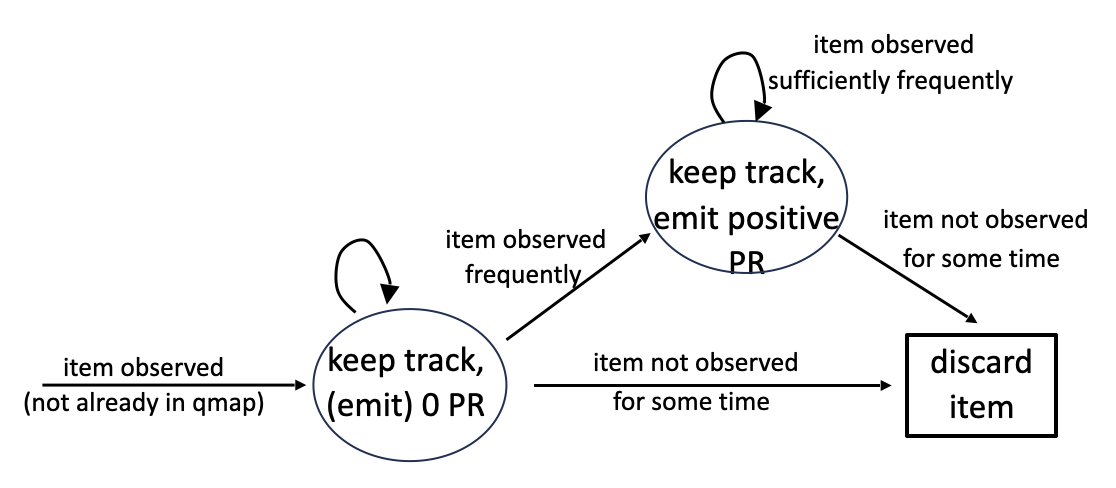} }}
\end{minipage}
\hspace*{1.2cm}
\begin{minipage}[t]{3.5in}
\vspace*{0.5cm}
{\bf PruneQs}($\qmap$) // Pruning: A heart-beat\\ 
\hspace*{0.3cm} // Parameters: size \& count limits $s_1, s_2,$ \\
\hspace*{0.3cm} // such as $s_1 = 100$ and $s_2 = 10^5$. \\
\hspace*{0.3cm} // For any item, $\c_0$ is its count in \cz. \\
\hspace*{0.3cm} Drop any item with $\c_0 > s_2$ from $\qmap$ \\
\hspace*{0.3cm} If $|\qmap| < 2s_1$: \\ %
\hspace*{0.6cm}   Return // Nothing left to do. \\
\hspace*{0.3cm} Drop highest $\c_0$ counts until $|\qmap|$ is $s_1$.
\end{minipage}
\caption{Left: The state changes of an item as it enters the
  $\qmap$. The \qu technique keeps track of any item it sees in its
  stream (inserts it in $\qmap$), for some (minimum, grace-period)
  time. The item may become salient (positive \prn) and stay in that
  state indefinitely or, eventually or after the grace-period,
  discarded. Right: The pruning logic: items with large $\c_0$ (count
  in \czn) are discarded. If the map is too large, it is pruned too
  (again, rank and remove by descending $\c_0$, see \sec
  \ref{sec:qspace}).  }
\label{fig:qflow} %
\end{figure}

The following properties hold, the proof of which (Lemma
\ref{lem:qu_prs}) along with other properties of the \prs, are in
Appendix \ref{app:qsum_etc}. Below, $\Q()$ is the \pr map output of
the \qu technique, or $\Q(i)$ is the \pr obtained from GetPR() when
the queue for item $i$ is passed to it.

\begin{lemma}
  For the \qu technique with \qcap $ \ge 2$, for any item $i$ with
  a queue $\q(i)$, $|\q(i)| \le$ \qcapn, using the \pr estimate
  $\Q(i)=(|\q(i)|-1)/(\sum_{0\le j < |\q(i)|}\c_j-1)$, for any time point $t\ge 1$: 
  \begin{enumerate}
  \item $\oat{\Q}(i)$, when nonzero, has the form $\frac{a}{b}$, where $a$
    and $b$ are integers, with $b\ge a\ge 1$.
    \co{
  \item If $i$ is not observed at $t$, then $\ott{Y_i}{t+1}=
    \ott{Y_i}{t}+1$ or $i$ is removed from the map $\qmap$. When $i$
    is observed at $t$ (exactly one such), then
    $\ott{Y_i}{t+1}\le\ott{Y_i}{t}$ when $|\oat{\q}(i)| =$ \qcapn, and
    $\ott{Y_i}{t+1}=\ott{Y_i}{t}+1$ when $|\oat{\q}(i)| < $ \qcapn.
    }
  \item If $i$ is observed at $t$, then $\ott{\Q}{t+1}(i) \ge
    \ott{\Q}{t}(i)$. If $i$ is not observed at $t$, then
    $\ott{\Q}{t+1}(i) < \ott{\Q}{t}(i)$.
  \end{enumerate}
\end{lemma}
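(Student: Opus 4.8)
The plan is to prove both parts by working directly with the arithmetic form of the estimate produced by \textbf{GetPR}: when $|\q(i)| \ge 2$ the estimate is $\Q(i) = (|\q(i)| - 1)/(S(i) - 1)$ where $S(i) := \sum_{0 \le j < |\q(i)|} \c_j$ is the total count over the active cells, and $\Q(i) = 0$ when $|\q(i)| \le 1$ (the grace period). The single structural fact I will lean on throughout is that every active cell always has count $\ge 1$: a cell is born (in \textbf{PositiveUpdate}) with count $1$, is thereafter only incremented (by \textbf{NegativeUpdate}) or shifted unchanged, and is never decremented. Hence $S(i) \ge |\q(i)|$ at all times, and in particular $S(i) - 1 > 0$ whenever $|\q(i)| \ge 2$, so the denominators in question are genuinely positive.

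For part~1, $\Q(i) \ne 0$ forces $|\q(i)| \ge 2$, so $a := |\q(i)| - 1$ is a positive integer; and $b := S(i) - 1$ is an integer with $b \ge |\q(i)| - 1 = a \ge 1$ by the cell-count fact above. That gives $\Q(i) = a/b$ with integers $b \ge a \ge 1$.

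For part~2 I will case-split on the update applied to $\q(i)$ at time $t$, tracking how $|\q(i)|$ and $S(i)$ change. If $i$ is observed at $t$ (\textbf{PositiveUpdate}), a fresh count-$1$ cell is prepended. When the queue was below capacity, $|\q(i)|$ and $S(i)$ each increase by $1$; if the old estimate was already positive the claimed inequality $\frac{|\q(i)|}{S(i)} \ge \frac{|\q(i)|-1}{S(i)-1}$ reduces, after cross-multiplying, to $S(i) \ge |\q(i)|$, which holds; and if the old estimate was $0$ (queue size $\le 1$) the new estimate is nonnegative, so the claim is immediate. When the queue was at capacity \qcapn, the oldest cell $\c_{\qcapn - 1} \ge 1$ is discarded while a count-$1$ cell is prepended, so $|\q(i)|$ is unchanged and $S(i)$ becomes $S(i) + 1 - \c_{\qcapn - 1} \le S(i)$; the numerator is unchanged, the denominator does not increase, and it stays positive (since $S(i) + 1 - \c_{\qcapn - 1} \ge |\q(i)|$), so the estimate does not decrease. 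If $i$ is not observed at $t$ (\textbf{NegativeUpdate}), only $\c_0$ is incremented, so $|\q(i)|$ is unchanged and $S(i)$ becomes $S(i)+1$; for $|\q(i)| \ge 2$ the new estimate $\frac{|\q(i)|-1}{S(i)}$ is strictly below the old $\frac{|\q(i)|-1}{S(i)-1}$ because $0 < S(i)-1 < S(i)$, which is the strict decrease claimed.

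The only real subtlety — and the one place I would add a caveat in the write-up — is the grace-period regime: while $|\q(i)| \le 1$ (between the first and second observations of $i$), a negative update leaves the estimate at $0$, so the strict-decrease clause should be read under the standing hypothesis that $\Q(i)$ is already positive, i.e. past the grace period; the non-decrease clause for the observed case needs no such restriction. A secondary bookkeeping point is that heart-beat pruning (\textbf{PruneQs}) can remove $i$ from $\qmap$ entirely, which the statement does not address; I would simply note that the lemma concerns the \textbf{UpdateQueues} transition and assume $i$ remains tracked. Everything else is the routine inequality manipulation sketched above.
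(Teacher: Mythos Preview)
Your proof is correct and follows essentially the same approach as the paper: the paper's appendix version separates out the $Y_i$ bookkeeping as an intermediate claim and then derives the monotonicity from it, whereas you inline that tracking directly into the case split, but the substance (each cell has count $\ge 1$, hence $S(i)\ge|\q(i)|$, and the three update cases behave as you describe) is identical. Your grace-period caveat is well-placed---the paper's own proof of the strict-decrease clause tacitly assumes $|\q(i)|\ge 2$ without flagging it.
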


\subsection{Managing Space Consumption} 
\label{sec:qspace}

There can be many infrequent items and the queues map of a predictor
can grow unbounded, wasting space and slowing the update times, if a
hard limit on the map size is not imposed.  The size can be kept in
check via removing the least frequent items from the map every so
often, for instance via a method that works like a "heart-beat"
pattern: map expansion continues until the size reaches or exceeds a
maximum allowed capacity $2s_1$, \eg $s_1=100$, remove the items in
order of least frequency, \ie rank by descending count $\c_0$ in \cz
in each queue,\footnote{One could also use estimates from other
completed cells as well, such as $\min(\frac{1}{\c_0},$ GetPR()), but
only when GetPR() $> 0$, \ie this value should be used only when other
completed cells exist (otherwise a new item could get dropped
prematurely). With nonstationarity, it is possible the estimates from
older cells would be outdated, and an item that used to be infrequent
may have become sufficiently frequent now.} until the size is
precipitously shrunk back to $s_1$.  When we are interested in
modeling (tracking) probabilities down to a smallest \pr $\pmin$, in
general we must have $s_1 > \frac{1}{\pmin}$ (\eg with $p_{min}=0.01$,
$s_1$ needs to be above 100). The higher $s_1$, the less likely that
we drop salient items, \ie items with (true) \pr above $\pmin$ (when a
true \pr exists, under the stationary setting). Note that those queues
that are dropped, their \pr when normalized, must be below $\pmin$
whenever $s_1 > \frac{1}{\pmin}$, and the normalized \pr is what we
use when we want to use predictions that form a \sd. Appendix
\ref{app:qsum_etc} explores the maximum possible number of \prs above
a threshold before normalizing.

The above periodic pruning logic can also limit the count values
within each queue, in particular in \czn, but not in the worst-case:
suppose \itm{A} is seen once, and from then on \itm{B} is observed for
all time. Thus $|\qmap|=2$, and the above pruning logic is never
triggered, while the count for \itm{A} in its \cz can grow without
limit (log of stream size). Thus, we can impose another constraint
that if \cz $\c_0$ count-value of a queue in the $\qmap$ \ is too
large (\eg $\frac{1}{\c_0} \le 10^{-4}$), such a queue (item) is
removed as well.  The trigger for the above checks can be periodic, as
a function of the update count of the predictor (\fig
\ref{fig:queues}). The pruning logic is given in \fig \ref{fig:qflow}
(right).

\fig \ref{fig:qflow}, left, summarizes the states and state
transitions of \qu upon observing an item not already in $\qmap$, as
it pertains to the item. Any item seen will be kept track of in
$\qmap$ for some time (assuming $s_1 > 1$). Being tracked does not
imply positive \pr (salience), however, if the item is seen more in
sufficiently close proximity, it becomes salient (we require a minimum
of two observations). It may also be discarded without ever becoming
salient, or discarded after becoming salient. It may also enter the
map and never exit.


\subsection{Complexity of the \qu Method}

Let $k=\qcapn$.  At each time point, prediction using the $\qmap$  and
update take the same $O(k|\qmap|)$, where we assume $O(1)$ time for
summing numbers and taking ratios: queue update involves
updating the queue of every item in the map, and the updates, whether
positive or negative, take $O(k)$ time. In our experiments, \qcap $k$ is
small, such as $3$ or $5$. With a $\pmin$ of $0.01$ and periodic
pruning of the map, the size of the queue for each predictor can grow
to up to a few 100 entries at most.

Number of time steps required for estimating a \pr output for an item
with underlying \pr $\tp$ (at the start of a new stable period),
similar to EMA with harmonic decay (\sec \ref{sec:harm}), is
$\theta(\frac{1}{\tp})$ (time complexity).




\subsection{The Box Predictor (a Fixed-Window Baseline), and its Variants}
\label{sec:box} 

\begin{figure}[t]
\hspace*{0.5cm}  \begin{minipage}[t]{0.45\linewidth}
  \begin{minipage}[t]{0.95\linewidth}
  \subfloat{{\includegraphics[height=2.5cm,width=7.5cm]{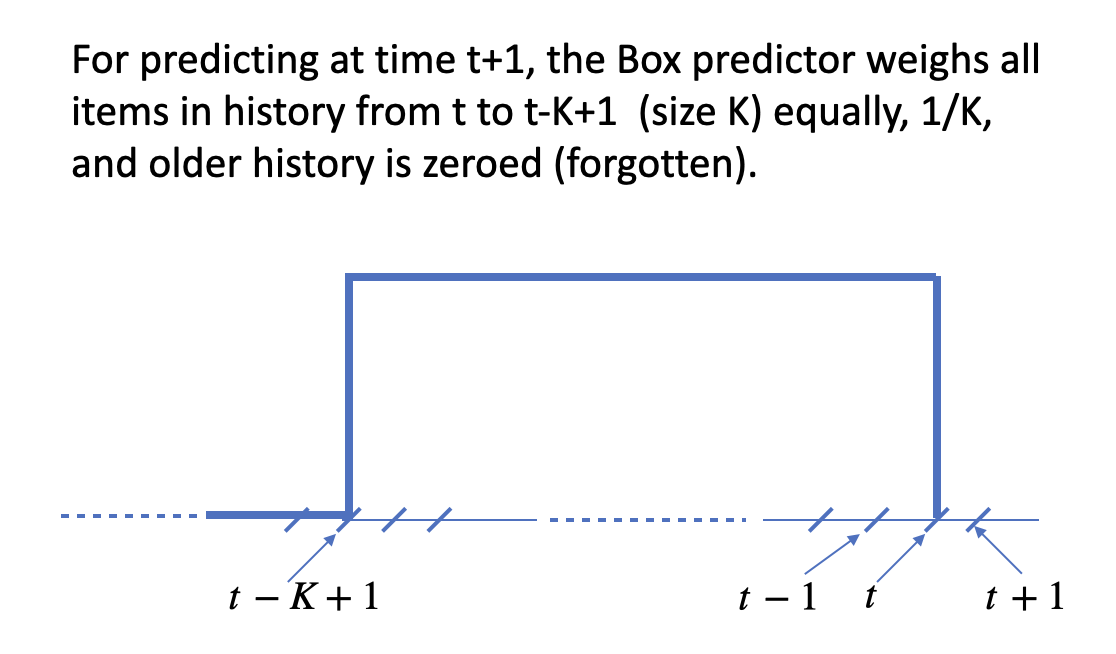}}}\\
  \hspace*{0.20cm} (a) How history is weighed by the \bx predictor. \\ \\
  \end{minipage}
  \begin{minipage}[b]{0.95\linewidth}
    \hspace*{0.5cm}  {\bf GetBoxPR}($o, q$, $counts$) // Get \prn. \\
\hspace*{0.95cm}  If $o \not \in counts$:  Return 0 \\
\hspace*{0.95cm}  Return $counts[o] \ / \ q.nc$ \\  \\
  \hspace*{2.cm} (b) Obtaining \pr of an item.
  \end{minipage}
\end{minipage}
\hspace*{1.5cm}
\begin{minipage}[t]{0.53\linewidth}
\vspace*{.15cm }
{\bf UpdateBoxQ}($q, counts, o$)  \\
\hspace*{0.3cm} // add item $o$ to queue $q$ and update $counts$. \\
\hspace*{0.3cm} $counts[o] \leftarrow counts[o] + 1$ \\
\hspace*{0.3cm} q.add(o) // add $o$ to back of queue $q$. \\
\hspace*{0.3cm} // q.nc is the number of cells in queue.\\
\hspace*{0.3cm} If $q.nc > q.K$: // reached max $K$? \\
\hspace*{0.6cm} // Drop the item from front of queue. \\
\hspace*{0.6cm} $i \leftarrow q.pop()$ // Front (oldest) item. \\
\hspace*{0.6cm} $counts[i] \leftarrow counts[i] - 1$ \\
\hspace*{0.6cm} If $counts[i] == 0:$ \\
\hspace*{0.9cm} // remove from the counts map as well.\\ 
\hspace*{0.9cm} counts.remove(i)\\   \\
  \hspace*{1.cm} (c) Updating a Box predictor. \\ \\
\end{minipage}
\caption{The \bx predictor.  (a) The history is weighed by a box function. (b) Obtaining the
  probability of an item $o$. (c) Updating is $O(1)$ when a linked-list
  queue and a counts map is used.  }
\label{fig:box}
\end{figure}

The plain (discrete) \bx predictor, also known as the sliding window
predictor, keeps a history window of fixed size $K$, of the last $K$
observations (\fig \ref{fig:box}). The sliding windows idea
(of fixed or variable size) is a common tool used in various non-stationary
problems
\cite{adaptive_windows_bifet2007,survey_on_concept_drift_2014gama,nonstationarity2015}. For
our setting, the \bx \sma can be implemented relatively efficiently
via a single queue, and thus it has similarities to the \qu technique
(one long queue \vs several small queues). With a hash map of item to
observation counts and a linked list queue, updating can be more
efficient at $O(1)$ (instead of $O(K)$): This involves adding a cell
and possibly dropping one (oldest) cell from the queue, and updating
up to two item counts. See \fig \ref{fig:box}.  However, the space
consumption is a rigid $\Theta(K)$, and unlike the \qu predictor, $K$
could be relatively large such as 100 or 1000, depending on how small
one wants to go in tracking \prsn. While the worst-case space
consumption of \qu predictor is similar, when the input stream has a
few salient items with relatively high \pr the \qu predictor would be
significantly more advantageous. Importantly, the \bx predictor may not be sufficiently
responsive to non-stationarity: for instance, with $K=100$, when a new
item gets a high \prn, it will take 10s of positive observations for
the \bx predictor to approach the target \pr (for the \qu predictor,
we would neeed a handful of positive observations, with a \qcap at say
3 or 5). The response time would be slower with increasing $K$ (akin
to convergence \vs stability tradeoff when setting the rate of
fixed-rate EMA).  Thus the box-predictor resembles the fixed-rate EMA,
but requires more space. The more dynamic \qu variants react better
under non-stationarity.

There also exist dynamic versions of the \bx predictor, or adaptive
windowing, where the size of the window is dynamically adjusted, \eg
\cite{adaptive_windows_bifet2007,driftingDistros2024,histo_change2008}.
The recent probability prediction work of \cite{driftingDistros2024},
published in the same year as ours, is closest to our work to the best
of our knowledge, as the observations are discrete and the task is
open-ended. In that work, the algorithm, similar to prior work on
adaptive windowing, adjusts the history size $k$ (up to $\lg_2(N)$ where
$N$ is the stream length). Prior work has focused on other, numeric,
data types (\eg \cite{adaptive_windows_bifet2007,histo_change2008}).
A challenge in adaptive windowing is how to efficiently pick the
window size (which may need to be executed at every time point, or
triggered by an efficient change detection method). In our task, we
have argued that different items, with different probability ranges
(\eg from 0.1 down to 0.001), in effect require different history
sizes. We note that the algorithm of \cite{driftingDistros2024} is
presented and evaluated in a theoretical context and while the
$\lg_2()$ can grow with sequence length, it is actually too small, for
instance only 10 for $1000$ and 20 for $10^5$ long sequences, while
for \prs around 0.01 say, one requires histories of order
100s.\footnote{One could imagine using multiples of the ceiling
$\lg_2(N)$ (for small $N$) but it is not clear how to adjust as stream
size $N$ grows, and it appears that the same history span won't work
for different \pr values (\fig \ref{fig:histories}(d) shows that \qs
uses different spans).  If one were interested in high probabilities,
\eg above 0.1, such \bx variants may suffice. The desired theoretical
property used to motivate the algorithm is based on total variational
loss \cite{driftingDistros2024}, which is an absolute loss
measure. Our arguments on the insensitivity of quadratic loss to
smaller probabilities, \sec \ref{sec:brier}, applies to absolute loss
too. A remaining challenge is a time-efficient and effective selection
of the dynamic box size.  }

%
%

Figure \ref{fig:histories} gives a visualization of how the history is
weighted by several \smasn.
In plain EMA updating, the value of an observation at time $t-i$ is
multiplied by (weakened) by $(1-\lr)^i$ with fixed-rate $\lr$ (powers
of 0.9, for $\lr=0.1,$ and $0.99$, for $\lr=0.01$, in the
picture). For the harmonic-decay variant, the multiplier (the
weakening factor) decreases more rapidly initially, but eventually the
exponential decrease of plain EMA yields a lower rate (the crossing is
seen for $\lr=0.1$).  The reduction of weight for the EMA variants is
smooth, while for \bx variants, there is an abrupt change, \eg from
$1/K$ to $0$ as shown in the Figure.


\begin{figure}[t]
  \subfloat{{\includegraphics[height=5.5cm,width=15.5cm]{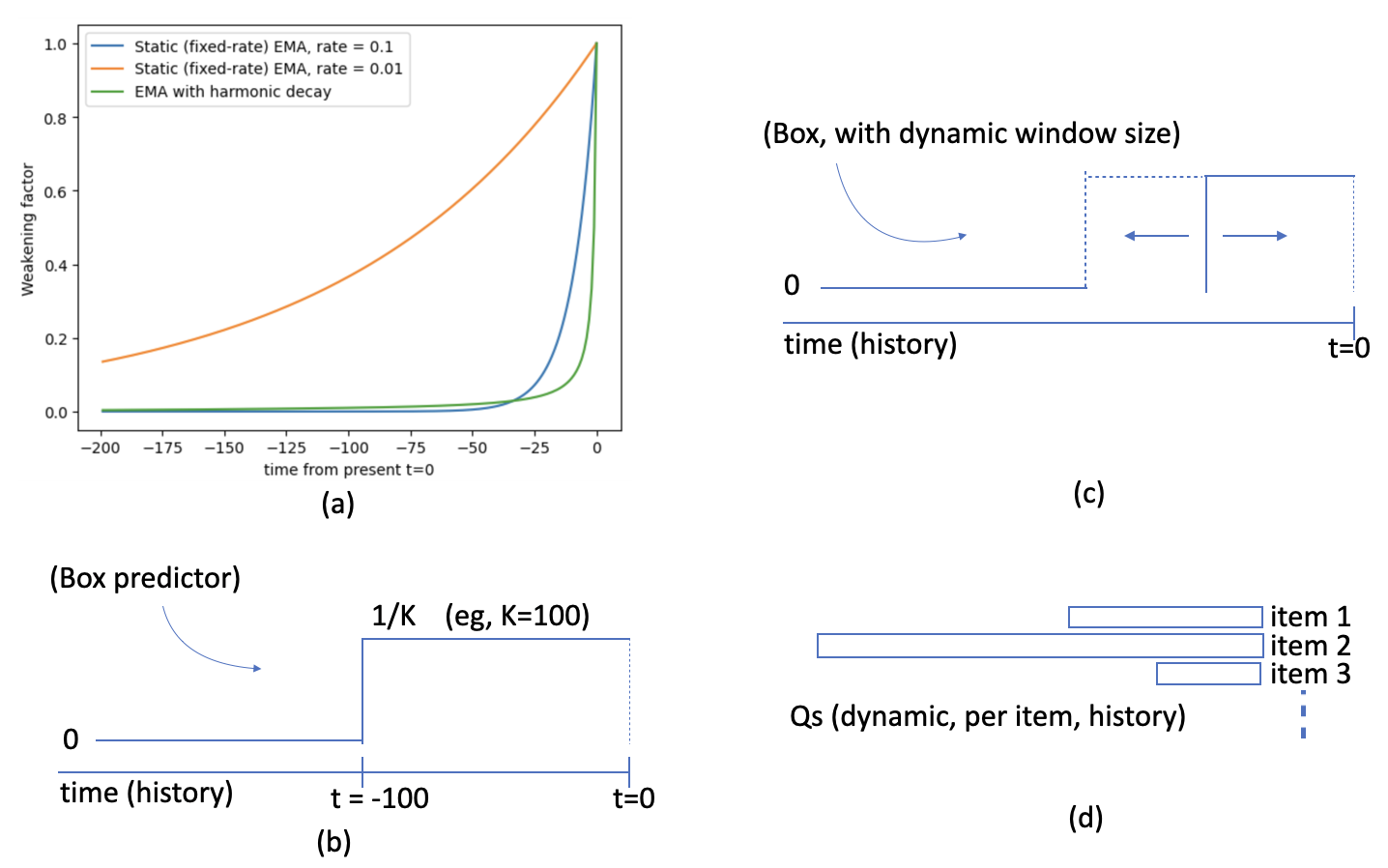}}}

\caption{A visualization of how history is weighted by different
  techniques (in contributing to the present value of the moving
  average). (a) The EMA variants down-weigh the history smoothly (for
  static EMA, it is $(1-\lr)^i$ for time $t-i$, \ie $i$ time points
  before current time $t$). (b) For the plain \bx predictor history is
  zeroed (forgotten) beyond the recent window, and otherwise,
  observations within the window get equal weighting. (c) Dynamic
  variants of the Box (adaptive windowing) can adjust the window size
  by trying to detect change (implicitly or explicitly), so that when
  there is change, more of the irrelevant past is appropriately
  forgotten (conversely, expand the window when no change, for better
  estimation). (d) For the \qs \sman,
  history has a shape similar to that for \bxn, but a
  different history (span) is kept for each item: lower probability
  items require and use longer histories.  }
\label{fig:histories}
\end{figure}

\section{\qdn: Combining Sparse EMA and \qs}
\label{sec:qdyal}
\label{sec:dyal}


The \qu technique can give us a good roughly unbiased starting point
for an estimate of the \pr of an item, but it has a high variance (\eg
see \fig \ref{fig:binary_nonstationary}(c)) unless we use considerable
queue space (many queue cells, and trading off plasticity for
stability). With sparse EMA (\sec \ref{sec:ema}), and with our
assumption that there will be a period of stability when a salient
item is observed (stability for that item),
one could begin with the \qs estimate and fine tune it to achieve
lower variance in the estimation.
We next show that these complementary benefits of each approach can
indeed be put together, and we call the combination {\bf \qdn}, for
{\em \underline{dy}namic \underline{a}djustment of the
  \underline{l}earning} (rate), with the mnemonic of {\em dialing} up
and down the learning (rate), shown in \fig \ref{fig:ema_qdyal}.


%

An update in the \dyal \sma has the simple logical structure of the
plain EMA, \ie weaken-and-boost (\fig \ref{fig:ema_phases}): weaken
all existing edges, then boost (strengthen) the edge to the observed
item (\sec \ref{sec:ema}).  However each edge in \qdn, in addition to
its weight (a \prn, a floating point), also has its own learning rate,
which is used for the weakening and boosting of that edge. This is
unlike plain EMA, wherein a single rate is used for all edges of a
predictor. \fig \ref{fig:maps_edges} shows the
the parameters kept with each edge, for the three main \smas of this
paper.
Associated with each \qd edge, there is also a queue, and during
weakening or boosting an edge, it is possible that the queue estimates
are used (a \pr as well as a learning rate, both derived from the
queue). We refer to this possibility as {\em listening} to the queue
(\sec \ref{sec:binom}). Each learning rate is decayed, via harmonic
decay (\sec \ref{sec:harm}), after an edge update. For a visualization
of the change patterns in learning rates, see \fig
\ref{fig:binary_nonstationary}(e) and (f) and \fig
\ref{fig:evolution_2_seqs}.  The queues are also used for deciding
when an item should be discarded.
Thus the queues are used as ``gates'', the interface to the external
observed stream, playing a major role in determining what is kept
track of and what is discarded, and providing rough initial estimates,
of the \pr and $\lr$, whenever the \pr of an item needs to
substantially change. Note that in the pseudo code of \fig
\ref{fig:ema_qdyal} we use three maps ($\emamap, \lrmap, \qmap$),
while in the picture of \fig \ref{fig:maps_edges}(c) they are combined
into one logical graph.

Prediction in \qd is as in plain EMA, and the $\emamap$ is
output. Like EMA, \qd ensures that $\emamap$ remains a \sd after
updating.
When updating, all the three maps are updated in general, as given in
the UpdateDyal() function. Upon each observation $o$, first the queues
information on $o$ is obtained. This is the current queue probability
$\qprob$ (possibly 0) on $o$, and the $qcount$ (invoke GetCount()).
Then $\qmap$ is updated (a \qun-type update). Finally, all the edges
in the EMA map, except for $o$, are weakened. If there was no queue
for $o$, \ie when $\qprob$ is 0, $o$ must be new, or \ns in general,
and nothing more is done, \ie no (EMA) edge strengthening is done
(note that a queue is allocated for $o$ upon the $\qmap$ update).
Weakening, for each edge, is either a plain EMA weakening, or the
queue estimates are used, which we cover next. Similarly, for
strengthening, the condition for listening to the queue is examined,
as described next, and the appropriate strengthening action, plain EMA
strengthening or listening to the queue, is taken.

%


\begin{figure}[t]
\begin{center}
  \centering
          \includegraphics[height=4cm,width=15cm]{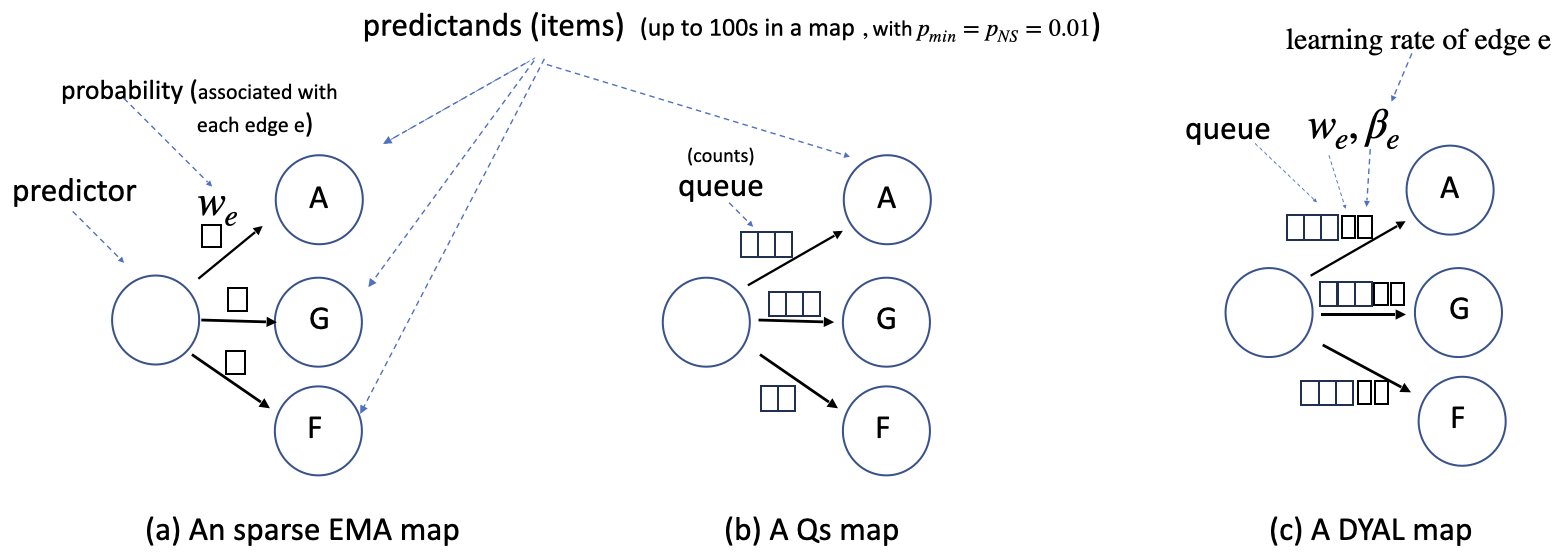} 
\end{center}
\caption{A comparison of the parameters kept with each edge
  (prediction relation) by the 3 main \smas of this paper.  Each
  \sma keeps and updates one (logical) map, with one map entry per
  predictand. Each map entry is viewed as an edge (\ie
  prediction relation). In this picture 3 predictands are shown in
  each map (items \itm{A}, \itm{G}, and \itm{F}).  (a) For EMA,
  associated with each edge $e$ is a single weight or \pr $w_e$, that
  is kept updated. There is one learning-rate $\lr$, associated with
  the entire predictor. (b) For the \qs \sma (center), associated with
  each edge $e$ is a small queue of counts, integers, up to 3 cells in
  this picture. There is no (explicit) learning-rate in \qun,
  and the \prs are derived from the queue contents (\sec
  \ref{sec:qus}).  (c) In \qdn, \sec \ref{sec:qdyal}, which combines the two
  techniques, each edge has its own learning rate $\lr_e$, as well as
  a queue (like \qun) and a \pr $w_e$ (like EMA). The learning-rate is
  dynamic: it
  is increased from time to time, but the default
  pattern of change is gradual decay (such as harmonic decay, \sec
  \ref{sec:harm}), down to a minimum. }
\label{fig:maps_edges} %
\end{figure}




\co{
We refer to this hybrid technique as {\bf \qdn}, for {\em
  \underline{dy}namic \underline{a}djustment of the
  \underline{l}earning} (rate), with the mnemonic of {\em dialing} up
and down the learning. In order to achieve this, \qdn, for {\em every}
predictand that it keeps track of, maintains a small queue, and a
learning rate, in addition to the PR (a weight).  \fig
\ref{fig:ema_qdyal} shows the pseudo code for the major
functions. Like EMA, \qd has a map for the \pr weights, $\emamap$, but
it also has a (parallel) map of learning rates, $\lrmap:$ item $\ra
\lr$, and similarly a map of (small) queues, $\qmap$ (like \qun). The
space overhead can therefore be thought of as a few extra bytes per
predictor-predictand relation (per prediction edge, in addition to the
bytes needed for the \pr weight, in plain EMA).
}


\co{
Queues are still used, but the queues act as ``gates'', and so we have
a two stage approach.  Use EMA (another map of edges) with a per-edge
learning rate. The learning rates are (eg via harmonic decay) to hone
in/converge on to the target probability of existing explicitly
modeled items. But also use the queues approach and listen to the
queues: do a check and if the probability estimate from the queue is
statistically and substantially higher or lower (with statistical
significance), switch to the queue estimate (and adjust learning rates
accordingly, etc).

Points to touch on:
\begin{itemize}
  \item Assumption: periods of stability ...
  \item but there is a price to pay, for nonstationarity (always being
    vigilant): there is a limit to how close we can get.. the level of
    precision...
  \item ema works in two stages:
  \item stage 1, weakening: probability ``flows'', from the edges, to
    a freq source (reservoire)
  \item stage 2, boosting: probability ``flows'', from the reservoire,
    to the target edge..
  \item We support different (learning) rates for the edges here: each
    edge (predictand) has its own learning rate (as predictands are
    inserted at different times with different initial probs and
    rates).
  \item We also support lowering each learning rate with time, \eg
    harmonic decay. This helps/allows convergence (to a more accurate
    region around the target probability).
\end{itemize}

\begin{itemize}
  \item Use EMA with a lowering learning rate (eg harmonic decay) to
    hone in/converge onto the target probability of existing
    explicitly modeled items.
  \item But also use the queues approach and ``listen'' to the queues:
    do a check and if the probability estimate from the queue is
    statistically significantly and substantially higher, switch to
    the queue estimate.
\end{itemize}

}

\co{
This way, \ie with some additional logic, we can use less space (than
the \qu approach would need) and still hone in on the target
probabilities much more accurately (but still, within effective
precision limits: we can not get arbitrarily close).
}

\begin{figure}[t]
\hspace{0.2cm} \begin{minipage}[t]{0.55\linewidth}
\fontsmall{
\hspace{-0.2cm} {\bf UpdateDyal}($o$) // latest observation $o$. \\
        \hspace*{0.3cm} // Data structures: $qMap$, $\emamap$, $\lrmap$. \\ 
        \hspace*{0.3cm} $\qprob, qcount \leftarrow$ GetQinfo($qMap, o$) \\
        \hspace*{0.3cm} UpdateQueues($qMap$, $o$) \\
        \hspace*{0.3cm} $free \leftarrow$ WeakenEdges($o$) // Weaken, except for $o$.\\
        \hspace*{0.3cm} If $\qprob$ == $0$: // item is currently \nsn? \\
        \hspace*{0.6cm} Return \\
        \hspace*{0.3cm} $\emapr \leftarrow$ $\emamap.get(o, 0.0)$ // 0 if $o\not\in$ Map. \\
        \hspace*{0.3cm} // listen to queue? \\ 
        \hspace*{0.3cm} If Q\_SignificantlyHigh($\emapr, \qprob, qcount$): \\
        \hspace*{0.6cm} // set initial rate and \pr using queue for $o$.\\ 
        \hspace*{0.6cm} $\lrmap[o] \leftarrow qcount^{-1}$ \\ 
        \hspace*{0.6cm} $\delta \leftarrow \min(\qprob - \emapr, free)$ \\
        \hspace*{0.3cm} Else: // EMA update with harmonic decay. \\
        \hspace*{0.6cm} $\lr \leftarrow \lrmap[o]$ \\
        \hspace*{0.6cm}  $\delta \leftarrow \min((1 - \emapr) * \lr, free)$ \\
        \hspace*{0.6cm} $\lrmap[o] \leftarrow DecayRate(\lr)$ // \sec \ref{sec:harm} \\
        \hspace*{0.3cm} \mbox{$\emamap[o] \leftarrow \delta + \emapr$} \\ \\
\hspace{-0.2cm} {\bf Q\_SignificantlyHigh}($\emapr, \qprob, qcount$) \\
        \hspace*{0.3cm} // Parameter: $sig\_thrsh$ (significance threshold). \\
 \hspace*{0.3cm}           If $\emapr == 0$: // when 0, listen to \qu. \\ 
 \hspace*{0.6cm}              Return True \\
 \hspace*{0.3cm}           If $\qprob \le \emapr$: \  Return False \\
 \hspace*{0.3cm}           Return $qcount * KL(\qprob, \emapr) \ge sig\_thrsh$ \\ \\
\hspace{-0.2cm} {\bf Q\_SignificantlyLow}($\emapr, \qprob, qcount$) \\
        \hspace*{0.3cm} // Parameter: $sig\_thrsh$ (significance threshold). \\
        \hspace*{0.3cm} If $\emapr \le \qprob$: \ \ Return False \\
        \hspace*{0.3cm} Return $qcount * KL(\qprob, \emapr) \ge sig\_thrsh$\\
        }
\end{minipage}
\hspace{0.1cm}
\begin{minipage}[t]{0.55\linewidth}
  \fontsmall{
    {\bf WeakenEdges}($o$) \\
 \hspace*{0.3cm} // Weaken weights and return available \pr mass. \\
  \hspace*{0.3cm} // Data structures: $qMap$, $\emamap$, $\lrmap$. \\ 
  \hspace*{0.3cm} // Parameter: $\minprob$. \\ 
  \hspace*{0.3cm} // \mbox{Weaken all except for $o$. Returns the free mass.} \\
  \hspace*{0.3cm}  $used  \leftarrow 0$ \\
  \hspace*{0.3cm} For each item and learning-rate, $i$, $\lr$ in $\lrmap$: \\
        \hspace*{0.6cm} If $i == o$: // Don't weaken $o$\\
        \hspace*{0.9cm}    $used \leftarrow used + \emamap[o]$ \\
        \hspace*{0.9cm}    Continue \\
        \hspace*{0.6cm}    // Possibly listen (switch) to the queue. \\
        \hspace*{0.6cm} $\qprob, qcount \leftarrow$ GetQinfo($qMap, i$) \\
        \hspace*{0.6cm} // if too low, drop $i$\\
        \hspace*{0.6cm} If $\max(\emamap[i], \qprob) < \minprob$:\\
        \hspace*{0.9cm}    $\emamap$.delete(i) \& $\lrmap$.delete(i) \\
        \hspace*{0.9cm}    Continue // remove item and go to next. \\
        \hspace*{0.6cm} If Q\_SignificantlyLow($\emamap[i], \qprob, qcount$): \\
        \hspace*{0.9cm}    $\emamap[i] \leftarrow \qprob$ // Set to q info. \\ 
        \hspace*{0.9cm} $\lrmap[i] \leftarrow qcount^{-1}$ \\
        \hspace*{0.6cm} Else: // weaken as usual \\
        \hspace*{0.9cm}    $\emamap[i] \leftarrow (1-\lr) * \emamap[i]$\\
        \hspace*{0.9cm} $\lrmap[i] \leftarrow DecayRate(\lr)$ \\
        \hspace*{0.6cm}    $used \leftarrow used + \emamap[i]$ \\
  \hspace*{0.3cm}  Return $1.0 - used$ // The free (available) mass \\ \\
        {\bf GetQInfo}($qMap$, $o$) \\
        \hspace*{0.3cm} If $o \notin qMap$: // no queue for $o$? \\
        \hspace*{0.6cm}    Return 0, 0 \\
        \hspace*{0.3cm} $q \leftarrow  qMap$.get($o$) \\
        \hspace*{0.3cm} Return GetPR($q$), GetCount($q$) \\
        }
\end{minipage}

\caption{Pseudo code for \qdn, an extension of EMA, where the overall
  weaken-and-boost update structure remains the same. Here each
  predictand entry, or edge, has a small queue and its own learning
  rate, in addition to the \pr estimate (a weight) (see \fig
  \ref{fig:maps_edges}), and during weakening and boosting, the queue
  estimates could be used to reset the the edge weight and its
  learning rate. }

\label{fig:ema_qdyal}
\end{figure}

\co{
For below: $p_1, p_2, \lr \in [0, 1]$.

To go from 0 to $p_2$ with learning rate $\lr$, assuming $\lr << p_2$,
requires observing $O(\frac{1}{\lr}p_2 )$ heads (positive outcomes),
and each positive outcome requires $\frac{1}{p_2}$ tosses.  Thus we
require $O(\frac{1}{p_2}\frac{p_2}{\lr})=O(\frac{1}{\lr})$ tosses to
converge.


Similarly, to go from $p_1$ to $p_2$ with $\lr$, assuming $p_1 < p_2$
and $\lr << p_2$, requires observing $O(\frac{1}{\lr}(p_2 - p_1))$
heads (positive outcomes), and since positive outcome requires
$\frac{1}{p_2}$ tosses, we get $O((\frac{1}{p_2})\frac{p_2 - p_1}{\lr})
= O(\frac{1}{\lr}(1-\frac{p_1}{p_2}))$, to go/converge from $p_1$ to
$p2, p_2 > p_1$.  (verify!)

}

\subsection{When and How to Listen (Switch) to the Queue Estimates: Statistical (Binomial-Tail) Tests}
\label{sec:binom}

The queue for each item provides two numbers, the number of cells of
the queue is the implicit number of positives observed recently, and
the total count across the queue cells is the number of trials
(roughly, the GetCount() function).
A \pr estimate $\qprob$ is derived from these numbers (GetProb()).
%
%
We also have an estimate of \prn, \emapr, from the EMA weights
$\emamap$, which we expect to be generally more accurate with lower
variance than $\qpr$, but specially in the face of non-stationarity,
from time to time this estimate could be out-dated and should be
discarded. Based on the counts and the queue estimate $\qprob$, we can
perform a binomial-tail test that asks whether, when assuming $\emapr$
is the true \prn, one can observe the alternative \pr $\qprob$ in
$qcount$ trials, with some reasonable probability. This binomial tail
can be approximated (lower and upper-bounded) efficiently in $O(1)$
time when one has the number of trials and the observed \pr $\qprob$
\cite{arratia89,ash90,binomText2021}, and it tells us how likely it is
that a binary event with assumed true \pr $\emapr$ could lead to the
counts and the \qpr \ estimate (of the queue). As seen in \fig
\ref{fig:ema_qdyal}, the approximation is based on the (binary) \kl
\ divergence ($\kl(\qpr||\emapr)$). When this event is sufficiently
unlikely, or, equivalently, when the binomial-tail score is
sufficiently high, \qd switches to the queue estimate and sets the new
rate $\lr$ accordingly too (see \sec \ref{sec:harm} explains the
connection between rate $\lr$ and counts). By default we use a score
threshold of 5, corresponding to 99\% confidence (meaning 0.01
probability of a false-positive, that is no change and the EMA
estimate should have been kept).  In a few experiments, we report on
the sensitivity of \qd to the choice of the binomial threshold.

As mentioned in \sec \ref{sec:harm}, specially now that each edge has
its own rate, the rate $\lrmap[i]$ can be used as a measure of the
predictor's uncertainty around the \pr estimate
$\emamap[i]$. Initially, when set to the queue estimate, the rate can
be relatively high, and is lowered over time via harmonic decay.


\subsection{\sd Maintenance and Convergence}
\label{sec:sd_maintain}

We can verify from the logic of \qdn, \ie the weakenings and
strengthening of the weights and the bounding of $\delta$ (the \pr to
add) by the $free$ variable (the available, or unallocated, \pr mass),
that the $\emamap$ of \qd always corresponds to an \sdn: the \pr
values kept are positive and never sum to more then 1.0.  In the
stationary setting, the properties of plain EMA apply and the \prs
should coverge to the true \prsn, except there is some low probability
that once in a while switching (listening) to the queue may occur with
a more variant estimate (a false-positive switch). We leave a more
formal analysis to future work.

\subsection{Pruning (Space Management) and Asymptotic Complexity} 
\label{sec:pruning}

The pruning logic for the three maps of \qd is identical to the logic
for pruning for the \qu method of \sec \ref{sec:qspace} except that
when an entry is deleted from the queue, its corresponding entries
(key-value pairs) in $\lrmap$ \ and $\emamap$ \ are also deleted when
they exist. Thus the set of keys in the $\qmap$ \ of the predictor
will always be a superset of the key sets in $\lrmap$ \ and $\emamap$
(note: the keysets in $\lrmap$ \ and $\emamap$ are always kept
identical).  The size of $\emamap$ does not exceed
$O(\frac{1}{\pmin})$ as it is a \sdn, and $\qmap$ is pruned
periodically as well. Updating time cost for \qd is similar to \qun:
each edge (predictand) is examined and the corresponding queue and
possibly EMA weights and learning rate are updated, each test and
update (weakening and boosting) take $O(1)$ (per edge) where we
assume \qcap is constant, and update and prediction take $O(|\qmap|)$
time.





\section{Experiments}
\label{sec:syn_exps}

We begin with the synthetic experiments, wherein we generate sequences
knowing the true \sdn s $\oat{\P}$. At any time point, for evaluation,
the \fc() function of \fig \ref{code:norming_etc} is applied to the
output of all predictors, with $\pns=\pmin=0.01$.  The default
parameters for \qd are \qcap of 3 and binomial-tail threshold of $5$,
and we report the performance of \qd for different $\lrmin$, often set
at 0.001. We are interested in $\lrmin \le 0.001$ because our target
range is learning \prn s in $[0.01, 1.0]$ well.  For static EMA, we
report the (fixed) $\lr$ used, for harmonic EMA, the $\lrmin$, and for
\qun, the \qcapn. The \nsm used is described in \ref{sec:evalns2}
(default $\nsthrsh=2$). All code is implemented in Python, and we report timing
for several of the experiments (those taking longer than a few
minutes).


\subsection{Tracking a Single Item, Stationary}
\label{sec:single1}





All the prediction techniques are based on estimating the probability
(\prn) for each item separately (treating all other observations as
negative outcomes), so we begin with assessing the quality of the
predictions for a single item in the binary stationary setting of \sec
\ref{sec:binary}.
Thus, when $\tp=0.1$, about $10\%$ of the sequence is 1, the rest 0.
Table \ref{tab:stat_devs} presents the deviation rates of \qun, EMA,
and \qdn, under a few parameter variations, and for $\tp \in \{0.1,
0.05, 0.01\}$.  Sequences are each 10k time points long, and deviation
rates are averaged over 200 such sequences.\footnote{The methods keep
track of the probabilities of all the items they deem salient, in this
case, both 0 and 1, but we focus on the \pr estimates
$\oat{\ep}$ for item 1.}

\co{ , for which the true probability is $\tp$, as a prediction method
  processes a binary sequence $\seq{o}{}{}$, in the stationary
  setting.  Here, a sequence can be viewed as binary, $\oat{o} \in
  \{0, 1\}$, and we are assessing the quality of probability estimates
  for item 1.  }

Under this stationarity setting, higher \qcap helps the \qu technique:
\qu with \qcap 10 does better than \qcap of 5, but, specially for
$d=1.5$, both tend to substantially lag behind the best of the EMA
variants. EMA with harmonic decay, with an appropriately low $\lrmin$,
does best across all $\tp$. If we anticipate that the useful items to
predict will have \prs in the $0.01$ to $1.0$ range, in a stationary
world, then setting $\lrmin$ for harmonic EMA to a low value,
$\frac{0.01}{k}$, where $k \ge 10$, is adequate.\footnote{In the
stationary setting, one can set $\lrmin=0$. } In this stationary and
binary setting, the complexity of \qd is not needed, and harmonic EMA
is sufficient. Still \qd is the second best.  Static EMA is not
flexible enough, and one has to anticipate what $\tp$ is and set the
rate appropriately. For instance, when $\tp=0.01$, EMA with the same
$\lr=0.01$ is not appropriate, resulting in too much
variance. Finally, we observe that the deviation rates, as well as the
variances, for any method, degrade (increase) somewhat as $\tp$ is
lowered from $0.1$ to $0.01$. In ten thousand draws (during sequence
generation), there are fewer positive observations with lower $\tp$,
and estimates will have higher variance (see also Appendix
\ref{sec:one_qcell}).

\co{
  
\begin{itemize}
\item Explain how each sequence is generated. 

\item How each predictor type observes and updates.

\item How you measure violation and average.
  
\item In these experiments, being stationary, EMA variants with a low
  rate (0.001) do best.  EMA with harmonic decay does best over all.

\item As the target probability $p$ goes down, fewer positive outcomes
  are observed (in a span of fixed 10000 draws), and the performances
  degrade (more violations and higher variance) for all methods.
\end{itemize}

}

\begin{table}[t]\center
  \begin{tabular}{ |c?c|c?c|c| }     \hline
    deviation threshold $\rightarrow$  &  1.5 & 2 & 1.5 & 2   \\ \thickhline  
  &  \multicolumn{2}{c?}{\qun, 5 (\qcap of 5) } &  \multicolumn{2}{c|}{\qun, 10 (\qcap of 10)}  \\ \hline
 $\tp = 0.10$ & 0.385 $\pm$ 0.026  & 0.129 $\pm$ 0.020 &  0.191 $\pm$ 0.029  & 0.026 $\pm$ 0.010  \\ \hline
 $\tp = 0.05$ & 0.405 $\pm$ 0.034  & 0.142 $\pm$ 0.028  & 0.211 $\pm$ 0.044  & 0.035 $\pm$ 0.016  \\ \hline
    & \multicolumn{2}{c?}{static EMA, 0.01 ($\lr$ of 0.01) } &  \multicolumn{2}{c|}{static EMA, 0.001 ($\lr$ of 0.001) } \\ \hline
 $\tp=0.10$ & 0.075 $\pm$ 0.021  & 0.013 $\pm$ 0.006 & 0.113 $\pm$ 0.020  & 0.071 $\pm$ 0.012 \\ \hline
 $\tp=0.05$ &  0.211 $\pm$ 0.034  & 0.050 $\pm$ 0.019 & 0.118 $\pm$ 0.029  & 0.072 $\pm$ 0.016 \\ \hline
 & \multicolumn{2}{c?}{ harmonic EMA, 0.001 ($\lrmin$ of 0.001) } &  \multicolumn{2}{c|}{\qd, 0.001 ($\lrmin$ of 0.001)}  \\  \hline
$\tp=0.10$ & 0.006 $\pm$ 0.007  & 0.002 $\pm$ 0.003  &  0.018 $\pm$ 0.015  & 0.008 $\pm$ 0.006 \\ \hline
$\tp=0.05$ & 0.012 $\pm$ 0.013  & 0.005 $\pm$ 0.005 & 0.028 $\pm$ 0.023  & 0.014 $\pm$ 0.012  \\ \hline
  \end{tabular}
  \vspace*{.2cm}
  \caption{Synthetic single-item stationary: Deviation rates,
    for two deviation thresholds $d\in \{1.5, 2\}$, averaged over 200
    randomly generated sequences of 10000 binary events (0 or 1),
    for target probability $\tp \in \{ 0.05, 0.1\}$.
    As an example, for $\tp=0.1$, about $10\%$ of the items will be 1,
    the rest are 0s in the sequence, and the \sma predicts a
    probability $\hat{p}^{(t)}$ at every time point $t$ for
    $\oat{o}=1$ (then updates), and we observe from the table that
    about $38\%$ of time, $\max(\frac{\hat{p}^{(t)}}{p},
    \frac{p}{\hat{p}^{(t)}}) > 1.5$ (\ie $\hat{p}^{(t)} > 0.15$ or
    $\hat{p}^{(t)} < \frac{0.1}{1.5}$), for the \qu \sma with
    \qcap 5 (top left). The lower the deviation rates the better. In
    this stationary setting, we see improvements with larger queue
    capacities (as expected), and a lower $\minlr$ of 0.001 performs
    best for the EMA variants. For \qun, with \qcap$=10$,
    compare to Table \ref{tab:plain_counting}, $t_p=10$. }
\label{tab:stat_devs}
\end{table}


\co{
  
\begin{table}[t]\center
  \begin{tabular}{ |c|c|c|c|c|c|c| }     \hline
    deviation & \multicolumn{4}{c|}{queue 5 } &  \multicolumn{2}{c|}{10}  \\
    \cline{2-7}
 threshold $\rightarrow$ & 1.1 & 1.5 & 2 & 3 &  1.1 & 2  \\ \hline
p=0.1 &  0.84$\pm0.01$  &  0.38$\pm0.02$ & 0.12 $\pm 0.02$  & 0.01 $\pm 0.005$ & 0.76 $\pm 0.02$ & 0.02 $\pm 0.01$ \\ \hline
p=0.05 & 0.84$\pm0.02$ & 0.40$\pm0.04$  &  0.14$\pm0.03$ & 0.02 $\pm 0.01$ & 0.76 $\pm 0.03$ & 0.03 $\pm 0.02$ \\ \hline
p=0.01 & 0.85$\pm0.03$ & 0.40$\pm0.08$ & 0.15$\pm0.06$ & 0.02 $\pm 0.02$   & 0.77 $\pm 0.06$  & 0.04 $\pm 0.04$ \\ \hline
  \end{tabular}
  \vspace*{.2cm}
  \caption{Mean fractions of times there was a deviation violation,
    \ie whenever $\max(\frac{\hat{p}}{p}, \frac{p}{\hat{p}}) > d$,
    on randomly generated sequences of 10000 binary events (0 or 1),
    averaged over 200 trials (standard deviations are also reported),
    for a few choices of deviation thresholds $d\in \{1.1, 1.5, 2,
    3\}$, target probabilities $p \in \{ 0.01, 0.05, 0.1 \}$ (note: a
    fixed/stationary probability), and queue capacities of 5 and
    10. As an example, for $p=0.1$, about $10\%$ of the items will be
    1, the rest are 0s in the sequence, and the predictor predicts a
    probability $\hat{p}$ at every time point, and we observe from the
    table that about $38\%$ of time, $\max(\frac{\hat{p}}{p},
    \frac{p}{\hat{p}}) > 3$ (\ie $\hat{p} > 0.3$ or $\hat{p} <
    0.033$), for a predictor with a queue of capacity 5. The lower the
    deviation violations the better. We see improvements with larger
    queue capacities (as expected), while as $p$ goes down, the
    prediction task becomes a little harder (more violations and/or
    higher variance).  }
\label{tab:devs1}
\end{table}
}

\co{
\begin{table}[t]\center
  \begin{tabular}{ |c|c|c|c|c|c|c|c|c| }     \hline
    deviation & \multicolumn{2}{c|}{ fixed1 } &  \multicolumn{2}{c|}{fixed2}  & \multicolumn{2}{c|}{ harmonic } &  \multicolumn{2}{c|}{\qd}  \\
    \cline{2-7}
 threshold $\rightarrow$ &  1.5 & 2 & 1.5 & 2 &  1.5 & 2 & 1.5 & 2  \\ \hline
p=0.1 & 0.209 $\pm$ 0.034  & 0.050 $\pm$ 0.019 & 0.116 $\pm$ 0.029  & 0.071 $\pm$ 0.017 & 0.013 $\pm$ 0.015  & 0.005 $\pm$ 0.005 & 0.029 $\pm$ 0.030  & 0.016 $\pm$ 0.013  \\ \hline
p=0.05 &  0.211 $\pm$ 0.034  & 0.050 $\pm$ 0.019 & 0.118 $\pm$ 0.029  & 0.072 $\pm$ 0.016 & 0.012 $\pm$ 0.013  & 0.005 $\pm$ 0.005 & 0.026 $\pm$ 0.023  & 0.014 $\pm$ 0.011  \\ \hline
p=0.01 &  0.210 $\pm$ 0.037  & 0.049 $\pm$ 0.018 & 0.116 $\pm$ 0.029  & 0.073 $\pm$ 0.017 & 0.011 $\pm$ 0.014  & 0.005 $\pm$ 0.005 & 0.029 $\pm$ 0.028  & 0.014 $\pm$ 0.012  \\ \hline
  \end{tabular}
  \vspace*{.2cm}
  \caption{Same as above, but with static EMA (fixed and harmonic), and qdyal. }
\label{tab:devs2}
\end{table}
}

\co{
\begin{table}[t]\center
  \begin{tabular}{ |c|c|c|c|c| }     \hline
    deviation & \multicolumn{2}{c|}{\qun, 5 } &  \multicolumn{2}{c|}{\qun, 10}  \\
    \cline{2-5}    
 threshold $\rightarrow$ &  1.5 & 2 & 1.5 & 2   \\ \hline  
 0.025 $\leftrightarrow$ 0.10, 10 &  0.412 $\pm$ 0.035  & 0.161 $\pm$ 0.027  & 0.286 $\pm$ 0.038  & 0.109 $\pm$ 0.017   \\  \hline
 0.025 $\leftrightarrow$ 0.10, 50 &  0.407 $\pm$ 0.041  & 0.155 $\pm$ 0.032 & 0.268 $\pm$ 0.051  & 0.079 $\pm$ 0.019   \\  \hline
    & \multicolumn{2}{c|}{ fixed EMA, 0.01 } &  \multicolumn{2}{c|}{fixed EMA, 0.001 } \\ \hline
 0.025 $\leftrightarrow$ 0.10, 10 &   0.308 $\pm$ 0.032  & 0.141 $\pm$ 0.026 & 0.800 $\pm$ 0.042  & 0.470 $\pm$ 0.031 \\  \hline
 0.025 $\leftrightarrow$ 0.10, 50 &   0.334 $\pm$ 0.037  & 0.146 $\pm$ 0.027 & 0.693 $\pm$ 0.061  & 0.354 $\pm$ 0.033 \\  \hline
 & \multicolumn{2}{c|}{ harmonic EMA, 0.001 } &  \multicolumn{2}{c|}{qdyal, 0.001}  \\  \hline
 0.025 $\leftrightarrow$ 0.10, 10 &  0.712 $\pm$ 0.042  & 0.424 $\pm$ 0.039 & 0.551 $\pm$ 0.096  & 0.328 $\pm$ 0.059   \\  \hline
 0.025 $\leftrightarrow$ 0.10, 50 &   0.628 $\pm$ 0.067  & 0.317 $\pm$ 0.045  & 0.382 $\pm$ 0.093  & 0.217 $\pm$ 0.049  \\  \hline
  \end{tabular}
  \vspace*{.2cm}
  \caption{ Like above, tracking one event (binary sequence, 10k observations), but non-stationary.  }
\label{tab:devs_non0}
\end{table}
}

\co{
\begin{table}[t]\center
  \begin{tabular}{ |c|c|c|c|c| }     \hline
    deviation & \multicolumn{2}{c|}{\qun, 5 } &  \multicolumn{2}{c|}{\qun, 10}  \\
    \cline{2-5}    
 threshold $\rightarrow$ &  1.5 & 2 & 1.5 & 2   \\ \hline  
  [0.025, 0.1], 10 & 0.410 $\pm$ 0.035  & 0.158 $\pm$ 0.027  & 0.282 $\pm$ 0.037  & 0.110 $\pm$ 0.018  \\ \hline
  [0.025, 0.1], 50 & 0.412 $\pm$ 0.040  & 0.155 $\pm$ 0.033  & 0.261 $\pm$ 0.047  & 0.077 $\pm$ 0.020  \\ \hline
    & \multicolumn{2}{c|}{ static EMA, 0.01 } &  \multicolumn{2}{c|}{static EMA, 0.001 } \\ \hline
  [0.025, 0.1], 10 & 0.305 $\pm$ 0.032  & 0.140 $\pm$ 0.023  & 0.802 $\pm$ 0.042  & 0.472 $\pm$ 0.031  \\ \hline
  [0.025, 0.1], 50 & 0.331 $\pm$ 0.036  & 0.148 $\pm$ 0.028  & 0.692 $\pm$ 0.060  & 0.352 $\pm$ 0.037  \\ \hline
  & \multicolumn{2}{c|}{ harmonic EMA, 0.001 } &  \multicolumn{2}{c|}{qdyal, 0.001}  \\  \hline
   [0.025, 0.1], 10 & 0.705 $\pm$ 0.044  & 0.424 $\pm$ 0.036  & 0.661 $\pm$ 0.077  & 0.391 $\pm$ 0.050  \\ \hline
  [0.025, 0.1], 50 & 0.633 $\pm$ 0.065  & 0.319 $\pm$ 0.043  & 0.501 $\pm$ 0.090  & 0.269 $\pm$ 0.049  \\ \hline      
  \end{tabular}
  \vspace*{.2cm}
  \caption{ Like above, tracking one event but non-stationary (binary sequence, 10k
    observations in each sequence, 500 sequences).  }
\label{tab:devs_non0}
\end{table}
}

\co{

  
\begin{table}[t]\center
  \begin{tabular}{ |c|c|c|c|c| }     \hline
    deviation $\rightarrow$ &  1.5 & 2 & 1.5 & 2   \\   
    \cline{2-5}    
    threshold & \multicolumn{2}{c|}{\qun, 5 } &  \multicolumn{2}{c|}{\qun, 10}  \\ \hline
  [0.025, 0.25], 10 & 0.423 $\pm$ 0.029  & 0.189 $\pm$ 0.023  & 0.395 $\pm$ 0.021  & 0.234 $\pm$ 0.014  \\ \hline
  [0.025, 0.25], 50 & 0.382 $\pm$ 0.038  & 0.131 $\pm$ 0.028  & 0.222 $\pm$ 0.039  & 0.062 $\pm$ 0.017  \\ \hline 
  Uniform, 10 & 0.429 $\pm$ 0.030  & 0.207 $\pm$ 0.024  & 0.483 $\pm$ 0.028  & 0.296 $\pm$ 0.028  \\ \hline
  Uniform, 50 & 0.361 $\pm$ 0.038  & 0.128 $\pm$ 0.028  & 0.224 $\pm$ 0.040  & 0.074 $\pm$ 0.017  \\ \hline 
    & \multicolumn{2}{c|}{ static EMA, 0.01 } &  \multicolumn{2}{c|}{static EMA, 0.001 } \\ \hline
  [0.025, 0.25], 10 & 0.510 $\pm$ 0.024  & 0.357 $\pm$ 0.020  & 0.996 $\pm$ 0.006  & 0.760 $\pm$ 0.043  \\ \hline
  [0.025, 0.25], 50 & 0.255 $\pm$ 0.045  & 0.129 $\pm$ 0.028  & 0.705 $\pm$ 0.032  & 0.560 $\pm$ 0.023  \\ \hline 
  Uniform, 10 & 0.686 $\pm$ 0.030  & 0.477 $\pm$ 0.035  & 0.818 $\pm$ 0.038  & 0.693 $\pm$ 0.054  \\ \hline
  Uniform, 50 & 0.397 $\pm$ 0.056  & 0.209 $\pm$ 0.045  & 0.775 $\pm$ 0.085  & 0.602 $\pm$ 0.099  \\ \hline 
  & \multicolumn{2}{c|}{ Harmonic EMA, 0.01 } &  \multicolumn{2}{c|}{Harmonic EMA, 0.001 } \\ \hline
  [0.025, 0.25], 10 & 0.502 $\pm$ 0.025  & 0.351 $\pm$ 0.021  & 0.957 $\pm$ 0.021  & 0.668 $\pm$ 0.053  \\ \hline
  [0.025, 0.25], 50 & 0.247 $\pm$ 0.046  & 0.123 $\pm$ 0.028  & 0.606 $\pm$ 0.031  & 0.494 $\pm$ 0.018  \\ \hline 
  Uniform, 10 & 0.684 $\pm$ 0.033  & 0.476 $\pm$ 0.034  & 0.813 $\pm$ 0.036  & 0.683 $\pm$ 0.050  \\ \hline
  Uniform, 50 & 0.395 $\pm$ 0.056  & 0.204 $\pm$ 0.043  & 0.761 $\pm$ 0.079  & 0.592 $\pm$ 0.097  \\ \hline 
  & \multicolumn{2}{c|}{ \qd, 0.01 } &  \multicolumn{2}{c|}{\qd, 0.001 } \\ \hline
  [0.025, 0.25], 10 & 0.480 $\pm$ 0.029  & 0.332 $\pm$ 0.025  & 0.586 $\pm$ 0.066  & 0.408 $\pm$ 0.061  \\ \hline
  [0.025, 0.25], 50 & 0.251 $\pm$ 0.048  & 0.124 $\pm$ 0.031  & 0.148 $\pm$ 0.066  & 0.084 $\pm$ 0.037  \\ \hline 
  Uniform, 10 & 0.585 $\pm$ 0.035  & 0.362 $\pm$ 0.034  & 0.566 $\pm$ 0.052  & 0.350 $\pm$ 0.045  \\ \hline
  Uniform, 50 & 0.319 $\pm$ 0.065  & 0.140 $\pm$ 0.049  & 0.301 $\pm$ 0.081  & 0.153 $\pm$ 0.049  \\ \hline 
  \end{tabular}
  \vspace*{.2cm}
  \caption{First set of non-stationary experiments: ``oscillation''
    experiments..  Like above (actually the above will be removed?! ),
    0.025 $\leftrightarrow$ 0.25, as well as uniform, tracking one
    event but non-stationary: binary sequence, 10k observations in
    each sequence, averaged over 500 sequences. For the case of 0.025
    $\leftrightarrow$ 0.25, each subsequence (wherein $p$ is constant)
    is roughly same length: 400 long for min\_obs=10, or 2k for
    min\_obs=50. For drawing from Uniform, each subsequence is at
    least 1k (see ?? text for details). }
\label{tab:devs_non1}
\end{table}
}

\co{
    [0.025, 0.25], 10 & 0.387 $\pm$ 0.033  & 0.143 $\pm$ 0.024  & 0.260 $\pm$ 0.029  & 0.108 $\pm$ 0.012  \\ \hline
  [0.025, 0.25], 50 & 0.399 $\pm$ 0.038  & 0.145 $\pm$ 0.031  & 0.248 $\pm$ 0.045  & 0.082 $\pm$ 0.018  \\ \hline
  Uniform, 10 & 0.230 $\pm$ 0.048  & 0.057 $\pm$ 0.020  & 0.102 $\pm$ 0.033  & 0.021 $\pm$ 0.016  \\ \hline
  Uniform, 50 & 0.238 $\pm$ 0.054  & 0.064 $\pm$ 0.026  & 0.108 $\pm$ 0.040  & 0.021 $\pm$ 0.016  \\ \hline
    & \multicolumn{2}{c|}{ static EMA, 0.01 } &  \multicolumn{2}{c|}{static EMA, 0.001 } \\ \hline
  [0.025, 0.25], 10 & 0.312 $\pm$ 0.026  & 0.183 $\pm$ 0.020  & 0.876 $\pm$ 0.026  & 0.670 $\pm$ 0.028  \\ \hline
  [0.025, 0.25], 50 & 0.342 $\pm$ 0.030  & 0.177 $\pm$ 0.026  & 0.922 $\pm$ 0.043  & 0.708 $\pm$ 0.050  \\ \hline
  Uniform, 10 & 0.090 $\pm$ 0.039  & 0.047 $\pm$ 0.028  & 0.466 $\pm$ 0.131  & 0.275 $\pm$ 0.106  \\ \hline
  Uniform, 50 & 0.105 $\pm$ 0.062  & 0.055 $\pm$ 0.039  & 0.491 $\pm$ 0.146  & 0.290 $\pm$ 0.136  \\ \hline
  & \multicolumn{2}{c|}{ Harmonic EMA, 0.01 } &  \multicolumn{2}{c|}{Harmonic EMA, 0.001 } \\ \hline
  [0.025, 0.25], 10 & 0.306 $\pm$ 0.025  & 0.180 $\pm$ 0.021  & 0.778 $\pm$ 0.032  & 0.591 $\pm$ 0.016  \\ \hline
  [0.025, 0.25], 50 & 0.338 $\pm$ 0.033  & 0.172 $\pm$ 0.026  & 0.830 $\pm$ 0.043  & 0.658 $\pm$ 0.056  \\ \hline
  Uniform, 10 & 0.084 $\pm$ 0.045  & 0.043 $\pm$ 0.028  & 0.375 $\pm$ 0.136  & 0.209 $\pm$ 0.110  \\ \hline
  Uniform, 50 & 0.102 $\pm$ 0.069  & 0.049 $\pm$ 0.042  & 0.406 $\pm$ 0.146  & 0.239 $\pm$ 0.127  \\ \hline   
  & \multicolumn{2}{c|}{ \qd, 0.01 } &  \multicolumn{2}{c|}{\qd, 0.001 } \\ \hline
  [0.025, 0.25], 10 & 0.299 $\pm$ 0.028  & 0.173 $\pm$ 0.024  & 0.310 $\pm$ 0.089  & 0.179 $\pm$ 0.054  \\ \hline
  [0.025, 0.25], 50 & 0.330 $\pm$ 0.035  & 0.170 $\pm$ 0.028  & 0.192 $\pm$ 0.077  & 0.112 $\pm$ 0.043  \\ \hline
  Uniform, 10 & 0.060 $\pm$ 0.031  & 0.023 $\pm$ 0.019  & 0.180 $\pm$ 0.081  & 0.069 $\pm$ 0.041  \\ \hline
  Uniform, 50 & 0.077 $\pm$ 0.063  & 0.028 $\pm$ 0.031  & 0.174 $\pm$ 0.078  & 0.064 $\pm$ 0.040  \\ \hline 
}




\subsection{Tracking a Single Item, Non-Stationary}
\label{sec:syn_non_one}

\begin{table}[t]  \center
  \begin{tabular}{ |c?c|c?c|c| }     \hline
    deviation threshold$\rightarrow$ &  1.5 & 2 & 1.5 & 2   \\ \thickhline  
  change type and  $\minobs \downarrow$  & \multicolumn{2}{c?}{\qun, 5} &  \multicolumn{2}{c|}{\qun, 10}  \\ \hline
  0.025 $\leftrightarrow$ 0.25, 10 & 0.423 $\pm$ 0.029  & 0.189 $\pm$ 0.023  & 0.395 $\pm$ 0.021  & 0.234 $\pm$ 0.014  \\ \hline
  0.025 $\leftrightarrow$ 0.25, 50 & 0.382 $\pm$ 0.038  & 0.131 $\pm$ 0.028  & 0.222 $\pm$ 0.039  & 0.062 $\pm$ 0.017  \\ \hline 
  $\mathcal{U}(0.01,1.0)$, 10 & 0.443 $\pm$ 0.030  & 0.234  $\pm$ 0.042  & 0.468  $\pm$ 0.028  & 0.284 $\pm$ 0.035  \\ \hline
  $\mathcal{U}(0.01,1.0)$, 50 & 0.380 $\pm$ 0.050  & 0.173 $\pm$ 0.080  & 0.247 $\pm$ 0.060  & 0.102 $\pm$ 0.076  \\ \thickhline 
  & \multicolumn{2}{c?}{ static EMA, 0.01 } &  \multicolumn{2}{c|}{static EMA, 0.001 } \\ \hline
  0.025 $\leftrightarrow$ 0.25, 10 & 0.510 $\pm$ 0.024  & 0.357 $\pm$ 0.020  & 0.996 $\pm$ 0.006  & 0.760 $\pm$ 0.043  \\ \hline
  0.025 $\leftrightarrow$ 0.25, 50 & 0.255 $\pm$ 0.045  & 0.129 $\pm$ 0.028  & 0.705 $\pm$ 0.032  & 0.560 $\pm$ 0.023  \\ \hline 
  $\mathcal{U}(0.01,1.0)$, 10 & 0.686 $\pm$ 0.030  & 0.477 $\pm$ 0.035  & 0.818 $\pm$ 0.038  & 0.693 $\pm$ 0.054  \\ \hline
  $\mathcal{U}(0.01,1.0)$, 50 & 0.397 $\pm$ 0.056  & 0.209 $\pm$ 0.045  & 0.775 $\pm$ 0.085  & 0.602 $\pm$ 0.099  \\ \thickhline 
  & \multicolumn{2}{c?}{ harmonic EMA, 0.01 } &  \multicolumn{2}{c|}{harmonic EMA, 0.001 } \\ \hline
  0.025 $\leftrightarrow$ 0.25, 10 & 0.502 $\pm$ 0.025  & 0.351 $\pm$ 0.021  & 0.957 $\pm$ 0.021  & 0.668 $\pm$ 0.053  \\ \hline
  0.025 $\leftrightarrow$ 0.25, 50 & 0.247 $\pm$ 0.046  & 0.123 $\pm$ 0.028  & 0.606 $\pm$ 0.031  & 0.494 $\pm$ 0.018  \\ \hline 
  $\mathcal{U}(0.01,1.0)$, 10 & 0.684 $\pm$ 0.033  & 0.476 $\pm$ 0.034  & 0.813 $\pm$ 0.036  & 0.683 $\pm$ 0.050  \\ \hline
  $\mathcal{U}(0.01,1.0)$, 50 & 0.395 $\pm$ 0.056  & 0.204 $\pm$ 0.043  & 0.761 $\pm$ 0.079  & 0.592 $\pm$ 0.097  \\ \thickhline 
  & \multicolumn{2}{c|}{ \qdn, 0.01 } &  \multicolumn{2}{c|}{\qdn, 0.001 } \\ \hline
  0.025 $\leftrightarrow$ 0.25, 10 & 0.480 $\pm$ 0.029  & 0.332 $\pm$ 0.025  & 0.586 $\pm$ 0.066  & 0.408 $\pm$ 0.061  \\ \hline
  0.025 $\leftrightarrow$ 0.25, 50 & 0.251 $\pm$ 0.048  & 0.150 $\pm$ 0.031  & 0.099 $\pm$ 0.066  & 0.053 $\pm$ 0.037  \\ \hline 
  $\mathcal{U}(0.01,1.0)$, 10 & 0.560  $\pm$ 0.035  & 0.351 $\pm$ 0.054  &  0.529 $\pm$ 0.052  & 0.313  $\pm$ 0.045  \\ \hline
  $\mathcal{U}(0.01,1.0)$, 50 & 0.320 $\pm$ 0.11 & 0.189 $\pm$ 0.128  & 0.246 $\pm$ 0.081  &  0.128 $\pm$ 0.083  \\ \hline 
  \end{tabular}
  \vspace*{.2cm}
  \caption{Synthetic single-item non-stationary:
    Deviation-rates on sequences where $\tp$ oscillates back and forth
    between 0.025 and 0.25, or drawn uniformly at random from the
    interval $[0.01, 1.0]$ ($\tp\sim\mathcal{U}(0.01,1.0)$). Each
    sequence is 10k observations long, and the deviation-rate is
    averaged over 500 such sequences. For the case of 0.025
    $\leftrightarrow$ 0.25, each subsequence (wherein $\tp$ is
    constant) is roughly same length: 400 long for $\minobs=10$, and
    2k for $\minobs=50$. For the rows with $\mathcal{U}(0.01,1.0)$,
    each subsequence only has to meet the $\minobs$ constraint (see
    \ref{sec:syn_non_one} text for details). In this non-stationray
    setting, focused on one changing item, the \qs \smas do best
    perhaps, but \dyal variants do equally well or come close (in
    terms of both their best-case and worst-case performance). }
\label{tab:devs_non1}
\end{table}





We continue with tracking a single item as above, as we report
deviation rates when estimating a single $\tp$, but now the predictors
face non-stationarity. As in the above, sequences of 10000 items are
generated in each trial. We report on two main settings for
non-stationarity: In the first setting, $\tp$ oscillates between, 0.25
and 0.025, thus an abrupt or substantial change (10x) is guaranteed
to occur and frequently. This oscillation is shown as $0.25
\leftrightarrow 0.025$ in Table \ref{tab:devs_non1}. In the second
'uniform' setting, each time $\tp$ is to change, we draw a new $\tp$
uniformly at random from the interval $[0.01, 1.0]$, shown as
$\mathcal{U}(0.01,1.0)$, and in this setting, some changes are large,
others small and could be viewed as drifts. The stable period, during
which $\tp$ cannot change (to allow time for learning), is set as
follows. For both settings, within a stable period, the target item
(item 1) has to be observed at least $\minobs$ times
%
before $\tp$ is eligible to
change, where results for $\minobs \in \{10, 50\}$ are shown in Table
\ref{tab:devs_non1}. Additionally, we impose a general minimum-length
constraint (not just on positive observations) for the $0.25
\leftrightarrow 0.025$ setting: each stable period has to be
$\frac{\minobs}{\min(0.025, 0.25)}$, so that the different periods
would have similar length (expected 400 time points when $\minobs=10$,
and 2000 when $\minobs=50$). In this way, subsequences corresponding
to $\tp=0.25$ would not be too short (otherwise, deviation-rate
performance when $\tp=0.025$ dominates). Thus, with $0.25
\leftrightarrow 0.025$, we get an expected 25 stable subsequences (or
changes in $\tp$) in 10k long sequences with $\minobs=10$, and 5
switches in $\tp$ when $\minobs=50$. For the uniform setting, we did
not impose any extra constraint, and respectively with $\minobs$ of 10
and 50, we get around 200 and 50 stable subsequences in 10k time
points. In this setting, performances in periods when $\tp$ is low do
dominate the deviation rates (see also Table
\ref{tab:devs_non_1000ticks} where the deviation-rates improve when we
impose an overall minimum-length constraint for the uniform setting
too).

\begin{figure}[!ht]
\begin{center}
  \centering
  \subfloat[Min-Rate of 0.001]{{\includegraphics[height=4.5cm,width=6cm]{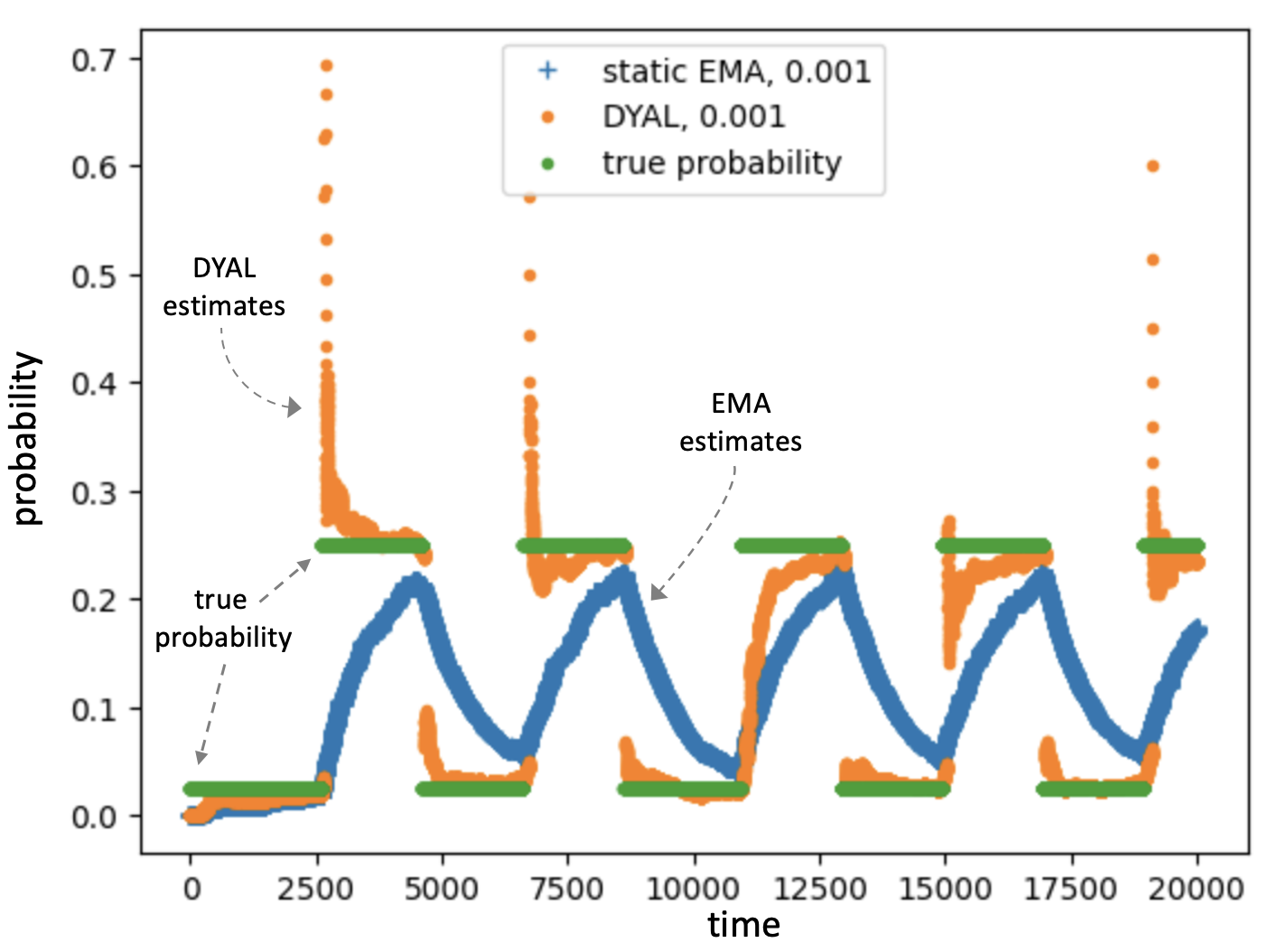}
  }}
\hspace*{.1in}  \subfloat[Min-Rate of 0.01]{{\includegraphics[height=4.5cm,width=6cm]{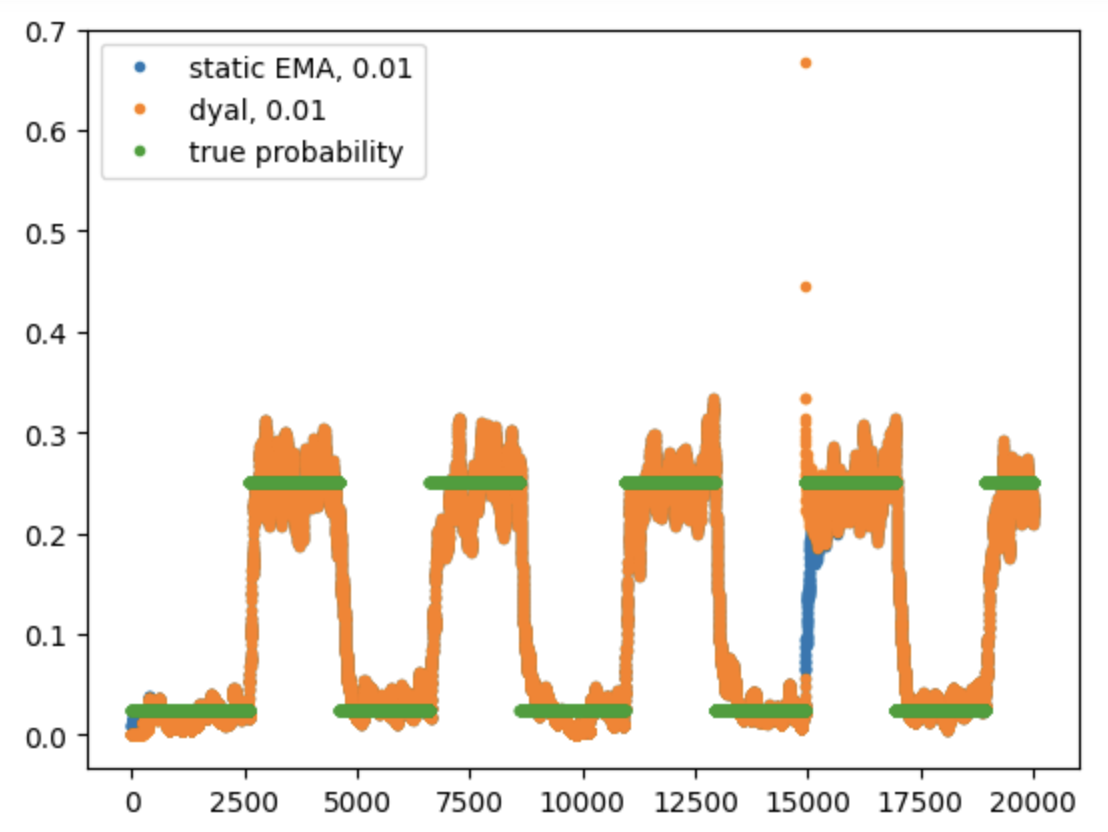}
}}\\
\subfloat[\qd Min-Rate of 0.001 vs 0.01]{{\includegraphics[height=4.5cm,width=6cm]{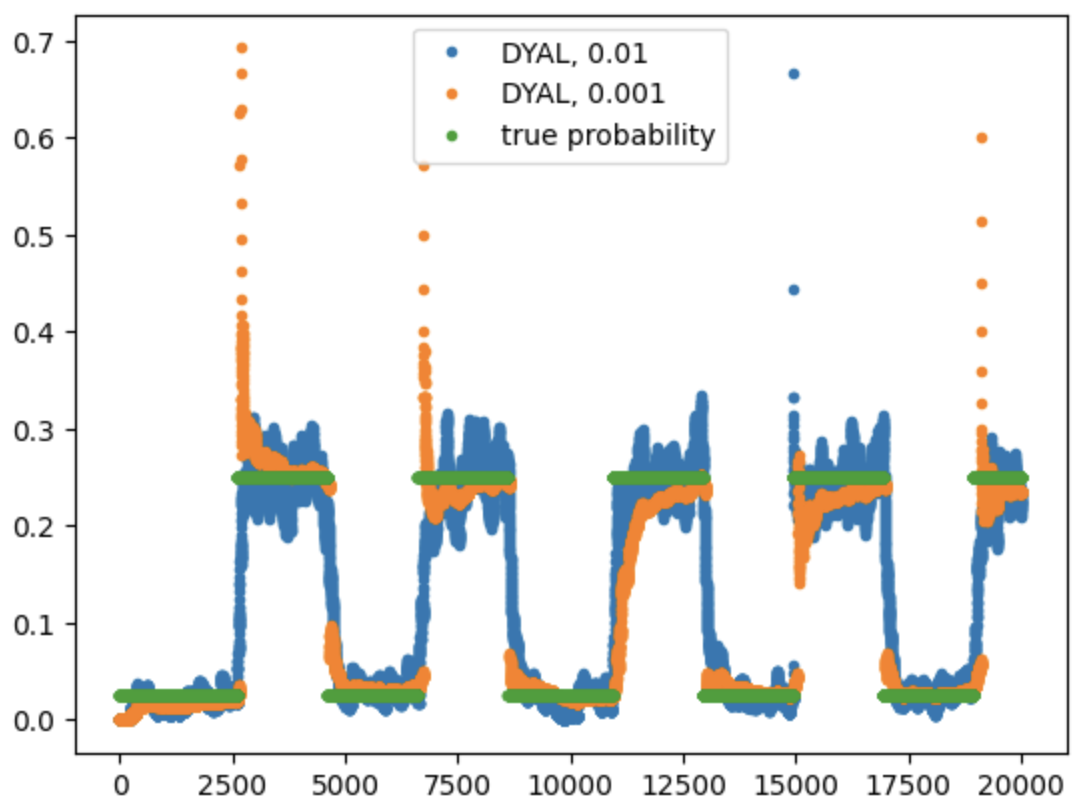}
  }} 
\hspace*{.1in}
  \subfloat[\qu with capacity 5 and 10. ]{{\includegraphics[height=4.5cm,width=6cm]{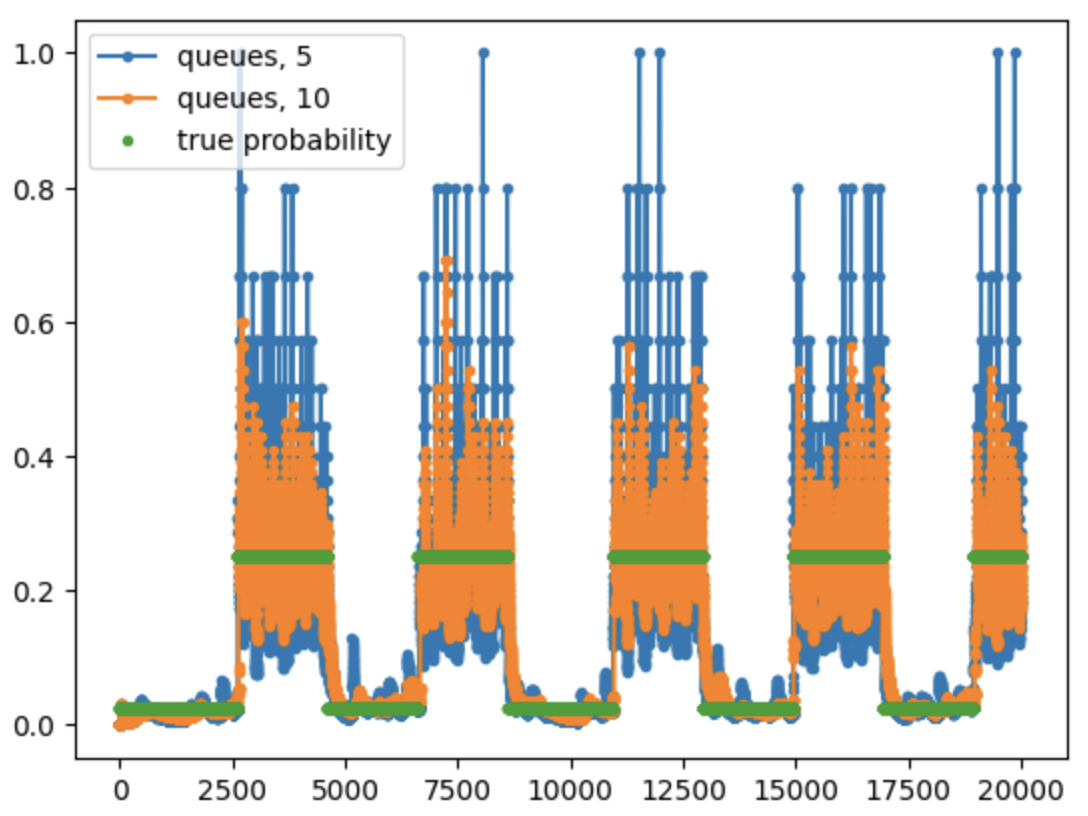}
  }}\\
\hspace{-0.5cm}  \subfloat[Changes in the learning rate $\lr$.]{{\includegraphics[height=4.5cm,width=6cm]{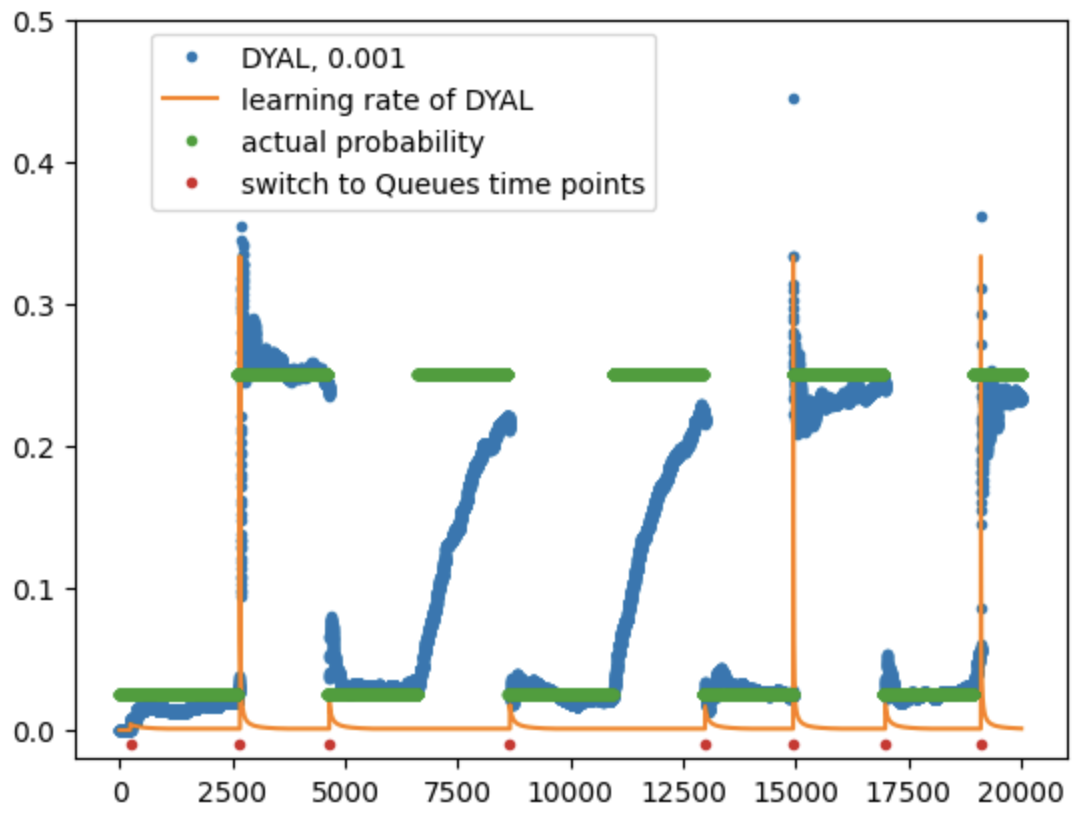}
  }}
\hspace*{.1in}  \subfloat[A close up of change in $\lr$.]{{\includegraphics[height=4.5cm,width=6cm]{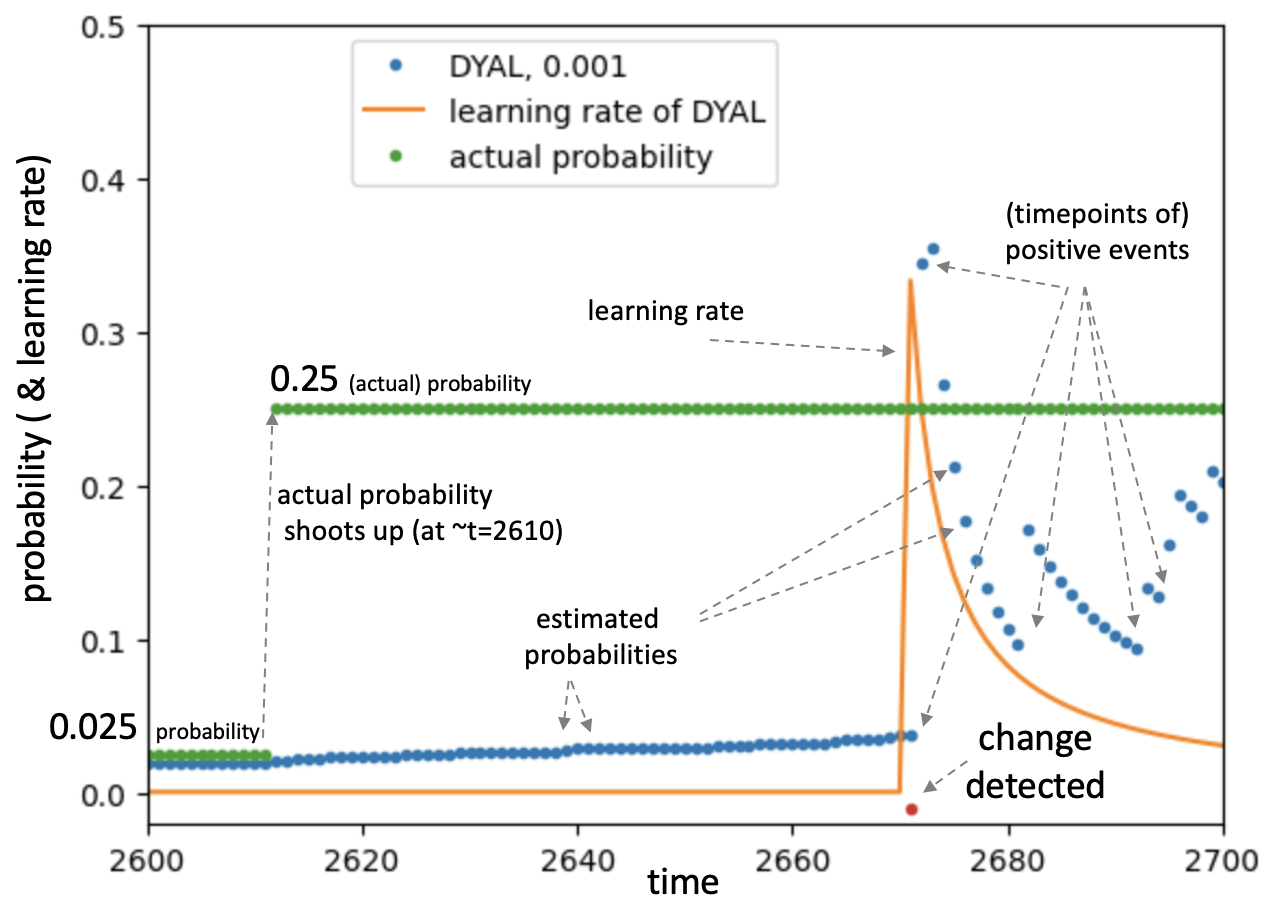}
}}
\end{center}
\vspace{.02cm}
\caption{Synthetic single-item oscillation experiments, 0.025
  $\leftrightarrow$ 0.25, on a single sequence. Parts (a) to (d) show
  the estimates of the three predictors, EMA, \qun, and \qdn, blue and
  orange colors, around the true probability (green, up-down
  steps). We can assess convergence speed and variance.  Parts (e) and
  (f) also show evolution of the learning rate and the rate
  spikes as the probability of a single event is tracked via \qdn.}
\label{fig:binary_nonstationary}
\end{figure}

\co{
\begin{figure} 
\begin{center}
  \centering
  \subfloat[Changes in the learning rate]{{\includegraphics[height=5cm,width=6cm]{dir_figs/dyal_lrs.2.png}
  }}
\hspace*{.51in}  \subfloat[A close up.]{{\includegraphics[height=5cm,width=6cm]{dir_figs/dyal_lrs.close_up.4.png}
}}
\end{center}
\vspace{.2cm}
\caption{Synthetic single-item (oscillating probability): The
  evolution of the learning rate and the rate hikes as the probability
  of a single event is tracked via \qdn.}
\label{fig:binary_nonstationary2}
\end{figure}
}


Compared to the results of Table \ref{tab:stat_devs}, here the \qu
predictor excels, specially when we consider deviation-rates when
$\minobs$=10. With such non-stationarity, \qcap=5 can even work better
than \qcap=10 (unlike the previous section).  In this non-stationary
setting, \qd is the 2nd best, and performs substantially better than
the other EMA variants. $\minobs$ of 10 may still be considered on the
low side (\ie relatively high non-stationarity). With $\minobs=50$,
first off, every technique's deviation-rate improves, compared to
$\minobs$=10 setting.  \qdn, with $\lrmin$=0.001, performs best for
the $0.25 \leftrightarrow 0.025$ and \qun, \qcap=10, works best with
uniform $\tp$, \qd remaining second best. When picking from uniform,
the change in $\p$ may not be high, \qu can pick up such change
faster, and more naturally, than \qdn, which performs explicit tests
and may or may not raise its learning rate depending on the extent of
change. However, \qu with 10, despite its simplicity, does require
significant extra memory (and we'll find that in the multiple items
settings, no setting for \qu works better than \qdn).

\figs \ref{fig:binary_nonstationary}
shows plots of the output estimates $\oat{\ep}$, along one of the oscillating
sequences ($0.25 \leftrightarrow 0.025$). As expected, we observe that
the \qu technique leads to high variance during stable periods, \qcap
of 5 substantially higher than \qcap of 10 (\fig
\ref{fig:binary_nonstationary}(d)).  Similarity for both (static) EMA
and \qdn, the rate of 0.01 exhibits higher variance than 0.001 (\figs
\ref{fig:binary_nonstationary}(b) and (c)). Static EMA and \qd with
0.01 are virtually identical in \fig
\ref{fig:binary_nonstationary}(b), except that \qd can have
discontinuities when $\tp$ changes (when it listen to its queue
estimates, as discussed next), and converges from both sides of the
target $\tp$, while static EMA converges from one side.

\fig \ref{fig:binary_nonstationary}(e) shows the evolution of the
learning rate of \qdn, and in particular when \qd detects a
(substantial or sufficiently significant) change (shown via red dots
close to the x-axis). At the scale shown, rate increases look like
pulses: they go up (whenever \qd listens to a queue) and then with
harmonic decay, they fairly quickly come back down (resembling spikes
at the scale of a few hundred time points, for \prs in $[0.01,
  1.0]$). We also note that this rate increase can happen when a high
probability (0.25) changes to a low probability (0.025) as well,
though the rate increase is not as large, as would be expected.  We
also note that \qd is not perfect in the sense that not all the
changes are captured (via switching to its queues' estimates), in
particular, there are a few ``false negatives'' (misses), when a
change in $\tp$ occurs in the sequence and \qd continues to use its
existing learning rates and adapt its existing estimates. This may
suffice when the $\lr$ at the time is sufficiently high or when the
estimate is close. In particular, we can observe in \figs
\ref{fig:binary_nonstationary}(b) and (c) that a higher rate,
$\minlr=0.01$, leads to fewer switches compared to 0.001, and in \figs
\ref{fig:binary_nonstationary}(b) we observe that \qd behaves almost
identically to static EMA when both have $0.01$ (there appears to be
one switch only at $t=15000$ for \qdn).

\fig \ref{fig:binary_nonstationary}(f) presents a close up of when a
(significant) change is detected and a switch does occur, from a low
true probability of $\tp=0.025$ to $\tp=0.25$ in the picture. Note
that before the switch, the estimates (the blue line) start rising,
but this rise may be too slow. Once sufficient evidence is collected,
both the estimate and the learning rate are set to the queues'
estimate, which is seen as a more abrupt or discontinuous change in the
picture. The estimates then more quickly converge to the new true \pr
of 0.25. While we have not shown the actual negative and positive
occurrences in the picture, it is easy to deduce where they are from
the behavior of $\oat{\ep}$ (specially after the switch, where the
estimates are high): the positives occur when there is an increase in
the estimate $\oat{\ep}$ (from $t$ to $t+1$). \fig
\ref{fig:evolution_2_seqs} and \fig \ref{fig:concats} show additional
patterns of change in the learning rate in multi-item real sequences.


Appendix \ref{sec:app_syn} reports on a few additional variations in
parameters and settings, \eg \qu with \qcap=3, and deviation threshold
$d=1.1$, as well as imposing a minimum-length constraint for the
uniform setting (Tables \ref{tab:devs_nonst_other_paparams} and
\ref{tab:devs_non_1000ticks}).

There are additional challenges (\eg of setting appropriate
parameters), when a predictor has to predict multiple items well, with
different items having different probabilities and exhibiting
different non-stationarity patterns. This is the topic of the next
section and the rest of the paper.

%
%

\co{
  
the positive event has to occur either min\_pos 10 or 50 before the
probability for the item is changed.. this determines the length of
each each sub-sequence.. also, for the uniform case, each-sub sequence
(with same probability) has to be also min\_pos / min\_prob events
(positive or negative) long at least (so if min\_pos is 10 and
min\_prob and 0.025, then each subsequence is at least 400 long, and
when min\_pos is 50, each subsequence, whether for 0.025 or for 0.25,
has to be at least 2000 long, but could be longer if the number of
positives is below 50)

The policy (combine these/above  2 paragraphs!) for generation and changing
the probability: A subsequence is the maximal sequence of observations
when generating probability is fixed (at 0.25 or 0.025).

Thus each 10k sequence is a concatenation of such subsequences. The
constraint on each subsequence has to contain at least min\_pos
positive outcomes and has to contain at least min\_pos/min(0.025,
0.25) observations total (so subsequence lengths, for different
probabilities, are roughly the same). For instance, with min\_pos of
10, and min\_prob of 0.025, each subsequence has to be 400 long (and
should contain 10 or more positives). As soon as the conditions are
met, we switch the probability.

}


\co{
Points to make re Table \ref{tab:devs_non1}:

\begin{itemize}
\item queues does better here with non-stationarity, compared to the
  stationary Table \ref{}
\item \qd does better than EMA variants, and 50 observations better
  than 10..
\item Drawing uniformly at random is easier on our methods than
  oscillating between 0.25 and 0.025
\item In the appendix, we show the methods for a few other params (\eg
  queue of capacity 3, and higher or lower learning rates for EMA
  variants), and also include 1.1 threshold
\end{itemize}
}


\co{
\subsubsection{A Variation: One Item Changing/Oscillating,  others constant }

We noted that EMA with 0.01 is close to \qd with 0.001 in terms of
... (actually, i wanted to show logloss, that static EMA is
competitive/close in terms of logloss, if we keep a high rate, but if
we add more static, it has to pay on those...) (Table \ref{}). Here,
we show deviations, specially on 'stable' items can tell a different
story.

Static EMA can not have it both ways.. \qdn, to a certain extent, can!!

Add one or more items with non-changing probs, eg one with 0.5 and
another with 0.1 or 0.02 (several such), etc, while one item
oscillates between 0.250 and 0.025 as in above.

Report violations, on any item, on 1.5 and 2.0.  Static EMA with 0.01
minlr doesn't look great any more?  rate wouldn't look .

\begin{table}[t]\center
  \begin{tabular}{ |c!{\vrule width 1pt}c|c|c|c|c!{\vrule width 1pt}c|c|c|c|c| }     \hline
    number of stationary items $\rightarrow$ &  0 & 1 & 2 & 3 & 4 & 0 & 1 & 2 & 3 & 4    \\   
    \cline{2-11}  
  (each at 0.1)   & \multicolumn{5}{c!{\vrule width 1pt}}{static EMA, 0.01} &  \multicolumn{5}{c|}{\qdn, 0.001}  \\ \hline
    0.025 $\leftrightarrow$ 0.25 (50, 2000) & 0.23 & 0.265 & 0.35 & 0.35 & 0.41
    & 0.07 & 0.09 & 0.11 & 0.11 & 0.14\\ \hline
  \end{tabular}
  \vspace*{.2cm}
  \caption{One salient item oscillates between 0.025 and 0.25 (
    $\minobs$ of 50, and ticks of 2000), while one or more others remain
    constant at 0.1 (any remaining probability is allocated to
    noise). Generate 50 such sequences. Report fraction of times the
    estimated probability for {\em any} salient item violates the 1.5
    threshold .. \qd remains substantially better than static EMA .. }
 \label{tab:adsf}
\end{table}

}



\subsection{Synthetic, Non-Stationary, Multi-Item }
\label{sec:syn_multi}

\begin{figure}[th]
\hspace*{0.2cm} \begin{minipage}[t]{0.55\linewidth}
    \fontsmall{
    {\bf GenSequence}($desiredLen, \minobs, \minlen$) \\
 \hspace*{0.3cm} // Create \& return a sequence of subsequences.\\
 \hspace*{0.3cm} $seq \leftarrow$ [] // A sequence of items.\\
 \hspace*{0.3cm} $prevSD \leftarrow \{\}$ \\
 \hspace*{0.3cm}  While len($seq$)$ < desiredLen$: \\
 \hspace*{0.6cm} // Extend existing sequence \\
 \hspace*{0.6cm} $\P \leftarrow$ GenSD($prevSD$) // get new SD. \\
 \hspace*{0.6cm} $seq$.extend(GenSubSeq($\P$), $\minobs,\minlen$) \\
 \hspace*{0.6cm} $prevSD \leftarrow \P$ \\
 \hspace*{0.3cm} Return $seq$ \\ \\
 {\bf GenSubSeq}($\P, \minobs, \minlen$)  \\
 \hspace*{0.3cm} // Generate a (stationary) subsequence, via repeated \\
 \hspace*{0.3cm} // sampling iid from \sd $\P$. It should be long enough \\
 \hspace*{0.3cm} // so that every item in $\P$ occurs $\ge \minobs$ times in it.\\
 \hspace*{0.3cm} $counts \leftarrow \{\}$ // An observation counter map. \\
 \hspace*{0.3cm}  $seq \leftarrow$ [] // A sequence of items.\\
 \hspace*{0.3cm} While $\min(counts) < \minobs$ or len($seq$) $ < \minlen$: \\
 \hspace*{0.6cm} $o \leftarrow$ DrawItem($\P$) // item $o$ drawn.\\
 \hspace*{0.6cm} seq.append(o) \\
 \hspace*{0.6cm} // Update counts only for salient items.\\
 \hspace*{0.6cm} If $o \in \P$: // increment o's observed count.\\
 \hspace*{0.9cm} $counts[o] \leftarrow counts.get(o, 0)+1$ \\
 \hspace*{0.3cm} Return $seq$ \\ \\
}
\end{minipage}
\hspace*{.5cm} \begin{minipage}[t]{0.55\linewidth}
  \fontsmall{
  {\bf GenSD}($prevSD$) // Generate a new SD. \\
  \hspace*{0.3cm} // Parameters: $recycle, \minprob, \maxprob, \pns$. \\
  \hspace*{0.3cm} $probs \leftarrow$ [] // \prs of the \sdn. \\
  \hspace*{0.3cm} // Repeat while unallocated mass is sufficient. \\
  \hspace*{0.3cm} While $\u{probs} > \pns + \minprob$: // sample \prsn. \\
  \hspace*{0.6cm}   $p_{max} \leftarrow \min(left - \pns, \maxprob)$\\
  \hspace*{0.6cm}   $p \sim \mathcal{U}([ \minprob, p_{max}])$ // uniformly at random.\\
  \hspace*{0.6cm} $probs$.append($p$) // add $p$ to \sdn. \\
  \hspace*{0.3cm} If $recycle$: // reuse items '1', '2', .. \\
  \hspace*{0.6cm} Random.shuffle(probs) // random permute \\
  \hspace*{0.6cm} // item '1' gets probs[0], '2' gets probs[1], etc. \\
  \hspace*{0.6cm} Return MakeMap(probs) \\
  \hspace*{0.3cm} Else: // Allocate new items (unused ids). \\
  \hspace*{0.6cm} Return MakeNewSDMap($probs, prevSD$) \\ \\
  {\bf DrawItem}($\P$) // Return a salient or noise item. \\
    \hspace*{0.3cm} $sump \leftarrow 0.0$ // via sampling. \\
    \hspace*{0.3cm} $p \sim \mathcal{U}([0, 1.0])$ // Uniformly at random. \\
    \hspace*{0.3cm} For $o, prob \in \P$: \\
    \hspace*{0.6cm}   $sump \leftarrow sump + prob$ \\
    \hspace*{0.6cm}    If $p \le sump$: \\
    \hspace*{0.9cm}      Return $o$ // Done.\\
    \hspace*{0.3cm} Return UniqueNoiseItem() // a unique noise id. \\ \\
}
\end{minipage}
\vspace*{.2cm}
\caption{Pseudocode for generating a sequence of subsequences. Each
  subsequence corresponds to a stable period, subsequence $j \ge 1$
  being the result of drawing iid from the $j$th \sdn, $\ott{\P}{j}$. }
\label{fig:gen_code}
\end{figure}

\fig \ref{fig:gen_code} provides pseudocode of the main functions for
generating item sequences under non-stationarity.  As in
the above, we think of a non-stationary sequence as a concatenation of
``stable'' (stationary) subsequences, subsequence $j$, $j \ge 1$, corresponding to
one \sd $\ott{\P}{j}$. The subsequence is long enough (drawn iid from
$\ott{\P}{j}$) so that $\min_{i\in \ott{\P}{j}} count(i) \ge \minobs$,
where $count(i)$ is the number of occurrences of item $i$ in the
subsequence.  We may have a minimum (overall) length requirement as
well, $\minlen \ge 0$.  Each $\ott{\P}{j}$ is created using the {\bf
  GenSD} function.


In GenSD, some probability, $\pns$, is reserved for noise, the \ns
items. Probabilities are drawn uniformly from what probability mass is
available, initially $1-\pns$, with the constraint that each drawn
probability $p$ should be sufficiently large, $p \ge
\minprob$. Optionally, we may impose a maximum probability $\maxprob $
constraint as well ($\maxprob < 1$). Assume we get the set $S=\{p_1,
p_2, \cdots, p_k\}$, once this loop in GenSD is finished, then we will
have $\sum_{p_i \in S} p_i \le 1-\pns$, and each \pr $p_k$ satisfies
the minimum and maximum constraints.

\begin{table}[th]  \center
  \begin{tabular}{ |c?c|c|c?c|c|c?c| }     \hline
     & 1.5any & 1.5obs & logloss  & 1.5any & 1.5obs & logloss & opt. loss     \\ \thickhline  
new items $\downarrow$   & \multicolumn{3}{c?}{\qun,  5 } &  \multicolumn{3}{c?}{\qun, 10} &   \\ \hline
$\minobs = 10$  & 0.94  & 0.24  & 1.17 & 0.88  & 0.17  & 1.19 & 1.040 $\pm$0.11  \\ \hline
$\minobs = 50$  & 0.92  & 0.22  & 1.13 & 0.78  & 0.10  & 1.09 & 1.028 $\pm$0.22  \\ \medhline
& \multicolumn{3}{c?}{static EMA, 0.01 } &  \multicolumn{3}{c?}{static EMA, 0.001} &   \\ \hline
10 & 0.86  & 0.20  & 1.26 & 1.00  & 0.95  & 2.33 & 1.040   \\ \hline
50 & 0.80  & 0.06  & 1.07 & 0.70  & 0.32  & 1.44 & 1.028   \\ \medhline
& \multicolumn{3}{c?}{harmonic EMA, 0.01 } &  \multicolumn{3}{c?}{harmonic EMA, 0.001} &   \\ \hline
10 & 0.86  & 0.19  & 1.25 & 0.98  & 0.88  & 2.25 & 1.040   \\ \hline
50 & 0.80  & 0.06  & 1.07 & 0.62  & 0.23  & 1.34 & 1.028   \\ \medhline
& \multicolumn{3}{c?}{\qdn, 0.01 } &  \multicolumn{3}{c?}{\qdn, 0.001} &   \\ \hline
10 & 0.94  & 0.09  & 1.11 & 0.89  & 0.09  & 1.14 & 1.040   \\ \hline
50 & 0.89  & 0.05  & 1.05 & 0.59  & 0.03  & 1.06 & 1.028   \\ \thickhline 
recycle items $\downarrow$  & \multicolumn{3}{c?}{\qun, 5 } &  \multicolumn{3}{c?}{\qun, 10} &   \\ \hline
$\minobs = 10$  & 0.91  & 0.23  & 1.16 & 0.83  & 0.13  & 1.14 & 1.040 $\pm$0.11  \\ \hline
$\minobs = 50$ & 0.91  & 0.22  & 1.12 & 0.76  & 0.10  & 1.08 & 1.028 $\pm$0.22  \\ \medhline
& \multicolumn{3}{c|}{static EMA, 0.01 } &  \multicolumn{3}{c|}{static EMA, 0.001} &   \\ \hline
10 & 0.87  & 0.16  & 1.16 & 1.00  & 0.73  & 1.65 & 1.040  \\ \hline
50 & 0.80  & 0.06  & 1.06 & 0.77  & 0.26  & 1.28 & 1.028  \\ \medhline
& \multicolumn{3}{c|}{harmonic EMA, 0.01 } &  \multicolumn{3}{c|}{harmonic EMA, 0.001} &   \\ \hline
10  & 0.87  & 0.15  & 1.15 & 0.98  & 0.66  & 1.54 & 1.040  \\ \hline
50 & 0.80  & 0.05  & 1.05 & 0.71  & 0.18  & 1.18 & 1.028  \\ \medhline
& \multicolumn{3}{c|}{\qdn, 0.01 } &  \multicolumn{3}{c|}{\qdn, 0.001} &   \\ \hline
10 & 0.91  & 0.10  & 1.11 & 0.83  & 0.16  & 1.18 & 1.040  \\ \hline
50 & 0.88  & 0.05  & 1.04 & 0.56  & 0.04  & 1.06 & 1.028  \\ \hline
   \end{tabular}
  \vspace*{.2cm}
  \caption{Synthetic multi-item experiments: Deviation rates and
    \loglossn, averaged over 50 sequences, $\sim$10k length each, $\minobs$
    of 10 and 50, uniform \sd generation vis GenSD(): $\maxprob=1.0$,
    $\minprob=\pns=0.01$ for GenSD(), and change the \sd $\P$ whenever
    {\em all} salient items in $\P$ are observed $\ge \minobs$ times. In top
    half, items are new when underlying \sd changes, and in the bottom
    half, items are 'recycled' (\sec \ref{sec:syn_multi}). }
\label{tab:devs_multi_non_v3} 
\end{table}

Once the set $S$ of $k$ \prs is generated, GenSD() then makes a \sd from
$S$.  Under the {\bf \em new-items} setting (when $recycle=0$), new
item ids, $|S|$ such, are generated, \eg an item id count is
incremented and assigned, and the items are assigned the probabilities
(and all the old items, salient in $\ott{\P}{j-1}$, get zeroes, so
discarded).  Thus, as an example, the first few \sdn s could be
$\ott{\P}{1}=\{$1:0.37, 2:0.55, 3:0.065$\}$, $\ott{\P}{2}=\{$4:0.75,
5:0.21, 6:0.01, 7:0.017$\}$, and $\ott{\P}{3}=\{$8:0.8, 9:0.037,
10:0.15$\}$.

Under the {\em \bf recycle} setting, with $k$ \prs generated, items
$1$ through $k$ are assigned from a random permutation\footnote{As the
\pr generation process tends to generate smaller \prs with each
iteration of the loop, this random shuffling ensures that the same items are not
assigned consistently high or low \prsn. } $\pi()$ of $S$: $\P(i)
\leftarrow p_{\pi(i)}$ (thus item 1 may get $p_3$, etc.).  For
example, with the previous \prsn, we could have
$\ott{\P}{1}=\{$1:0.065, 2:0.37, 3:0.55$\}$ (and $0.015$ is left for
\ns items), $\ott{\P}{2}=\{$1:0.75, 2:0.01, 3:0.21, 4:0.017$\}$ (a new
item is added), $\ott{\P}{3}=\{$1:0.037, 2:0.8, 3:0.15$\}$. Thus,
under the $recycle$ setting, in addition to changes in probability,
the support of the underlying \sd $\P$ may expand (one or more new
items added), or shrink, with every change (\ie from one stable
subsequence to the next one).



During drawing from a \sd $\P$, when a \ns item is to be generated we
generate a unique \ns id. This is one simple extreme, which also
stress tests the space consumption of the maps used by the various
methods on long sequences. Another extreme is to reuse the same \ns
items, so long as the condition that their true probabilities remain
below (but close to) the boundary $\minprob$ is met.\footnote{Specially
when \ns item probabilities are borderline and close $\minprob$, there
can be some windows or subsequences on which the proportion goes over
$\minprob$. } For simplicity, we present results on the former unique
\ns setting (pure noise). We have seen similar patterns of accuracy
performance in either case.

We note that there are a variety of options for \sd and sequence
generation.  In particular, we also experimented with the option to
change only one or a few items, once they become eligible (their
observation count reaching $\minobs$), and we obtained similar
results. Because this variation requires specifying the details of how
an item's \pr is changed (\eg how it is replaced by zero or more new
or old items), for simplicity, we use GenSD(), \ie changing all items'
\prn s, and only when all become eligible. Note that, under the
$reuse$=1 setting, some items' probabilities may not change much,
simulating a no-change for some items. Experiments on real streams
provide further settings (\sec \ref{sec:real}).

Table \ref{tab:devs_multi_non_v3} presents deviation rates
(generalized to multiple items) as well as \logloss of various
methods. We next describe how the deviation-rates as well the {\em
  optimal loss} (lowest achievable \loglossn) are computed. To define
all these measures, we need the sequence of underlying true
distributions, $\seq{\P}{1}{N}$ ($\oat{\P}$ was used to generate
$\oat{o}$), which we have access to in these synthetic experiments.



The optimal (lowest achievable) loss (\loglossn) at time $t$ is
defined as follows: given item $\oat{o}$ is observed and $\oat{\P}$ is
the underlying distribution (at time $t$), then if $\oat{o} \in
\oat{\P}$ ($\oat{o}$ is salient), the optimal loss at $t$ is
$-\log(\oat{\P}(\oat{o}))$, and if $\oat{o}$ is \ns ($\o \not \in
\oat{\P}$), then the loss is $-\log(1-sum(\oat{\P}))$. The optimal
\logloss is simply the average of this measure over the entire
sequence. Note that if a single \sd $\P$ generated $\seq{o}{1}{N}$,
what we described is an empirical estimate of the \sd entropy, \ie the
expectation:\footnote{When the underlying \sd $\P$ changes from time
to time, the computed optimal \logloss is the weighted average of the
entropies, weighted by the length of the subsequence each \sd was
responsible for.  } $-(1-sum(\P))\log(1-sum(\P))+\sum_{i\in
  \P}-\log(\P(i))$.  Reporting the optimal \logloss allows us to see
how close the various methods are getting to the lowest loss possible.
Note that optimal loss, as seen in Table \ref{tab:devs_multi_non_v3},
does not change whether we use new items or recycle items. We use the
same 50 sequences for the different methods, so the corresponding
optimal losses are identical as well.



The (multi-item) deviation-rate, given the sequence of underlying \sdn
s $\seq{\P}{1}{N}$, is defined as:
\begin{align}
  \nonumber
\hspace*{-0.5cm}\dev(\seq{\Q}{1}{N}, \seq{o}{1}{N}, \seq{\P}{1}{N}, d) = \frac{1}{N}\sum_{t=1}^N \mdvc(\oat{o}, \oat{\Q}, \oat{\P}, d) \hspace*{.07in} \mbox{(multi-item deviation)},
\end{align}
where for the \mdvc() function, we explored two options: under the more lenient 'obs'
setting, we score based on the observation $o$ at time $t$ only:
$\mdvc_{obs}(o, \Q, \P, d) = \dvc(\Q(o), \P(o), d)$.  Under the more demanding 
'any' setting, we count as deviation if any estimate in $\Q$ having high
deviation: $\mdvc_{any}(o, \Q, \P, d) = \max_{i \in \P}\dvc(\Q(i),
\P(i), d)$ ($o$ is not used). $\mdvc_{obs}()$ is closer to \loglossn,
as it only considers the observation, and like \loglossn, is therefore
more sensitive to items with higher \pr in the underlying
$\P$. $\mdvc_{any}()$ is a more strict performance measure.  Table
\ref{tab:devs_multi_non_v3} shows both rate types with $d=1.5$.

Table \ref{tab:devs_multi_non_v3} shows performance results when GenSD
is used with $\maxprob=1.0$, $\minprob=0.01$, and two $\minobs$
settings.  With such generation settings, we get on average just under
17 \sd changes, or subsequences (stability periods), when
$\minobs$=10, and just under 4 \sd changes, when $\minobs$=50. The
support size (number of positive entries in an \sdn) was around 5.
All performances, even for \qu with \qcap=5, improve as $\minobs$ is
increased from 10 to 50, and the performance of \qu with \qcap=10 is
better than \qcap=5, even for high non-stationarity $\minobs$=10, as
we change the underlying \sd $\P$ only when {\em all} salient items of
\sd $\P$ pass the $\minobs$ threshold. Nevertheless,  even with
\qcap of 10, \qu often trails the best of the EMA variants
significantly.  As in the previous section, harmonic EMA can slightly
outperform static EMA, but both underperform \qd at its best. In
particular \qd is not as sensitive to setting the (minimum) rate to a
low value, as seen in \fig \ref{fig:loss_vs_rate1} for several
settings and rate values. We also note that while the \logloss values
can seem close, the deviation rates can explain or reveal better why
\qd often does perform better.

We can also pair two methods and count the number of wins and losses,
based on \logloss over each of the same 50 sequences, and perform
statistical sign tests. With $\minobs$=50 and the new-items setting,
pairing \qd with $\lrmin=0.01$ (best or near best of \qdn) against all
other techniques, EMA static or harmonic (with $\lr$ in $\{0.001,
0.01, 0.02, 0.05, 0.1\}$) or \qun, \qd gets lower a \logloss on {\em
  all} 50 sequences. If we lower the $\minlr$ rate for \qd to 0.001,
we still get dominating performance by \qdn, but harmonic can win on a
few one or two sequences.  With $\minobs$=10 and again the new-items
setting, pairing \qd with $\lrmin=0.01$ against all others, again we
obtain the same dominating results for \qdn.

With the item-recycle setting, the differences between the best of the
EMA variants and \qd variants shrinks somewhat. For instance, with
$\minobs$=50, harmonic EMA with $\lrmin=0.01$ gets 9 wins (\qd,
$\lrmin=0.01$, gets the remaining 41 wins), and if we use
$\lrmin=0.001$ for \qd, \qd loses 42 times to harmonic EMA with
$\lrmin=0.01$. Similarly, with $\minobs$=10, in the recycle setting,
we need to set $\lrmin=0.02$ to get a dominating performance by \qdn,
and setting it lower to $\lrmin=0.01$ (which worked well for
$\minobs$=50) yields mixed performance.  Thus, the choice of rate
$\minlr$ for \qd can make a difference, although the operating range
or the sensitivity to $\minlr$ is substantially lower for \qd than for
other EMA variants (\fig \ref{fig:loss_vs_rate1}), in particular when
setting $\lrmin$ to a low value. We observe this improved sensitivity
on other data sources and settings as well.

The \bx \sma (\sec \ref{sec:box}) performs somewhat better than static
EMA with an appropriate window size, which is about 100 for the
setting of Table \ref{tab:devs_multi_non_v3} (50 and 500 are strictly
inferior, and 200 is slightly worse, in terms of \llns \ and other
measures). Like EMA, \bx is inflexible and requires tuning the fixed
window size for different input streams, and unlike EMA its rigid
space consumption is prohibitive. Table \ref{tab:box1} summarizes
paired comparisons with \qdn, under the two recycle and new-item
settings and $\minobs=10$ and 50. $\dyal$ with $\minlr=0.01$,
consistently outperforms it.

\begin{table}[t]  \center
  \begin{tabular}{ |c|c|c|c|c| }     \hline
    \qdn, 0.01 vs. \bxn, 100    &  $\minobs=10$, recycle &  10, new & $\minobs=50$, recycle & 50, new  \\ \hline
losses, wins of \qd  &  7, 43   & 0, 50 &  8, 42 & 0, 50 \\ \hline
\llns,  opt: 1.064 and 0.988  & 1.120,1.108   & 1.182,1.113 & 1.013,1.008 & 1.025,1.010 \\ \hline
    \end{tabular}
\vspace{.2cm}
\caption{Number of losses and wins, based on \llns , of \qdn, with
  $\minlr=0.01$, pairing it against \bx with window size 100 (which
  performed better than 50 or 200), under the recycle (item-recycle)
  and new-items settings, with $\minobs=10$ and $50$, on 50 sequences
  of 10k each (the setting of Table \ref{tab:devs_multi_non_v3}).
  Thus \dyal has the lower loss on 43 of 50 sequences in the recycle
  setting with $\minobs=10$. Average \llns \ values are also shown
  (1st number is \bxn's), and average optimal \llns \ is 1.064
  ($\minobs$=10) and 0.988 ($\minobs$=50).  All wins are highly
  significant (using the binomial sign test).  With the lower
  $\minlr=0.001$ for \qdn, it has statistically more wins under the
  more challenging new-items settings but has mixed performance under
  the recycle settings. }
\label{tab:box1}
\end{table}


Appendix \ref{app:syn_multi}, Table \ref{tab:devs_multi_non2},
presents performances for a few additional settings, in particular
when $\maxprob$, for GenSD(), is set to a lower value of 0.1
instead of 1.

\co{ Points to make (see tables/figs) (see below for set up):
\begin{itemize}
  \item With longer stability results, meaning deviation violations and/or logloss, improve ($\minobs$ 50 vs 10)

  \item With less stability, queues perform best or near best, and
    higher EMA variants with higher $\minlr$ do better ..

  \item \qu has high variance.. higher capacity helps, but still
    insufficient..

  \item for static EMA variants to perform well, one needs a high
    learning rate...
    
  \item Need to pay attention to $\minobs=50$ which we think is the
    range we are most interested in..

  \item \qd has a higher 'operating range' or regret range (we think)..

  \item Why does harmonic EMA seem to underperform compared to static EMA??

  \item todo: plot as a fn of minlr
  \item todo: include sensitivity analysis here: binomial threshold, too
    low, it's like queues, and too high, like harmonic ema...
\end{itemize}

}

\co{
Multiple items.  A distribution is allowed to change once all of its
(salient) items are observed min\_obs times.  One this criterion is
met, a new distribution is created and sampled from. Other variations
are discussed below.

For example, assume $\minobs=10$. the first distribution created could
be: $\{ \}$, and this is drawn from iid until all items are observed
10 $\minobs$ times (expected x draws). Then a new distribution is
created: $\{ \}$ and drawn from so all items with this distribution
are observed $\minobs$ times (expected x draws).

To evaluate:

We report on 1.5any and 1.5obs to compare/connect to previous binary
experiments.. on the other hand, 1.5obs is closer to log-loss as we
explain below......

for 1.5any, at any time point compute $\max_{items}
(\frac{p}{p}, \frac{p}{p})$, where if p(item) is 0, we define it to be
a violation (\eg the first time the concept is observed)..

for 1.5obs, we don't take the maximum over all the items.. only
consider the ratio of the observed item ...  This is closer to
logloss, in that the items with higher frequency affect it more.
(again, if the item is not seen before, we consider it a violation ..)

So the numbers, 1.5any and 1.5obs, get worse when noise-rate is 0.1,
compared to 0.01...

for log-loss ... we use oov marker ..

}

\begin{figure}[t]
\begin{center}
  \centering
\hspace*{-1cm}  \subfloat[new items, $\minobs=50$]{{\includegraphics[height=5.5cm,width=4.75cm]{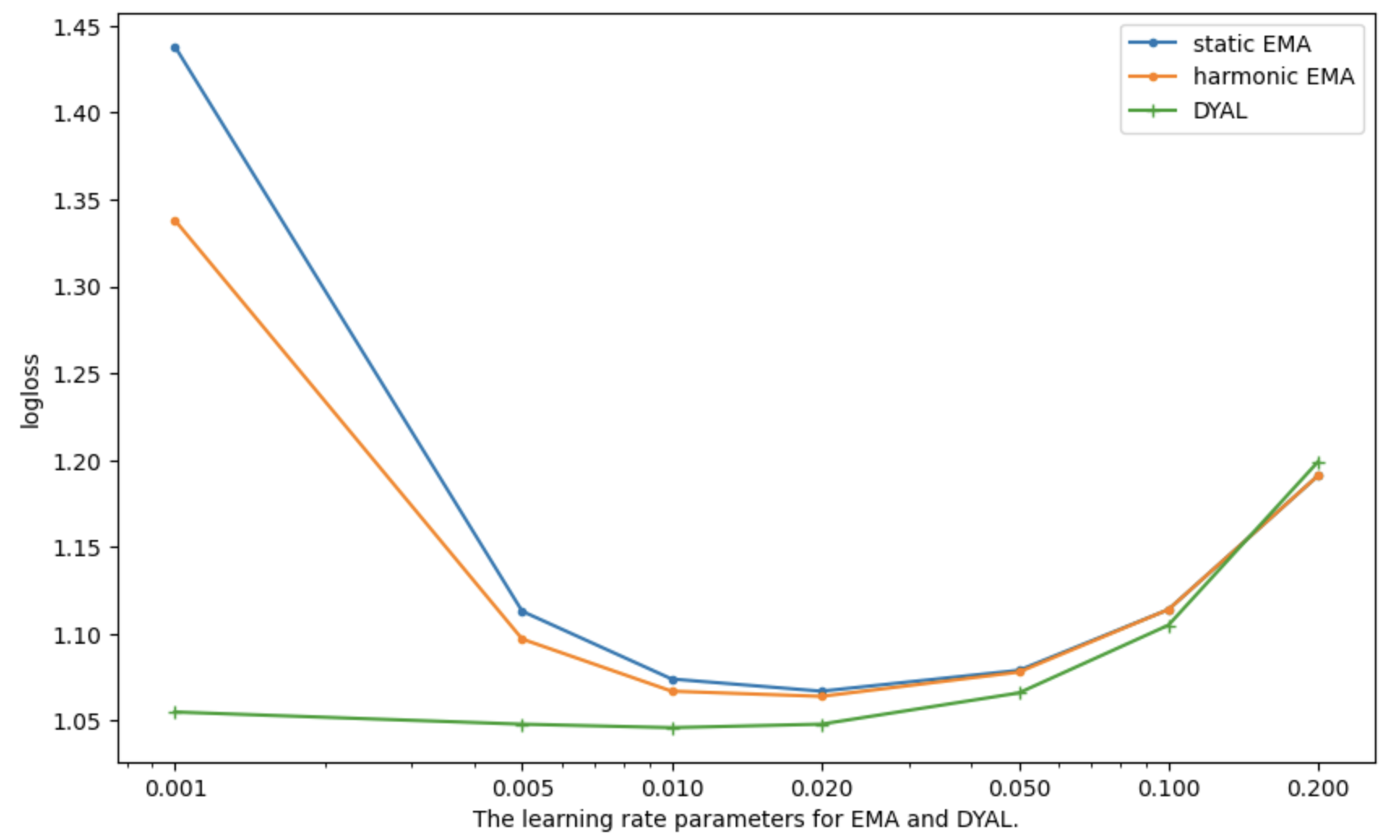}
  }}
  \subfloat[recycle items, $\minobs=50$]{{\includegraphics[height=5.5cm,width=4.75cm]{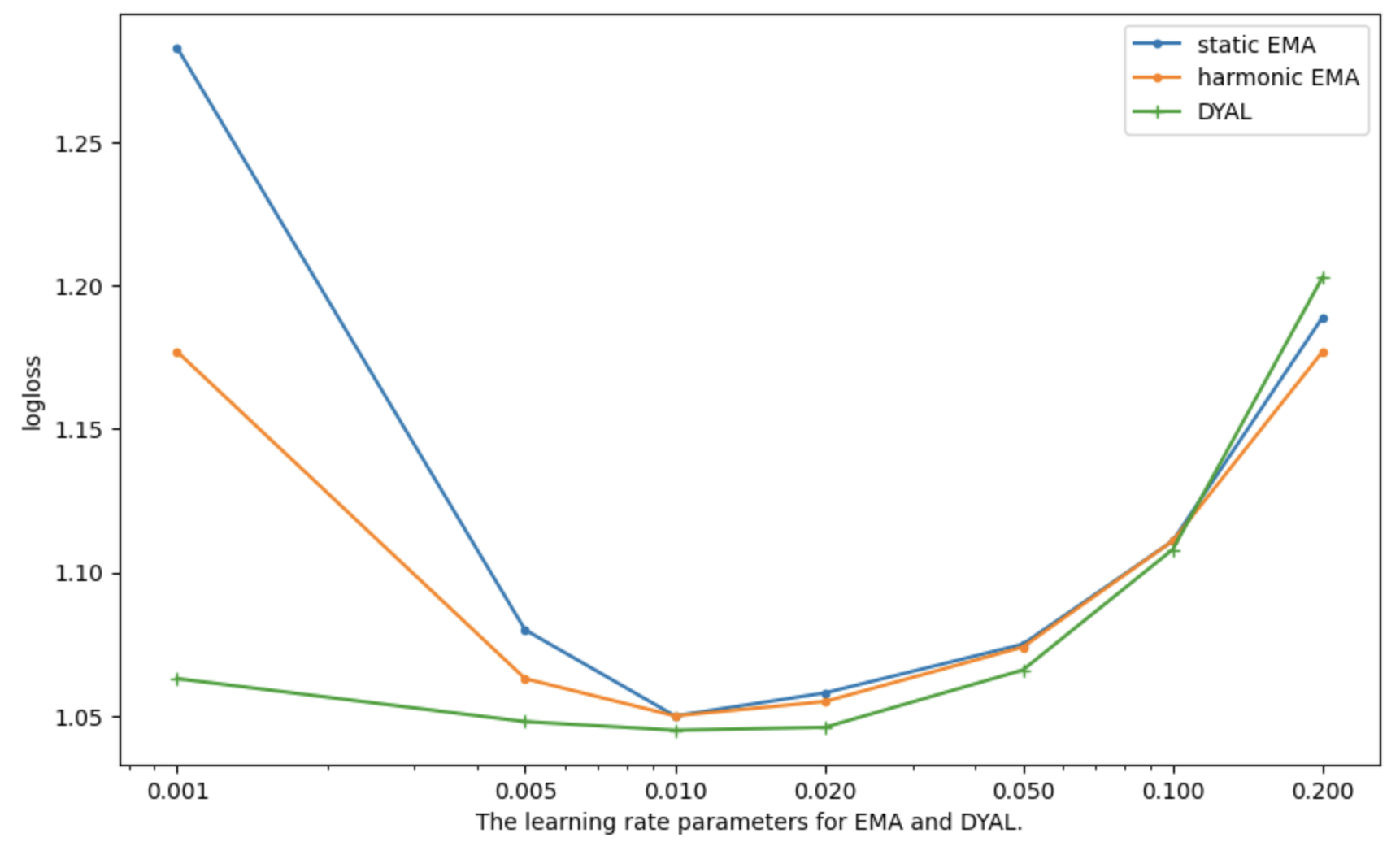}
  }}
  \subfloat[new items, $\minobs=10$]{{\includegraphics[height=5.5cm,width=4.75cm]{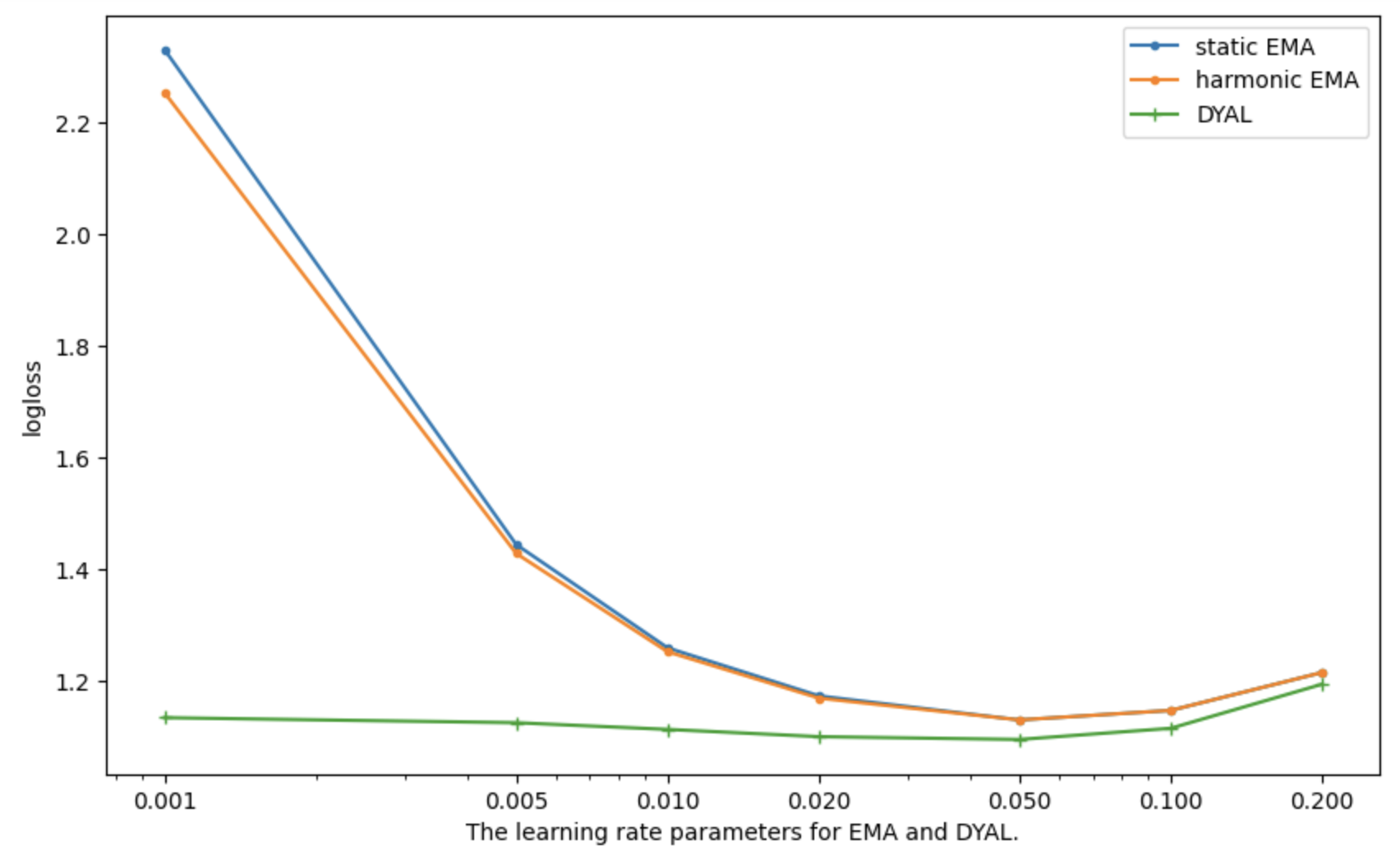}
  }}
\end{center}
\vspace{.2cm}
\caption{\logloss performance, as the learning rate is changed, in
  synthetic multi-item experiments, $\minobs=50$. (a) new items. (b)
  recycle items.  \qd is less sensitive to how low $\minlr$ is set,
  compared to harmonic and static EMA variants.}
\label{fig:loss_vs_rate1}
\end{figure}




\section{Experiments On Real-World Data Sources}
\label{sec:real}
\todo{if you need to: all the issues with the real world (although these tasks below may not
show all the issues of course!) and also when you DON'T know the true
probabilities ( if there is such a thing! )
}

In real-world datasets, a variety of complex phenomena combine to
generate the sequences of observations, and even if we assume
stationary distributions generate the data over some stable durations,
we do not know the actual underlying probabilities in order to compare
methods.  In all the sections below, we obtain sequences of
observations, in different domains, and report the \logloss
performances, in particular \empllnsp (\eq \ref{eq:empllns}), when
comparing different predictors
on these (same) sequences.

Table \ref{tab:exped_seqs_etc} presents the classification of the data
sources we use here according to the type of non-stationarity.
The sequences obtained from the \exped system,\footnote{We used a text
corpus of NSF abstracts in the \exped experiments \cite{Dua:2019}. The
dataset contains approximately 120k research paper abstracts, yielding
2.5 million English text lines, over 20 million term occurrences, and
just under 100 unique characters.} an implementation of ideas in
prediction games \cite{expedition1}, and described next, exhibit what
we have called internal or developmental non-stationarity: new items
(concepts) are generated over time by the system itself.  If we turn
off concept generation and keep predicting at the character level in
\expedn, we do not have any non-stationarity, with a static external
text corpus and in the way we sample lines, and we report comparisons
in this setting as well (\sec \ref{sec:expd_chars}), finding that
\qdn, developed for non-stationarity, continues to enjoy performance
advantages over the others in this setting too.  We provide evidence
that the sequences in our final task, the Unix commands data sequences
(\sec \ref{sec:unix}), exhibit external non-stationarity: each
person's pattern of command usage changes over days and months, as
daily activities and projects change. An example of a task exhibiting
both internal and external non-stationarity is feeding the \exped
system one genre or language (\eg French) for some time, followed by
another (\eg Spanish).  We leave experiments on tasks exhibiting both
internal and external non-stationarity to the future.


\subsection{Expedition: Up to N-Grams}


The \exped system operates by repeatedly inputting a line of text (on
average, about 50 characters in these experiments), {\bf \em
  interpreting} it, and learning from the final selected
interpretation. The interpretation process consists of search and
matching (\fig \ref{fig:example_expd}): it begins at the low level of
characters, which we call {\em primitive concepts},
and ends in the highest-level concepts in its current {\bf \em concept
  vocabulary} $V$ that both match the input well and fit with one
another well. Initially, this vocabulary $V$ corresponds to the set of
characters, around 100 unique such in our experiments, but over many
episodes the vocabulary with which the system interprets grows to
thousands of concepts and beyond. In our experiments, concepts are
{\bf \em n-grams} (words and words fragments).  Higher level n-grams
help predict the data stream better, in the sense that bigger chunks
of the stream (better predictands) can be predicted in one shot and
fewer independence assumptions are made. In the \exped system,
concepts are {\em both the predictors and the predictands}.

\subsubsection{An Overview of Simplified \exped}

\begin{table}[t]  \center
  \begin{tabular}{|c|c|c|}    \hline
  Non-Stationarity $\rightarrow$  &  Internal  & External \\ \hline 
  \expedn, up to n-grams  &  \checkmark & -- \\ \hline
  \expedn, only characters  & -- & -- \\ \hline
  Unix commands  & $-$   & \checkmark \\ \hline
  \end{tabular}
  \hspace*{.31cm}  
    \begin{tabular}{ |c|c|c|c| }     \hline
  Number  & Median & Minimum & Maximum \\ \hline 
  104 & 1.2k & 75 & 48k  \\ \hline
  \multicolumn{4}{|c|}{Examples of predictors, and seq. lengths: } \\
  \multicolumn{4}{|c|}{ (``ht'', 91), (``t s'', 75), (``and'', 144), (``ron'', 189), }\\
  \multicolumn{4}{|c|}{ (``th'', 3066), (``t'', 25k), (`` '', 48k) }\\ \hline
%
  \end{tabular}
  \vspace*{.2cm}
  \caption{Left: Real datasets and types of non-stationarity
    exhibited in the experiments. Right: Statistics on 104 \exped
    sequences, with median sequence size of 1.2k. Blank space is the
    most frequent character leading to a sequence length of 48k, and
    't' is the next most common at 25k.  }
\label{tab:exped_seqs_etc}
\end{table}





In previous work, we developed and motivated an information theoretic
concept quality score we named CORE. CORE is a measure of {\em
  information gain}, a reward or score that promotes discovering and
use, within interpretations, of larger or higher-level (higher reward)
concepts \cite{pgs3}.  In this work, we ignore the quality of the
concepts (the predictands), and focus only on the more ``pure''
prediction task of better predicting a sequence of (unit-reward) items
in the face of non-stationarity. We also leave it to future work to
carefully assess the impact of better prediction algorithms on the
learning and development of the entire system, \ie the performance of
the overall system with multiple interacting parts. Here and in
Appendix \ref{app:real}, we briefly describe how a simplified
Expedition system works, and how we extract sequences, for learning
and prediction, and for comparing different predictors. There are
three main tasks in \expedn:
\begin{enumerate}
  \item Interpreting (in every episode): using current concept
    vocabulary $V$, segment a line of text (the episode input) and map
    the segments into highest-level concepts, creating a concept
    sequence.
  \item Predicting and updating (learning) prediction weights (every
    episode): Using a final selected interpretation $s$, update
    prediction weights among concepts in interpretation $s$.
  \item Composing (every so often): Make new concepts out of existing
    ones (expand $V$).
\end{enumerate}




In this paper, we focus on the prediction task 2, and to simplify the
comparison of different prediction algorithms, we divorce prediction
from interpreting and composing,
so that the trajectory the system takes, in learning and using
concepts, is independent of the predictor (the learner of the
prediction weights), and different predictors can be directly
compared.  Otherwise, one needs to justify the intricacies of
interpretation and new concept generation and that such do not
unfairly work well with one technique over another.  The (simplified)
interpretation and composing methods we used, for tasks 1 and 3, are
described in Appendix \ref{app:real}.



\begin{figure}[tb]
\begin{center}
  \centering
  \subfloat[Bottom to top interpretation (search) paths: invoking
    compositions (upward) and matching attempts (downward), in finding
    an interpretation.]
           {{\includegraphics[height=3cm,width=5.5cm]{
                 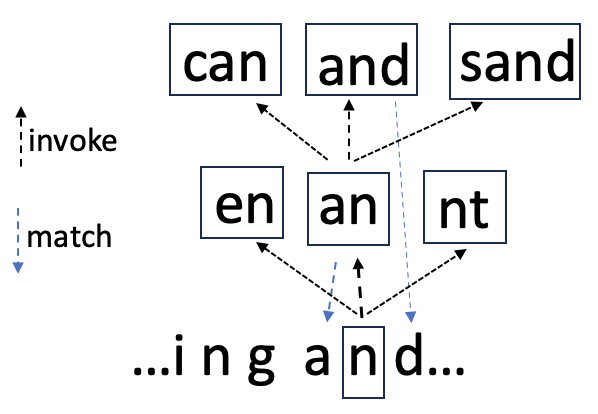} }}\hspace*{1cm}
           \subfloat[An example final interpretation, given the input
             fragment "running and playing".]
                    {{\includegraphics[height=3cm,width=6cm]{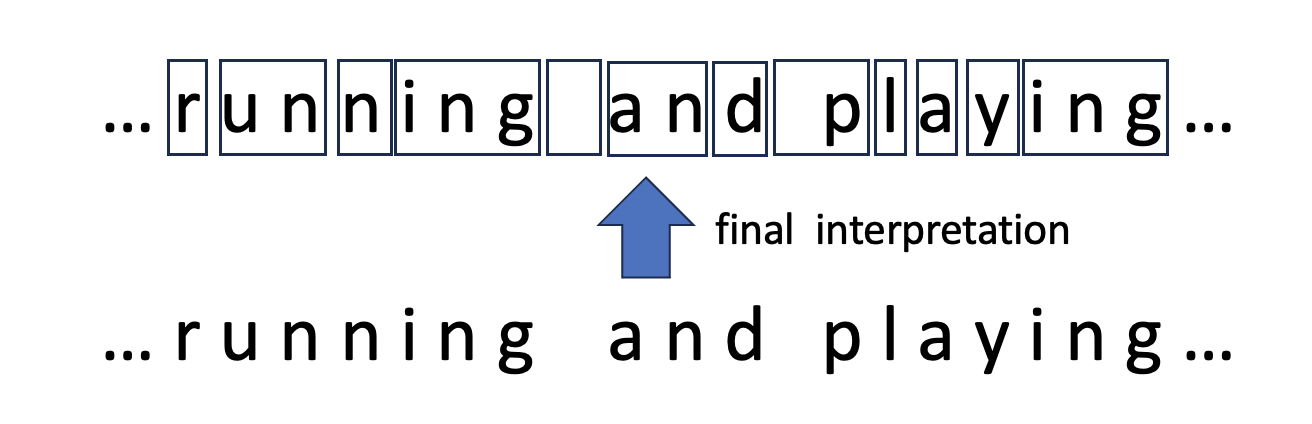}
                    }}
\end{center}
\vspace{.2cm}
\caption{(a) The interpretation search process consists of invoking
  (``upward'') and matching attempts (``downward''), until no more
  compositions that match the input remain. (b) The final selected
  interpretation is a sequence of highest level available concepts,
  n-grams, that match the given input line well. }
\label{fig:example_expd} 
\end{figure}


\subsubsection{Prediction Tasks, and Collecting Sequences}

\co{
\begin{table}[t]\center
  \begin{tabular}{ |c|c|c|c|c| }     \hline
  Num.  & Median & Min & Max & Examples of recorded predictors, and sequence lengths \\ \hline
  104 & 1.2k & 75 & 48k & (``ht'', 91), (``t s'', 75), (``and'', 144), (``ron'', 189),
  (``th'', 3066), (``t'', 25k), (`` '', 48k)\\ \hline
  \end{tabular}
  \vspace*{.2cm}
  \caption{Statistics on 104 \exped sequences, with median sequence
    size of 1.2k. Blank space is the most frequent character leading
    to a sequence length of 48k, and 't' is the next most common at
    25k. }
\label{tab:exped_seqs}
\end{table}  
}

In every episode, once an interpretation, \ie a final concept
sequence, is selected, each concept in the sequence, except the right
most, acts as a predictor and updates its weight for predicting what
comes immediately after it to the right. 
For example, in the concept sequence of \fig
\ref{fig:example_expd}(b), the predictor (concept) 'r' observes the
(concept) 'un' to its right and updates accordingly, and (the
predictor) 'un' observes 'n' to its right and updates accordingly, and
so on.  Note that only the final selected interpretation is used for
updating prediction weights here, and the interpretation search path
leading to it, as well as other search paths leading to unselected
final interpretations, are discarded.

We ran the \exped system for four trials, in each trial starting from
scratch and with a different random seed, \ie different sequences of
lines are input to the system. Each trial was stopped after 20k to 30k
episodes, each taking under an hour. In the course of a trial,
thousands of new concepts are generated and used.  Id generation for
concepts is incremental via a simple counter, and as characters are
seen first, before any higher-level n-gram is generated, concepts with
lower ids correspond to characters (unigrams) and after id 95 we get
bigrams and trigrams.

In each trial, for each of a few arbitrary concept ids being
tracked\footnote{The same character gets different ids in different
trials.} (a few below id 20, a few around 100 and 500), we collect and
create a sequence from what comes after the corresponding concept in
the episodes it is active in, \ie it appears in one or more places in
the final selected interpretation.  For instance, in one trial, the
concept (corresponding to) 't' was tracked, and 't' got a sequence of
25k observations long, each observation being a concept id. Thus 't'
was active in up to 25k episodes (in some episodes, 't' may appear
more than once). Initially, in the first few 100s of episodes say, 't'
will only 'see' single characters (unigrams) immediately next to its
right (concept ids below 95), and later on, as bigrams and higher
n-grams are generated, a mix of unigrams with higher level n-grams is
observed. With tracking of a few ids, we collected 104 sequences over
the different trials, with median sequence size of 1.2k observations,
minimum size of 75, and maximum sequence size of 48k (Table
\ref{tab:exped_seqs_etc}). For the less frequent and newer concepts we
get shorter sequences. There are nearly 1000 unique concepts (concept
ids) in the longest sequences, and 10s of unique concepts in the
shorter ones. A few example concepts with their sequence lengths are
given in Table \ref{tab:exped_seqs_etc}.


\subsubsection{Internal Non-Stationarity in \exped}
\label{sec:internal}

%


Initially, every character (as a predictor) sees single characters in
its stream of observations.  For instance the character 'b', as a
predictor, sees 'a', or 'e', or ' ', etc. immediately occurring
afterwards in an episode, for some time. Later on, as the higher level
concepts, bigrams and trigram, are generated and used in
interpretations, the predictor 'b' also sees bigrams and trigrams (new
concepts) in its input stream, and sees the unigrams characters less
frequently. Furthermore, when concept 'b' joins another concept, such
as 'e', to create a new concept 'be', the distribution around 'b'
changes too, as 'e' is not seen to follow 'b' as much as before the
creation and use of 'be': a fraction of the time when 'b' occurs in an
episode at the lowest level, 'be' is observed, at the highest level of
interpretation, instead of the unigram 'b' followed by 'e'.  We also
note that the frequency of the occurrence of a lower-level concept, at
the highest interpretation level, tends to decrease over time as
higher n-grams, that use that concept as a part, are generated and
used.

The input corpus of text as well as the manner in which a text line is
sampled (uniformly at random) to generate an episode is not changing
here, \ie no change in the occurrence statistics of individual
characters, nor in their co-occurrences, or there is no external
non-stationarity here, but there is internal non-stationarity, as the
interpretations, over time, use highest level matching n-grams.  Thus,
the nature of the prediction task, at the highest interpretation
level, can change gradually, even when the external input stream is
stationary.


\co{ We measure the prediction performance of a few such concepts,
  concepts corresponding to single characters as well bigrams and
  trigrams, according to \ref{eq:lleval}.

We report on performance of different techniques on the 104 sequences.
}


\subsubsection{Overall Performance on 104 \exped  Sequences}

Table \ref{tab:exped1} shows the \logloss (\empllns()) scores of our 4
predictors, averaged over the 104 \exped sequences.  All the
parameters are at their default when not specified, thus \qd is run
with binomial threshold of 5 and a queue capacity of 3. We observe
that \qd does best on average, and as Table
\ref{tab:loss_wins_expedition} shows, pairing and performing sign
tests indicates that \logloss of \qd (with $\minlr=0.001$) outperforms
others (yields the lower loss) over the great majority of the
sequences.

Table \ref{tab:loss_wins_expedition} also changes the $\nstrsh$ to
assess sensitivity to what is considered noise.  Lowering the
$\nstrsh$ makes the problem harder, and we have observed here and in
other settings, that \logloss goes up. For instance, the \logloss
performance of \qd goes from $2.93$ at $\nstrsh=0$ down to $2.3$ with
$\nstrsh=3$.  Of course, with $\nstrsh=1$, we are expecting a
technique to provide a good \pr estimate even though the item has
occurred only once before! Without extra information or assumptions,
such as making the stationarity assumption and assuming that there are
no noise items, or using global statistics on similar situations (\eg
past items that were seen once), this appears impossible.

We next go over the sensitivity and behavior of different techniques
under parameter changes, as well as looking at performance on long \vs
short sequences (subsets of the 104 sequences).




%

\begin{table}[t]  \center
  \begin{tabular}{ |c?c|c|c|c?c|c?c?c| }     \hline
    & \multicolumn{4}{c?}{\qun} &  \multicolumn{2}{c?}{static}  & harmonic & \qd \\ 
    &  2 &  3 &  5 &  10 &  0.001 &  0.01 &  0.01 &  0.001\\ \hline
    logloss & 2.65 & 2.60 & 2.61 & 2.70 & 2.80 & 2.56 & 2.71 & 2.39 \\ \hline
  \end{tabular}
  \vspace*{.2cm}
  \caption{\logloss of various methods on 104 sequences extracted from
    Expedition (median sequence size of 1.2k).}
\label{tab:exped1}
\end{table}

\co{
\begin{table}[t] 
  \begin{tabular}{ |c|c|c|c|c|c|c|c|c| }     \hline
    & \qun, 2 & \qun, 3 & \qun, 5 & \qun, 10 & static, 0.001 & static, 0.01 &
    Harmonic, 0.01 & \qd, 0.001\\ \hline
    logloss & 2.65 & 2.60 & 2.61 & 2.70 & 2.80 & 2.56 & 2.71 & 2.39 \\ \hline
  \end{tabular}
  \vspace*{.2cm}
  \caption{\logloss of various methods on 104 sequences extracted from
    Expedition (median sequence size of 1.2k).}
\label{tab:exped1}
\end{table}
}
  
\begin{table}[t]  \center
  \begin{tabular}{ |c|c|c|c|c| }     \hline
    \qdn, 0.001 vs. $\rightarrow$   &  \qun, 3 &  static, 0.01
    & static, 0.005 & harmonic, 0.01 \\ \hline
 $\nstrsh=3$ & 1, 103   & 0, 104 &  17, 87  & 0, 104 \\ \hline
 $\nstrsh=2$ (default) & 1, 103   & 0, 104 &  16, 88 & 0, 104  \\ \hline
 $\nstrsh=1$  & 13, 91 & 22, 82 &  13, 91 & 8, 96  \\ \hline
 $\nstrsh=0$ & 17, 87 & 13, 91  &  2, 102 & 27, 77 \\ \hline
    \end{tabular}
\vspace{.2cm}
\caption{Number of losses and wins of \qdn, with $\minlr=0.001$,
  pairing it against a few other techniques, on the 104 \exped
  sequences, as we alter the $\nstrsh$ threshold. If observation count
  $\le \nstrsh$ then it is marked \ns (\sec \ref{sec:evalns}). Thus \qd wins
  over \qu with (\qcap$=3$) on 103 of 104 sequences (top left). The
  number of wins of \qd is significant at over 99.9\% confidence level
  in all cases.  Also, \qd wins over static with $\lr \in \{0.01,
  0.005\}$, on all the 19 longest sequences, at default $\nstrsh=2$.}
\label{tab:loss_wins_expedition}
\end{table}

\co{
  %
\begin{table}[t]\center
  \begin{tabular}{ |c|c|c|c|c|c|c|c| }     \hline
    & \qun, 2 & \qun, 3 & \qun, 5 &  \qun, 10 & static, 0.01 & Harmonic, 0.01 & \qd, 0.001\\ \hline
logloss  & 2.59  & 2.54 & 2.55 & 2.64 & 2.51 & 2.71 & 2.32 \\ \hline
  \end{tabular}
  \vspace*{.2cm}
  \caption{Loglosses on the streams of 108 predictors (median size of 1.1k).}
\label{tab:exped1_old}
\end{table}
}

\subsubsection{Sensitivity to Parameters}


\fig \ref{fig:loss_vs_params_expedition}(a) shows the sensitivity to
$\lrmin$ for \qd and harmonic EMA, and $\lr$ for static EMA. We
observe, as in \sec \ref{sec:syn_multi} for the case of multi-item
synthetic experiments, that \qd is less sensitive than both of
harmonic and static EMA, while harmonic is less sensitive than static
when $\lr$ is set low (and otherwise, similar performance to static).
In particular, for longer sequences, lower rates can be better (see
next Section), but for other EMA variants, low rates remain an issue
when faced with non-stationarity (\ie new salient items).

\fig \ref{fig:loss_vs_params_expedition}(b) shows the sensitivity of
\qd to the choice of the binomial threshold, which controls when \qd
uses the queue estimates. There is some sensitivity, but we posit that
at 3 and 5, \qd performs relatively well. We also ran \qd using
different queue capacities
with $\minlr=0.001$ (default is $\qcap=3$), and obtained similar
\logloss results (\eg $2.38$ at $\qcap=2$, and $2.4$ for $\qcap=5$).

\begin{figure}[]
\begin{center}
  \centering
  \subfloat[Logloss vs learning rate.]{{\includegraphics[height=5cm,width=6.5cm]{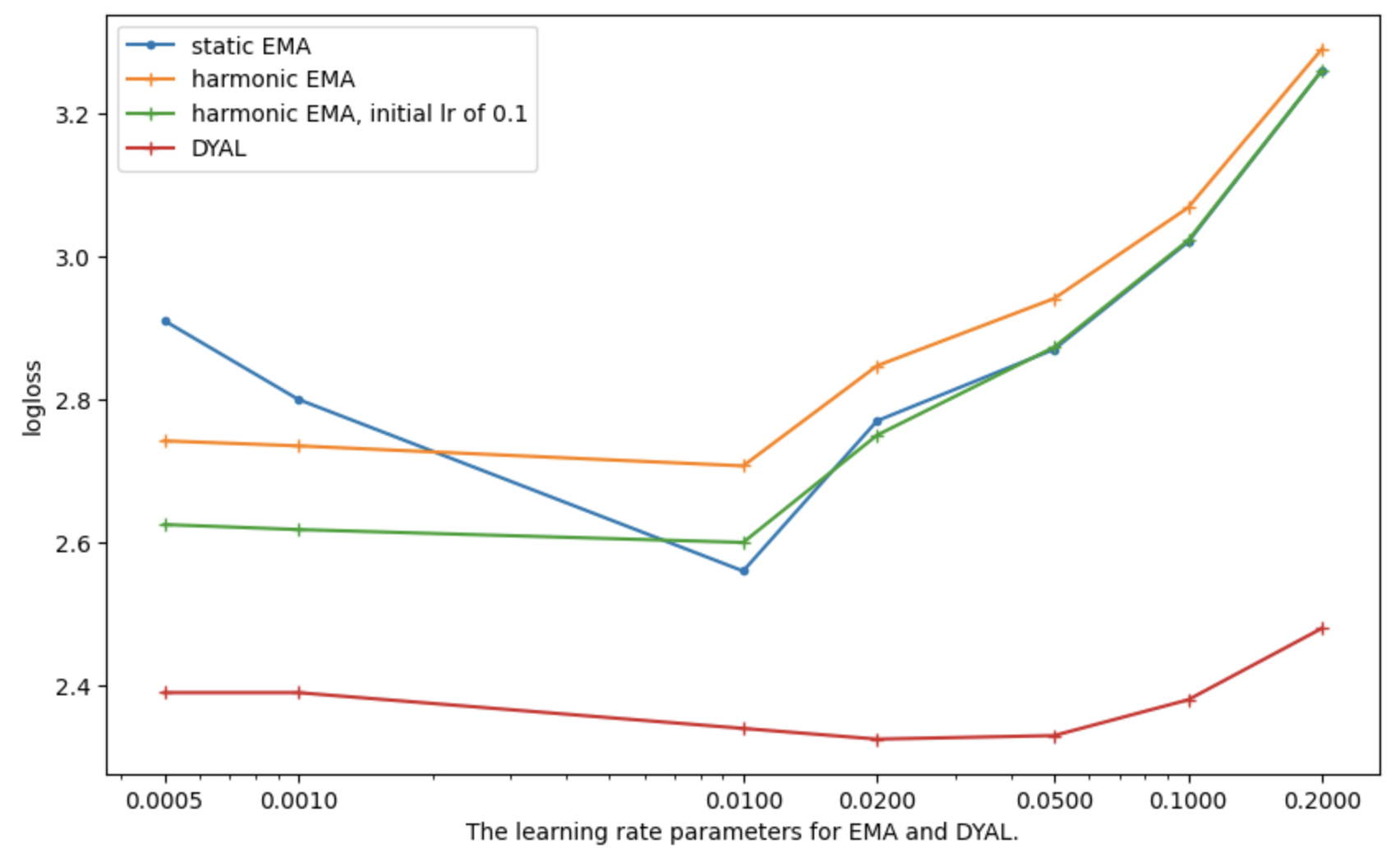}
  }}
  \subfloat[Logloss vs binomial threshold, \qdn.]{{\includegraphics[height=5cm,width=6.5cm]{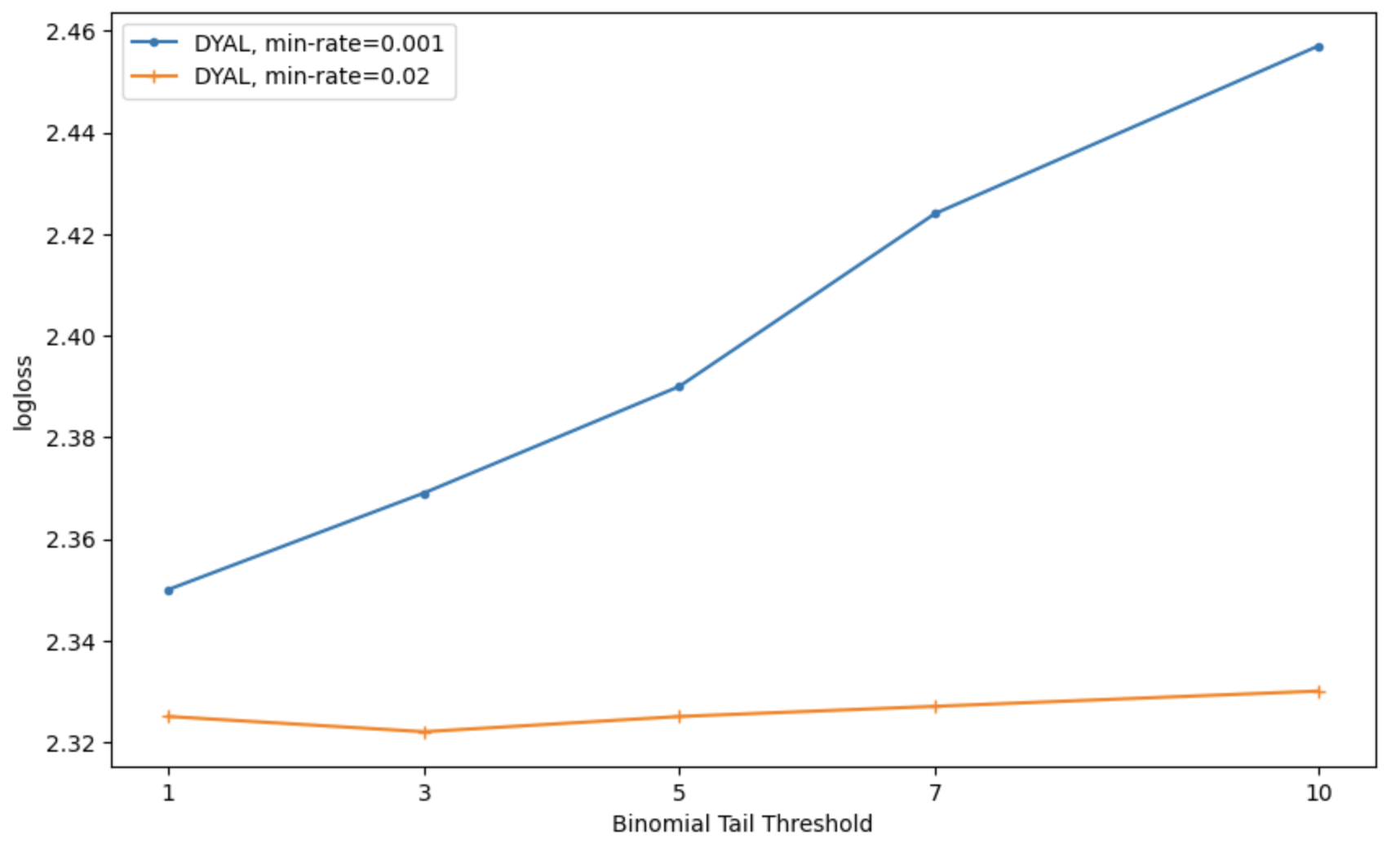}
  }}
\end{center}
\vspace{.2cm}
\caption{Prediction sequences from Expedition: Changing the learning
  rate (left), and the binomial threshold (right) in \qdn, and
  plotting the \logloss (\empllnsp()) performance. }
\label{fig:loss_vs_params_expedition}
\end{figure}



\subsubsection{Longer \vs Shorter Sequences}
\label{sec:long_short}

Of the 104 sequences, there are 19 sequences with length above 5k,
median length being 13k.  We averaged the \logloss performances of
\qdn, as we change its $\minlr$, over these 19, as well as over the
sequences with length below 1k, of which there are 46 such, with
median of 280 observations.  \fig \ref{fig:long_vs_short}(a) shows
that lowering the $\minlr$ works better or no worse, for longer
sequences, as expected, while the best performance for the shorter
sequences occurs with higher $\minlr > 0.01$. A similar patterns is
also seen in \fig \ref{fig:long_vs_short}(a) for \qu technique. Larger
queue capacities work better for longer sequences, but because shorter
sequences dominate the 104 sequences, overall we get the result that a
\qcap of 3 works best for this data overall.

We also note that the shorter sequences yield a lower \logloss (both
figures of \ref{fig:long_vs_short}). This is expected and is due to
our policy for handling \nsn: for methods that allocate most their
initial mass to noise and when this agrees with the \ns marker
referee, one gets low loss.  For instance, at $t=1$, the \ns marker
marks the next item as \nsn, and with a predictor that has all its \pr
mass unallocated, \logloss is 0 (\llnsr() in \fig
\ref{code:norming_etc}(b)).  As the sequence grows longer, and more
salient items are discovered, the average loss can go up.  This is
also observed in the next section when we do character-based
prediction (\sec \ref{sec:goes_up}).

\begin{figure}[]
\begin{center}
  \centering
\hspace*{-0.5cm}  \subfloat[Loss on longer \vs shorter sequences (\qd and \qun).]{{
      \includegraphics[height=5.5cm,width=6.5cm]{
        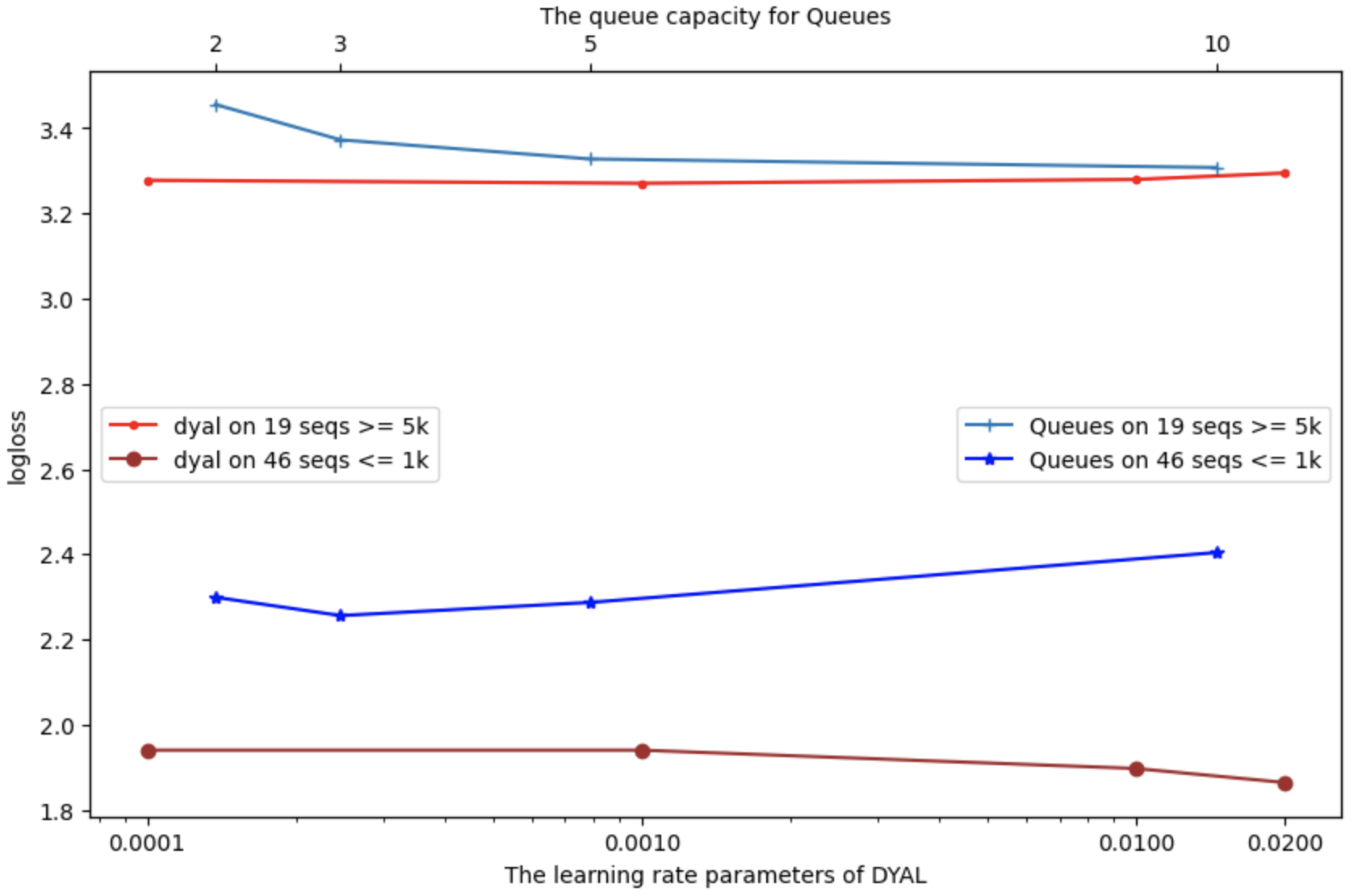}
  }}
\hspace*{.5cm}  \subfloat[Loss on 19 longest sequences \vs binomial threshold.]{{
      \includegraphics[height=5.5cm,width=6.5cm]{
        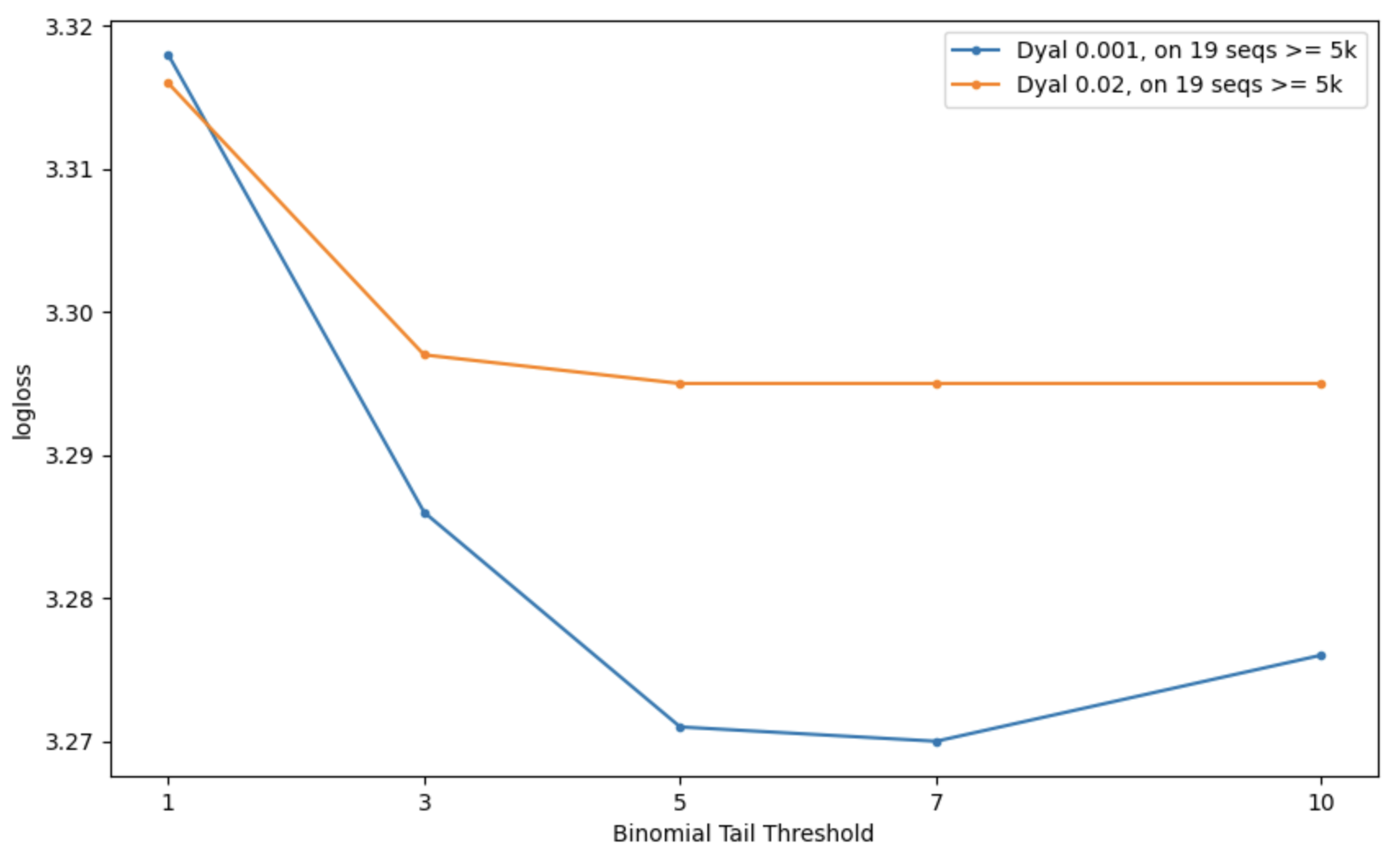}
  }}
\end{center}
\vspace{.2cm}
\caption{\logloss (a) on 19 longest sequences (above 5k) vs 46 shorter
  sequences (below 1000). On the longer sequences and lower $\minlr$
  for \qdn, and a higher queue capacity for \qun, helps. (b) loss vs
  binomial threshold on 19 longest sequences. Setting the value in 3
  to 7 works well, and $\minlr=0.001$ does better than $\minlr=0.02$
  on these sequences. }
\label{fig:long_vs_short}
\end{figure}

\subsubsection{Evolution of the Learning Rates, Degrees, etc. }

\fig \ref{fig:evolution_2_seqs} shows plots of the evolution of
maximum (max-rate) and median of the learning rates in the $\lrmap$
\ of \qd for two predictors (two sequences), the concept "ten", with
just over 200 episodes, and the concept ``l'' with over 12000
episodes. The number of entries (edges) in the $\lrmap$ \ (and the
$\emamap$), or the out-degree, is also reported.  We observe that the
maximum over the learning rates contain bursts every so often
indicating new concepts need to be learned, while the median rate
converges to the minimum, indicating that most predictands at any
given time are in a stable state. On the long 12k sequence, we also
see the effect of pruning the map every so often: the number of map
entries remain below 100 as old and low \pr items are pruned (see \sec
\ref{sec:qspace} and \ref{sec:pruning}).

Appendix \ref{sec:evid} also includes plots of the max-rate on a few
additional sequences (self-concatenations), in exploring evidence for
non-stationarity.

\todo{ Percent of items marked \ns was ? (averaged over all
  sequences?? with std ?) as a function of $\nstrsh$? }

\begin{figure}[t]
\begin{center}
  \centering
\hspace*{-.65cm}  \subfloat[Learning rates, max and median, and out-degree, for concept 'ten' (via
    \qdn).]{{ \includegraphics[height=5cm,width=6.8cm]{
        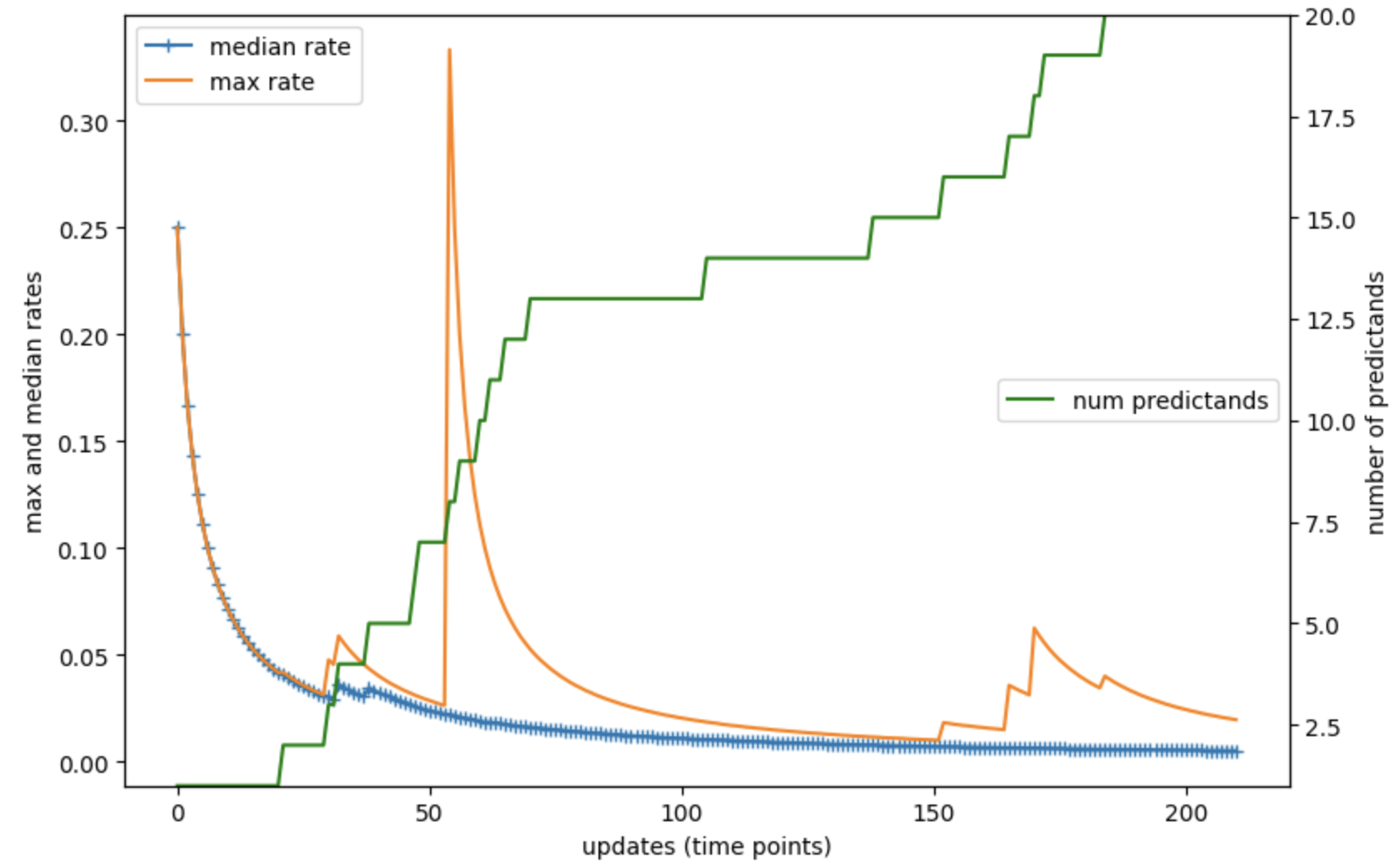} }}
\hspace*{.25cm}  \subfloat[Max rate and out-degree for 'l'.]{{
      \includegraphics[height=5cm,width=6.8cm]{
        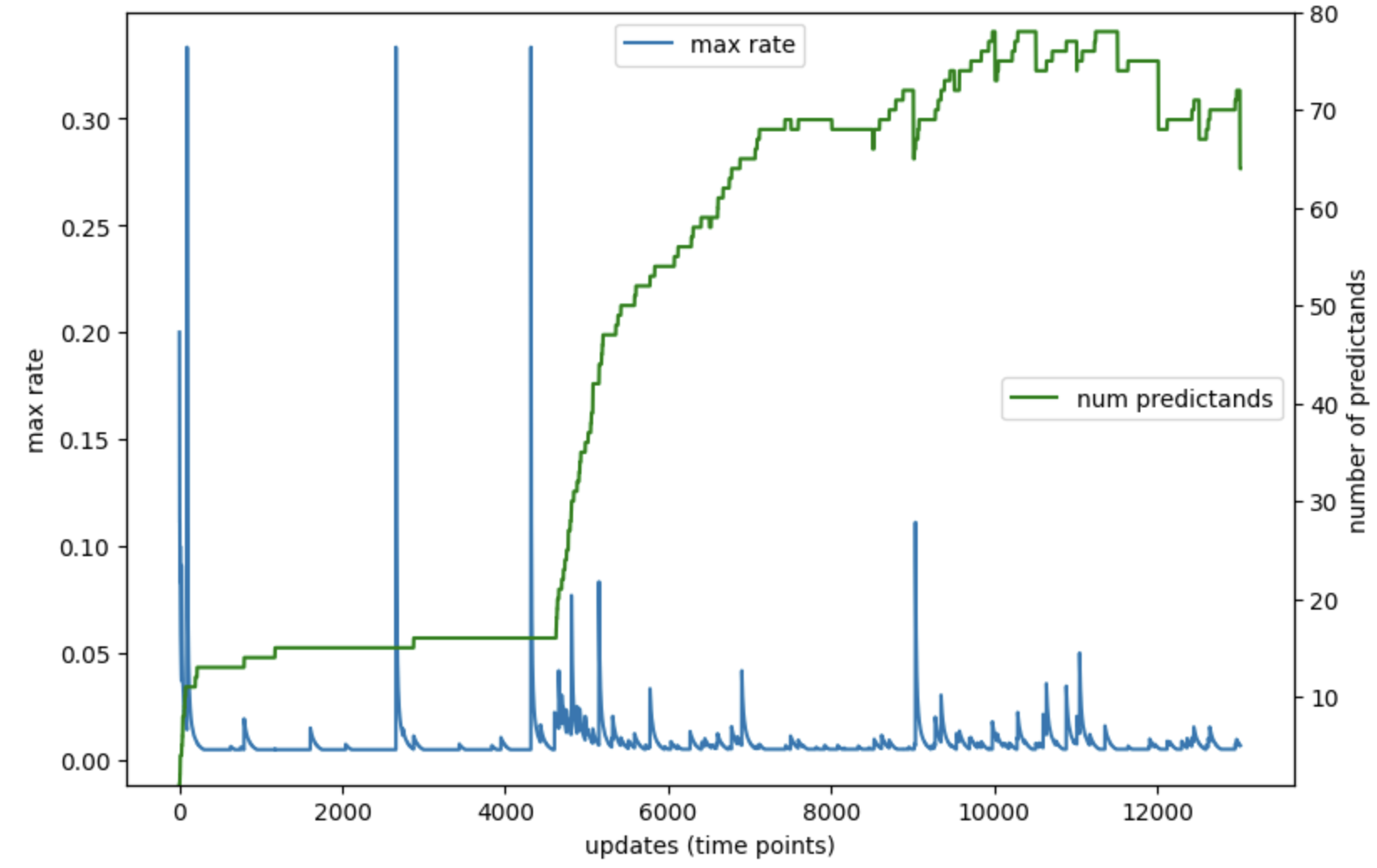} }}
\end{center}
\vspace{.2cm}
\caption{Examples of evolution of the learning rates of \qd and the
  number of predictands, or out-degree (number of entries in
  $\emamap$), with time. (a) Maximum (max-rate) and median learning
  rates and number of predictands for the concept 'ten'. (b) Maximum
  learning rates and number of predictands for the 13k-long sequence
  for the concept 'l'.}
\label{fig:evolution_2_seqs}
\end{figure}

\subsection{\exped at the Character level}
\label{sec:expd_chars}

Here, we do not generate new concepts, thus the prediction task
remains at the (primitive) character level. We track all the
characters as predictors: each character, when observed, predicts the
next character, and, for evaluation, each has its own referee (\ns
marker). We report \logloss (\empllns()) at every time point, meaning
that with each input line, from left to right as in the previous
section, for each character (predictor), we predict (what character
comes to the right) and record the \logloss, then observe and update
the predictor. At certain times $t$, \eg after observing 1k characters
(1000 observe and update events), we report the average of the
\loglossn es upto $t$. For instance, on the line ``abcd'', we collect
the \logloss performance of the predictors corresponding to 'a', 'b'
and 'c' (and after the loss is collected, each predictor updates
too\footnote{In these experiments, we do not update for the last
concept of a line, 'd' in this example, as there is no next character
for such, though one could use a special end of line marker.}).  Thus,
in this section, unlike the previous sections, we are reporting an
average of the \logloss performance of {\em different} predictors,
over time.  Note further that, in this manner of reporting, the more
frequent characters (predictors), such as 'e', 'a', and the blank
space ' ', will have more of an impact on the reported performance.



In this character-prediction setting, there is neither external nor
internal non-stationarity, as described in the previous section, and
even though there is no non-stationarity, we find that \qd
out-performs the other predictors, as seen in Table
\ref{tab:exped_chars} and \fig
\ref{fig:loss_vs_time_expedition}. Table \ref{tab:exped_chars} shows
\logloss averaged over 10 runs for a few choices of parameters, where
a run went to 1k, 2k and 10k time points (prediction episodes).  \fig
\ref{fig:loss_vs_time_expedition} shows \logloss performance for a few
learning rates, from 10k to 300k time points. We see that \qd with one
choice of $\lrmin=0.001$, does best over all snapshots.  The \qu
technique does best with the highest capacity of 10 (in this
stationary setting), and EMA variants require playing with the
learning rate as before, and still lag \qdn. Plain EMA variants,
static and harmonic, underperform for a combination of the way we
evaluate and their insufficient inflexibility with regards to the
learning rate: when an item is seen for the first time, harmonic may
give it a high learning rate, but then is punished in subsequent time
points, as the item may be a low probability event. Note that in our
experiments, we started the harmonic with a initial learning rate of
1.0 (and experimenting with the choice of initial rate $\lrmax$ may
improve its performance, see \fig \ref{fig:loss_vs_rate_unix1} on Unix
commands). Static EMA assigns whatever its fixed learning rate is, to
a new item, which could be too low or too high.





\subsubsection{\llns \ Initially Goes Up}
\label{sec:goes_up}

Consistent with our previous observation on lower (better) \logloss
performance on shorter \vs longer sequences (\sec
\ref{sec:long_short}), here, as more seen items become salient (not
marked \nsn), and with the manner we evaluate with a referee, \logloss
increases over time for most methods but approaches a plateau and
converges. For a 'slow' method such as static EMA with a low $\lr$,
\logloss peaks before it starts going down.  \qd is not slowed down or
is not as sensitive to setting the (minimum) rate low ($\lrmin$ set to
0.001 or lower) (in \fig \ref{fig:loss_vs_time_expedition}, the plots
for \qd with $\lrmin$ from 0.0001 to 0.01 appear identical), and may
actually benefit from a low rate in the long run (Table
\ref{tab:exped_chars_dyal}).


We developed the bounded \logloss primarily for comparing different
techniques.
Here, we observe that the loss may go up on the initial segments of a
sequence, which can be counter-intuitive and undesired (loss should go
down, in general, with more learning).  Appendix \ref{sec:alts}
briefly discusses the alternatives we considered.  It is also possible
that assuming items have rewards and taking into account the rewards
may address this issue as well (mentioned as a future direction in
Conclusions \sec \ref{sec:summary}), and the motivation for the
reward-based CORE score in \cite{expedition1}.



\begin{table}[t] \center
  \begin{tabular}{ |c?c|c|c?c|c|c?c|c| }     \hline
time $\downarrow$    & \multicolumn{3}{c?}{\qun} & \multicolumn{3}{c?}{static (EMA)} &
    harmonic & \ \qdn \ \\ 
    & 3 & 5  & 10 & 0.005 & 0.01 & 0.02  & 0.01 & 0.001 \\  \hline
1000  &  1.69$\pm$0.05  & 1.66 & 1.64 & 1.6 & 1.87 & 2.24 & 2.59 & 1.20 \\ \hline
2000  & 2.01$\pm$0.02 & 1.96 & 1.93 & 2.11  & 2.16 & 2.34 & 2.50 & 1.60 \\ \hline
10000 & 2.39$\pm$0.01 & 2.32 & 2.27 & 2.43 & 2.37 & 2.41  & 2.44 & 2.15 \\ \hline
  \end{tabular}
  \vspace*{.2cm}
  \caption{\logloss averaged over 10 runs of character-level
    \exped (no new concepts generated). \qd does best over
    all time snapshots. }.
\label{tab:exped_chars}
\end{table}

\begin{table}[h]  \center
  \begin{tabular}{ |c|c|c|c|c|c| }     \hline
    $\lrmin$ $\rightarrow$ & 0.0001 & 0.001 &  0.01 & 0.05 & 0.1 \\ \hline
   \logloss & 2.408$\pm$0.002 & 2.398$\pm$0.001 & 2.415$\pm$0.000 & 2.512$\pm$0.002 & 2.652$\pm$0.001 \\ \hline
  \end{tabular}
  \vspace*{.2cm}
  \caption{Character-level \exped using \qd over longer runs: \logloss
    averaged over 5 runs till 300k time points (character
    predictions). $\lrmin$ of $\ge 0.05$ is too high (indicating there
    are salient predictands with probability below $0.1$), while low
    $\lrmin \le 0.001$ do slightly better than $0.01$.  }
\label{tab:exped_chars_dyal}
\end{table}

\begin{figure}[tb]
\begin{center}
  \centering
\hspace*{-.9cm}  \subfloat[\exped at the character level (default $\nstrsh=2$).]{{\includegraphics[height=5.6cm,width=5.6cm]{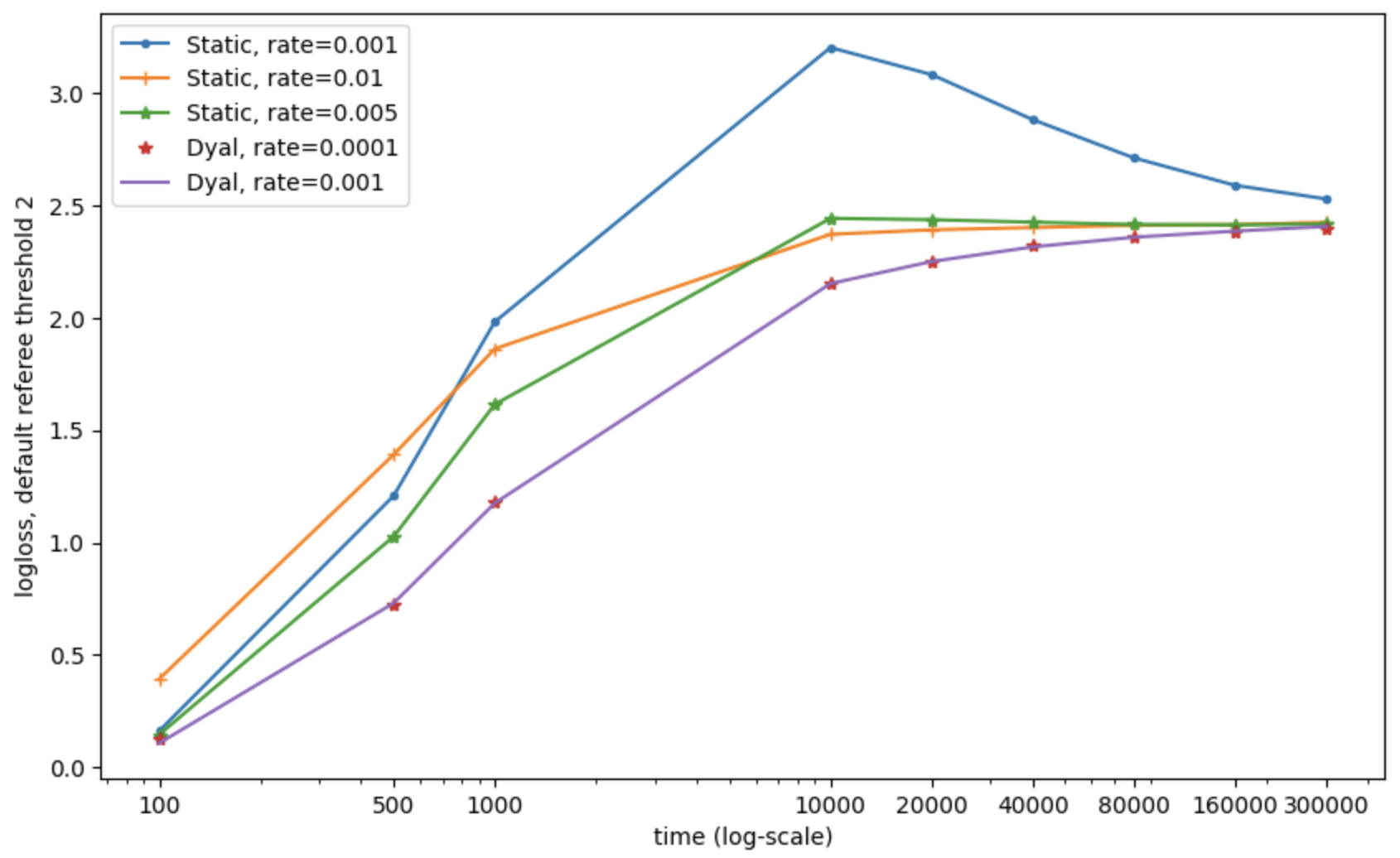}
  }}
\hspace*{0.1cm}   \subfloat[\exped at the character level, with relaxed $\nstrsh=5$.]{{\includegraphics[height=5.6cm,width=5.6cm]{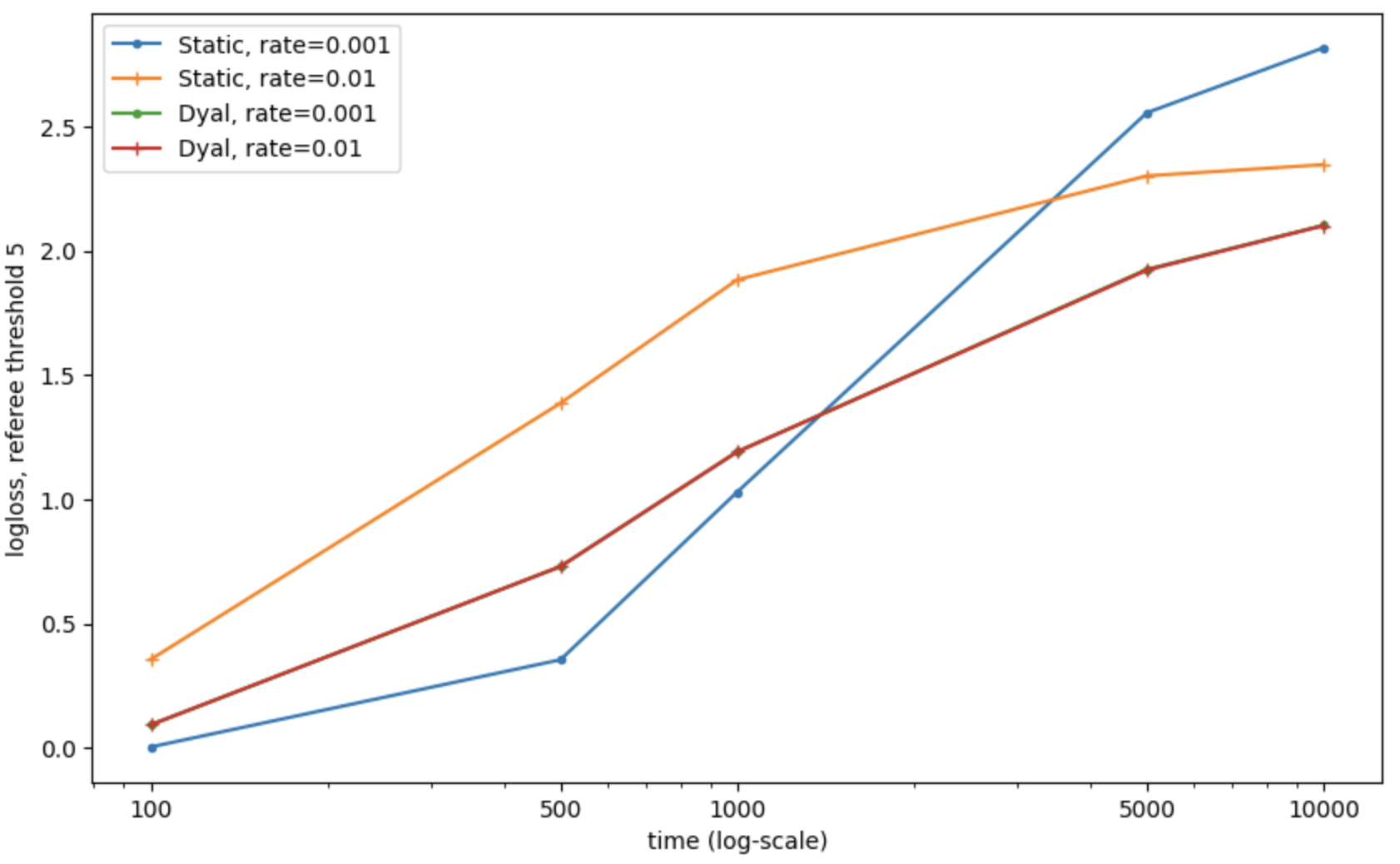}
  }}
\end{center}
\vspace{.2cm}
\caption{\logloss vs time, for \exped at the character level. (a) With
  default $\nstrsh=2$, each point is average over 5 sequences. (b) With the more
  relaxed $\nstrsh=5$ (shown up to 10000) initially all curves
  improve, lowered \loglossn, compared to $\nstrsh=2$. In particular
  static EMA with a low rate benefits most, as it only slowly
  allocates probability to salient items, agreeing more with the
  relaxed referee.  In both cases, \qd with two different rates
  ($\minlr\in \{0.0001, 0.001\}$ in (a) and $\minlr\in \{0.001,
  0.01\}$ in (b)) results in almost identical losses. }
\label{fig:loss_vs_time_expedition}
\end{figure}



\subsection{Unix Commands Data}
\label{sec:unix}

\begin{table}[t]  \center
  \begin{minipage}[t]{0.45\linewidth}
  \begin{tabular}{ |c|c|c|c|c|c| }      \hline
    \multicolumn{6}{|c|}{ 52-scientists (52 sequences)  } \\ \hline
    \multicolumn{3}{|c}{ sequence length } & 
    \multicolumn{3}{|c|}{\# unique commands per user }  \\ \hline 
     median & min & max &  median & min & max \\ \hline
     1.8k & 205 & 12k & 106 & 22 & 359 \\ \hline
  \end{tabular}
  \end{minipage}
    \begin{minipage}[t]{0.45\linewidth}
  \hspace*{1.5cm}
  \begin{tabular}{ |c|c|c| }     \hline
    \multicolumn{3}{|c|}{ Masquerade (50 sequences, 5k each)} \\ \hline 
    \multicolumn{3}{|c|}{\# unique commands per user } \\ \hline 
    median & min & max  \\ \hline
    101 & 5 & 138  \\ \hline
  \end{tabular}
  \end{minipage}  
  \vspace*{.2cm}
  \caption{Statistics on Unix commands data. Left: 52-scientists,
    median sequence length is 1.8k commands, and the median number of
    unique commands per sequence is 106, while one user (sequence) has
    138 unique commands (the maximum).  Right: Masquerade, 50
    sequences, each 5k long. Median number of unique commands per
    sequence is 101, while one user used only 5 unique commands (in 5k commands). }
\label{tab:unix1}
\end{table}

\todo{ describe each data sets, ie: ie how it was recorded, how many days,
  etc}

We look at two sequence datasets in the domain of user-entered Unix
commands, which we refer to as the {\em 52-scientists} data, a subset
of data collected by Greenberg \cite{greenberg88}, and the Masquerade
sequences \cite{masq2001}. These sequences are good examples of high
external non-stationarity: for instance, it is observed in our
previous work on Greenberg sequences \cite{ijcai09} that if the online
updates are turned off mid-way during learning, when it appears that
the
accuracy has plateaued, the prediction performance steadily and
significantly declines. Non-stationarity in daily activities can be
due to a variety of factors, and changes may be permanent, periodic
and so on (project changes, taking vacations, illness, and so on), and
timely adaption is important for automated assistants. We also present
further evidence of non-stationarity below, as well as in Appendix
\ref{sec:evid}. The data likely includes a mix of other non-iid
phenomena as streaks (repeating commands) and certain hidden periodic
behaviors.  Table \ref{tab:unix1} presents basic sequence statistics
for the two sources.


\subsubsection{Task and Data Description}
\label{sec:unix2}

In our previous work \cite{ijcai09}, we were interested in the
ranking performance, for a recommendation or personalization task, and
we used several features of context, mostly derived from previously
typed commands, as predictors of the next command.  We updated and
aggregated the predictions of the features using a variation of EMA
that included mistake-driven or margin-based updates, and showed
significant performance advantage over techniques such as SVMs and
maximum entropy \cite{ijcai09}. Here, we are interested in the extent
to which the sequences are stable enough to learn probabilities
(\prsn), and the relative performance of different probabilistic
prediction techniques.  As the sequences would be relatively short if
we focused on individual commands as predictors (100s long for almost
all cases), we look at the performance of the ``always-active''
predictor that tries to learn and predict a ``moving prior'' of which
next command is typed.\footnote{The Greenberg data has 3 other user
types, but the sequences for all the other types are shorter. We
obtained similar results on the next collection of long sequences, \ie
'expert-programmers', and for simplicity only include the
52-scientists subset. } Therefore here, each command in the sequence
is an item.  For both datasets, we use only the command stubs, \ie
Unix commands such as ``ls'', ``cat'', ``more'', and so on, without
their arguments (filenames, options, etc.). The Masquerade data has the
stubs only, and our experiments with full commands, \vs stubs only, on
52-scientists yields similar findings.  The sequences have been
collected over the span of days to months for both datasets, depending
on the level of the activity of a user. For Masquerade, we only use the
first 5000 commands entered by each user, as the remainder can have
other users' commands interspersed, designed for the Masquerade
detection task\footnote{It may be fruitful to develop an application
of the online predictors we have developed, to the Masquerade detection
task and compare to other methods.  } \cite{masq2001}.




 \subsubsection{Performance on Unix Data}
 \label{sec:unix3}

\begin{figure}[thb]
\begin{center}
  \centering
  \subfloat[\logloss vs rate, Unix 52-scientists sequences.]{{\includegraphics[height=6cm,width=6.7cm]{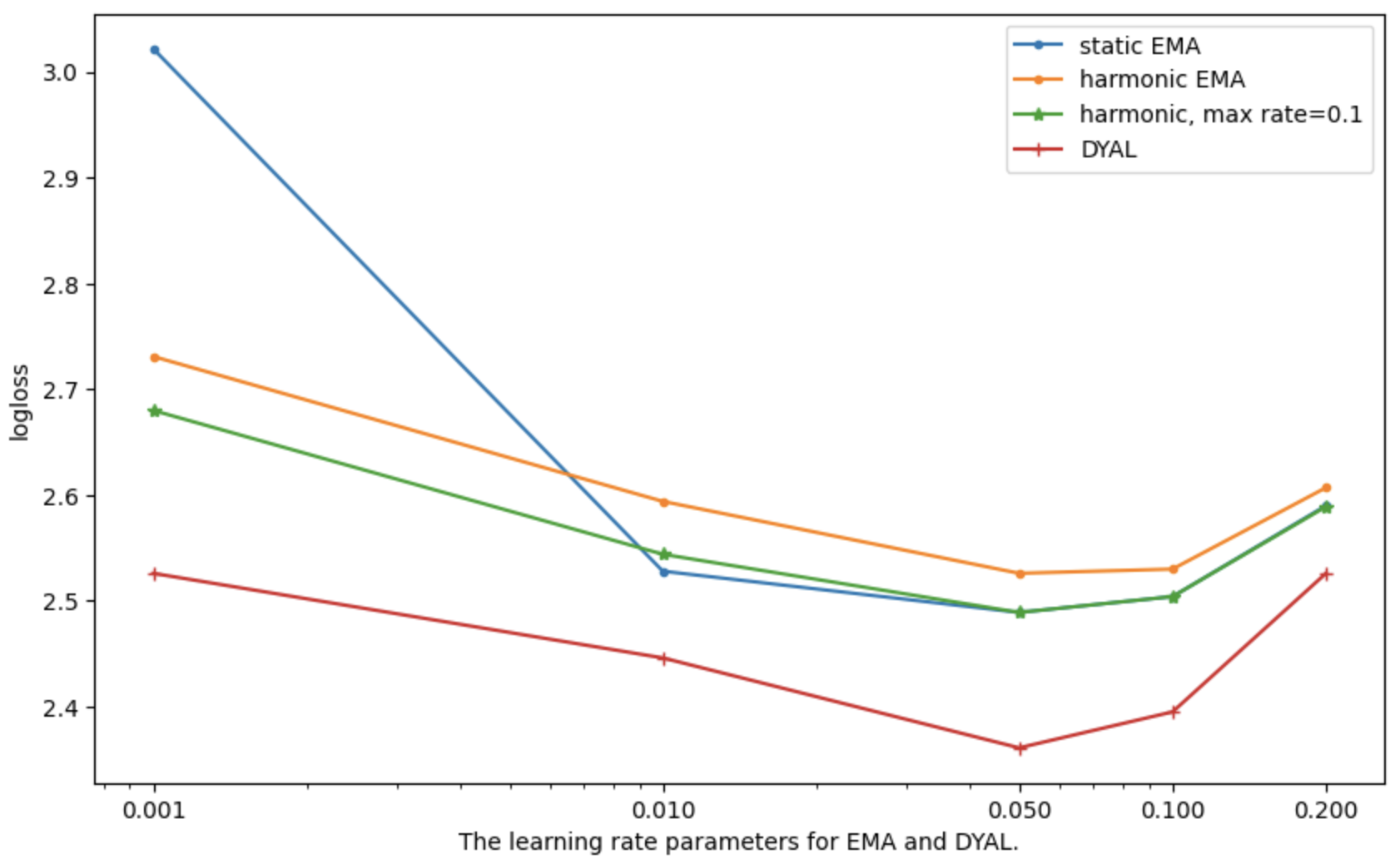}
  }}
  \subfloat[\logloss vs rate, Unix Masquerade sequences.]{{\includegraphics[height=6cm,width=6cm]{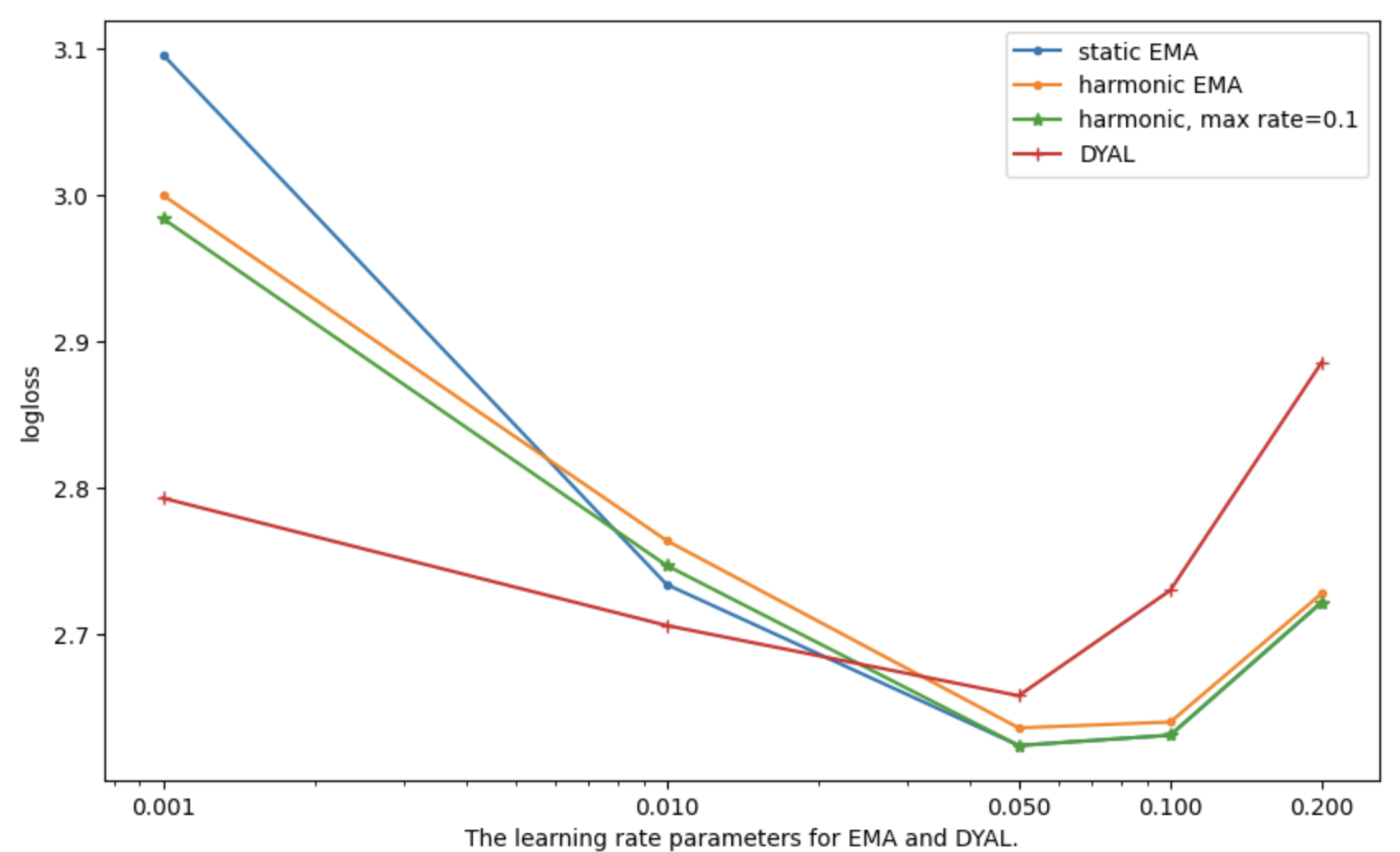}
  }}
\end{center}
\vspace{.2cm}
\caption{\logloss performances (\empllns()) \vs $\lr$ parameters: (a)
  52-scientists (b) Masquerade. }
\label{fig:loss_vs_rate_unix1}
\end{figure}

\fig \ref{fig:loss_vs_rate_unix1} shows the performance of EMA
variants as a function of the learning rate, and Table
\ref{tab:qu_on_unix} reports on \qu performance with a few \qcap
values, and includes the best of EMA variants.  We observe a similar
v-shape pattern of performance for static and harmonic EMA, while for
\qdn, as before, the plots show less of a dependence on $\minlr$
compared to other EMA variants. However, the degradation in
performance as we lower $\minlr$ is more noticeable here. \qd performs
better on 52-scientists compared to others, but is beaten by static
EMA on Masquerade. Below, we report on paired comparisons and the
effect of changing the referee threshold $\nstrsh$.

We note these data indeed have higher non-stationarity compared to
previous ones when we consider a few indicators: the best performance
occurs when $\minlr$ is relatively high at $\approx 0.05$, larger than
our previous datasets (\fig \ref{fig:loss_vs_rate_unix1}). Table
\ref{tab:qu_on_unix} also shows that \qu does best here with smaller
\qcap values.  Appendix \ref{sec:evid} presents additional experiments
showing further evidence of external non-stationarity.




\begin{table}[t]  \center
  \begin{tabular}{ |c|c|c|c|c?c|c|c|c|c| }     \hline
 \multicolumn{5}{|c?}{ 52-Scientists } &  \multicolumn{5}{c|}{Masquerade}   \\ \hline
      \multicolumn{4}{|c|}{ \qun } & \qdn & \multicolumn{4}{c|}{ \qun }   & static \\ 
      2 &  3 &  5 &  10 &  0.05 &  2 &  3 &  5 &  10  & 0.05 \\ \hline
   2.581$\pm$0.28 & 2.586 & 2.629 & 2.686 & 2.361 & 2.769$\pm$0.5 & 2.735 & 2.754 & 2.830 & 2.624 \\ \hline
    \end{tabular}
\vspace*{.2cm}
\caption{On Unix sequences, \qu \logloss performance, and the best of
  EMA variants. Lower \qcapn s work better, suggesting significant
  non-stationarity. }
\label{tab:qu_on_unix}
\end{table}

\subsubsection{Pairing and Sign-Tests on Unix Sequences}
\label{sec:unix_pairing}

We compare \qd to static EMA as harmonic and static behave similarly.
At $\lr$ of 0.05, best for both \qd and static EMA, on the
52-scientist sequences, we get 46 wins for \qd over static (lower \logloss
for \qdn), and 6 wins for static. On average, 13\% of a sequence
is marked \ns by the referee.  As we raise the referee threshold $\nstrsh$ from 2 to
3 to 4, we get additional wins for \qd and the \logloss performances
improve for all methods, and the fraction of sequence marked \ns goes
up, reaching to 18\% at $\nstrsh=4$. Conversely if we lower the
referee threshold to 1, we get fewer 34 wins for \qd over static EMA
(and 11\% marked \nsn).



On Masquerade's 50 sequences, we get only 10 wins for \qd \vs 40 for
static, again at $\lrmin=0.05$, where both do their best, which is
statistically highly significant. With the default of $\nstrsh=2$ only
5\% of a sequence is marked \ns on average. As in the case of
52-scientists, when we increase the referee threshold from 2 to 3 to
4, we get additional wins for \qdn, and at $\nstrsh=4$, \qd has 37
wins over 13 wins for static (8\% marked \ns with
$\nstrsh=4$). Similar to the above 52-scientists case, lowering the
threshold to 1 leads to more wins for static. Importantly, if use a
referee that is based on a short window of the last 200 (rather than
our simple unlimited window size) and require seeing an item at least
twice in the last 200 timepoints to mark it as salient (not \nsn),
again \qd becomes superior (47 wins for \qdn, and 9\% marked \nsn).

Why a fixed and relatively high learning rate of 0.05 does relatively
well here compared to the more dynamic \qdn, on Masquerade sequences?
Any assumption behind the design of \qdn, in particular the sufficient
stability assumption, may be partially failing here.  For many items,
their \prn, or appearance frequency, may be high once they appear, but
their stability period may be too short, and a simple high
learning-rate of 0.05 may work just as well, compared to the slow
two-tiered approach of first detection and estimation via the \qu
technique, and at some point, switching to the queues estimate. As we
increase the referee threshold $\nstrsh$, we focus or bias the
evaluation further on the more stable items in the sequence, and we
get better relative results for \qdn. This observation gives more
credence to the stability explanation.








\section{Related Work}
\label{sec:related}








Our problem lies in the areas of online learning \cite{onlineml18},
non-stationarity and learning under concept drift
\cite{nonstationarity2015,survey_on_concept_drift_2014gama,conceptDrift2023}, change detection
\cite{cpEval2022,nonstationarity2015}, probability forecasting and
assessing quality of output probabilities (propriety, calibration)
\cite{ml_calibration_survey21,rev2022,brier1950,good1952,prequential_dawid1984},
(non-parametric) density estimation or distribution learning
\cite{driftingDistros2024,distroDrift2023Nonparametric,discreteDistrosInfiniteSupport2020},
with connections to streaming data structures and algorithms
\cite{streams_review,Gama2009KnowledgeDF}
and time-series analysis
\cite{intro_time_series2018,coherent_forecasting2015}.  To the best of
our knowledge, this combination of open-ended categorical (nominal)
probability prediction under non-stationarity,
%
with attention to practical efficiency,
has not been studied before.  We provided pointers on relevant work
for the tools and techniques we used throughout the paper.  In this
section, we further situate our work within the broader context of
similar tasks and problem domains, and provide a short history.





Our task and the \qd solution involves a kind of {\em
  implicit}
change detection (CD)
\cite{survey_on_concept_drift_2014gama,nonstationarity2015,incrementalNB_2020}:
%
Two main categories of adaptation strategy to change are blind (or
implicit) and informed (or explicit)
\cite{survey_on_concept_drift_2014gama,nonstationarity2015}.  In all
of \qun, EMA (static or harmonic-decay), and \bxn, adaptation to
non-stationarity is done automatically or without explicit
detection. \dyal comes closest to explicit detection, and it uses the
implicit \qs to do so. In our task, the system need only adjust its
output probabilities in a timely manner.  In some tasks, such as
monitoring for safety and potential attacks, explicitly pinpointing
the (approximate) time of change can be important too
\cite{nonstationarity2015,multicp19}.
%
%
%
Change detection is a diverse subject
studied in several fields such as system monitoring, psychology (\eg
within human vision) and
image processing and time series analysis (\eg
\cite{multicp19,tartakovsky14,cpEval2022,atto2021change}). Here, we
seek a (timely) response and adaptation to a change.
%
Many variants of moving averages are also used in time-series analysis
(\eg the ARMA model). There, the observations are ordinal (such as
counts) even if discrete (\eg
\cite{intro_time_series2018,coherent_forecasting2015}), and
stationarity or limited non-stationarity is typically assumed.
Similarly, much past work, \eg on variable window sizes, has addressed
numeric data, for instance tracking means and variances
\cite{adaptive_windows_bifet2007,histo_change2008} (\sec
\ref{sec:box}).
It has been observed that CD is understudied for categorical data
\cite{multicp19}, and the authors develop efficient model-based
explicit CD techniques for streaming data: the set of items is known
and fixed in this work (a multinomial distribution).  Their evaluation
is based on mearsuring change-detection rates (\eg true-positive rates
for an acceptable fixed false-positive rate).

The online observe-update cycle, updating a semi-distribution, has a
resemblance to online (belief) state estimation, \eg in Kalman filters
and partially observed Markov models for control and decision making,
in particular with a discrete state space
\cite{Dean1991PlanningAC,wikip1}. Here, the goal is pure prediction
(\vs action selection or control), though some of the techniques
developed in this work may be useful to that setting, specially when
one faces a changing external world. Extensions of the Karlman filter
with dynamic window sizes (for numeric sequences) has proved useful
for adapting supervised and unsupervised learning techniques, such as
Naive Bayes, to non-stationarity
\cite{adaptive_windows_bifet2007,kalmanFilteringEvolvingData_2021}.
More generally, streaming algorithms aim to compute useful summary
statistics, such as unique counts and averages, while being space and
time efficient, in particular often requiring a single pass over a
large data set or sequence, such as the count-min sketch algorithm
\cite{streams_review,Gama2009KnowledgeDF}. Here, we have been
interested in computing recent proportions in a non-stationary
setting, for continual prediction, implemented by each of many
(severely) resource-bounded predictors.
%
%
Learning finite state machines in a streaming manner shares similar
philosophy and a similar subproblem of change detection (in
particular, \fig 1, ``system for continuous learning''
\cite{Balle2014AdaptivelyLP}).

Non-parametric density estimation techniques are often
based on kernels or
keeping track of specific instances and assume stationarity (\eg
\cite{discreteDistrosInfiniteSupport2020,nonparametric_1987}).  Recent work, in
computational-learning theory, comes closest to ours and extends the
density estimation task to an open-ended, in particular with
infinite-support, non-stationary setting \cite{driftingDistros2024},
but (space) efficiency and empirical performance of the predictor is
not a focus,
and the kept history grows logarithmically in stream length.
%
Furthermore, absolute loss (total-variational distance) is used to
assess the theoretical quality of algorithms (limiting applicability
to estimately only high probabilities well, above say 0.1, see \sec
\ref{sec:box}).

This paper extends our ongoing work on developing large-scale online
multiclass learning techniques, for lifelong continual learning when
the set of classes is not known apriori, and often large and dynamic
(\ie open-ended) \cite{abound09,updateskdd08}, in particular in the
framework of Prediction Games for learning a growing networked
vocabulary of concepts \cite{pgs3,pgs1,expedition1}. In online
(supervised) learning, \eg \cite{onlineml18,perceptron,winnow1}, the
focus is often on the interaction of the predictors (features) and,
for example, on learning a good weighting combination for a linear
model, while we have focused on learning good {\em independent}
probabilistic predictors, akin to the (multiclass) Naive Bayes model
\cite{lewis-naive40}, but in an unsupervised (or self-supervised)
setting, as well as the counting techniques for n-gram language models
\cite{slm}, but in a non-stationary setting.  Another avenue of work
is calibrating classifiers after training (trained on a ranking or
accuracy-related objective) \cite{ml_calibration_survey21}, and there
may be extensions applicable to our non-stationary open-ended and
online setting.  Similarly, investigating online techniques for
learning good mixing weights, but also handling the
non-stationarities, may prove a fruitful future direction, for
instance in the mold of (sleeping) experts algorithms
\cite{sexperts1,onlineml18,experts_uai}.


\co{
Learning to predict new items has some resemblance to cold-start
problems, in particular the cold-item problem in
recommender systems \cite{wikip1}, though the setting (constraints and
goals) are rather different (the goal there is providing a good
ranking rather then good distributions and probabilities).
}



Learning-rate decay has been shown beneficial for training on
backpropagation-based neural networks, and there is research work at
explaining the reasons \cite{You2019HowDL,Smith2018ADA}. Here, we
motivated decay variants in the context of sparse EMA updates and
learning good probabilities fast, and motivated keeping
predictand-specific rates.




\section{Summary and Future Directions}
\label{sec:summary}


%

Dynamic worlds require dynamic learners: often there is change, but we
typically cannot predict what will change (or when).  In the context
of online open-ended probabilistic categorical prediction, we
presented a number of sparse moving average (\sman) techniques for
resource-bounded (finite-memory) predictors.
We described the challenges of assessing probability outputs and
developed a method for evaluating the probabilities, based on
bounding \loglossn, under noise and non-stationarity.
We showed that
different methods work best for different regimes of non-stationarity,
but provided evidence that in the regime where the probabilities can
change substantially but only after intermittent periods of stability,
the \qd \sma which is a combination of the sparse EMA and the \qu
technique, is more flexible and has advantages over either of the
simpler methods: the \qu predictor has good sensitivity to abrupt
changes (can adapt fast), but also has higher variance, while plain
EMA is slower but is more stable, and combining the two, with
predictand-specific learning rates, leads to faster more robust
convergence in the face of non-stationarities. The use of
per-predictand rates can also serve as indicators of current
confidence in the prediction estimates.




This work originated from the problem of
assimilating new concepts (patterns), within the \pgs
framework, where concepts serve as both the predictors and the
predictands in the learning system.  In particular, here we made a
distinction that even if the external world is assumed stationary, the
development of new concepts, explicitly represented in the system,
results in internal, in particular developmental, non-stationarity, for the
many learners within the system. This means sporadic abrupt changes in
the co-occurrence probabilities among the activated concepts
as the system changes and evolves (over many episodes) its
interpretations of the
raw input stream. However, we expect that the rate of the generation
of new concepts, which should be controllable, can be such that there
would be stability periods, long enough
to learn the new probabilities, both to predict well and to be
predicted well. Still, we strive for predictors that are
fast in adapting to such intermittent patterns of change.






We plan to further evaluate and develop the \smas
within the \pgs framework,
in particular under longer time scales and as
%
the concept
generation and interpretation techniques are advanced,
and we seek to better understand the interaction of the various system
components.  We touched on a variety of directions in the
the paper. For instance,
it may be fruitful to take into account item (predictand) rewards when
rewards are available,
and this may lead to different and interesting design changes in the
prediction algorithms, or how we evaluate.


\section*{Acknowledgments}
\label{sec:acks}

Many thanks to Jana Radhkrishinan for granting the freedom conducive
to this work at Cisco, and to Tom Dean's SAIRG reading group, including Brian
Burns, Reza Eghbali, Gene Lewis, and Justin Wang, for discussions and
valuable pointers and feedback: we hope to crack portions of the
hippocampal and frontal neocortex code and loops!  Many thanks to
James Tee for his assistance, and discussions on online learning and
state estimation. I am grateful to John Bowman for providing me the
valuable pointer and a proof sketch based on Rao-Blackwellization.


\bibliographystyle{plain}  


\bibliography{global}


\vspace*{3in}


\appendix


\vspace*{-1in}
\section{Further Material for the Evaluation Section}
\label{sec:app_eval}

In this section, we provide proofs and further details for the
evaluation section \ref{sec:eval}. We begin with quadratic loss and
its insufficient sensitivity (to estimating low probabilities), and we
develop properties that point to the near propriety of \llns() in \sec
\ref{sec:np}, including extensions of \kl() to \sds and how scaling
and shifting \sds affects optimality of \sds under \kl \ scoring. \sec
\ref{sec:alts} discusses a few alternatives that we considered for
evaluating sequence prediction when using \logloss and handling noise
(\nsn) items.



\subsection{The Quadratic Loss, and its Insensitivity to \pr Ratios}
\label{sec:brier}





The Quadratic loss (Brier score) of a candidate \sdn, \bl, is defined as: 
\begin{align}
\hspace*{-.35in}  \label{eq:brier1}
  \bl(\Q|\P) := \expdb{o \sim \P}{\bitem(\Q, o)}, \mbox{ where \ } \bitem(\Q, o) :=
  \sum_{i\in \I} (\delta_{i,o} - \Q(i))^2 , 
\end{align}


where $\delta_{i,o}$ denotes the Kronecker delta, \ie $\delta_{i,o}=1$
when $o=i$ and 0 otherwise ($\delta_{i,o}=0$ for $i\ne o$) and an
equivalent expression is $\bitem(\Q, o)=(1-\Q(o))^2 + \sum_{i \in \I,
  i \ne o} \Q(i)^2$.  Note that at each time point $t$, to compute the
value of \bitem() \ (the loss or cost over an observation), we go
over all the items in $\I$, and we have $0\le \bitem(\Q, o) \le 2.0$.
One can view the scoring rule \bitem() as taking the (squared)
Euclidean distance between two vectors: the {\em Kronecker vector},
\ie the vector with Kronecker delta entries (all 0, except the
dimension corresponding to observed item $o$), and the probability
vector corresponding to the \sd $\Q$.





\co{
  
Note that at each time point $t$, to compute \bitem \ (the Brier loss
or cost over an observation), we go over all the items in $\I$.
As presented in \cite{selten98}, \bitem \ can also be written as
follows, which better shows its connection to Euclidean distance
discussed next:

\begin{align}
  \label{eq:brier2}
  \bitem(\Q, o) := \sum_{i\in \I} (\delta_{i,o} - \Q(i))^2 ,
\end{align}


where $\delta_{i,\oat{o}}$ is the Kronecker delta, \ie
$\delta_{i,\oat{o}}=1$ when $\oat{o}=i$ and $\delta_{i,\oat{o}}=0$
when $\oat{o}\ne i$ (maybe the defn of Kronecker delta is moved
earlier..)..
}

In particular, for instance in \cite{selten98}, 
when both $\P$ and $\Q$ are \disn, it is established that
the \brier score is equivalent to (squared) Euclidean distance to the
true probability distribution, where the distance is defined as:
\begin{align*}
  \bdist(  \Q_1, \Q_2 ) := \sum_{i\in \I} (\Q_1(i) - \Q_2(i))^2
  \mbox{\ \ \ \ \ \ \ (defined for \sds $\Q_1$ and $\Q_2$)}
\end{align*}





The equivalence is in the following strong sense (extended to \sdsn): 

\begin{lemma} (sensitivity of \brier to the magnitude of \pr shifts only) 
Given \di $\P$ and \sd $\Q$, defined over the same finite set $\I$, then 
\hspace*{.5in} $\bl(\Q|\P) = \bdist(\Q, \P)$ 
\label{lem:quad_eq}
\end{lemma}

\begin{proof}
This property has been established when both are \dis (\ie when
$\sm{\Q}=1$) \cite{selten98}, and similarly can be established for \sds by
writing the expectation expressions, rearranging terms, and
simplifying. We give a proof by reducing to the \di case: when $\Q$ is
a strict \sdn, we can make a \di variant of $\Q$, call it $\Q'$, via
adding an extra element $j$ to $\I$ with remainder \pr
$\Q'(j)=\u{\Q}$ (note: $\P(j)=0$). Observing the following 3
relations establishes the result: 1) $\bdist(\P, \Q) = \bdist(\P, \Q')
- \Q'(j)^2$, 2) $\bl(\Q'|\P) = \bdist(\Q', \P)$ (as both are \disn),
and 3) $\bl(\Q'|\P) = \bl(\Q|\P) + \Q'(j)^2$, as with every draw from
$\P$ we incur the additional cost of $\Q'(j)^2$ compared to the cost
from $\Q$): by definition, we have, $\bl(\Q'|\P)=\sum_{i\in \I,
  i\ne j}\P(i)\left(\Q'(j)^2+(1-\P(i))^2+\sum_{u\in \I, u\ne i, u\ne
  j} \P(u)^2\right) = \sum_{i\in \I, i\ne j}\P(i)\Q'(j)^2 + \sum_{i\in
  \I, i\ne j}\P(i)\left((1-\P(i))^2+\sum_{u\in \I, u\ne i, u\ne j}
\P(u)^2\right) = \Q'(j)^2 + \bl(\Q|\P)$ (the last equality in part
follows from $\sum_{i\in \I, i\ne j}\P(i)=1$).
\end{proof}


Examining the distance version of the loss, we first note that \brier
has a desired property of symmetry when $\Q$ is also a \din, \ie
$\bl(\Q_1|\Q_2)=\bl(\Q_2|\Q_1)$ (for two \dis $\Q_1$ and $\Q_2$).  We
can also observe that \brier is only sensitive to the magnitude of
shifts in \prsn:


\begin{corollary}
  \label{quad_cor}
  (sensitivity of \brier to the magnitude of \pr shifts only) With \di $\P$
  and \sd $\Q$ (on the same items $\I$), let
  $\Delta_i:=\P(i)-\Q(i)$. Then $\bl(\Q|\P)=\sum_{i\in \I}
  \Delta_i^2$.
\end{corollary}

\brier is not sensitive to the size of the source (or destination) of
the original probabilities of items those quantities are transferred
from: it does not particularly matter if a positive \pr is reduced to
0 in going from $\P$ to $\Q$.  For instance, assume $\P=\{1$:$0.9$,
$2$:$0.1\}$. Consider the two candidate \sdn s $\Q_1=\{1$:$0.8$,
$2$:$0.1\}$, $\Q_2=\{1$:$0.9$, $2$:$0.0\}$ ($\Q_i$ differing from $\P$
only on item $i$). In terms of violating a deviation threshold (\sect
\ref{sec:eval}), $\Q_2$ violates all ratio thresholds on item 2, and
the \logloss (\sec \ref{sec:logloss2}) is rendered infinite on it,
while \sd $\Q_1$ has a relatively small violation. However, for both
cases $\Delta=0.1$, and we have $\bl(\Q_1|\P) = \bl(\Q_2|\P) = 0.1^2$,
and $\Q_1$ and $\Q_2$ would have similar empirical losses using the
\brier score.
\co{
Thus for a reference \di $\P$ and two candidate \sds $\Q_1$ and
$\Q_2$, with $\Q_i$ differing from $\P$ only on item $i$, and
furthermore $\Delta=\P(1)-\Q_1(1)=\P(2)-\Q_2(2)$ (same change in
magnitude, but on different dimensions), we then have
$\bl(\Q_1|\P)=\bl(\Q_2|\P)$ (\brier is indifferent, as long as
magnitude of change is the same), while for \loglossn, discussed next,
this depends (the losses can be very different), as \logloss is
sensitive to the magnitudes of $\P(1)$ and $\P(2)$ as well.
}

\co{We give a proof by reducing to the \di case: when $\Q$ is
a strict \sdn, we can make a \di variant of $\Q$, call it $\Q'$, via
adding an extra element $j$ to $\I$ with remainder \pr
$\Q'(j)=1-\sm{\Q}$ (note: $\P(j)=0$). Observing the following 3
relations establishes the result: 1) $\bdist(\P, \Q) = \bdist(\P, \Q')
- \Q'(j)^2$, 2) $\bl(\Q'|\P) = \bdist(\Q', \P)$ (as both are \disn),
and 3) $\bl(\Q'|\P) = \bl(\Q|\P) + \Q'(j)^2$ (as with every draw from
$\P$ we incur the additional\footnote{By definition, we have
$\bl(\Q'|\P)=\sum_{i\in \I, i\ne
  j}\P(i)\left(\Q'(j)^2+(1-\P(i))^2+\sum_{u\in \I, u\ne i, u\ne j}
\P(u)^2\right) = \sum_{i\in \I, i\ne j}\P(i)\Q'(j)^2 + \sum_{i\in \I,
  i\ne j}\P(i)\left((1-\P(i))^2+\sum_{u\in \I, u\ne i, u\ne j} \P(u)^2\right) =
\Q'(j)^2 + \bl(\Q|\P)$ (the last equality in part follows from
$\sum_{i\in \I, i\ne j}\P(i)=1$).} cost of $\Q'(j)^2$ compared to
the cost from $\Q$).
}









\co{
\begin{corollary}
  (sensitivity of \brier to the magnitude of \pr shifts only) With \di $\P$
  and \sd $\Q$ (on the same items $\I$), let
  $\Delta_i=\P(i)-\Q(i)$. Then $\bl(\Q|\P)=\sum_{i\in \I}
  \Delta_i^2$.
\end{corollary}
}
 \co{
\begin{prop}
  (sensitivity of \brier to the magnitude of a shift) Let $\P$ be a
  \din, and let \sd $\Q$ be identical to $\P$ except on one
  dimension (item) $j$ ($\forall i\in \I, i \ne j, \P(i)=\Q(i)$),
  and let $\Delta=\P(j)-\Q(j)$. Then $\bdist(\P,
  \Q)=\Delta^2$.
\end{prop}
}





Compared to the \loglossn, the Brier score can be easier to work with,
as there is no possibility of ``explosion'', \ie unbounded values (see
\sect \ref{sec:logloss2}). However, in our application we are
interested in probabilities that can widely vary, for instance,
spanning two orders of magnitude ($\P(i) \ge 0.1$ for some $i$, and
$\P(j)$ near $0.01$ for other items),
and as mentioned above, different items are associated with different
rewards and we are interested in optimizing expected rewards. Thus
confusing a one-in-twenty event with a substantially lower probability
event, such as a zero probability event, can incur considerable
underperformance (depending on the rewards associated with the items).
The utility of a loss depends on the application. For instance, for
clustering \disn, symmetry can be important, and thus quadratic loss
may be preferable over \loglossn. In our prior work, in an
application where events were binary, and \prs were concentrated near
0.5 (outcomes of two-team professional sports games), quadratic loss
was adequate \cite{experts_uai}.



\co{
This property has been established when both $\P$ and $\Q$ are \dis (\ie when
$\sm{\Q}=1$) \cite{selten98}, and similarly can be established by
writing the expectation expressions, rearranging terms, and
simplifying (proof provided in the Appendix \ref{app:quad}).

}



\co{
The above property was described in terms of a subdistribution
$\Q$. A similar property holds when $\Q$ is a proper
distribution wherein one dimension, say $j$, loses a probability, and
another $k$, gains it: $\Q(j) = \P(j)-\Delta$, and $\Q(k) =
\P(k)+\Delta$. The distance then becomes $2\Delta^2$. Finally, with
$\Delta_i$ denoting the difference in each dimension (item), the
\brier score is $\sum_i \Delta_i^2$.
}

\co{
\begin{prop}
  (sensitivity of \brier to the magnitude of the shifts) Let $\P$ be
  the true $\di$ and $\Delta \ge 0$ be the total deductions over $i
  \in \I$ in going from $\di$ to $\Q$. Then $\bdist(\P,
  \Q)=\Delta^2$.
\end{prop}
}
  

\co{
\label{app:quad}

\begin{lemma*} (sensitivity of \brier to the magnitude of \pr shifts only) 
Given \di $\P$ and \sd $\Q$, defined over the same finite set $\I$, \\ \\
\hspace*{.5in} $\bl(\Q|\P) = \bdist(\Q, \P)$ 
\end{lemma*}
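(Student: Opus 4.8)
The plan is to reduce the statement for a semi-distribution $\Q$ to the classical equality for two genuine distributions, following the outline already sketched in the text. If $\Q$ is itself a distribution there is nothing to prove, so assume $\Q$ is strict, $\sm{\Q}<1$. Since supports are finite, I adjoin a fresh item $j$ to $\I$ with $\P(j)=\Q(j)=0$ (this alters neither $\bl$ nor $\bdist$) and set $\Q'(i):=\Q(i)$ for $i\ne j$ and $\Q'(j):=1-\sm{\Q}>0$, so that $\Q'$ is a distribution. I then want three bookkeeping identities: (i) $\bdist(\Q,\P)=\bdist(\Q',\P)-\Q'(j)^2$, because the two sums agree coordinatewise off $j$ while on $j$ we compare $(\Q(j)-\P(j))^2=0$ with $(\Q'(j)-\P(j))^2=\Q'(j)^2$; (ii) $\bl(\Q'|\P)=\bl(\Q|\P)+\Q'(j)^2$, because every $o$ drawn from $\P$ satisfies $o\ne j$, hence $\bitem(\Q',o)=\bitem(\Q,o)+\Q'(j)^2$ pointwise and the shift survives the expectation; and (iii) the distribution-case identity $\bl(\Q'|\P)=\bdist(\Q',\P)$. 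Chaining (ii), (iii), (i) gives $\bl(\Q|\P)=\bl(\Q'|\P)-\Q'(j)^2=\bdist(\Q',\P)-\Q'(j)^2=\bdist(\Q,\P)$, the auxiliary $\Q'(j)^2$ cancelling exactly.

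For step (iii) the approach is the routine expectation expansion: with $\bitem(\Q',o)=\sum_{i\in\I}(\delta_{i,o}-\Q'(i))^2=1-2\Q'(o)+\sum_i\Q'(i)^2$, taking $\expdb{o\sim\P}{\cdot}$ and using $\sum_o\P(o)=1$ yields $\bl(\Q'|\P)=1-2\sum_i\P(i)\Q'(i)+\sum_i\Q'(i)^2$, which is to be matched against $\bdist(\Q',\P)=\sum_i\Q'(i)^2-2\sum_i\P(i)\Q'(i)+\sum_i\P(i)^2$. This is precisely the equivalence established for the quadratic scoring rule in \cite{selten98}, and it is the only non-mechanical ingredient; everything in the first paragraph is pure arithmetic.

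The place that needs care — and the main obstacle — is the exact form of (iii): the expansion above gives $\bl(\Q'|\P)=\bdist(\Q',\P)+\bigl(1-\sum_i\P(i)^2\bigr)$, so the quadratic loss and the quadratic distance differ by the term $1-\sum_i\P(i)^2=\bl(\P|\P)$, the irreducible minimum of the Brier loss (the Gini/collision impurity of $\P$). Since this term does not depend on $\Q$, the lemma is to be read — as in Selten's treatment — as the statement that $\bl(\cdot|\P)$ and $\bdist(\cdot,\P)$ have the same $\Q$-dependence, equivalently that $\bl(\Q|\P)-\bl(\P|\P)=\bdist(\Q,\P)$, which is exactly what the lemma's title ``sensitivity to the magnitude of \pr shifts only'' and the downstream Corollary \ref{quad_cor} require. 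The reduction of the first paragraph is robust to this: the offset is $\Q$-independent and the cancellation of $\Q'(j)^2$ is exact, so no additional distortion is introduced and the semi-distribution case is literally no harder than the distribution case.
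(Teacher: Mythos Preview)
Your reduction --- adjoin a fresh coordinate $j$, pad $\Q$ to a distribution $\Q'$, and chain the three identities (i), (ii), (iii) --- is exactly the paper's argument, line for line. There is nothing to add on that front.

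Where you diverge from the paper is that you actually check step (iii) and correctly find the offset: for two genuine distributions $\P,\Q'$ one has
\[
\bl(\Q'|\P)=\bdist(\Q',\P)+\bigl(1-\textstyle\sum_i \P(i)^2\bigr),
\]
not literal equality. This is a genuine observation, and it applies equally to the paper's own proof, which simply cites Selten for step (2) without noting the constant. A one-line sanity check confirms it: take $\Q=\P=\{1{:}0.5,\,2{:}0.5\}$; then $\bdist(\Q,\P)=0$ but $\bl(\P|\P)=0.5$. So the lemma as literally written is false.

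Your proposed reading --- that the lemma is the statement $\bl(\Q|\P)-\bl(\P|\P)=\bdist(\Q,\P)$, i.e.\ equality of the $\Q$-dependent parts --- is the correct one, and it is what Selten actually proves and what Corollary~\ref{quad_cor} and the surrounding discussion use. Your reduction transports this corrected identity from the distribution case to the semi-distribution case with no further loss, since the offset $1-\sum_i\P(i)^2$ is the same on both sides and the $\Q'(j)^2$ cancellation is exact. In short: your approach matches the paper's, and you have additionally caught a misstatement that the paper's proof inherits.
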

}

\subsection{The Sensitivity of \logloss}

The following corollary to Lemma \ref{lem:ll_kl} (on the connection of
\logloss to relative entropy) highlights and quantifies the
sensitivity (or bias) of \logloss towards larger \prs (where, without
loss of generality, we are using items, or dimensions, 1 and 2):

\vspace*{.2in}
\begin{corollary*} (\llsp is much more sensitive to relative changes in larger \prsn)
  Let $\P$ be a \di over two or more items ($\I=\{1,2,\cdots\}$,
  $|\I|\ge 2$), with $\P(1) > \P(2) > 0$, and let the multiple
  $m=\frac{\P(1)}{\P(2)}$ (thus $m > 1$).  Consider two \sdn s $\Q_1$
  and $\Q_2$, with $\Q_1$ differing with $\P$ on item 1 only ($\forall
  i\in I$ and $i \ne 1$, $\Q(i)=\P(i)$), and assume moreover $\P(1) >
  \Q_1(1) > 0$, and let $m_1=\frac{\P(1)}{\Q_1(1)}$ (thus, $m_1 >
  1$). Similarly, assume $\Q_2$ differs with $\P$ on item 2 only, and
  that $\P(2) > \Q_2(2) > 0$, and let $m_2=\frac{\P(2)}{\Q_2(2)}$.  We
  have $\lln(\Q_2|\P) < \lln(\Q_1|\P)$ for any $m_2 < m_1^{m}$.
\end{corollary*}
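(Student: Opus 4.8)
The plan is to use Lemma \ref{lem:ll_kl}, which expresses $\lln(\Q|\P) = \ent(\P) + \kl(\P\,||\,\Q)$. Since the entropy term $\ent(\P)$ is a fixed offset depending only on $\P$ (which is common to both $\Q_1$ and $\Q_2$), comparing $\lln(\Q_1|\P)$ with $\lln(\Q_2|\P)$ reduces to comparing $\kl(\P\,||\,\Q_1)$ with $\kl(\P\,||\,\Q_2)$. So the first step is to subtract the two losses and cancel the shared entropy term, leaving $\lln(\Q_1|\P) - \lln(\Q_2|\P) = \kl(\P\,||\,\Q_1) - \kl(\P\,||\,\Q_2)$.

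Next I would expand each KL divergence using its definition, $\kl(\P\,||\,\Q) = \sum_{i\in\I}\P(i)\log\frac{\P(i)}{\Q(i)}$. The key observation is that $\Q_1$ agrees with $\P$ on every item except item $1$, and $\Q_2$ agrees with $\P$ on every item except item $2$. Hence in the difference $\kl(\P\,||\,\Q_1) - \kl(\P\,||\,\Q_2)$, all terms for $i\notin\{1,2\}$ cancel: the only surviving contributions are $\P(1)\log\frac{\P(1)}{\Q_1(1)}$ from the first divergence (the item-$1$ term of the second divergence is $\P(1)\log\frac{\P(1)}{\P(1)}=0$) and $-\P(2)\log\frac{\P(2)}{\Q_2(2)}$ from the second divergence (similarly the item-$2$ term of the first divergence vanishes). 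Thus
\begin{align*}
\lln(\Q_1|\P) - \lln(\Q_2|\P) = \P(1)\log m_1 - \P(2)\log m_2,
\end{align*}
using the definitions $m_1 = \P(1)/\Q_1(1)$ and $m_2 = \P(2)/\Q_2(2)$.

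Finally I would divide through by $\P(2) > 0$ and use $m = \P(1)/\P(2)$, so that the sign of $\lln(\Q_1|\P) - \lln(\Q_2|\P)$ equals the sign of $m\log m_1 - \log m_2 = \log m_1^{m} - \log m_2$. Since $\log$ is increasing, this is positive precisely when $m_1^{m} > m_2$, i.e.\ $m_2 < m_1^{m}$, which gives $\lln(\Q_2|\P) < \lln(\Q_1|\P)$ as claimed. I do not anticipate a real obstacle here — the argument is a short cancellation computation. The only point requiring a little care is the cancellation of the off-support/agreeing terms in the two KL sums (making sure the $i=1$ term of $\kl(\P||\Q_2)$ and the $i=2$ term of $\kl(\P||\Q_1)$ are genuinely zero because $\Q_2(1)=\P(1)$ and $\Q_1(2)=\P(2)$), and noting that all logarithms and the divergences are finite since $\Q_1(1),\Q_2(2)>0$ and $\P(1),\P(2)>0$. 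Everything rests on Lemma \ref{lem:ll_kl}, which the excerpt already establishes for the case where $\Q$ is a semi-distribution.
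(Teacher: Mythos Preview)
Your proposal is correct and follows essentially the same approach as the paper: subtract the two losses, use Lemma~\ref{lem:ll_kl} so that the entropy terms cancel, observe that all KL terms except the item-$1$ term of $\kl(\P\,||\,\Q_1)$ and the item-$2$ term of $\kl(\P\,||\,\Q_2)$ vanish, obtain $\P(1)\log m_1 - \P(2)\log m_2$, and conclude by writing this as $\P(2)(\log m_1^{m}-\log m_2)$.
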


\begin{proof}  
We write the difference $\Delta$ in the losses and simplify:
$\Delta=\lln(\Q_1|\P) - \lln(\Q_2|\P) = \P(1)\log\frac{\P(1)}{Q_1(1)}
- \P(2)\log\frac{\P(2)}{Q_2(2)}$ (all other terms are 0, \eg
$\Q_1(2)=\P(2)$, and the entropy terms cancel). We thus have $\Delta =
m\P(2)\log(m_1)-\P(2)\log(m_2)$, or we have $\Delta > 0$ as long as
$\log(m_1^m) -\log(m_2) > 0$, or whenever $m_1^m > m_2$. \end{proof}



\subsection{On the (Near) Propriety of \llns() }
\label{sec:np} 

In the iid generation setting based on \sd $\P$, when using a perfect
\nsm and using \fcap() parameterized by $\pns = \pmin > 0$, we
explore and establish how KL() comparisons change
under various transformations.  Under a few assumptions such as small
$\pns$, we describe when $\P$ remains optimal or near optimal, and for
any $\Q$ scoring near or better than $\P$ (using \llns()), when we can
expect that such a \sd must remain close to $\P$.  See \fig
\ref{fig:distortion}.  We note that there are two related but distinct
questions, when using \llns(): (1) how far the loss of $\P$ is from
the minimum loss $\llns(\Q^*|\P)$ (or {\em {\bf distortion} in the
  loss}), and (2) how far can any minimizer $\Q^*$ be from $\P$ (item
distortion).
When comparing different candidate \sdsn, we care about distortion in
items (\ie the second sense\footnote{As comparisons converge onto or
prefer a minimizer $\Q^*$, and we want to see how different a $\Q^*$
can be from the true $\P$.}).  There can be multiple minimizers and we
can have no distortion in loss but some distortion in items. We will
focus on distortion on items. There are 4 causes or sources of
distortion: filtering, capping, bounding the loss, and use of an
imperfect \nsmn. At the end of this section, we briefly discuss the
last cause, that of an imperfect practical \nsm (\sec
\ref{sec:practical}).


\begin{figure}[t]
\hspace{-0.3cm}  \subfloat{{\includegraphics[height=4cm,width=14cm]{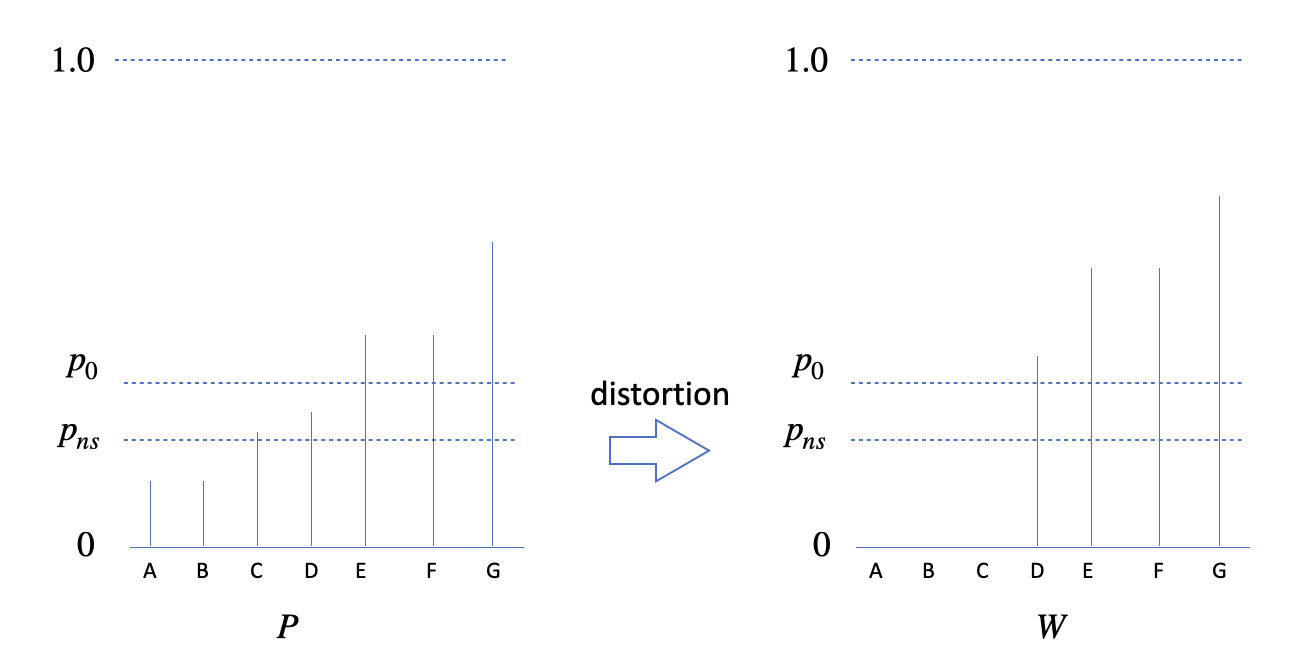} }}\\
  \caption{When using $\fcap()$ and \llnsr(), {\em distortion} can
    occur, \ie a \sd $\Q$ may obtain a lower loss than the true
    generating \sd $\P$ (due to filtering, scaling, and bounding the
    loss).  Sections \ref{sec:np} through \ref{sec:practical} shed
    light on the conditions for distortion and its extent. In
    particular, Lemma \ref{lem:all_or_nothing} characterizes the
    properties of a minimizing \sd $\Q^*$: smaller items in $\P$
    (those with lower \prn, such as $A$, $B$ and $C$ above) may be
    zeroed, and have their \pr mass transferred (spread,
    proportionately) onto higher items in an optimal $\Q^*$. }
\label{fig:distortion}
\end{figure}


Below we explain that \llns() is equivalent to a {\em bounded}
quasi-divergence denoted $\klns$(), similar to plain \logloss being
equivalent to plain \kl \ divergence: the quasi-divergence $\klns$() is based
on transformations of \sds (scaling and filtering, augmenting). In
particular, first we define {\em augmentation} of an \sd to a
corresponding \di that has one extra item with all remaining \pr
mass. This allows us to draw from $\P$ (as if it were \din, in
particular for the purpose of evaluating \llns(): We show computing
\llns($\Q|\P$) is equivalent to computing a bounded \kl() on the
augmented versions of $\P$ and \fcap($\Q$) (Lemma \ref{lem:klns}).

\co{
In the iid generation setting based on \sd $\P$, when using a perfect
\nsm and using \fcap() parameterized by $\pns$ and $\pmin$, $\P$ is
optimal (scores lowest) for \llns() in several cases detailed below,
and otherwise, an optimal $\Q\ne \P$ is close to $\P$ in a certain
sense, when $\pns$ and $\pmin$ are small. At the end of this
subsection, we  briefly discuss the last case of an imperfect practical \nsmn.

Below we show that \llns() is equivalent to a \kl() divergence, as in
plain \logloss but the \kl() is based on transformations of \sds
(scaling and filtering, augmenting). In particular, first we define
{\em augmentation} of an \sd to a corresponding \di that has one extra
item with all remaining \pr mass. This allows us to draw from $\P$ (as
if it were \din, in particular for the purpose of evaluating \llns():
We show computing \llns($\Q|\P$) is equivalent to computing \kl() on
the augmented versions, of $\P$ and \fcap($\Q$).
}


\vspace*{.1in}
\begin{defn}\label{augment} Definitions of augmenting an \sd to a \din, and bounded \klsp
  corresponding to \llns():
  \begin{itemize} 
  \item Given a non-empty \sd $\P$ defined on $\I=\{1, \cdots, k\},
    k\ge 1$, its augmentation (operation), denoted $\dix{\P}$, is a
    corresponding \di $\P'$, where $\P'$ is defined on $\I'$,
    $\I'=\I\cup\{0\}$, $\P'(0)=\u{\P}$, and $\P'(i)=\P(i)$,
    $\forall i\in I$.
  \item {\em({\bf Bounded \kl})} For non-empty \sd $\P$ and \sd $\Q$:
    \begin{align*}
\hspace*{-.65in}    \klb(\P||\Q) := \sum_{i\in \sup(\P)}
    \P(i)\log(\frac{\P(i)}{\max(\Q(i), \pns)})\ \ \ \ \mbox{(where }\pns \in
      [0,1), \mbox{bounded when } \pns > 0).
    \end{align*}
  \item {\em({\bf \kl() for \llns})} $\klns(\P||\Q) := \klb(\dix{\P} ||
    \dix{\fcap(\Q)})$ \ \ (where the $\pns$ of 
    \fcap() is used in $\klb()$).
  \end{itemize}
\end{defn} 

For a \di $\P$, $\dix{P} \equiv \P$ in the sense that all positive
\prs match,\footnote{We are abusing notation in using $\dix{}$ to also
denote the augmentation operator applied to a \sd $\P$ to make a
distribution, in addition to the related notion of referring to a
distribution.} and for two \sds $\P$ and $\Q$, $\P\ne\Q
\Leftrightarrow \dix{\P}\ne\dix{\Q}$. Unlike $\kl()$, $\klb()$ is
bounded when $\pns > 0$ (no greater than $-\log(\pns)$), and both
$\klb(\P||\Q)$ and $\klns(\P||\Q)$ can be negative even when both $\P$
and $\Q$ are \dis (the minimum is not 0 any more), and so they are not
divergences, but we argue still useful for scoring and comparisons. In
particular, $\klns(\P||\P)$ is not necessarily 0, as $\fcap()$ used on
the 2nd parameter of $\klns$, here $\P$, can alter $\P$, \ie can
filter and scale it, while $\P$ provided as first parameter is only
augmented. Furthermore, $\klb(\P||\P)$ can grow unbounded in the
negative direction: imagine $\P$ being the uniform distribution with
$k$ items, each item with \pr $\frac{1}{k}$ contributing
$\frac{1}{k}\log(\frac{1}{k\pns})$, then $\klb(\P||\P)=-\log(k\pns)$
(with $k\pns > 1$), and with $k$ growing, $\klb(\P||\P)\ra
-\infty$. The same condition holds for $\klns()$ ($\fcap()$ does not
change this), but as the lemma below shows, the addition of the
positive entropy counteracts this growth and \llns() is always in $[0,
  -\log(\pns)]$.

We also note that there is another distinct way of obtaining a \di
from a strict \sd $\P$, that of scaling it, to $\alpha\P$, where
$\alpha=\frac{1}{\sm{\P}}$. Both operations are useful, and used
below.






  
\vspace*{.1in}
\begin{lemma}\label{lem:klns}
  For any two \sds $\P$ and $\Q$ defined on $\I=\{1, \cdots, k \}$,
  where $\P$ is non-empty, and using a perfect \nsm, $isNS_{\P}$(), wrt
  to $\P$:
  \begin{align}
    \nonumber
    \llns(\Q|\P) = \ent(\dix{\P}) +   \klns(\P||\Q).
  \end{align}
\end{lemma}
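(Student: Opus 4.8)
The plan is to evaluate the expectation $\llns(\Q|\P) = \expdb{o\sim\P}{\llnsr(o,\Q,\markns(o))}$ directly, breaking it up according to the two-part definition of drawing from a non-empty sub-distribution, then to recognize the resulting quantity as the cross-entropy of the augmented distribution $\dix{\P}$ relative to $\dix{\fcap(\Q)}$, after which the claimed identity comes out of the standard ``cross-entropy $=$ entropy $+$ KL'' splitting, exactly as in the proof of Lemma~\ref{lem:ll_kl}. Write $\Q' := \fcap(\Q)$, $s := \sm{\P}\in(0,1]$, $s' := \sm{\Q'}$; recall from the definition of \fcap() that every positive entry of $\Q'$ is $\ge\pmin$ and that $s'\le 1-\pns$, so $\dix{\Q'}(0) = 1-s' \ge \pns > 0$, and that $\dix{\P}$ has $\dix{\P}(0)=1-s$ and $\dix{\P}(i)=\P(i)$ for $i\in\sup(\P)$.

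First I would expand the expectation. By the definition of $o\sim\P$, with total probability $s$ the outcome is a ``salient'' draw, $o=i$ for some $i\in\sup(\P)$, occurring unconditionally with probability $\P(i)$, and with the complementary probability $1-s$ it is a fresh unique noise id. Now the role of the \emph{perfect} marker: for a salient draw $i\in\sup(\P)$, $\markns(i)$ is false, and since \fcap() forces $\Q'(i)\in\{0\}\cup[\pmin,1]$, the routine \llnsr() returns $-\log\Q'(i) = -\log\dix{\Q'}(i)$ whenever $\Q'(i)\ge\pmin$, contributing $-\dix{\P}(i)\log\dix{\Q'}(i)$ to the sum. For a noise draw, $o\notin\sup(\P)$ so $\markns(o)$ is true, and $o$ being a fresh id is absent from $\Q'$, so \llnsr() takes the ``agreement'' branch and returns $-\log(1-s') = -\log\dix{\Q'}(0)$, contributing $-\dix{\P}(0)\log\dix{\Q'}(0)$. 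Collecting terms, $\llns(\Q|\P) = -\sum_{j\in\sup(\dix{\P})}\dix{\P}(j)\log\dix{\Q'}(j)$.

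It then only remains to add and subtract $\ent(\dix{\P})$ in that expression: it equals $\ent(\dix{\P}) + \sum_{j}\dix{\P}(j)\log\tfrac{\dix{\P}(j)}{\dix{\Q'}(j)} = \ent(\dix{\P}) + \kl(\dix{\P}\,||\,\dix{\Q'}) = \ent(\dix{\P}) + \klns(\P||\Q)$, the last step being the definition of $\klns$. I expect the main obstacle to be the bookkeeping of the middle paragraph --- checking that on \emph{every} draw the branch of \llnsr() that fires is matched to the right term of the augmented cross-entropy; this is precisely where perfection of the marker is used (an imperfect marker could route a salient item to the ``disagreement'' branch, giving $-\log\pns$, or route a noise item to the ``salient'' branch). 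The one genuinely delicate case is when \fcap() drops some $i\in\sup(\P)$, so $\Q'(i)=0$: then \llnsr() returns the clamped value $-\log\pns$ on those draws whereas the matching $\kl$ term is $+\infty$. The clean route is to split on whether $\sup(\P)\subseteq\sup(\fcap(\Q))$ (equivalently $\klns(\P||\Q)<\infty$): in the covering case the matching is exact as above, and the non-covering case is the exception one must isolate and discuss separately.
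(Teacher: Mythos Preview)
Your approach is essentially identical to the paper's: expand the expectation by splitting into salient draws ($i\in\sup(\P)$, contributing $-\P(i)\log\Q'(i)$) and noise draws (contributing $-(1-\sm{\P})\log(1-\sm{\Q'})$), recognise the result as the cross-entropy of $\dix{\P}$ against $\dix{\fcap(\Q)}$, and then add and subtract $\ent(\dix{\P})$ to split off the $\klns$ term. The paper carries out exactly this computation, with no additional ingredients.

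Your final paragraph actually goes beyond the paper's proof. The paper writes the salient-branch contribution as $-\sum_{i\in\I}\P(i)\log\Q'(i)$ without comment, implicitly treating the identity as holding in the extended sense (both sides $+\infty$) when $\fcap(\Q)$ drops some $i\in\sup(\P)$. You correctly observe that \llnsr() in that case returns the clamped value $-\log\pns$, so the left-hand side is in fact finite while $\klns(\P||\Q)=+\infty$; the identity as literally stated fails there. Your suggestion to isolate the covering case $\sup(\P)\subseteq\sup(\fcap(\Q))$ is the right way to make the statement precise, and the paper does not address this.
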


\begin{proof}
  We first explain the different parts of $\llns(\Q|\P)$, \ie
  $\expdb{o\sim \P}{\llnsr(o, \Q, isNS_{\P}(o))}$, to show that it is
  equivalent to comparing two augmented \dis under \kl().

  Let $\Q'=\fcap(\Q)$ and $\P'=\dix{\P}$. An item $o\sim\P$ (drawn
  from $\P$), is salient with probability $\sm{\P}$ ($o \in
  \sup(\P)$), and otherwise is marked \ns by $isNS_{\P}()$ (perfect
  \nsmn), and it is important to note that $o$ does not occur in $\I$
  when marked \ns ($\Q'(o) = \Q(o) = \P(o) = 0$ when $isNS_{\P}(o)$ is
  true), with our uniqueness assumption when generating \ns
  items. When we use \llnsr(), the score in the \ns case is
  $-\log(\u{\Q'})$. This case occurs $\u{\P}$ of the time, and a
  salient item $i$, $i \in \I$, occurs $\P(i)$ of the time, thus
  $\llns(\Q|\P) = -\u{\P}\log(\max(\u{\Q'}, \pns)) -
  \sum_{i\in I}\P(i)\log(\max(\Q'(i), \pns))$.  We have that
  $\u{\Q'} \ge \pns$ (from the scaling down in \fcap()), or
  $\max(\u{\Q'}, \pns)=\u{\Q'}$.  Adding and subtracting
  $\ent(\P')$ establishes the equivalence:
  \begin{align*}
   \hspace*{-.6in} \llns(\Q|\P) & = -\u{\P}\log(\u{\Q'}) - \sum_{i\in
    I}\P(i)\log(\max(\Q'(i), \pns)) - \ent(\P') + \ent(\P') \\
    & = -\u{\P}\log(\u{\Q'}) - \sum_{i\in
    I}\P(i)\log(\max(\Q'(i), \pns)) + \sum_{i\in
    I'}\P'(i)\log(\P'(i)) + \ent(\P') \\
    & = \u{\P}\log\frac{\u{\P}}{\u{\Q'}} + \sum_{i\in
    I}\P(i)\log\frac{\P(i)}{\max(\Q'(i), \pns)}  + \ent(\P') \\
    & = \kl(\P'||\dix{\Q'})  + \ent(\P') = \klns(\P||\Q)  + \ent(\dix{\P}).
  \end{align*}
  The 2nd to 3rd line follow from $\ent(\P')=-\sum_{i\in
    I'}\P'(i)\log(\P'(i))=-\u{\P}\log(\u{\P}) - \sum_{i\in
    I}\P(i)\log(P(i))$). \end{proof}


\co{
  This data generation and evaluation of the expectation,
  $\expdb{o\sim \P}{\llnsr(o, \Q, isNS(o))}$, is therefore equivalent
  to computing \logloss when drawing from the \di augmentation of
  $\P$, $\P'=\dix{\P}$.  For any \sd $\Q'=$\fcap($\Q$), where we have
  $\Q'(0)=0$, similarly we get the corresponding $\Q''=\dix{\Q'}$
  defined on $\I'$, and we have the equivalence $\expdb{o\sim
    \P}{\llnsr(o, \Q, isNS(o))} \equiv \kl(\P'||\Q'')$.  The lowest
  loss is achieved with \di $\P'$ itself (when $\Q''=\P'$),
  and the lowest loss is the entropy $-\sum_{i \in
    \I'}\P'(i)\log(\P'(i))$.
}

\co{
  
The following (near) propriety properties follow as corollaries of
Lemma \ref{lem:klns}:

\vspace*{.1in}
\begin{lemma*} Given a non-empty \sd $\P$ generating sequences,  using a perfect
  \nsmn, and using \fcap() to constrain candidate \sdsn, with $\pns >
  0$ but assuming no \prs from $\P$ are dropped when applying \fcap()
  to $\P$ (\eg $\pmin=0$, or more generally $\pmin < \min(\P)$ even
  after possible scaling within \fcap()):
  \begin{enumerate}
  \item If $\sm{\P} \le 1-\pns$, then $\P$ is a minimizer of \llns(),
    and it is the unique minimizer when $\pmin=0$ and $\sm{\P} < 1-\pns$.
  \item If $\sm{\P} > 1-\pns$, then $\klns(\P||\P) \le -\log(1-\pns)$.
  \end{enumerate}
\end{lemma*}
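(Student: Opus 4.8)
The plan is to read both parts off Lemma~\ref{lem:klns}, which gives $\llns(\Q|\P) = \ent(\dix{\P}) + \klns(\P||\Q)$ with the entropy term independent of $\Q$ and $\klns(\P||\Q) = \kl(\dix{\P}\,||\,\dix{\fcap(\Q)}) \ge 0$. Thus minimizing $\llns(\cdot|\P)$ over candidate \sdn s is the same as minimizing this nonnegative KL term, and the only thing that needs attention is what $\fcap(\Q)$ can look like, and in particular what $\fcap(\P)$ is, under the stated no-drop hypothesis on $\fcap$.

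For part~1, suppose $\sm{\P}\le 1-\pns$. By the no-drop hypothesis the filtering step of $\fcap$ removes nothing from $\P$, and since $\sm{\P}$ is already $\le 1-\pns$ there is no capping/scaling either, so $\fcap(\P)=\P$. Hence the candidate $\Q=\P$ achieves $\klns(\P||\P)=\kl(\dix{\P}\,||\,\dix{\P})=0$, the minimum, so $\P$ is a minimizer. For uniqueness when $\pmin=0$ and $\sm{\P}<1-\pns$: any minimizer $\Q$ satisfies $\kl(\dix{\P}\,||\,\dix{\fcap(\Q)})=0$, and since KL vanishes only between identical distributions (both $\dix{\P}$ and $\dix{\fcap(\Q)}$ being honest distributions) this forces $\dix{\fcap(\Q)}=\dix{\P}$, i.e.\ $\fcap(\Q)=\P$ (matching all positive probabilities, and automatically the leftover reservoir mass). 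With $\pmin=0$ the filtering step is a no-op, so $\fcap(\Q)$ equals $\Q$ if $\sm{\Q}\le 1-\pns$, and equals $\tfrac{1-\pns}{\sm{\Q}}\,\Q$ (a map with total mass exactly $1-\pns$) otherwise; the second case cannot equal $\P$ since $\sm{\P}<1-\pns$, so $\fcap(\Q)=\Q=\P$.

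For part~2, suppose $\sm{\P}>1-\pns$. Then $\fcap(\P)=\alpha\P$ with $\alpha=\tfrac{1-\pns}{\sm{\P}}\in(0,1)$, and the no-drop hypothesis — which here is precisely the assumption that no entry of $\P$ falls below $\pmin$ after this scaling — gives $\fcap(\P)=\alpha\P$ exactly. So $\dix{\fcap(\P)}=\dix{\alpha\P}$ puts mass $\pns$ on the reservoir item $0$ and $\alpha\P(i)$ on each $i\in\sup(\P)$, while $\dix{\P}$ puts $1-\sm{\P}$ and $\P(i)$. Expanding the KL, the terms over $i\in\sup(\P)$ collapse to $\bigl(\sum_i\P(i)\bigr)\log(1/\alpha)=\sm{\P}\log\tfrac{\sm{\P}}{1-\pns}$, yielding
\[
  \klns(\P||\P) \;=\; \sm{\P}\,\log\tfrac{\sm{\P}}{1-\pns} \;+\; (1-\sm{\P})\log\tfrac{1-\sm{\P}}{\pns},
\]
which is exactly the binary KL divergence $\kl\bigl(\mathrm{Bern}(\sm{\P})\,\|\,\mathrm{Bern}(1-\pns)\bigr)$. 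I then bound the two terms separately: since $\sm{\P}>1-\pns$ the first logarithm is nonnegative, so the first term is $\le\sm{\P}\log\tfrac{1}{1-\pns}\le\log\tfrac{1}{1-\pns}$ using $\sm{\P}\le1$; and the second term is $\le 0$ because $1-\sm{\P}<\pns$ (with the convention $0\log 0=0$ handling the boundary $\sm{\P}=1$). Adding gives $\klns(\P||\P)\le\log\tfrac{1}{1-\pns}=-\log(1-\pns)$, as claimed.

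The calculations are short, and there is no substantial obstacle. The only places that need care are the bookkeeping around the definition of $\fcap$ (that the no-drop hypothesis is invoked both before and after the internal scaling, and that with $\pmin=0$ filtering does nothing), the standard fact that a vanishing KL forces equality of the two distributions including their reservoir masses (so $\fcap(\Q)=\P$), and the $0\log 0=0$ convention in the degenerate case $\sm{\P}=1$ for part~2.
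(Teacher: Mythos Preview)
Your proof is correct and follows essentially the same approach as the paper. For part~1 the paper argues the uniqueness by contrapositive ($\Q\ne\P\Rightarrow\fcap(\Q)\ne\P$, splitting on support and on whether $\sm{\Q}<1-\pns$) while you argue it directly ($\fcap(\Q)=\P\Rightarrow\Q=\P$), but the content is identical; for part~2 the paper performs the same two-term bound (drop the reservoir term as nonpositive, bound the remaining term by $-\log\alpha\le-\log(1-\pns)$), and your recognition of the expression as a binary KL is a nice extra remark the paper does not make explicit.
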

}
\co{
\begin{proof} We let $\P'=\dix{\P}$, defined over $\I'=\{0\}\cup \I$.
  Proof of claim 1: In this case, we note that $\fcap(\P)=\P$ (because
  $\sm{\P} \le 1-\pns$ and the assumption that no \prs are dropped from
  $\P$), (thus, $\dix{\fcap(\P)} = \P'$), so setting $\Q$ to $\P$
  yields $\kl(\P'||\dix{\fcap(\P)}) = \kl(\P'||\P') = 0$ (the
  minimum). It is the unique  minimizer because when $\pmin=0$ and
  $\sm{\P} < 1-\pns$, we can show that if $\Q \ne \P$, $\fcap(\Q) \ne
  \P$, and thus we must have $\dix{\fcap(\Q)} \ne \P'$: first $\Q$
  must have an identical support set as $\P$, otherwise $\fcap(\Q)\ne
  \P$ ($\pmin=0$ and scaling alone does not change the support
  set). Second (with the same support), if $\sm{\Q} < 1-\pns$, it is
  not altered by \fcap, therefore $\Q'=\Q \ne \P$. Finally, if
  $\sm{\Q} \ge 1-\pns$, we must have $\sm{\fcap(\Q)}=1-\pns
  \Rightarrow$ $\fcap(\Q)\ne \P$.
  
  Proof of claim 2: Here $\sm{\P} > 1-\pns$, and for evaluating
  $\llns(\P|\P)$, $\P$ is scaled in $\fcap()$, \ie \fcap($\P$)$ =
  \alpha \P$, where $\alpha =\frac{1-\pns}{\sm{\P}}$ (note: $1-\pns
  \le \alpha < 1$), and
  $\llns(\P|\P)=\kl(\P'||\dix{\fcap(\P)})=\sum_{i \in \I} \P(i)\log
  \frac{\P(i)}{\alpha \P(i)} + (1-\sm{\P})\log\frac{1-\sm{\P}}{\pns}
  \le \sum_{i \in \I} \P(i)\log \frac{\P(i)}{\alpha \P(i)} \le -\log
  \alpha \le -\log (1-\pns) $. \end{proof}

We note that, for the first part above, when $\sm{\P}=1-\pns$, then
one can verify that for any $\Q=\alpha \P$, where $\alpha \in [1,
  \frac{1}{1-\pns}]$, $\Q$ also minimizes $\llns(\Q|\P)$, as
$\fcap(\Q)=\P$. However, the \pr of no item in such $\Q$ deviates more
than $\alpha \le \frac{1}{1-\pns}$ (\ie $\forall i,
\frac{\Q(i)}{\P(i)} \le \frac{1}{1-\pns})$, and with $\pns$ small (\eg
$\pns=0.01$), the deviations are therefore small. Similarly, for the
second claim, we observe that $\P$ scores near optimal, \ie has near 0
\kl() divergence, for small $\pns$ (no more than
$-\log(0.99)\approx0.01$ when $\pns=0.01$).

}

\co{
As an example where $\P$ may not minimize the loss in the setting of
claim 2, let $\pns=0.2$, $\P=\{$1:0.8, 2:0.2$\}$ (thus $\P$ is a \di
and $\dix{\P}=\P$), and $\Q=\{$1:1.0$\}$, then (the \prs of both $\P$
and $\Q$ are scaled by $1-\pns=0.8$, and
$\llns(\Q|\P)=-0.8\log((1.0)(0.8))-0.2\log(0.2)\approx 0.5$, while
$\llns(\P|\P)=-0.8\log((0.8)(0.8))-0.2\log((0.2)(0.8))\approx 0.72$,
in fact, term by term, $\llns(\Q|\P) < \llns(\P|\P)$. The next lemma
sheds further light on this scenario.
}


The following example describes a case when $\P$ may not minimize
$\llsn()$ even when $\P =$ \fcap($\P$) (\ie even when $\P$ is not
altered by \fcap()). Let the underlying $\P :=\{A$:0.78, $B$:0.02$\}$,
thus $0.2$ of the time, a noise item is generated. Then with
$\pmin=\oovprob=0.01$ in \fcap(), $\P$ scores at
$\llsn(\P|\P)=-0.78\log(0.78)-0.02\log(0.02)-0.2\log(0.20)\approx
0.594$, while $\Q_1=\{A$:0.78$\}$ and $\Q_2=\{A$:0.8$\}$ have a lower
loss:
$\llsn(\Q_1|\P)=-0.78\log(0.78)-0.02\log(0.01)-0.2\log(0.22)\approx
0.589$, and similarly
$\llsn(\Q_1|\P)=-0.78\log(0.8)-0.02\log(0.01)-0.2\log(0.20)\approx
0.588$. An intuitive reason is that when $\P(B)$ is sufficiently close
to $\pns$, then it is advantageous to move that mass to $A$ or to the
allocation to noise (or spread a portion on both), and the penalty
from this reduction (and transfer) is not as much as the reduced loss
(the gain) of a higher \pr for A or for the noise allotment. This can
be seen more clearly when $\P(B) \le \pns$ (\ie right at or below the
boundary). Section \ref{sec:all_or_none} develops this near boundary
property. We first need the (scaling) properties developed next.



\co{
we have
$\llns(\Q|\P) < \llns(\P|\P)$:
$\llns(\Q|\P)=-0.8\log((1.0)(0.8))-0.2\log(0.2)\approx 0.5$ (the 2nd
term is when item 2 is drawn, $2\not\in\Q$, for which $-\log(\pns)$ is
used), while
$\llns(\P|\P)=-0.8\log((0.8)(0.8))-0.2\log((0.2)(0.8))\approx 0.72$
(the \prs of both $\P$ and $\Q$ are scaled by $1-\pns=0.8$).
}



\subsubsection{Properties of \kl() on \sds (Under Scaling and Other Transformations)}
\label{sec:klprops}
We now show how plain \kl() comparisons (defined for \sds 
in Defn. \ref{def:kl}) are affected by scaling one or both \sdsn. This is
useful in seeing how \fcap(), with its scaling and filtering, can
alter the minimizer of $\klns()$, and to see when distortion can
happen when using $\klb()$.  Recall that, as in \sec
\ref{sec:scoring}, we assume that the set $\I$ over which both $\P$
and $\Q$ are defined is finite, \eg the union of the supports of both.
We have defined \kl() for sds $\P$ and $\Q$ (definition \ref{def:kl}),
and for the following lemma, we can extend the definition of \kl() to
non-negative valued $\P$ and $\Q$ (no constraint on the sum, unlike
plain \sdsn), or assume $\alpha$ is such that $\alpha\Q$ and
$\alpha\P$ remain a \sdn.

\begin{lemma}\label{lem:klscale}
  (plain \kl() under scalings) For any non-empty \sd $\P$ and \sds $\Q$, and any $\alpha > 0$:
  \setlength{\leftmargini}{1in} 
  \begin{enumerate}
  \item   $\kl(\P||\alpha \Q) = \kl(\P||\Q)+\log(\alpha^{-1})\sm{\P}$. 
    \item $\kl(\alpha\P||\Q) =  \alpha\kl(\P||\Q)+\alpha\log(\alpha)\sm{\P}$. 
    \item $\kl(\alpha\P||\alpha \Q) = \alpha\kl(\P||\Q).$
  \end{enumerate}
\end{lemma}
\begin{proof}
  The $\alpha$ multiplier comes out, in all cases, yielding a fixed
  offset for first 2 cases, and a positive multiplier for the 2nd and
  3rd cases:
\hspace*{-1.0cm}\vbox{\begin{align*}
   \kl(\P||\alpha \Q)&=\sum_{i\in \I}\P(i)\log(\frac{\P(i)}{\alpha\Q(i)})=\sum_{i\in
    \I}\P(i)\log(\frac{\P(i)}{\Q(i)}\frac{1}{\alpha})=\sum_{i\in
    \I}\P(i)(\log(\frac{\P(i)}{\Q(i)}) + \log(1/\alpha))   \\
  &=\sum_{i\in\I}\P(i)\log\frac{\P(i)}{\Q(i)}+
  \log(\alpha^{-1})\sum_{i\in\i}\P(i) = \kl(\P||\Q)+\log(\alpha^{-1})\sm{\P}. \\
  \kl(\alpha\P||\Q)&=\sum_{i\in \I}\alpha\P(i)\log(\frac{\alpha\P(i)}{\Q(i)})=
  \alpha\left(\sum_{i\in\I}\P(i)\log\frac{\P(i)}{\Q(i)}  + \log(\alpha)\sum_{i\in\I}\P(i) \right).\\
  \kl(\alpha\P||\alpha \Q)&=\sum_{i\in \I}\alpha\P(i)\log(\frac{\alpha\P(i)}{\alpha\Q(i)}) = \alpha\sum_{i\in \I}\P(i)\log(\frac{\P(i)}{\Q(i)})= \alpha\kl(\P||\Q). 
  \end{align*}}\end{proof}

  
As a consequence of the above lemma, we conclude that scaling does not
change \kl() comparisons, \eg with $\alpha > 0$, if $\kl(\P||\Q_1)$ <
$\kl(\P||\Q_2)$ then $\kl(\P||\alpha\Q_1) < \kl(\P||\alpha \Q_2)$, and
we can use the above lemma and the properties of $\kl()$ to conclude
the following {\em proportionate} properties. These help inform us
on how a best scoring \sdn, under scaling and filtering, is related to
the \sd generating the sequence:
\begin{corollary}\label{lem:scale_corr} Scaling and spreading
  (adding or deducting mass)
  should be proportionate to \sd $\P$ to minimize $\kl(\P||.)$:
  \setlength{\leftmargini}{0.5in} 
  \begin{enumerate}
  \item ($\P$ is the unique minimizer over appropriate set) Given non-empty \sd $\P$,
    $\kl(\P||\P)=0$, and for any $\Q \ne \P$ with $\sm{\Q}\le\sm{\P}$,
    $\kl(\P||\Q)>0$.
  \item (same minimizer for $\P$ and its multiple) Let non-empty \sds
    $\P_1$ and $\P_2$ be such that $\P_1 = \alpha \P_2$ for a scalar
    $\alpha > 0$, and consider any non-empty set $S$ of \sds. For any two
    \sds $\Q_1, \Q_2 \in S$, $\kl(\P_1||\Q_1) < \kl(\P_1||\Q_2)
    \Leftrightarrow \kl(\P_2||\Q_1) < \kl(\P_2||\Q_2)$
    (thus, a \sd $\Q \in S$ is a minimizer for both or for neither).
    \item Given a non-empty \sd $\P$, among \sd $\Q$ such that
      $\sm{\Q}=s$, the one that minimizes $\kl(\P||\Q)$ is
      proportionate to (or a multiple of) $\P$, \ie $\forall i\in \I,
      \Q(i)=\frac{s}{\sm{\P}}\P(i)$ (when $s=0$ this becomes vacuous).
      \co{
    \item Given non-empty \sds $\P$ and $\Q$, where $\Q=\alpha
      \P$ for some $\alpha>0$, and a total mass $s > 0$ to spread over
      $\Q$ to yield $\Q'$, meaning $\Q'(i)=\Q(i)+s(i)$ (and
      $s=\sum_{i\in\I}s(i)$), $\kl(\P||\Q')$ is minimized when the
      spreading is proportionate: $\forall i\in \I$,
      $\frac{s(i)}{s}=\frac{\P(i)}{\sm{\P}}$.
      }
  \end{enumerate}
\end{corollary}


\begin{proof} ({\bf part 1}) When $\P$ is a \din, among $\Q\ne \P$ that are $\di$ too,
  the property $\kl(\P||\Q) > 0$ holds \cite{cover91}.  If $\Q$ is a
  strict \sdn, on at least one item $i$, $\Q(i) < \P(i)$, and we can
  repeat increasing all such $\Q(i)$ in some order until $\Q(i)=\P(i)$
  or $\Q$ becomes a \di (finitely many such $i$), lowering the
  distance (the log ratio $\frac{\P(i)}{\Q(i)}$). We conclude for any
  $\Q\ne \P$, $\kl(\P||\Q) > \kl(\P||\P)=0$. When $\P$ is a nonempty
  strict \sdn: We have
  $\kl(\P||\P)=\sum_{}\P(i)\log\frac{\P(i)}{\P(i)}=0$. We can scale
  $\P$ by $\alpha=\frac{1}{\sm{\P}}$ to get a \di, and from the
  property of $\kl(\P||.)$ for \di $\P$, and using Lemma \ref{lem:klscale}, we
  conclude that for any other \sd $\Q\ne \P$, with $\sm{\Q} \le \sm{\P}$ (and
  thus $\alpha\Q$ remains a \sdn), must yield a higher (positive)
  $\kl(\P||\Q)$: $\kl(\P||\Q)=\frac{\kl(\alpha\P||\alpha\Q)}{\alpha} >
  0$ (using Lemma \ref{lem:klscale}, and $\kl(\alpha\P||\alpha\Q) >
  0$, from first part of this claim).

  ({\bf part 2}) Let $\Delta_1 :=\kl(\P_1||\Q_1)-\kl(\P_1||\Q_2)$ and
  $\Delta_2 :=\kl(\P_2||\Q_1)-\kl(\P_2||\Q_2)$.  $\Delta_1=
  \kl(\alpha\P_2||\Q_1)-\kl(\alpha\P_2||\Q_2) =
  \alpha(\kl(\P_2||\Q_1)-\kl(\P_2||\Q_2))$ (from part 2, Lemma
  \ref{lem:klscale}, $\alpha\ln(\alpha)\ln(\sm{\P_2})$
  canceling). Therefore, $\Delta_1 < 0 \Leftrightarrow \Delta_2 < 0$.

  ({\bf part 3}) This is a consequence of parts 1 and 2 where the set
  $S$ includes $\P_1$, where $P_1$ is proportionate to $P_2$, and
  $P_1$ minimizes $\kl()$ to itself among $\Q \in S$ (part 1). \end{proof}
  
  \co{
  Let \sds $\P'$ and $\Q \ne \P'$ be such that
  $\sm{\Q}=\sm{\P'}=s$, and further $\forall i\in\I,
  \P'(i)=s\frac{\P(i)}{\sm{\P}}$, by setting
  $\alpha=\frac{\sm{\P}}{s}$, we have $\alpha\P' =\P$, and
  $\kl(\P||\Q) - \kl(\P||\P') = \kl(\P||\alpha \Q) - \kl(\P||\alpha
  \P') = \kl(\P||\alpha \Q)$ (using \ref{eq:scale1}), and from first
  part, $\kl(\P||\alpha \Q) > 0$.

  $\P$ is the
  unique minimizer $\kl(\P||\P)=0$, and using Lemma \ref{lem:klscale},
  no other $\Q'$ with $\sm{\Q'}=s$ could have lower $\kl()$.
  }
  
\co{  ({\bf part 4}) Spreading $s$ over $\Q$ changes $\sm{\Q}$ to $\sm{\Q}+s$, but
  from part 3, among such \sdsn, the one proportionate to $\P$ scores
  lowest, therefore, as $\Q$ is already proportionate, proportionate
  spreading over $\Q(i)$ keeps the proportionality: Let
  $r=\frac{\P(i)}{\sm{\P}}=\frac{\Q(i)}{\sm{\Q}}=\frac{s(i)}{s}$, then
  $\frac{\Q(i)+s(i)}{\sm{\Q}+s}=\frac{r\sm{\Q}+rs}{\sm{\Q}+s}=r$.
}

We note that in part 1 above, if there is no constraint on $\sm{\Q}$,
then due to the functional form of \kl(), multiples of $\P$ (and other
$\Q$) can score better than $\P$ and obtain negative scores. In this
and related senses, \kl() on \sds is not a divergence in a strict
technical sense (even if we generalize the definition of statistical
divergence to non-distributions). If we constrain the set of \sds
\kl() is applied to, for instance to $\sm{\Q}=s$, then $\kl(P||.)$ can
enjoy certain divergence properties (such as having a unique
minimizer). Our main aim is to show such losses and distances remain
adequate for comparing predictors.  In part 2 above, we consider the
set $S$ \sds to be general: it may not yield a minimizer (consider
open sets), or may have many (\eg disjoint closed sets).

\co{

  one bin, spread over others: proportionate spreading is enough;
  two bins is enough! moving to one additional bin is enough!
  
}

\subsubsection{All-Or-Nothing Removals and Other Properties of $\klb()$ }
\label{sec:all_or_none}


When using $\klb(\P||.)$ with $p=\pns > 0$, in changing $\P$ to get
a $\Q^*$ that minimizes $\klb(\P||.)$, where $\sm{\Q^*} = \sm{\P}$, we
show that we have all-or-nothing deductions, and when not deducting,
we may increase the mass, shifting from small to larger items, \ie those
with higher \prsn:

\begin{defn}
  With respect to given threshold $\pns, \pns \in (0, 1)$, the \sd
  $\P$ is called {\em \bf non-degenerate} if $\sm{\P} > \pns$,
  otherwise, it is {\em \bf degenerate} (wrt $\pns$).
\end{defn}

\begin{lemma}
  \label{existence}
  (Minimizer existence for the degenerate case) Given threshold $\pns,
  \pns \in (0, 1)$, if a nonempty \sd $\P$ is degenerate, then for any
  \sd $\Q$ with $\sm{\Q} \le \sm{\P}$ (including $\P$ and the empty
  \sd), $\klb(\P||\Q)= -\sm{\P} \log(\pns)$ (all such \sds are
  minimizers).
\end{lemma}
\begin{proof}
  For any such $\Q$, $\klb(\P||\Q)=\sum_{i\in
    \sup(\P)}-\P(i)\log(\pns)$ (as $\forall i, \Q(i)\le
  \sm{\Q} \le \pns$), thus $\klb(\P||\Q)=-\sm{\P} \log(\pns)$\end{proof}

If $\P$ is non-degenerate, we show that a minimizer must exist, but we
first show some of the properties it must have (see also \fig
\ref{fig:distortion}).



\begin{lemma}
  \label{lem:all_or_nothing}
  For any $\pns \in (0, 1)$ and any non-degenerate \sd $\P$, where we
  want to minimize $\klb(\P||\Q)$ over the set $S$ of \sds $\Q$ such
  that $\sm{\Q_1} \le \sm{\P}$ (and wlog we need only consider
  $\sup(\Q) \subseteq \sup(\P)$):
  \begin{enumerate}
  \item (all positive \pr items are above $\pns$) For any $\Q_1 \in S$, if
    there is an item $i$ where $\Q_1(i) \in (0, \pns]$, then there is
    a \sd $\Q_2\in S$, such that $\forall i \in
    \sup(\Q_2), \Q_2(i) > \pns$, and $\klb(\P||\Q_2) <
    \klb(\P||\Q_1)$. Therefore, in minimizing $\klb(\P||.)$, we need
    only consider the set $S_2 = \{\Q | \Q \in S, \sm{\Q}=\sm{\P}, \mbox{ and } \forall i \in
    \sup(\Q), \Q(i) > \pns\}$ ($S_2$ is not empty as $\P$ is
    non-degenerate).
  \item (existence and proportionate increase) The set $S^*\subseteq
    S_2$ of minimizers of $\klb(\P||.)$ is not empty, and for any
    $\Q^* \in S^*$, and for some fixed multiple $r\ge 1$, $\forall i
    \in \sup(\Q^*), \Q^*(i) = r \P(i)$.
  \item (order is respected) For any minimizer $\Q^*$ and any two
    items $i$ and $j$, when $\P(i) < \P(j)$, if $\Q^*(i) > 0$, then
    $\Q^*(j) > 0$ (and, from part 2,  $\Q^*(j) > \Q^*(i)$).
  \end{enumerate}
\end{lemma}
\begin{proof}
  ({\bf part 1}) Say $\Q_1$ has one or more items with low \pr $\le
  \pns$, call the set $\I_2$, $\I_2 :=\{i|\Q_1(i) \le \pns\}$, with
  total \pr $b$ (thus $b:=\sum_{i\in \I_2}\Q(i)$). We can also assume
  $\sm{\Q_1}=\sm{\P}$ (if less, we can also shift the difference onto
  receiving item $j$ in this argument). Then if $\Q_1$ also has an
  item $j$ with \pr above $\pns$, shift all the mass $b$ to item
  $j$. Otherwise, shift all the mass $b$ to a single item $j$ in
  $\I_2$ (pick any item). In either case, we have $\klb(\P||\Q_2) <
  \klb(\P||\Q_1)$: In the first case, the cost (\ie
  $-\P(i)\log(\frac{\P(i)}{\max(Q(i),\pns)})$) is not changed for
  items in $\I_2$ ($-b\log(\pns)$), while for item $j$ the cost is
  lowered. In the second case, we must have $b > \pns$ ($\P$ is
  non-degenerate), and the cost for the receiving item $i$ improves
  (as we must have $Q_2(i)>\pns$), while for others in $\I_2$ it is
  not changed.

  ({\bf part 2}) The set $S_2$ (from part 1), can be partitioned into
  finitely many subsets (ie disjoint sets whose union is $S_2$), each
  partition member corresponding to a non-empty subset of
  $\sup(\P)$. For instance all those that have positive \pr on item 1
  only (greater than $\pns$ by definition of $S_2$) (support of size
  1) define one partition subset (we get $|\sup(\P)|$ such subsets
  corresponding to singletons). The size of the support set $k$ of a
  \sd in $S_2$, $k \le |\sup(\P)|$, can be large to the extent that
  $\frac{\sm{\P}}{k} \ge \pns$ is satisfied ($k=1$ works, but larger
  $k$ may yield valid \sds in $S_2$ as well.  On each such partition
  set $\mathcal{S}_k$), $\klb(\P||.)$ becomes equivalent to
  $\kl(\P||.)$, in the following sense: If $\mathcal{S}_k$ is a
  partition, then $\forall \Q_1, \Q_2 \in \mathcal{S}_k, \klb(\P||Q_1)
  - \klb(\P||Q_2) = \kl(\P||Q_1) - \kl(\P||Q_2)$. From Corollary
  \ref{lem:scale_corr}, the total mass from elements that are zeroed
  (if any), \ie $\sup(\P)-\sup(\Q_1)$, is spread proportionately on
  $\sup(\Q_1)$ to minimize $\kl(\P||.)$ over a partition subset. Since
  we have a finitely many partitions (the size of a powerset at most),
  and each yields a well-defined minimizer of $\klb(\P||.)$, we obtain
  one or more minimizers for the entire $S_2$ and therefore $S$.
\co{
  For a partition set $\mathcal{S}_k$, and $\Q\in\mathcal{S}_k$, let
  $b:=\sum{i\in\sup(\Q)}\P(i)$, and the \pr amount
  $s:=\sum{i\in\sup(\P)-\sup(\Q)}\P(i)$. Then to get the minimizer for
  this partition, $\P(i)$ is increased to:
  $\P(i)+\frac{\P(i)}{b}s=(1+\frac{1}{b}s)\P(i)$ (the multiplier
  $r=1+\frac{1}{b}s$).
}

  ({\bf part 3}) Wlog consider items 1 and 2, $p_1=\P(1)$ and
$p_2=\P(2)$, where $p_1 < p_2$, and assume in \sd $\Q_1 \in S_2$ (with
support size $|\sup(\Q_1)| \ge 1$), item 1 has an allocation $T> \pns$
(as $\Q_1 \in S_2$), while $\Q_1(2)< \Q_1(1)$. We need only consider
the case $\Q_1(2)=0$: if $\Q_1(2) > 0$, then $\Q_1(2) >\pns$ as
$\Q_1\in S_2$, and proportionate increase from part 2 establishes the
result. Assuming $\Q_1(2)=0$, we can 'swap' items 1 and 2, to get \sd
$\Q_2 \in S_2$, and swapping improves $\klb()$, \ie letting $\Delta :=
\klb(\P||\Q_1) - \klb(\P||\Q_2)$, we must have $\Delta > 0$:
\hspace*{-0.6cm}\vbox{  \begin{align*}
      \Delta & = (p_2\log\frac{p_2}{\pns}+p_1\log\frac{p_1}{T}) -
    (p_1\log\frac{p_1}{\pns}+p_2\log\frac{p_2}{T}) \hspace*{.5in} \mbox{(all other terms cancel)} \\
    &= -p_2\log(\pns)-p_1\log(T)+p_1\log(\pns)+p_2\log(T)  = (p_2-p_1)(\log(T)-\log(\pns)) > 0
  \end{align*}}
  The last step (conclusion) follows from our assumptions that
  $p_2>p_1$ and $T > \pns$.  Note that $\Q_2$ can further be improved
  by a proportionate spread (the allotment $T$ to item 2 increased).\end{proof}

The lemma implies that the minimizer is unique if no two items in
$\sp{\P}$ have equal \prs (due to proportionate spread).  It also
suggests a sorting algorithm to find an optimal allocation: With
$k=1,2\cdots,|\sup(\P)|$, use the highest $k$ items (highest $\P(i)$),
and spread $\sm{\P}$ proportionately onto the $k$ items, yielding
$\Q_k$, and compute $\kl(\P||\Q_k)$. One of $\Q_1, \Q_2, \cdots$ is
the minimizer.  The algorithm can be stopped early if
$\frac{\sm{\P}}{k} \le \pns$ and also when proportionate spread leaves
an item with \pr $\le \pns$. One could also start with smallest item,
spreading it on remainder, and repeating. The threshold $\pt$
described next determines when this algorithm should stop (see also
Lemma \ref{lem:ext}).

Some items can have such a large \pr that they are safe from being
zeroed (in a minimizer $\Q^*$). The next two sections derive the
threshold $\pt >\pns$, specifying when a \pr can be above $\pns$ but
sufficiently close to it for the possibility of being zeroed or
distortion (its \pr shifted to higher items to yield lower
losses). Note that even if a \pr $\le \pns$, it may not be zeroed in
the minimizer because shifts from smaller items may raise its mass to
above $\pt$. The smallest item $i$, with $\P(i) < \pt$, is guaranteed
to be zeroed (in some minimizer $\Q^*$, Lemma \ref{lem:pt} below).

\co{

  \item (no partial deductions) For any item $i$, when $\P(i) >
    \pns$ then 
    either $\Q^*(i)\ge \P(i)$, or $\Q^*(i)=0$.
  
The following states that if a \pr in $\P$ is large enough, it is not
reduced in any optimal \sd $\P^*$.
\begin{lemma}
  Given any non-empty \sd $\P$ and any item $i$, if $\P(i) \ge \frac{2
    \pns}{1-\pns}$, then $\P^*(i) \ge (1-\pns)\P(i)$, and when
  $\u{\P}>\pns$ then $\P^*(i) \ge \P(i)$. Similarly, when
  $\u{\P} \ge \frac{2 \pns}{1-\pns}$, then $\u{\P^*} \ge \u{\P}$.
\end{lemma}
\begin{proof}
  We consider two cases for $\P^*(i)$: $\P^*(i) > \pns$ and $\P^*(i)
  \le \pns$. In either case, we can show that spreading the reduction
  in $\P(i)$ over other items can only increase loss.
\end{proof}
}

\subsubsection{The Case of Two Items (and Threshold $\pt$)}
\label{sec:pt}

Suppose the \di $\P$ has two items, with \prs $p$ and $1-p$, \ie
$\P=\{1$:$p, 2$:$1-p\}$, where $p \le 1-p$, and $p > \pns$, and we
want to see how high $p$ can be and yet distortion remains possible,
\ie item 1 is zeroed and its mass shifted to item 2, and
$\Delta:=\klb(\P||\P) - \klb(\P||\Q) > 0$: how high $p$, $p > \pns$,
can be and yet we can get $\Delta > 0$.  $\Delta = 0 -
(p\log\frac{p}{\pns} + (1-p)\log\frac{1-p}{1})$, thus $\Delta > 0$,
when $p\log(\pns) > p\log(p)+(1-p)\log(1-p)$,
or $\pns > p(1-p)^\frac{1-p}{p}$. Thus the threshold $\pt$ is such
that $\pns=\pt(1-\pt)^\frac{1-\pt}{\pt}$. Now, with
$r:=(1-\pt)^\frac{1-\pt}{\pt}$, $r > 1-\pt$ for $\pt > 0.5$ ($r \ra
1.0$ as $\pt \ra 1$), and $r < 1-\pt < 1$ when $\pt < 0.5$ (and
$r=1-\pt=\pt=0.5$ when $\pt=0.5$) (note that in our set up, $\pt \le
0.5$). Thus, indeed $\pns < \pt$. We can set $\pt$ and see how low
$\pns$ should be (or, otherwise, solve for $\pt$).  Considering the
ratio $\frac{\pns}{\pt}=\frac{\pt(1-\pt)^{(1-\pt)/\pt}}{\pt}=r$, where
we have defined $r=(1-\pt)^\frac{1-\pt}{\pt}$, and as $\pt \ra 0$,
$\frac{\pns}{\pt} \ra \frac{1}{e}$ ($e$ denotes the base of the
natural logarithm), while as $\pt \ra 0.5$, $\frac{\pns}{\pt} \ra
\frac{1}{2}$, or:
\begin{align}\label{eq:pt}
\hspace*{1in}  2\pns\le \pt \le e \pns \approx 3\pns.
\end{align}

Thus, as a rule of thumb, as long as $p \ge 3\pns$, $p$ is not
zeroed. With $\pns=0.01$, we can get a positive distortion for $p <
\pt\approx0.027$ (no distortion if $p \ge 0.0270$). With $\pns=0.001$,
no distortion if $p \ge \pt\approx 0.00272$, and with $\pns=0.1$, the
threshold $\pt\approx 0.24$ (no distortion if $p \ge \pt=0.24$).



When $\P$ is a strict \sdn, the possibility of distortion becomes more
limited: assume again that $\P$ has two items with equal probability
$p, p > \pns$, then $\Delta > 0$ (defined above) implies
$0-(p\log\frac{p}{\pns}+p\log\frac{p}{2p}) > 0$, or $\log(\pns) >
\log(p/2)$, or $2\pns = \pt$, implying distortion possibility only
when $p \le 0.02$ with $\pns=0.01$.

\co{
  
\ie $\llns(\P|\P) > \llns(\Q|\P)$, when we move all of $p$ to item 0,
\ie $\Q=\{0:1.0\}$.

$\llns(\P|\P) = -p\log(p) - (1-p)\log(1-p)$, and with $\Q=\{0:1.0\}$, 
$\llns(\Q|\P) = -p\log(\pns)-(1-p)\log(1)=-p\log(\pns)$.
}

\co{
$\llns(\P|\P) > \llns(\Q|\P)$, or $-p\log(p) - (1-p)\log(1-p) >
  p\log(\pns)$ or when $p\log(p) + (1-p)\log(1-p) < p\log(\pns)$ or }

\subsubsection{More  Items (and Threshold $\pt$)}

The case of more items is similar to two items, and yields the same
distortion threshold $\pt$: given \di $\P$ assume all its items have \pr
above $\pns$, and $\Delta:=\klb(\P||\P) - \klb(\P||\Q^*) = 0 -
\klb(\P||\Q^*)$, and we are wondering about the relation of say
$p_1=\P(1)$ and $\pns$ when $\Delta > 0$.

\hspace*{-.5in}\vbox{\begin{align*}
  \Delta &= - \left(p_1\log(p_1/\pns) + \sum_{i\ge 2 } p_i\log(p_i / (p_i + \frac{p_i}{1-p_1}p_1 ))\right)\\
  & \ \ \mbox{(the above is the proportionate spread of $p_1$ onto others in $\Q^*$)} \\
  &= p_1\log(\pns) - p_1\log(p_1) - \sum_{i\ge 2 } p_i\log(p_i) +
  \sum_{i\ge 2 } p_i\log(p_i + \frac{p_i}{1-p_1}p_1 ) \\
  &= p_1\log(\pns) - \sum_{i \ge 1} p_i\log(p_i) +
  \sum_{i\ge 2 } p_i\log(\frac{p_i}{1-p_1}) \\
  &= p_1\log(\pns) - \sum_{i \ge 1} p_i\log(p_i) +
  \sum_{i\ge 2 } p_i\log(p_i) - \log(1-p_1)\sum_{i\ge 2 } p_i  \\
  &= p_1\log(\pns)  -
  \log(1-p_1)\sum_{i\ge 1 } p_i - p_1\log(\frac{p_1}{1-p_1}) =  p_1\log(\pns) - \log(1-p_1) - p_1\log(\frac{p_1}{1-p_1})
  \end{align*}}

And $\Delta > 0$ implies $p_1\log(\pns) > \log(1-p_1)
+p_1\log(\frac{p_1}{1-p_1})$, or when $p_1$ is low enough such that
$\pns > p_1(1-p_1)^\frac{1-p_1}{p_1}$. This is the same bound or
threshold as the two-item case, and we summarize the properties in the
following lemma.

\begin{lemma}\label{lem:pt}
With a \di $\P$ and $\pns \in (0,1)$, if item $i$ has \pr $\P(i) \ge
\pt$, where $\pt$ is such that $\pns = \pt(1-\pt)^\frac{1-\pt}{\pt}$,
then $\Q^*(i)\ge \P(i)$ (item $i$ is not zeroed) in any minimizer
$\Q^*$ of $\klb(\P||)$. If $i\in\sp{\P}$ has the smallest \pr in
$\sp{\P}$ and $\P(i) < \pt$, then $i$ is zeroed in some minimizer
$\Q^*$ ($\Q^*(i)=0$) and if it is the unique minimum, then $\Q^*(i)=0$
in  any minimizer $\Q^*$.
\end{lemma}

\begin{proof}
  The result follows from the above derivation and the ordering
  properties specified in Lemma \ref{lem:all_or_nothing}. \end{proof}
  

For the analysis of extent of distortion below, we can assume $\P$ is
a \din, as when using $\klns()$ the first argument is augmented to a
distribution. However, we expect the above analysis and bound can be
extended to strict \sds as well.




\co{
\begin{align*}
\llns(\Q|\P) &= -p_1\log(\pns) - \sum_{i\ge 2 } p_i\log(p_i + \frac{p_i}{1-p_1}p_1 ) \\
&= -p_1\log(\pns) - \sum_{i\ge 2 } p_i\log(p_i + \frac{p_1}{1-p_1}p_i)\\
&= -p_1\log(\pns) - \sum_{i\ge 2 } p_i\log(\frac{1}{1-p_1}p_i) \\
&= -p_1\log(\pns) - \sum_{i\ge 2 } p_i\log(p_i) + \sum_{i\ge 2 } p_i\log(1-p_1) \\
&= -p_1\log(\pns) - \sum_{i\ge 1 } p_i\log(p_i) + p_1\log(p_1) + \log(1-p_1)\sum_{i\ge 2 } p_i \\
&= -p_1\log(\pns) + H(\P) + p_1\log(p_1) + \log(1-p_1)(1-p_1) \\
\end{align*}

$\llns(\P|\P) \ge \llns(\Q|\P) \Rightarrow H(\P) \ge -p_1\log(\pns) +
H(\P) + p_1\log(p_1) + \log(1-p_1)(1-p_1)$ or $-p_1\log(p_1) -
\log(1-p_1)(1-p_1) \ge -p_1\log(\pns)$.
}

\subsubsection{Extent of Distortion}
\label{sec:extent}

Lemma \ref{lem:pt} and \ref{lem:all_or_nothing} explain which items
are zeroed, and the following lemma specifies extent of increase in an
item's \pr in any minimizer of $\klb(\P||.)$, and further
characterizes the properties of a minimizer.

\begin{lemma}\label{lem:ext} Given any \di $\P$ and $\pns \in (0,1)$, and
  any \di $\Q$ with $\sp{\Q}\subseteq \sp{\P}$ and with proportionate
  spread according to $\P$, let $Z:=\sp{\P}-\sp{\Q^*}$ (the zeroed
  items or the difference of the two support sets), and
  $\ps(\Q):=\sum_{i\in Z}\P(i)$, thus $\ps(\Q)$ is the total mass of
  the zeroed items (the  shifted mass), $\ps(\Q)\ge 0$.  ({\bf
    part 1}) We have for any $i \in \sp{\Q},
  \Q(i)=\frac{\P(i)}{1-\ps(\Q)}$. ({\bf part 2}) Furthermore, let
  $S_g$ be the set of all such proportionate $\Q$ with
  $min_{i\in\sp{\Q}}\P(i) \ge \pt$. Then any minimizer \di $\Q^*$ of
  $\klb(\P||.)$ is in $S_g$, and has the largest support and the
  smallest shifted mass among such (\ie $\ps(\Q^*)\le \ps(\Q)$ and
  $|\sp{\Q^*}| \ge |\sp{\Q}|$  for any $\Q\in S_g$).
\end{lemma}
\begin{proof}
  For a \di $\Q$, let $\ps$ denote $\ps(\Q)$, and let $s:=\sum_{i\in
    \sp{\Q}}\P(i)$. From the definition of proportionate spread, we
  add $\frac{\P(i)}{s}\ps$ to $\P(i)$. We have $s=1-\ps$, therefore,
  $\Q(i) = \P(i) + \frac{\P(i)}{1-\ps}\ps =
  \P(i)(1+\frac{\ps}{1-\ps})=\frac{\P(i)}{1-\ps}$. ({\bf proof of part
    2}) From Lemma \ref{lem:all_or_nothing}, $\ps$ is proportionately
  spread onto non-zeroed items in $\Q^*$ as well.  Consider $Q_1$ and
  $\Q^*$, both in $S_g$, and assume $\sp{\Q_1} > \sp{\Q^*}$ (proof by
  contradiction). By shifting the lowest \pr in $S_1$ to remaining
  (higher \prn) items in the support, and repeating, we should get to
  $\Q^*$ (or an equivalent, in case of ties). But each shift results
  in an inferior $\Q$ because of our assumption that all \prs are $\ge
  \pt$ and the shifts only increase the \pr on items that remain
  (non-zeroed items).\end{proof}



We now have the tools
for understanding the types and extents of distortion from using
$\klns(\dix{\P}||.)$ under different scenarios.  We know that an item
with \pr below $\pt$ can be zeroed, \ie completely distorted, in
$\Q^*$. From the outset, working with finite-space predictors, we have
accepted the possibility of poor or no estimation of small \prs beyond
a point. Certain such low-\pr items may have their \pr multiplied by
many folds in $\Q^*$ (yielding a high distortion ratio
$\frac{\Q^*(i)}{\P(i)}$): imagine the uniform \di $\P$ with $k$ items,
each with \pr $1/k$. A few of these items attain a high \pr of $\pt$
or higher ($\approx 3\pns$) in $\Q^*$ (a fixed $\approx
\frac{1}{3\pns}$ such non-zeroed items). For instance, with
$\pns=0.01$, the number of nonzeroed items is fixed at $\approx
\frac{1}{0.03}\approx 33$. The original total \pr mass on these items
(in $\P$) is $\frac{33}{k}$, shrinking with increasing $k$. Thus, as
we imagine increasing $k$, the mass $\ps$, $1-\frac{33}{k}$, shifting
from the zeroed items to nonzeroed items increases, approaching 1, and
using Lemma \ref{lem:ext}, the relative increase in the \pr of
nonzeroed items grows unbounded:
$\frac{\Q^*(i)}{\P(i)}=\frac{1}{1-\ps} \ra \infty$. Note also that
\fcap() does not alter $\Q^*$ in this example by much (so our argument
holds for both $\klb$() and $\klns$()).  This example showing large
distortion ratio on small items (items with tiny \pr below $\pns$)
holds when we use a perfect marker (we explain how using a practical
imperfect marker affects distortion in the next section, in
particular, see Lemma \ref{lem:practical}).


When the total \pr shift $\ps$ is small, the increase
$\frac{1}{1-\ps}$ in \pr of any non-zeroed item is also small.
Let $S_g$ be the set of items in $\dix{\P}$ (including item 0)
with \pr above $\pt$ (\ie well above $\pns=\pmin$), and let $\pg:=
\sum_{i\in S_g}\P(i)$.  By Lemma \ref{lem:pt}, no item in $S_g$ is
zeroed (no mass from it is shifted), so $\ps \le 1-\pg$. For example,
a $\pg$ value of $0.9$ means $\ps \le 0.1$, and therefore no item is
increased by no more than $\frac{1}{0.9}\approx 1.1$.
In particular, if there are one or more items in $\P$ with \pr above
$\pt > \pns$, and the remaining is unallocated and $1-\sm{\P}=\u{\P}
\ge \pns$, \ie there is no normalizing and removal in $\fcap$, then
$\P$ remains the unique optimal (minimizer) of $\klns(\P||\fcap(.))$
(and of $\klb(\P||.)$). For instance, this is the case for \sd
$\P=\{1:0.5, 2:0.1\}$ (with $\pns=0.01$) ($\fcap()$ does not change
$\P$ in this example).  If $\u{\P} < \pns$, then some normalizing will
occur. However, if the normalizing factor doesn't change (reduce) the
\prs by much, the distortion will remain low. For example, with the
\sd $\P=\{1:0.5, 2:0.1, 3:0.4\}$,
$\fcap(\P)=0.99\P=\{1:0.495,2:0.099,3:0.396\}$, and $\P$ remains
optimal for $\klns(\P||\fcap(.))$, and so are the close multiples of
$\P$, $\alpha\P$ for $\alpha \in [0.99, 1]$
($\klns(\P||\P)=\klns(\P||\alpha\P)$ as long as $\alpha\ge 0.99)$. In
this example, there are multiple minimizers for $\klns(\P||\fcap(.))$
but a unique one for $\klns(\P||.)$ ($\fcap()$ maps them to one unique
minimizer).

\co{
: $\P$ will be
optimal or will score close to optimal (the minimum loss)
(both distortion in loss and distortion in items of $\P$ would be
low). This is because filtering and normalizing would not remove or
alter much mass (most mass is well above $\pt> \pns$ and so is not
removed nor reduced via normalization by much).  }


\co{
Given a \sd
$\Q^*$ minimizing $\klns(\P||\Q)$,
one could define distortion in loss as $\klns(\P||\P) -
\klns(\P||\Q^*)$ (positive if there is distortion).

We define distortion in items as the maximum over high \pr items in
$\P$, $\max_{\Q^*}\max_{i\in \I, \P(i) \ge \tau}
(\frac{\Q^*(i)}{\P(i)}-1)$, where $\tau$ is a threshold, \eg $\tau =
\pt$.

}



\co{
  
If $\P$ has items below $\pns$, these items are removed by $\fcap$,
and in the item-sense of distortion, these items are distorted
substantially. If $\P$ has many items at or below $\pns$, with total
mass $s$, \eg $s > 1/2$, all of $s$ could be spread over remaining
salient items and thus one or more salient items can be distorted
substantially. The original $\P$ may still remain a minimizer (in that
sense, distortion in loss can be low) but any unfiltered item can be
substantially increased to minimize $\klb(\P||.)$.
}

Note also that an \sd $\Q$ can have more distance from $\P$ than the
minimizer $\Q^*$, and still score lower than $\P$ itself (\ie $\Q^*$
does not determine the largest distance from $\P$ under the constraint
of scoring better). We have focused on the distance of a minimizer
because of our intended use of $\klns()$ for comparisons (where a
minimizer $\Q^*$ is preferred).

\co{
\subsection{ Sketches  }

We finish this section and only sketch how the above properties could
be used to limit the extent of distortion that applying \fcap() and
using $\klb()$ can cause. By distortion, we mean a \sd $\Q$ different
from the underlying $\P$ scoring better than $\P$: $\klns(\P||\Q) \le
\klns(\P||\P)$. We are specially interested how a \pr $p$, $p \ge
\pmin$ in $\P$ can change in such $\Q$.

Consider the earlier case where $\pmin = 0$, $\pns>0$, and
$\sm{\P}>1-\pns$, thus $\P$ is scaled down by an
$\alpha=\frac{1-\pns}{\sm{\P}}$ in \fcap(). A \pr $p$ in $\P$ that is
smaller than $\pns$ or sufficiently close to it could have its mass
transferred to  other higher \prs in $\P$ (as the above
example showed), and for the resulting \sd $\Q$, $\klns(\P||\Q) \le
\klns(\P||\P)$. This is because when computing $\llns(\Q|\P)$, an
observed item that has 0 or tiny \pr can get a loss of
$-\log(\pns)$. However, there is a limit on how much this shifting can
help. For instance, if all \prs in $\P$ are sufficiently large
(distant from $\pns$), no shifting helps, and $\P$ remains the
minimizer of $\klns(\P||\P)$. For example, with $\P=\{1:0.9, 2:0.1\}$
and $\pns=0.01$, $\P$ is the minimizer of $\klns(\P||\P)$. More
generally, with $\P'=\alpha\P$ or $\alpha=\frac{\P'(i)}{\P(i)}$, if
for item $j$, if $\frac{\pns}{\P(j)} \ge \alpha$, or
$\alpha\P(j)\le\pns$, then the \pr of that item could be shifted, for
a better score. With multiple small \prs, this shifting can be done in
order of smallest first, and the sum of shift should not exceed
$\frac{\pns}{\alpha}$. As $\pns$ is set small, and $\alpha$ is near 1,
such shifts result in small change.  Similarly, when $\pmin > 0$ but
is small, a single small \pr below $\pmin$ can be shifted onto others,
in the manner of part 3 of Lemma \ref{lem:scale_corr}, yielding better
scores than original $\P$. As a small amount is spread, the larger
\prs well above $\pmin$ (if any) are not affected much.

}

\co{
\subsection{Extents of Distortions}

Consider \di $\P$, and a positive but small $\pmin > 0$ (\eg
$\pmin=0.01$), and $\forall i \in \sup(\P), \P(i)\le \pmin$ (all the
\prs are small). Then the distortion in $\P$ will be large, for
instance the empty \sd $\Q$ and any other \sd scores the same as $\P$,
$-\log(\pns)$, so $\P$ remains an optimal (no distortion in loss), but
other distant \sds score low too.  Now consider \di $\P$, where
$\P(1)=1/2$, and all other $i > 1$ have low \prn: $\forall i \in
\sup(\P), i\ne 1, \P(i)\le \pmin$. This case leads to both a fairly
large distortion in $\P$ as well as a noticeable distortion in the
minimum loss, when we see that the \di $\Q(1)=1$ scores lower (better)
than $\P$ ($\approx -1/2\log(1)-1/2\log(0.01)$ for $\Q$ vs $\approx
-1/2\log(1/2)-1/2\log(0.01)$ for $\P$, and is distant from $\P$.
}

\co{
More generally, we know that $\sm{\fcap(\P)} \le \sm{\P}$ for
non-empty $\P$. The amount of distortion is a function of the ratio
$r=\frac{\sm{\fcap(\P)}}{\sm{\P}}$ as well as $\pns$. If $r$ is high,
this means that either the underlying $\P$ is empty or the items with
\pr above $\pmin$ have a total mass that is a large fraction of
$\sm{\P}$. The ratio $r$ can be as small as 0 when $\forall i \in
\sup(\P), \P(i)\le \pmin$. If $\pns$ is small and $r$ is high, then we
should get low distortion.

}
  
\subsubsection{Imperfect \nsm and Setting the $\pns$ threshold }
\label{sec:practical}
We have assumed a perfect \nsm (referee) in all the above. Assuming
$\pmin=0.01$, a simple practical \nsmn, \eg keeping a history of the
last 100 time points (a box predictor), will have some probability of
making false positive markings (a salient item marked \nsn) and false
negative errors (items below $\pmin$). The error probability goes down
as an item becomes more salient or more noisy (its \pr gets farther
from the $\pns$ threshold), and for items with \pr near the boundary,
the loss would be similar whether or not $\log(\pns)$ is used. We
assume such a box predictor as a practical \nsm here. Note also that,
just as in the case for the prediction algorithms, there is a tradeoff
here on the size of the horizon used for the practical \nsmn: change
and non-stationarity motivate shorter history. We also note that with
a practical marker, marking items with \pr below $\pns$ as \ns with
high probability, the example of previous section with tiny items
yielding high $\frac{\Q^*(i)}{\P(i)}$ ratio would not occur: if a \di
$\P$ has all its items well below $\pns$, the best scoring $\Q^*$ is
the empty \sd ($\u{\Q^*}=1$), with a practical \nsmn. We formalize the
extent of increase in this imperfect setting next.

Given a \sd $\P$ on $\I=\{1,2, \cdots\}$, the {\em ideal-threshold}
\nsmn, or the {\bf \em threshold marker} for short, marks an item $i$
\ns iff $\P(i)\le \pns$. Considering how \llns() (or \llnsr()) works
when the threshold marker is used, the marker in effect converts a \sd
$\P$ to a \di $\P'$ where for any item $i$ with $\P(i)\le \pns$, its
\prn, together with $\u{\P}$, is transferred to item $0$ when
computing $\klns(\P'||.)$ (recall that $\dix{\P}$, in definition
\ref{augment}, only transferred $\u{\P}$ to item 0). Thus $\P'$ is a
\di where $\forall i \in \sp{\P'}$ if $i\ne 0$, then $\P'(i) >
\pns$. It is possible that $\P'(0) < \pns$, but as we use \fcap(), we
could limit our analysis to \di $\P'$ where $\forall i\in \sp{\P'},
\P'(i) \ge \pns$.

The next lemma shows that the distortion ratio $\frac{\Q^*(i)}{\P(i)}$
is upper bounded by about 3 in this setting where the \prs are above
$\pns$.

\begin{lemma}\label{lem:practical}
  Given any \di $\P$ with $\min_{i\in\sp{\P}}\P(i)\ge \pns$, for any
  minimizer $\Q^*$ of $\klb(\P||.)$,
  the distortion ratio $\frac{\Q^*(i)}{\P(i)}\le
  \frac{\pt}{(1-\pt)\pns} < \frac{3}{(1-\pt)}$.
\end{lemma}
\begin{proof}
  Let $m$ be the mass of non-zeroed items in $\P$ ($m=1-\ps(\P)$,
  $\ps$ was also used in Lemma \ref{lem:ext}), then we need to bound
  the distortion ratio $\frac{1}{m}$ which is the ratio by which each
  item $i$ in $\sp{\Q^*}$ goes up by ($\frac{\Q^*(i)}{\P(i)}$). Let
  $j$ be any item that is zeroed (at least one such item exists,
  otherwise the distortion ratio is 1). Let $x:=\P(j)$, and we have $x
  \ge \pns$. Then spreading 1.0 over one additional item,
  $\sp{\Q^*}\cup\{j\}$, proportionately, would fail to take the \pr of
  $j$ to $\pt$ (by Lemma \ref{lem:ext}, $\Q^*$ would not be optimal).
  Proportional allocation over $\sp{\Q^*}\cup\{j\}$ yields
  $\frac{x}{x+m}$ for item $j$ and we must have $\frac{x}{x+m} < \pt
  \Rightarrow x < \pt x +\pt m \Rightarrow \frac{(1-\pt) x}{\pt} < m$
  and as $x > \pns$, we get $\frac{1}{m} < \frac{\pt}{(1-\pt)x} \le
  \frac{\pt}{(1-\pt)\pns}$.\end{proof}

When $\pns=0.01$, then $\pt < 0.03$, or $(1-\pt)^{-1} < 1.031$, and the
distortion ratio is bounded by $3.1$. A practical marker can be viewed
as a noisy version of the (ideal) threshold marker.

\co{
  Take any minimizer $\Q^*$ and let $i_1$ be largest item in $\sp{\P}$
  that is zeroed ($\Q^*(i_1) = 0$) and $i_2$ be a smallest item with
  $\Q^*(\i2) > 0$ (we must have $\Q^*(\i2) \ge \pt$). Then the extent
  of the ratio $\frac{\P(i_2)}{\P(i_1)}$ determines
  $\frac{\Q^*(i_2)}{\P(i_2)}$ (how large the distortion ratio is).
}
\co{  
  In $\Q^*$ all (positive or support) items are increased by the same
  multiple, $(1-\ps(\Q^*))^{-1}$. Let $\Delta$ denote the total
  increase: $\Delta:=\sum_{i\in \sp{\Q^*}} \Q^*(i)-\P(i)$. To maximize
  the ratio $\frac{\Q^*(i)}{\P(i)}$ and simplify analysis, we can make
  all the $k=|\sp{\Q^*}|$ items in $\sp{\Q^*}$ even via removing and
  discarding the extra \pr in larger items in $\sup{\Q^*}$ from
  $\P(i)$ as follows. Let $i1$ be the item with smallest \pr $\P(i)$
  in $\sp{Q^*}$. For every other item $i2\in\sup{\Q^*}$,
  $\P(i2)-\P(i1)$ is discarded, thus $\P'(i2)=\P(i1)$. We can now
  assume $\Delta$ is spread evenly on all $k$. Note that, as long as
  for some $i2\in\sp{Q^*}$, $\P(i2) > \P(i1)$, then
  $\frac{\Q^*(i)}{\P'(i)}>\frac{\Q^*(i)}{\P(i)}$ (the ratio has
  increased) for {\em all} items with this change (not just for $i1$,
  even though $\P'(i) < \P(i)$).

  Let $x:=\P(i1)$ and we have that spreading $\Delta+kx$ over $k+1$
  does not meet (or exceed) $\pt$. We have $\frac{\Delta+kx}{k+1} <
  \pt$.  \pt$.
  
}




Given threshold $\pns > 0$, if we find that the predictions $\oat{\Q}$
have mostly unallocated mass or \prs close to $\pns$, then $\pns$ and
$\pmin$ may need to be lowered (\eg from $0.01$ to $0.001$) for
evaluation as well as better (finer) prediction. For example, with
$\pmin=\pns=0.01$, we may find that many learned \prs are below
$0.05$. Or may observe that a sizeable portion of the predicted \sds
is unallocated ($\u{\oat{\Q}}$ well above $\pns$).  This lowering of
thresholds can lead to extra space consumption by the predictor. It
does not guarantee that further salient items will be learned, as any
remaining salient items may have probability well below the new $\pns$
that we set, or all that is left could be truly pure noise (measure
0). All this depends on the input stream (rate of change, rate of pure
noise,...), in addition to the quality of the predictor.

\co{
  
Ideas on how to limit ``distortion'' when there is scaling and/or
$\pmin > 0$:

\begin{itemize}
\item Any optimal $\Q$ which may be different from $\P$ is not far
  away from $\P$.
\item As we change an optimal $\Q$, we lower its performance until we
  match $\P$'s, and as we do this, we get further away from $\P$, but
  this change and the distortion it induces is limited too.
\end{itemize}

An example of a result limiting the distance of the optimal $\Q$ from
$\P$.

\begin{lemma*}
  For a non-empty \sd $\P$, let $\Q^* = \argmax_{\Q} \expdb{o \sim
    \P}{\llns(\Q, o$, isNS($o$)$ )}$. Then $\forall i \in \I,$ if
  $\P(i) \ge \pmin$, then $\Q^*(i) \ge \P(i)$, and if $\Q^*(i) \ge \pmin$,
  then $\frac{\Q(i)}{\P(i)} \le 1+\pmin$. 
\end{lemma*}

}

\subsection{Alternative Scoring Schemes for Noise}
\label{sec:alts}

We have considered a few other options for handling \ns items (when
using \loglossn),
including
not having a
referee: One possibility is reporting two numbers for each prediction
method: the number of items it treated as \ns on a given sequence
(assigned 0 or sufficiently small probability), and otherwise average
the \llsp on the remainder of the input sequence.  A method is
inferior if it marks a large fraction of items as \ns compared to
others, but if the fractions are close, one looks at the \llsp
numbers. However, we thought that comparing methods with two numbers
would be difficult, and it is also hard to combine two very different
numbers to get a single understandable number to base comparisons on.
Another no-referee option is to pair two methods (or multiple methods)
and specify a policy when they disagree (on items deemed \ns by one
but not the other). Here, one needs to specify a scoring policy
handling disagreements. This option does not provide a single
understandable quality score for a method in isolation (making it hard
to pick parameters when improving a single algorithm as well), and may
not easily generalize to comparing more than two methods.

We could also use a referee, but ignore (skip) the \NS items when
averaging the loss. This measure can provide insights, and we have
looked at it in some experiments, but it can also lead to impropriety
(incentive incompatibility or deviations from truth-telling): a method
that puts all of its \pr mass toward salient items, than leaving some
for the noise portion of the stream, would score better. Assume the
portion of noise items in the stream is 0.2 (20\% of the times a noise
item appears), and otherwise a salient item, say \itm{A}, occurs with
0.8 probability. If we ignore \NS items during evaluation, the method
that shifts the 0.2 \pr mass over the salient items, in this case
always outputting \itm{A} with \pr 1, would score better than the
honest method that reports \itm{A} with its true \pr of 0.8.



\co{
One way to 

let a prediction method report the
number of \NS (the ones it didn't have a probability for)
independently, but we thought





\begin{itemize}

  \item handling unseen items: they can be infrequent or noise items
    and/or new items

  \item our methods distributions also either assign 0 probability,
    meaning it is treated as unseen, or a probability $p \ge \pmin$ to
    an item.

  \item We use a '3rd party referee' a simple method that marks items
    as unseen (also called OOV elsewhere).. on unseen we have a choice
    of how to score too, and will need to define the scoring policy as
    well..

  \item other choices, besides referee? pair the two predictors each
    time, and define a scoring policy based on whether they agree of not

\end{itemize}
}

\todo{describe the code format: mostly very similar to python
  language.. map.get(o, 0.0)..  but comments begin with '//'
  .. Sometimes, if a parameter list would be too long, or if it's just
  best to think of certain parameters as global (constants, etc), we
  put it in the comment section right after the 1st line (after the
  function name)... the pseudo code is written for readability rather
  than most efficient ..  }






\section{Convergence Properties of Sparse EMA}
\label{app:ema}

Lemma \ref{lem:ema_shirnk}, and its proof follows, where we are in the
stationary binary setting (\sec \ref{sec:binary}), where the target
\pr to learn, $\P(1)$, is denoted $\tp$, and the estimates of EMA for
item 1, denoted $\oat{\ep}$, form a random walk ($\oat{\ep}$ is the
estimate immediately after the update at time $t$).  \fig
\ref{fig:proof} shows the main ideas of the proofs, \eg the expected
step size is $\lr^2$ towards $\tp$ when $\ep$ is not too close.

\begin{lemma*} EMA's movements, \ie changes in the estimate $\oat{\ep}$,
  enjoy the following properties, where $\lr \in [0, 1]$:
\setlength{\leftmargini}{15pt} 
  \begin{enumerate}
  \item Maximum movement, or step size, no more than $\lr$: $\forall t,
    |\ott{\ep}{t+1}-\oat{\ep}| \le \lr$.
  \item Expected movement is toward $\tp$: Let $\oat{\Delta} := \tp - \oat{\ep}$. Then,
     $\expd{\Delta^{(t+1)}|\oat{\ep}=p} =
    (1-\lr)(\tp-p) = (1-\lr) \oat{\Delta}$.
  \item Minimum expected progress size: With $\ott{\delta}{t} :=
    |\oat{\Delta}|-|\ott{\Delta}{t+1}|$, $\expd{\oat{\delta}}\ge \lr^2$
    whenever $|\oat{\Delta}|\ge\lr$ (\ie whenever $\ep$ is sufficiently
    far from $\tp$).
  \end{enumerate}
\end{lemma*}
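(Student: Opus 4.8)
The plan is to establish each of the three claims by direct computation from the EMA update rule, which in the binary setting says that $\ott{\ep}{t+1} = (1-\lr)\oat{\ep}$ on a negative observation (probability $1-\tp$) and $\ott{\ep}{t+1} = (1-\lr)\oat{\ep} + \lr$ on a positive observation (probability $\tp$). Throughout I condition on $\oat{\ep}=p$ for some fixed $p \in [0,1]$, and I write $\Delta := \tp - p$.

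For claim 1, I would simply bound the two possible step sizes. On a negative update, $\oat{\ep} - \ott{\ep}{t+1} = p - (1-\lr)p = \lr p \le \lr$ since $p \le 1$. On a positive update, $\ott{\ep}{t+1} - \oat{\ep} = (1-\lr)p + \lr - p = \lr - \lr p = \lr(1-p) \le \lr$ since $p \ge 0$. So in both cases $|\ott{\ep}{t+1} - \oat{\ep}| \le \lr$. For claim 2, I would write the conditional expectation as a convex combination of the two outcomes and factor out the common term $(1-\lr)p$:
\[
\expd{\Delta^{(t+1)} \mid \oat{\ep}=p} = \tp\bigl(\tp - (1-\lr)p - \lr\bigr) + (1-\tp)\bigl(\tp - (1-\lr)p\bigr) = \tp - (1-\lr)p - \tp\lr = (1-\lr)(\tp - p),
\]
where the last step collects the $(1-\lr)$ factor. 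This is the same computation sketched in the commented-out proof in the body, so it is routine.

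For claim 3, the key observation is that claim 1 guarantees that when $|\Delta| = |\tp - p| \ge \lr$, the estimate $\ep$ cannot cross $\tp$ in one step, so the sign of $\tp - \ott{\ep}{t+1}$ equals the sign of $\Delta$. Hence $|\ott{\Delta}{t+1}| = \mathrm{sgn}(\Delta)\cdot(\tp - \ott{\ep}{t+1})$ pointwise, and therefore $\expd{|\ott{\Delta}{t+1}| \mid \oat{\ep}=p} = \mathrm{sgn}(\Delta)\cdot\expd{\tp - \ott{\ep}{t+1} \mid \oat{\ep}=p} = \mathrm{sgn}(\Delta)(1-\lr)\Delta = (1-\lr)|\Delta|$ by claim 2. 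Consequently $\expd{\oat{\delta} \mid \oat{\ep}=p} = |\Delta| - (1-\lr)|\Delta| = \lr|\Delta| \ge \lr\cdot\lr = \lr^2$, using the hypothesis $|\Delta| \ge \lr$. Taking expectation over $\oat{\ep}$ (restricted to the event $|\oat{\Delta}| \ge \lr$, or stated as a conditional claim) gives $\expd{\oat{\delta}} \ge \lr^2$ as required.

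The only real subtlety — and the step I would be most careful about — is the sign argument in claim 3: I need claim 1 to rule out the estimate jumping from one side of $\tp$ to the other, and I must handle the boundary carefully, noting that the hypothesis is the closed inequality $|\oat{\Delta}| \ge \lr$ so that even a maximal step of size $\lr$ lands at $\tp$ at worst (contributing $|\ott{\Delta}{t+1}| = 0$, which is still consistent with $\mathrm{sgn}(\Delta)(\tp-\ott{\ep}{t+1})=0$), and the identity $\expd{|\ott{\Delta}{t+1}|} = (1-\lr)|\Delta|$ still holds. Everything else is elementary algebra, so no further obstacles are expected; I would present the full argument in Appendix~\ref{app:ema} as indicated, with Figure~\ref{fig:proof} illustrating the $\lr^2$ expected progress.
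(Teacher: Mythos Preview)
Your proposal is correct and matches the paper's own proof essentially step for step: parts 1 and 2 are the same direct computations, and for part 3 the paper likewise uses part 1 to argue that $\ep$ cannot cross $\tp$ when $|\Delta|\ge\lr$, then applies part 2 to get $\expd{|\ott{\Delta}{t+1}|}=(1-\lr)|\Delta|$ and hence $\expd{\oat{\delta}}=\lr|\Delta|\ge\lr^2$. The only cosmetic difference is that you unify the two sides via $\mathrm{sgn}(\Delta)$ while the paper treats the case $\oat{\ep}\le\tp-\lr$ explicitly and says ``similarly'' for the other side.
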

\begin{proof}
  (proof of part 1) On a negative update,
  $\oat{\ep} - \ott{\ep}{t+1} = \oat{\ep} - (1-\lr)\oat{\ep} =\lr\oat{\ep}\le \lr$, and on a positive update,
  $\ott{\ep}{t+1}-\oat{\ep} = (1-\lr)\oat{\ep}+\lr-\oat{\ep}=\lr-\oat{\ep}\lr\le \lr$ (as $\oat{\ep}\in[0,1]$).

  (part 2) We write the expression for the expectation and simplify:
 $\tp$ of the time, we have a positive update, \ie both weaken and
  boost ($(1-\lr)p+\lr$), and the rest, $1-\tp$ of the time, we have
  weaken only ($(1-\lr)p$). In both cases, the term $(1-\lr)p$, is
  common and is factored:
  \begin{align*}
    \hspace*{-.4in}\expd{\Delta^{(t+1)}|\oat{\ep}=p} & = \tp\left(\tp - ((1-\lr)p+\lr) \right)+(1-\tp)\left(\tp-(1-\lr)p)\right) \\
    & \mbox{\ \ (next, the term, $\tp - (1-\lr)p$, is common and is factored )}\\
    &= (\tp+(1-\tp))(\tp - (1-\lr)p) - \tp\lr \hspace*{.3in} \\
    &= \tp - (1-\lr)p - \tp\lr = \tp(1-\lr) - (1-\lr)p  \\
    &= (1-\lr)(\tp-p)
  \end{align*}

  (part 3) Note from our definition of $\oat{\delta}$, $\oat{\delta} >
  0$ when distance to $\tp$ is reduced (when $|\ott{\Delta}{t+1}| <
  |\oat{\Delta}|$). When $\ep$ is close to $\tp$, \eg $\ep=\tp$, the
  expected distance may not shrink, but when outside the band, we can
  show a minimum positive progress: Assume $\oat{\ep} \le \tp-\lr$,
  then $\oat{\delta}=\tp-\oat{\ep} - (\tp-\ott{\ep}{t+1})$
  ($\ott{\ep}{t+1}$ is also below $\tp$) or
  $\oat{\delta}=\ott{\ep}{t+1}-\oat{\ep}$, and:
%
\begin{align*}
  \hspace*{-.35in}\expd{\ott{\ep}{t+1}-\ott{\ep}{t}}=
  \expd{\ott{\delta}{t}}&=
\expd{\tp-\ott{\ep}{t}-(\tp-\ott{\ep}{t+1})} =
\expd{\tp-\ott{\ep}{t}} - \expd{\tp-\ott{\ep}{t+1}}\\
& \mbox{\ \ (the above transformation used the linearity of expectation)} \\
&= \expd{\tp-\ott{\ep}{t}} - (1-\lr)\expd{\tp-\ott{\ep}{t}}
= \lr \expd{\tp-\oat{\ep}} \mbox{\ \  (from part 2)} \\
& \ge \lr^2 \mbox{\ \  (from the assumption $|\tp - \oat{\ep}|\ge\lr$)} \end{align*}
And similarly for when $\oat{\ep} \ge \tp+\lr$, then
$\oat{\delta}=\oat{\ep}-\ott{\ep}{t+1}$.
\end{proof}


\begin{figure}[t]
\begin{center}
  \centering
\hspace*{-0.3cm}  \subfloat[Convergence to within the band (from below in this picture). ]{{
      \includegraphics[height=5cm,width=6.5cm]{
        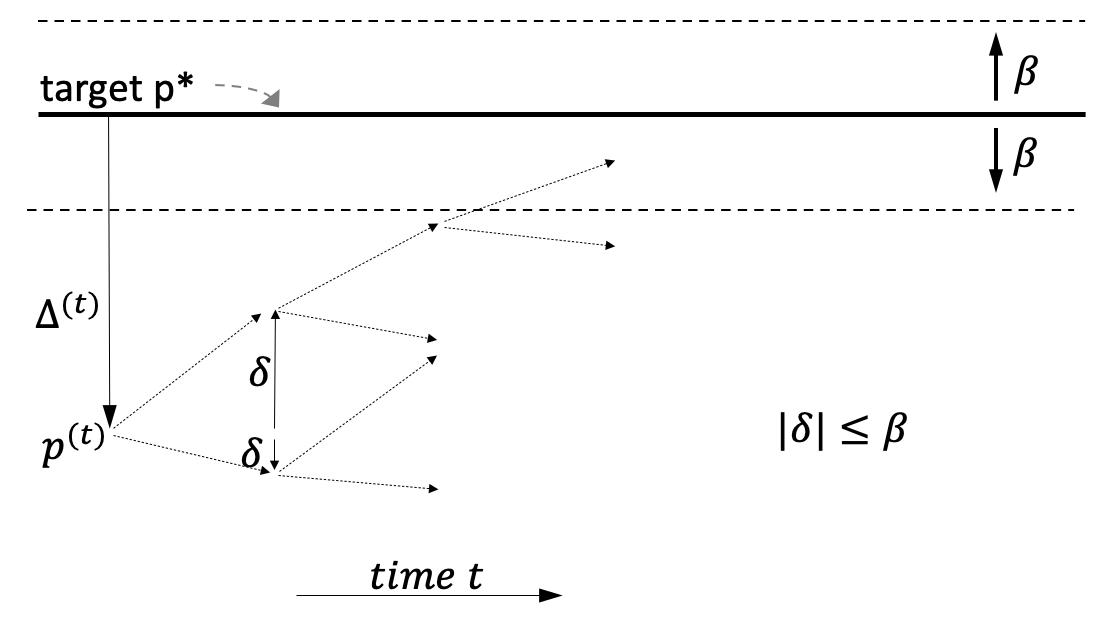}
  }}
  \subfloat[Expected steps and expected positions]{{
      \includegraphics[height=5cm,width=6.5cm]{
        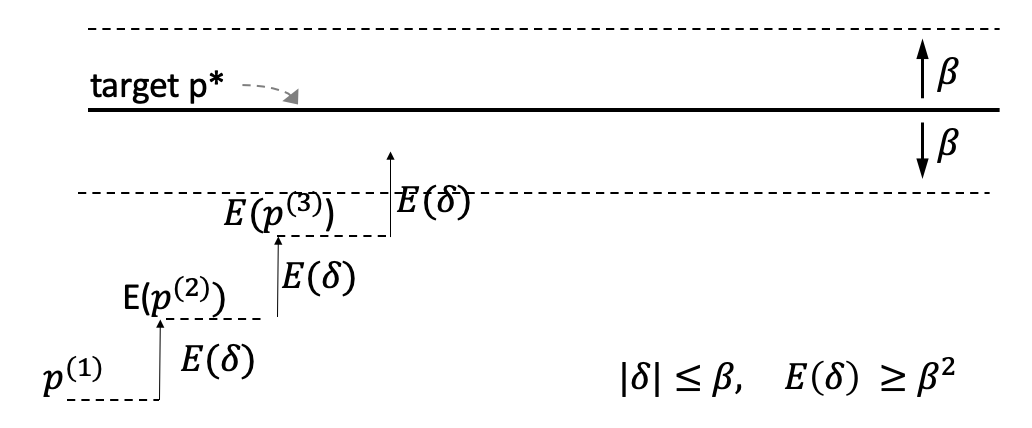}
  }}
  \end{center}
\vspace{.2cm}
\caption{A picture of the properties in Lemma
  \ref{lem:ema_shirnk}, upper bounding the expected number of time
  steps to enter the band $\tp\pm\lr$ by $\lr^{-2}$, whether from
  below or above the band
  (Theorem \ref{thm:converge}), where $\oat{\Delta}:=\tp-\oat{\ep}$
  and $\oat{\delta}:=|\oat{\Delta}|-|\ott{\Delta}{t+1}|$. (a) At any
  point, there are two possible outcomes after an update (weaken or boost), and
  the movement, $\delta$, could be toward or away from $\tp$ (\eg
  $\delta < 0$), but always $|\delta|\le \lr$. (b) As long as $\ep$ is
  not in the band, there is an expected movement, $\expd{\delta}$, of
  at least $\lr^2$, toward $\tp$. $\expd{\delta}$ is smallest when
  $\ep$ is near $\tp$ and largest, up to $\lr$, when $\ep$ is farthest
  ($\ep=0$ or $\ep=1$). }
\label{fig:proof} 
\end{figure}

Note that from property 2 above, there is always progress towards
$\tp$ {\em in expectation}, meaning that if $\oat{\ep} < \tp$, then
$E(\ott{\ep}{t+1}) > \oat{\ep}$, and if $\oat{\ep} > \tp$, then
$E(\ott{\ep}{t+1}) < \oat{\ep}$. This is the case even if the
probability of moving away is higher than $0.5$ ($\ep \le \tp < 0.5$).
We note however, that the result being in expectation, both the actual
outcomes for $\ott{\ep}{t+1}$ can be father from $\tp$ than
$\ott{\ep}{t}$ (consider when $\ott{\ep}{t}=\tp$). Property 3 puts a
floor (a minimum) on amount of the progress towards $\tp$ in
expectation, when $\ep \not \in [\tp-\lr, \tp+\lr]$, For instance,
when $\ott{\ep}{t}\le \tp-\lr$ , it puts a minimum on the expected
positions $\expd{\ott{\ep}{t+1}}, \expd{\ott{\ep}{t+2}},
\expd{\ott{\ep}{t+3}}, \cdots$, until one such point crosses the band,
\fig \ref{fig:proof}(b), and Theorem \ref{thm:converge} follows.

\begin{theorem*}
  EMA, with a fixed rate of $\lr \in (0, 1]$, has an expected
    first-visit time bounded by $O(\lr^{-2})$ to within the band
    $\tp\pm\lr$.  The required number of
    updates, for first-visit time, is lower bounded below by
    $\Omega(\lr^{-1})$.
\end{theorem*}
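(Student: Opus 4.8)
The plan is to obtain both bounds directly from Lemma~\ref{lem:ema_shirnk}, treating the sequence of estimates $\oat{\ep}$ for item \itm{A} as a random walk and tracking the signed gap $\oat{\Delta} := \tp - \oat{\ep}$. The geometric picture is exactly \fig~\ref{fig:proof}: the walker cannot leap over the band $\tp\pm\lr$, and while it is outside the band it drifts toward $\tp$ by a definite amount in expectation.

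For the upper bound, I would first use the maximum-step property (part~1): while $|\oat{\Delta}| > \lr$, an update cannot flip the sign of $\oat{\Delta}$, so a walk that starts below $\tp-\lr$ stays below $\tp$ until it first enters the band. Fix such a start, say $\ott{\ep}{1} < \tp - \lr$ (the case $\ott{\ep}{1} > \tp + \lr$ is symmetric), and let $T$ be the first-visit time to $[\tp-\lr,\tp+\lr]$. On each step before $T$ the walker is outside the band, so part~3 applies and gives $\E{\ott{\ep}{t+1} - \ott{\ep}{t} \mid \ott{\ep}{t}} = \lr\,\E{\tp - \ott{\ep}{t}} \ge \lr^2$. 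Telescoping with linearity of expectation, as in the excerpt's displayed calculation, yields $\E{\ott{\ep}{k} - \ott{\ep}{1}} \ge (k-1)\lr^2$ as long as the walk has stayed below $\tp-\lr$ through time $k$. Since $\oat{\ep}\in[0,1]$ and $\tp\le 1$, the left-hand side is at most $1$, forcing $k-1 \le \lr^{-2}$; hence one of the first $\lceil \lr^{-2}\rceil + 1$ time points must lie in the band, so $\E{T} = O(\lr^{-2})$. For the lower bound, part~1 says each update moves $\ep$ by at most $\lr$, so starting from the natural initialization $\ott{\ep}{1}=0$ with $\tp$ bounded away from $0$ (\eg $\tp\ge 0.01$ as assumed throughout), covering the distance $\tp - \lr$ to the band requires at least $(\tp-\lr)/\lr = \Omega(\lr^{-1})$ updates.

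The main obstacle I anticipate is making the telescoping step fully rigorous in the presence of the random stopping time $T$: part~3 supplies the $\lr^2$ lower bound on expected progress only on steps where the walker is outside the band, so conditioning naively on $\{T > k\}$ mixes the bound with that event. The clean fix is to work with the stopped process $\ott{\ep}{t\wedge T}$, whose increments past $T$ are zero by convention; by parts~2 and~3 this is a submartingale with expected increments at least $\lr^2$ until absorption, and an optional-stopping / Wald-type inequality (applied to $\min(T,K)$ with $K=\lceil\lr^{-2}\rceil+1$) delivers $\E{T}\le K = O(\lr^{-2})$. Beyond this bookkeeping I expect no difficulty. Closing the gap between this $O(\lr^{-2})$ and the $\Theta(\lr^{-1})$ behavior suggested by Table~\ref{tab:ema_convergences} would need a genuinely different argument (variance estimates or a Chernoff-type concentration bound showing the walk advances by $\Theta(\lr)$ per step with high probability, and the existence of a stationary distribution concentrated near $\tp$), which I would leave to future work.
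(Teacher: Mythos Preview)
Your proposal is correct and follows essentially the same route as the paper: telescoping the expected $\lr^2$ drift from Lemma~\ref{lem:ema_shirnk}(3) against the unit range of $\ep$ for the upper bound, and the $\lr$ step-size cap from part~1 for the lower bound. Your explicit treatment of the stopping-time issue via the stopped process is more careful than the paper's argument, which glosses over that conditioning subtlety.
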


\begin{proof}
We are interested in maximum of first-time $k$ when the expected
$\expd{\ott{\ep}{k}} \in [\tp-\lr, \tp+\lr]$.  Using the maximum
movement constraint, as long as $|\tp - \oat{\ep}| > \lr$, an EMA
update does not change the sign of $\tp - \oat{\ep}$ ($\ep$ does not
switch sides wrt $\tp$, \eg if greater than $\tp$, it remains greater
after the update).  Thus, before an estimate $\oat{\ep} < \tp$ changes
sides, and exceed $\tp$, it has to be within or come within the band
$\tp\pm\lr$. Therefore, start with an arbitrary location
$\ott{\ep}{1}$ outside the band, say $\ott{\ep}{1} < \tp - \lr$
(similar arguments apply when $\ott{\ep}{1} > \tp + \lr$), and
consider the sequence, $\ott{\ep}{1}, \ott{\ep}{2}, \ott{\ep}{3},
\cdots, \ott{\ep}{k}$, where $\forall t, 1\le t \le k, \ott{\ep}{t} <
\tp-\lr$.  We can now lower bound the expected position of $\ott{\ep}{k}, k\ge 2$
wrt $\ott{\ep}{1}$, to be at least $(k-1)\lr^2$ above $\ott{\ep}{1}$:

\begin{align*}
\expd{\ott{\ep}{k} - \ott{\ep}{1}| \ott{\ep}{1}=p} &=
\expd{\ott{\ep}{k} - \ott{\ep}{k-1} + \ott{\ep}{k-1} - \ott{\ep}{1} | \ott{\ep}{1}=p} \\
 & \mbox{\ \  (insert all intermediate sequence members)} \\
&= \expd{\sum_{2\le t\le k}\ott{\ep}{t} - \ott{\ep}{t-1} | \ott{\ep}{1}=p}\\
&= \sum_{2\le t\le k} \expd{\ott{\ep}{t} - \ott{\ep}{t-1} | \ott{\ep}{1}=p} \ge (k-1)\lr^2,
\end{align*}
where we used the linearity of expectation, and the $\lr^2$ lower
bound for the last line.  With $\tp\le 1$, an upper bound of
$\frac{1}{\lr^2}$ on maximum first-visit time follows: $k$ cannot be
larger than $\frac{1}{\lr^2}$ if we want to satisfy $\forall 1\le t\le
k, \expd{\ott{\ep}{t}}< \tp-\lr$, or one of $t \in \{1, \cdots,
\frac{1}{\lr^2}+1\}$ has to be within the band $\tp\pm\lr$.

The lower bound $\Omega(\lr^{-1})$ on $k$ follows from the upperbound
of $\lr$ on any advancement towards $\tp$.
\end{proof}

The dynamics of the estimates $\oat{\ep}$ can be likened, in some
respects, to oscillatory physical motions such as the motion of a
pendulum and a vertically hung spring: the expected movement is 0 at
target $\tp$, corresponding to the resting length of the spring, or
its equilibrium length, where spring acceleration is 0, and the
expected movement is highest at the extremes (farthest from $\tp$),
akin to the acceleration (vector) of the spring being highest when it's
most stretched or compressed.

\co{

  
Interestingly, the bounds are independent of how far $\tp$ is from
$\ep$, for instance $\ep$ could be 0 and $\tp$ could be $1$. The
intuition is that the higher (farther) $\tp$ also lead to more
positives observed (more chances to make progress toward them).

}

\co{
We now show each time step is a positive progress, in
expectation, towards the band $\tp\pm\lr$ (a step up), in particular
$\expd{\ott{\delta}{t}}=\expd{\ott{\ep}{t+1}-\ott{\ep}{t}}\ge \lr^2$:
\begin{align*}
\hspace*{-.5in}\expd{\ott{\delta}{t}}=\expd{\ott{\ep}{t+1}-\ott{\ep}{t}}&=
\expd{\tp-\ott{\ep}{t}-(\tp-\ott{\ep}{t+1})} =
\expd{\tp-\ott{\ep}{t}} - \expd{\tp-\ott{\ep}{t+1}}
\mbox{\ \ (linearity of expectation)} \\
&= \expd{\tp-\ott{\ep}{t}} - (1-\lr)\expd{\tp-\ott{\ep}{t}}
= \lr \expd{\tp-\oat{\ep}} \mbox{\ \  (use of Lemma \ref{lem:ema_shirnk}, property 2)} \\
& \ge \lr^2 \mbox{\ \  (from our assumption on the sequence:
  $\forall t\le k, \ott{\ep}{t} \le \tp-\lr$)}
\end{align*}
}

\co{
\subsection{EMA with Harmonic Decay}

Here, we show that decreasing the rate $\lr$ of EMA in harmonic manner
is equivalent to plain averaging (shown earlier in
\cite{expedition1}). The equivalence remains the case until the rate
reaches the minimum allowed $\lrmin$, and can be shown by expanding
the sum and noting that it telescopes:
}




\section{Further Material on the \qu (Queues) Method}
\label{sec:qanalysis}

We begin by reviewing a few properties of a single completed cell of a
queue in the \qu method, which is equivalent to the experiment of
tossing a biased two-sided coin, with unknown heads \pr $\tp > 0$ (the
target of estimation), counting the tosses until and including the
first heads outcome.  We can repeat this experiment until we get $k
\ge 1$ heads outcomes. The count in each completed cell follows the
geometric distribution, and the total number of trials, or the total
count over all completed cells, minus the number of heads, has the
more general negative binomial distribution (with parameters $k$ and
$\tp$). In the next section we look at estimators for $\tp$ using
counts from several completed cells, and describe how the general
method of Rao-Blackwellization can be applied there to get a superior
estimator, in a sense described next. In \sec \ref{app:qsum_etc}, we
proceed to multiple queues, and explore the spread of the \pr
estimates in a $\qmap$.

Ideally, we desire estimators that have no or little bias, that is, if
we took the average over many repetitions of the same experiment (\eg
by different people using the same data collection technique, and
assuming $\tp$ is not changing), the average over all the estimates
would converge to $\tp$. We also prefer low variance, implying that
any particular estimate in time has low chance of being far from
target. The two estimation goals are often distinct, for instance an
estimator can have zero variance (fully stable) but nonzero bias, and
different ways of collecting data and estimation techniques can
exhibit these tradeoffs. We observe, for instance, that the bias of a
(MLE) technique below gets worse, in the ratio sense, as the target
$\tp \ra 0$.

\subsection{A Single Completed Queue Cell}
\label{sec:one_qcell}

With $\c$ denoting random variable (\rvn) for the count in a completed
cell of a queue, in the binary iid setting (\sec \ref{sec:binary} and
\ref{sec:qs}), the expectation of the estimator $X=\frac{1}{\c}$ for
$\tp \in (0, 1]$, which is known to be the maximum likelihood estimator
  (MLE) for the geometric and the more general negative binomial
  distributions \cite{math_stats18,math_stats16,wikip1}, can be
  expressed as the infinite series below: $\tp$ of the time, we get a
  heads outcome on the first toss, and the estimate $\frac{1}{\c}$ for
  $\tp$ is 1, and $(1-\tp)\tp$ of the time, we get a single tails and
  then a heads, and the estimate is $1/2$, and so on.  The series is
  called a geometric-harmonic series, and has a closed form in terms
  of the natural logarithm:
\begin{align*}
\hspace{-.37in}  \expd{X} = \expd{\frac{1}{\c}} = \tp\frac{1}{1} + (1-\tp)\tp\frac{1}{2} + (1-\tp)^2\tp\frac{1}{3} +
\cdots = \tp \sum_{i \ge 1} (1-\tp)^{i-1}\frac{1}{i} =
 \frac{-\tp\log(\tp)}{1-\tp}
\end{align*}

From the series, $\tp + (1-\tp)\tp\frac{1}{2} + \cdots$, it is seen
for $\tp \in (0, 1)$, as all series elements are positive, that
$\expd{\frac{1}{\c}} > \tp$, or the MLE $X=\frac{1}{\c}$ is a
biased, upper bound, estimator of $\tp$ in expectation.  The ratio
$\frac{\expd{\frac{1}{\c}}}{\tp} = \frac{-\log(\tp)}{1-\tp}$, and we
can verify that this ratio grows unbounded as $\tp \ra 0$ ( $\lim_{\tp
  \ra 0}\frac{-\log(\tp)}{1-\tp} = +\infty$). At $\tp=1$,
$\expd{\frac{1}{\c}}=1$ and there is no bias, thus the bias, in a
relative or ratio to $\tp$ sense, gets worse as $\tp$ gets
smaller.\footnote{One can also verify that the derivative of the
ratio, $\frac{-1}{\tp(1-\tp)}-\frac{\log(\tp)}{(1-\tp)^2}$, is
negative with $\tp \in (0, 1)$, therefore the ratio is indeed a
decreasing function as $\tp \rightarrow 1$. }


We now look at the variance, $\var{X}$ of \rv $X=\frac{1}{\c}$, and how
its ratio to $\tp$ changes as $\tp$ is reduced. For any \rv $X$,
$\var{X} := \expd{(X-\expd{X})^2}$, or $\var{X} = \expd{X^2}-
(\expd{X})^2$ (linearity of expectation).

\begin{align*}
\frac{\var{X}}{\tp} = \frac{\expd{X^2}}{\tp} - \frac{(-\tp\log(\tp)/(1-\tp)))^2}{\tp}
\end{align*}

The limit of the 2nd term, $\frac{\tp(\log(\tp))^2}{(1-\tp)^2}$, as
$\tp \ra 0$, is 0, and for the first term:

\begin{align*}
& \expd{X^2} = \tp\frac{1}{1^2} + (1-\tp)\tp\frac{1}{2^2} + (1-\tp)^2\tp\frac{1}{3^2} +
\cdots = \tp\sum_{i\ge 1}(1-\tp)^{i-1}\frac{1}{i^2}\\
& \Rightarrow \lim_{\tp \ra 0} \frac{\expd{X^2}}{\tp} = \lim_{\tp
    \ra 0} \sum_{i\ge 1}(1-\tp)^{i-1}\frac{1}{i^2} = \sum_{i\ge 1}\frac{1}{i^2} = \frac{\pi^2}{6}
\end{align*}

The last step is from the solution to the Basel problem.\footnote{The
Basel problem, solved by Euler, and
named after Euler's hometown, is finding a closed form for sum
of the reciprocals of the squares of the natural numbers:
$\sum_{i\ge 1}\frac{1}{i^2}$ \cite{wikip1}.}
%
Therefore, $\lim_{p \ra 0} \frac{\var{X}}{\tp} = \frac{\pi^2}{6}.$ We
note that at $\tp=1$, the variance is 0 (as $\expd{X}=1$, and also
$\expd{X^2}=1$ from the series above).  So the ratio of variance to
$\tp$, the relative variance, grows but is bounded as $\tp\ra 0$. That
the variance goes up is consistent with the observation that rates of
violation also go up, as $\tp$ is reduced, in Table
\ref{tab:plain_counting} (using a similar estimator), and Table
\ref{tab:stat_devs} on performance of \qu (using 5 and 10 queue cells).



\co{
The ratio to p is pure

The expectation a harmonic-geometric series, sums to:
$-p/(1-p)\ln(p))$ and the ratio to p is thus $-ln(p)/(1-p)$, and this
is an increasing function of p, without bound as -ln(p) -> infinity
(and denominator goes to 1) ...

The variance of the sample estimate, and its ratio to p: $p(sum
(1-p)^(i-1)(1/i)^2)$, and its ratio to p (ratio of variance to the
quantity we are estimating) is, as p goes to 0, or 1-p goes to 1, the
sum of reciprocals of natural numbers squared, which is the Basel
problem, see wikipedia. converges to $pi^2/6$ !! (Euler solved it)..

The plain cell estimates are all upper bounds, due to Jensen's
inequality (for any k-cell queue)

Lower bounds... proof (that it's a lower bound)

some asymptotics
}

%


\co{
\subsubsection{A Single Cell}

Estimating the heads probability $p$ of a weighted (biased) coin.

\[
  E(X) = \mu = p\frac{1}{1} + (1-p)p\frac{1}{2} + (1-p)^2p\frac{1}{3} +
\cdots = p\sum_{i\ge 1}(1-p)^{i-1}\frac{1}{i}
\]

The sum is a harmonic-geometric series.

ratio to $p$, and limit as $p \rightarrow 0$:

\[
\lim_{p\rightarrow 0} \frac{E(X)}{p} = (-p/(1-p)\ln(p))/p = \lim_{p\rightarrow 0} \ln(p)= \infty
\]

And for variance and its ratio to $p$:

\[ VAR(X) = E((X-E(X))^2) = E(X^2)- (E(X))^2 \Rightarrow \frac{VAR(X)}{p} = \frac{E(X^2)}{p} -
\frac{(p/(1-p)\ln(p)))^2}{p}
\]

\[
E(X^2) = p\frac{1}{1^2} + (1-p)p\frac{1}{2^2} + (1-p)^2p\frac{1}{3^2} +
\cdots = p\sum_{i\ge 1}(1-p)^{i-1}\frac{1}{i^2}
\]
\[
\Rightarrow \lim_{p \rightarrow 0} \frac{E(X^2)}{p} = \lim_{p
  \rightarrow 0} \sum_{i\ge 1}(1-p)^{i-1}\frac{1}{i^2} = \sum_{i\ge 1}\frac{1}{i^2} = \frac{\pi^2}{6}
\]

\[
\Rightarrow \lim_{p \rightarrow 0} \frac{VAR(X)}{p} =
\frac{E(X^2)}{p} - \frac{(p/(1-p)\ln(p)))^2}{p} = \frac{\pi^2}{6} - \lim_{p \rightarrow 0} p(\ln(p))^2 = \frac{\pi^2}{6}
\]

}

\subsection{Multiple Completed Cells: Rao-Blackwellization}
\label{app:rb}

We have $k\ge 2$ completed queue cells, with $\c_i$ denoting the count
of cell $i, 1\le i\le k$, or repeating the experiment toss-until-heads
$k$ times. We first sketch the Rao-Blackwellization (RB) derivation
for an unbiased minimum variance estimator, then briefly look at a few
related properties and special cases.

Let the (good) estimator $G_k = \frac{k-1}{(\sum_{1\le i\le k} \c_i) -
  1}$. The RB technique establishes that this is an unbiased estimator,
\ie $\expd{G_k}=\tp$, and moreover it is the minimum variance unbiased
estimator (MVUE) \cite{marengo2021}. We begin with the simple and
unbiased but crude estimator $\Theta_1 = [[\c_1=1]]$, where we are
using the Iverson bracket: $\tp$ of the time $\Theta_1$ is $1$ (the
first toss is heads), and otherwise $\Theta_1 = 0$, thus indeed
$\expd{\Theta_1}=\tp$. This estimator is crude (highly variant) as it
ignores much information such as the other counts $\c_i, i\ge 2$, and
we can use RB to derive an improved estimator from $\Theta_1$. Let
$\Y=\sum_{i=1}^k \c_i$ ($\Y$ is the total number of tosses), $\Y\sim$
NB($k, \tp$), \ie $\Y$ has the negative binomial distributions with
parameters $k$ and $\tp$. $\Y$ is a sufficient statistic for
$\Theta_1$ \cite{marengo2021,wikip1}. Then the RB estimator is the
conditional expectation $\E{\Theta_1|\Y}$, and because $\Theta_1$ is
unbiased, this estimator is also unbiased as conditioning does not
change bias status and can only improve the variance, and as $\Y$ is a
sufficient statistic, this estimator is in fact the MVUE
\cite{rao45,blackwell47,lehman50}, and can be simplified to:


\begin{align*}
\hspace*{-.25in}  \E{\Theta_1|\Y=y} = P(\c_1=1|\Y=y) = \frac{k-1}{y-1} \hspace*{.4cm} \mbox{(Rao-Blackwellization of
     $\Theta_1:= [[\c_1=1]]$)}
\end{align*}

where we know that the last coin toss
is always a heads (from our data collection set up)
and
this leaves $k-1$ heads
and $y-1$ unaccounted-for tosses. Take the case of $k=2$ cells or
heads. The last heads is fixed, and the first one has $y-1$ positions
to pick from, all equally likely, thus $P(\c_1=1|\Y=y) =
\frac{1}{y-1}$ when $k=2$. More generally for $k\ge 2$, as the tosses
are exchangeable with this conditioning, \ie throwing $k-1$ balls into
$y-1$ bins (each bin can contain 1 ball only), the probability that
one falls in bin 1 is $\frac{k-1}{y-1}$.\footnote{One can also look at
the process sequentially, and the probability that the first ball
misses ($\frac{y-2}{y-1}$), but the second ball falls in position 1 is
$\frac{y-2}{y-1}\frac{1}{y-2}$, and so on, or $P(\c_1=1|\Y=y) =
\sum_{j=1}^{k-1}(\frac{y-j}{y-1}\frac{1}{y-j}) = \frac{k-1}{y-1}$.}

\co{
the probability that the first of $k-1$ falls in position
1 is $\frac{1}{y-1}$, while the probability that the first misses,
$\frac{y-2}{y-1}$, but the second falls in position 1 is
$\frac{y-2}{y-1}\frac{1}{y-2}$, and so on, or $P(\c_1=1|\Y=y) =
\sum_{j=1}^{k-1}(\frac{y-j}{y-1}\frac{1}{y-j}) = \frac{k-1}{y-1}$.
}




\co{
where the second equality comes from the fact that the last coin toss
is always a head from the structure of our data
collection, and the final equality from the fact that this leaves
$k-1$ heads spread out randomly amongst the $c-1$ unaccounted-for
tosses; as the tosses are exchangeable, the probability that the first
toss - i.e., $\Theta_1$ equals 1 is just $(k-1)/(y-1)$, which this is
also its expected value.
}

With $k=2$ completed cells (thus $\Y\ge 2$), one can show that the
estimator $\frac{1}{\Y-1}$ is unbiased more directly (but the property
of minimum variance is stronger in above):
\begin{align*}
  \E{\frac{1}{\Y-1}}=\sum_{i\ge 2} \frac{P(\Y=i)}{i-1} = \sum_{i\ge 2} \frac{(i-1)\tp^2(1-\tp)^{i-2}}{i-1}
  =\tp^2\sum_{i\ge 2}(1-\tp)^{i-2} = \tp
\end{align*}
where, $P(\Y=i)=(i-1)\tp^2(1-\tp)^{i-2}$, as there are $i-1$
possibilities for the first heads outcome, each having equal
probability $\tp^2(1-\tp)^{i-2}$, and the last equality follows from
simplifying the sum of the geometric series ($\sum_{i\ge 0} r^i =
\frac{1}{1-r}$, for $r\in(0,1)$).

From the RB estimator, it follows that $\up_k = \frac{k}{Y}$ and
$\lo_k = \frac{k-1}{Y}$ (where $Y := \sum_{i=1}^{k} \c_i$ as in above)
with increasing $k$ form a sequence respectively of upper bounds and
lower bounds, in expectation, for $\tp$: $\expd{\lo_k} < \tp <
\expd{\up_k}$.  That the estimator $\up_k$ is biased positive can also
be seen from an application of Jensen's inequality \cite{cover91},
using linearity of expectation on the sum of \rvs, and the fact that
$\E{\c_i}=\frac{1}{\tp}$ (the mean of the geometric distribution,
which can be verified by writing the expectation expression), as
follows: Jensen's inequality for expectation is $f(\expd{X}) <
\expd{f(X)}$, where the strict inequality holds when these two
conditions are met, 1) \rv $X$ has finite expectation and positive
variance (true, in our case, when $\tp < 1$), and 2) $f(x)$ is
strictly convex. In our case, $f(x)=\frac{1}{x}$ (a strictly convex
function). For one completed cell, we have
$f(\expd{X})=f(\expd{\c_1})=f(\frac{1}{\tp})=\tp$, therefore, using
Jensen's inequality, $\tp <
\expd{f(X)}=\E{\frac{1}{\c_1}}$. Similarly, for $k \ge 2$,
$f(\expd{\Y}) = f(\expd{\sum_{}\c_i}) = f(\frac{k}{\tp}) =
\frac{\tp}{k}$. And $\expd{f(\Y)}=\expd{\frac{1}{\Y}}$, therefore (via
Jensen's), $\tp < k\expd{\frac{1}{\Y}}=\expd{\up_k}$.


\co{
Denote the simple estimate of a queue of $k$ completed cells by
$\up_k = \frac{k}{\sum_{1\le i\le k} c_i}$.

\begin{lemma}
For any $i \ge 1$ (natural numbers), $E(\up_i) > p$, and $E(\lo_i) <
p$.
\end{lemma}

\begin{proof}
  The upper bound is Jensen's inequality...
  
\end{proof}

\begin{itemize}
  \item A queue representing higher prob item, say 0.5, of capacity k,
    say k=4, in effect represents a shorter history of events (on
    average) compared to the same capacity queue representing a lower
    prob item (say p=0.01)..
  \item And the probs obtained from each queue don't form a
    probability distribution.. for instance, they can add up to more
    than 1.0: imagine item A is seen one or more consecutive times,
    then item B is seen a few consecutive times, so B would be
    estimated to be 1, but A's estimated prob is positive too.
  \item We could normalize, and we do that in the experiments (eg
    plain normalize), but that doesn't address the issue (why not?) ..
    NOTE: we can easily see that the less frequent items can be easily
    over estimated...
\end{itemize}

Two techniques to ponder (I haven't been able to make them work or
haven't pursued them..):

\begin{itemize}
  
\item what about multiple EMAs with different learning rates? A high
  learning like 0.5 or 0.1 is very agile/adaptive, but can't do
  justice to steady low prob items. But what if we have multiple such
  EMAs, and the higher-rate EMAs ``handoff'' to lower-rate (more
  accurate but slower) EMAs ..

\item what about queue or some moving average that has fixed horizon
  or history length, eg up to last 100 or 1000 steps, and can handle
  multiclass.. ??
  
\end{itemize}

}

  
\subsection{Multiple Queues: \pr Sums and the Spread of the \prs}
\label{app:qsum_etc}


Unlike EMA, with its particular weakening step, even though the \qu
method also has in effect a weakening step (a negative update), the
\pr estimates from all the queues of a \qu predictor do not form a \di
or even an \sdn, as the sum can exceed 1.0. \sec \ref{sec:qu_sds} gave
an example, and here we delve deeper. Let $\Q$ denote the item to
(raw) \pr from the \qu method, \ie before any normalizing (and at any
time point $t\ge 1$).  When we feed $\Q$ to \fcap(), scaling is
performed to ensure a \sd is extracted. In effect, \fcap() normalizes
by the sum, but how large does $\sm{\Q}$ (sum of the raw \prsn) get,
violating the \sd property, in the worst case?

\subsubsection{MLEs via Single-Cell Queues}

Assume each queue had one cell only, and we used the simple MLE,
$\frac{1}{\c_0}$, and the stream is composed of $n$ unique items (and
with no pruning): then the \pr estimates are 1 (for the latest
observed item), 1/2 (next to latest), 1/3, and so on, and $\sm{\Q}$
has the growth rate of a harmonic series, which for $n$ (unique)
items, is approximately $\log(n)+0.577$ (0.577 is called the
Euler-Mascheroni constant).


While $\sm{\Q}$ can be above 1.0, a related question is about the form
and spread of the \prs in $\Q$. For instance, can $\Q$ contain 3 or
more \prs equal to $\frac{1}{2}$, or 4 or more $\frac{1}{3}$ \prs,
violating the \sd property by having too many equal \prsn?
Given a threshold $p$, \eg $p=\pmin$, let
\begin{align}
N(\Q, p) := |\{i | \Q(i) > p \}|. \mbox{}
\end{align}
$N(\Q, p)$ is
more constrained than
$\sm{\Q}$, as we will see below.
$N(\Q, p)$ is of interest when we want to use the raw \pr
values in $\Q$ from a \qu technique, and do not want to necessarily
normalize $\Q$ at every time $t$, for instance for the efficient
sparse-update time-stamp \qu method when keeping many \prs for
millions of items (\sec \ref{sec:timestamps}).  If the \prs formed a
\sd, then $\frac{1}{p}$ would be the bound.  In the simpler case of
\qu with \qcap$ = 1$ and using the
MLE $\frac{1}{\c_0}$, we can show the same constraint $\frac{1}{p}$
holds, in the next lemma below.  Note that when pruning the $\qmap$ we
are in effect using the MLE with \qcap$ = 1$, thus understanding its
properties is motivated from the pruning consideration as well.

Let the denominator of \pr $\Q(i)$ be denoted by $Y_i$. In the case of
MLE with \qcap$ = 1$, $Y_i$ is $\c_0$, and more generally it is the
sum of cell counts. $\oat{Y_i}$ denotes the value at time $t$.



\begin{lemma}\label{lem:qcap1}
  For the \qu technique with \qcap$ = 1$, using the MLE
  $\frac{1}{\c_0}$, we have the following properties at any time point
  $t\ge 2$ (on any input sequence):
  \begin{enumerate}
  \item $\oat{Y_i}=1$ if $i$ was observed at time $t-1$, and otherwise
    $\oat{Y_i}=\ott{Y_i}{t-1}+1$. 
  \item There is exactly one item $i$ with $\oat{Y_i}=1$ and thus \pr
    $\oat{\Q}(i)=1$, the item observed at $t-1$.  For any integer
    $k\ge 2$, there is at most one item with $\oat{Y_i}=k$ or \pr
    $\oat{\Q}(i)=\frac{1}{k}$.
  \item For any threshold $p > 0$, $N(\Q, p) < \frac{1}{p}$.
  \end{enumerate}
\end{lemma}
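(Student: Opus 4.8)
The plan is to prove the three claims in order, each using the previous one, by exploiting the simple structure of a queue when \qcap$= 1$: such a queue consists of the single cell \czn, whose count $Y_i$ is reset to $1$ by every \textbf{PositiveUpdate} (the old cell being discarded at once) and is incremented by $1$ by every \textbf{NegativeUpdate}.

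For claim~1, I would do a direct case analysis from the pseudocode of \fig \ref{fig:queues}, keeping in mind the predict-observe-update convention: the queues consulted to produce $\oat{\Q}$ have been updated with exactly the observations $\ott{o}{1},\ldots,\ott{o}{t-1}$. If $\ott{o}{t-1}=i$, the last update to $i$'s queue was positive, so \czn was (re)created with count $1$ and nothing has touched it since; hence $\oat{Y_i}=1$. If $i$ already has a queue and $\ott{o}{t-1}\ne i$, the last update was negative, incrementing \czn's count, so $\oat{Y_i}=\ott{Y_i}{t-1}+1$. The only care needed is that $t\ge 2$ (so that time $t-1$ exists) and that we are speaking of an item $i$ that actually has a queue at time $t$ (if it does not, $\oat{\Q}(i)=0$ and $i$ is irrelevant to $N(\Q,p)$).

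For claim~2, I would iterate claim~1. Since exactly one item is observed at each time step, exactly one item has $\oat{Y_i}=1$, namely $\ott{o}{t-1}$. For an integer $k\ge 2$, unwinding claim~1 shows that $\oat{Y_i}=k$ forces $\ott{o}{t-1}\ne i,\ \ott{o}{t-2}\ne i,\ \ldots,\ \ott{o}{t-k+1}\ne i$ and $\ott{o}{t-k}=i$; in words, $t-k$ is precisely the time of the most recent occurrence of $i$ before $t$. Because only one item is observed at time $t-k$, at most one item can satisfy $\oat{Y_i}=k$ (``at most'' rather than ``exactly'' since the item observed at $t-k$ may have reappeared later, lowering its count).

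Claim~3 then falls out immediately: $\oat{\Q}(i)=\tfrac{1}{\oat{Y_i}}\ge p$ iff $\oat{Y_i}\le \tfrac{1}{p}$, i.e.\ $\oat{Y_i}\in\{1,2,\ldots,\lfloor 1/p\rfloor\}$, and by claim~2 each of these at most $\lfloor 1/p\rfloor$ integer values is attained by at most one item, so $N(\Q,p)\le \lfloor 1/p\rfloor\le 1/p$. I do not anticipate a real obstacle here; the work is entirely bookkeeping around the timing convention and the edge cases (newly created queues, small $t$, items with no queue). If any step deserves attention, it is phrasing claim~2 cleanly as the statement ``$\oat{Y_i}=k$ $\iff$ the last occurrence of $i$ before time $t$ was at $t-k$'', from which both the counting in claim~2 and the bound in claim~3 are one-liners.
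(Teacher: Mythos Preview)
Your proposal is correct and follows essentially the same route as the paper: claim~1 from the update rules, claim~2 by showing each integer value of $Y_i$ is attained by at most one item, and claim~3 by counting values $\le 1/p$. The only cosmetic difference is that for claim~2 the paper phrases the argument as an induction on $k$ with a contradiction step (two items with $Y=k+1$ at time $t$ would both have $Y=k$ at $t-1$), whereas you unwind claim~1 directly to the characterization ``$\oat{Y_i}=k$ iff $i=\ott{o}{t-k}$''; both are the same idea.
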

\begin{proof}
  Property 1 follows from how the \qu technique allocates new queue
  cells and increments counts: at each time $t$ exactly one item is
  observed, its $\c_0$ initialized to one upon update, it thus gets a
  \pr of $1$ at $t+1$. Any other item $i$ in the map $\Q$ gets its
  $\c_0$ incremented, to 2 or higher, or
  $\oat{Y_i}=\ott{Y_i}{t-1}+1$. Thus exactly one item, the observed
  item at $t-1$, has $\oat{Y_i}=1$.  Property 2 completes using
  induction on $k \ge 1$, the base case is 1st half of property 2, and
  for the induction step, we use part 1: Assuming it holds for all
  integer up to $k \ge 1$, the property for $k+1$ can be established
  by contradiction: if there are two or more items with $\oat{Y}=k+1$,
  these items must have had $\ott{Y}{t-1}=k$ (neither could have been
  observed at $t-1$) contradicting the at-most-one property for $k$.

  From part 2, it follows that the maximum number of \prs $N(\Q,
  p)$ is $k$
for probability threshold $p = \frac{1}{k} > 0$, where $k$ is an
integer, $k \ge 1$ (with at most one \pr $\frac{1}{j}$, for each
$j\in\{1,2,\cdots,k\}$). As all \prs in $\Q$ are limited to integer
$\frac{1}{k}$ fractions (the harmonic numbers), the constraint remains
$\frac{1}{p}$ for any threshold $p > 0$ (not just integer fractions).
\end{proof}





\subsubsection{Some Properties when Estimating via {\bf GetPR}() (MVUEs, Several Cells per Queue) }

We next explore the same question of the number of \pr values, and
related properties, when using more cells per queue, and where we use
the {\bf GetPR}() function in the \qu technique (\fig
\ref{fig:queues}).  Let $\q(i)$ be short for $\qmap(i)$, \ie the queue
for $i$ (when the queue exists). Thus for item $i$, the \pr $\Q(i)$ is
GetPR($\q(i)$).
Here, with several cells, we need to use the count for \czn, $\c_0$,
\eg we use the estimate $\Q(i)=\frac{1}{\c_0+\c_1-1}$ for two
cells. Otherwise, if counts from only completed cells, say $\c_1$ and
$\c_2$ are used (skipping \czn), we can have a worst-case sequence
(with non-stationarity) such as $AAABBBCCC\cdots$ (an item never
occurs after appearing a few times) where the \pr estimates (using
only completed cells) for many items are all $1$. Using the partial
\cz breaks this possibility, and we can establish properties similar
to the previous section.
Let $|\q(i)|$ denote the number of cells in the queue, and define
$Y_i$ to be the denominator used in GetPR(), $Y_i:=\sum_{0\le j <
  |\q(i)|}\c_j-1$, thus $\Q(i)$ is either 0, when there is no queue
for $i$, or otherwise $\oat{\Q}(i) =
\frac{|\oat{\q}(i)|-1}{\oat{Y_i}}$ ($\Q(i)$ can still be 0 when
$|\q(i)|=1$), where $\oat{Y_i}$ is the denominator and $\oat{\q}(i)$
is the queue of $i$ at $t$.



\begin{lemma}\label{lem:qu_prs}
  For the \qu technique with \qcap $ \ge 2$, for any item $i$ with
  a queue $\q(i)$, $|\q(i)| \le$ \qcapn, using the \pr estimate
  $\Q(i)=\frac{|\q(i)|-1}{Y_i}$, where $Y_i:=\sum_{0\le j <
    |\q(i)|}\c_j-1$, for any time point $t\ge 1$: 
  \begin{enumerate}
  \item $\oat{\Q}(i)$, when nonzero, has the form $\frac{a}{b}$, where $a$
    and $b$ are integers, with $b\ge a\ge 1$.
  \item If $i$ is not observed at $t$, then $\ott{Y_i}{t+1}=
    \ott{Y_i}{t}+1$ or $i$ is removed from the map $\qmap$. When $i$
    is observed at $t$ (exactly one such), then
    $\ott{Y_i}{t+1}\le\ott{Y_i}{t}$ when $|\oat{\q}(i)| =$ \qcapn, and
    $\ott{Y_i}{t+1}=\ott{Y_i}{t}+1$ when $|\oat{\q}(i)| < $ \qcapn.
  \item If $i$ is observed at $t$, then $\ott{\Q}{t+1}(i) \ge
    \ott{\Q}{t}(i)$. If $i$ is not observed at $t$, then
    $\ott{\Q}{t+1}(i) < \ott{\Q}{t}(i)$ or $\ott{\Q}{t+1}(i) = \ott{\Q}{t}(i) = 0$.
  \end{enumerate}
\end{lemma}

\begin{proof}
  Part 1 follows from $Y_i$ being an integer, and the \qu
  technique outputs a non-zero \pr only when $|q(i)|>1$, in which case
  $Y_i > 0$ (with two or more queue cells), and we also have $|q(i)|-1
  \le Y_i$ (each queue cell has count of at least 1, therefore, $Y_i +
  1 = \sum_{0\le j < |\q(i)|}\c_j \ge |\q(i)|$ ).
  
  Proof of part 2: if an item $i$ in $\qmap$ is not observed at
  $t$ and removed from $\qmap$, its count in \cz is always incremented
  for $t+1$, \ie $\ott{Y_i}{t+1}= \ott{Y_i}{t}+1$. Upon observing item
  $i$ at $t$, when $|\oat{\q}(i)| =$ \qcap, one cell is dropped with
  count $\ge 1$, and one cell is added, thus $\ott{Y_i}{t+1}\le
  \ott{Y_i}{t}$, and when $|\oat{\q}(i)| <$ \qcap, one cell with
  count 1 is added, thus $\ott{Y_i}{t+1} = \ott{Y_i}{t}+1$.
  
  Proof of part 3: On a negative update, the denominator of $\Q(i)$,
  $Y_i$, always goes up from part 2, while the numerator (or number of
  queue cells) doesn't change. On a positive update, when the queue is
  at capacity, the numerator does not change, while denominator may go
  down (part 2). When the queue hasn't reached capacity ($\oat{\q}(i) <$
  \qcapn), both the numerator and denominator go up by 1 each, but as
  the denominator is never smaller than numerator (part 1), the result
  is $\ott{\Q}{t+1}(i) \ge \ott{\Q}{t}(i)$.
\end{proof}

As property 3 above states, an item's \pr may not change after a
positive update: This happens when the item's queue is at capacity and
the last cell of the queue, to be discarded, has a 1 (one cell with
count of 1 is dropped, but another such is added at the back, and
$Y_i$ is not changed). The other case for no change is when $\Q(i)=1$
already initially and without discarding any cell (which can occur
after several consecutive initial observations of the item).

With \qcap$ > 2$, we can have $N(\Q, p) \ge \frac{1}{p}$. For
instance, with \qcap of 4, on the sequence of $AAAABB$, item $A$
reaches $\Q(A)=1$ at $t=3$ (after the update at $t=2$) through $5$,
and goes down to $3/5$ at $t=7$, while $B$ reaches 1 at the $t=7$,
thus with threshold $p=3/5$, $N(\ott{\Q}{7}, p) = 2 >
\frac{1}{p}=5/3$.

\begin{figure}[t]
\hspace*{-.03cm}   \subfloat[Determining the queue contents for an item, \qcap is 5.]{{\includegraphics[height=3.cm, width=8cm]{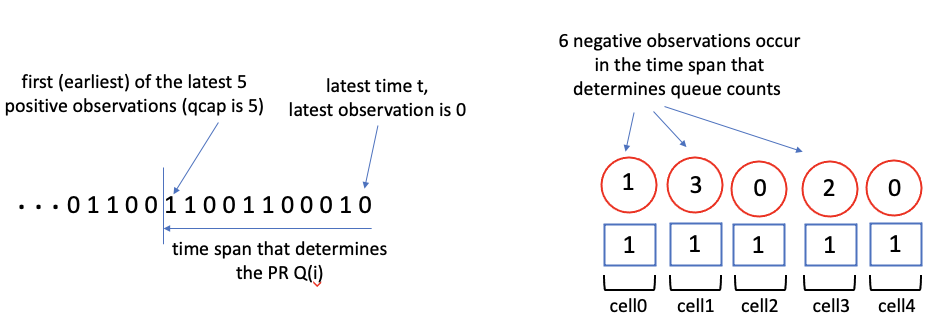} }}
\hspace*{.1in}  \subfloat[A way of generating many high \prs for the non-uniform \qun.]
           {{\includegraphics[height=3.cm, width=7cm]{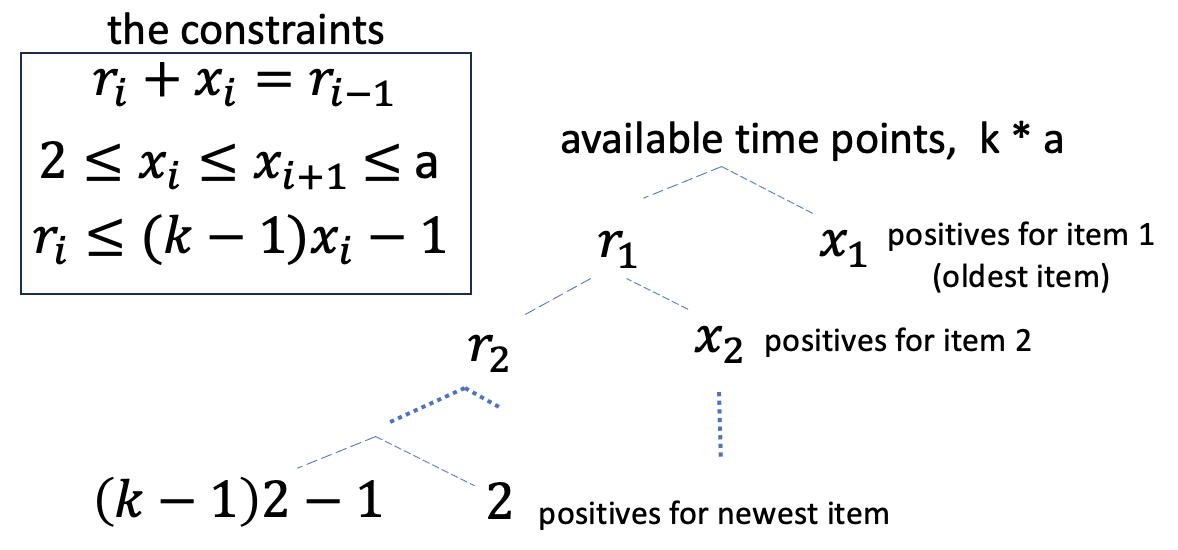} }}
\caption{(a) For the uniform \qu technique, at any time $t$, for any
  item $i$, what determines the queue counts of $i$, in particular the
  negative counts, and therefore the sum $\oat{Y_i}$ and
  $\oat{\Q}(i)$, is the time span $t'$ to $t$, where between $t'$ to
  $t$ (inclusive, or in $[t', t]$) exactly $a$ (\qcapn) positive
  observations occur (the latest $a$ positives). Left: In this example \qcap
  $= a = 5$.
  and an example binary sequence for item $i$ is shown. Here,
  $t'=t-(5+6)+1$, and exactly $a=5$ positives and $6$ negatives occur
  from $t'$ to $t$. Right: The contents of the 5 queue cells at time
  $t$. In the span $[t',t]$, 6 negative observations occur, the count
  in \cz is 2 (one positive, one negative), in cell1 is 3, and so on.
  (b) For non-uniform \qun, a process that generates a sequence
  leading to many items with \pr above $\frac{1}{k}$: for the first or
  oldest (top) item, item 1, allocate $x_1=a$ (or near $a$) positives,
  and remainder $r_1$ negatives, $r_1 \le (k-1)x_1-1$, so that
  $\frac{x_1-1}{x_1+r_1-1} > \frac{1}{k}$. Then, split the remainder
  $r_1$ (the negatives for item 1), into $x_2$ and remainder $r_2 \le
  (k-1)x_2-1$, $x_2$ positives for item 2, and repeat. The constraints
  are that $x_i\ge 2$ and the remainders $r_i$ be as large as possible
  subject to $r_i < (k-1)x_i$, and $x_i+r_i = r_{i-1}$. This process
  is possible in part because, for emitting a positive \prn, we do not
  require exactly $a$ positives ($x_i$ can be less than $a$). With
  high $k$ and $a$, this can lead to many items, roughly
  $\lg_{\frac{k}{k-1}}(ak)$, with \pr above $\frac{1}{k}$. }
  \label{fig:q_spread} 
\end{figure}

Let us call the variant of \qu as the {\em uniform } \qu technique
where for any item $i$, while $|\q(i)| <$ \qcapn, $\Q(i):= 0$, \ie
wait until queue of $i$ has reached its capacity before outputting a
positive \prn.  This variant simplifies the analysis of the worst-case
number of high \prs as we explore below. Note that plain \qu with
\qcap of 2 is already uniform.

For any item $i$, for any time $t$, consider the most recent, or the
last, $a=$ \qcap positive observations of item $i$ and let $t'$ be the
time of the first (earliest) of the last $a$ positives. From the way
the uniform \qu method works, it follows that this time span
determines the queue counts for item $i$, in particular the number of
negatives in $t'$ to $t$ in effect determines the \pr $\oat{\Q}(i)$,
as the number of positives is always $a$. \fig \ref{fig:q_spread}(a)
gives an example for \qcap of 5. We use reasoning about such time
spans to bound the number of items with high \pr:




\begin{lemma}
  \label{lem:q_spread}
  For the uniform \qu technique with $a:= $ \qcap, thus integer $a \ge
  2$, consider items $i$ with $\Q(i) > 0$ (\ie queue $\q(i)$ exists
  and $|\q(i)| = a$), where the \pr estimate
  %
  $\Q(i)=\frac{a-1}{Y_i}$, $Y_i:=\sum_{0\le j < |\q(i)|}\c_j-1$, is
  used.  Then at any time point $t\ge 1$, for any integer $k > 1$,
  $N(\Q, \frac{1}{k}) \le k-1$.
\end{lemma}
\begin{proof}
  At any time $t\ge a$, for an item $i$ (where
  $\Q(i)=\frac{a-1}{Y_i}>0$), go back in time until first time
  $t'=t-\Delta$ where $a$ positives are observed (as it is uniform
  \qun, and $\Q(i) > 0$, we must have observed at least $a$ positives
  from 1 to $t$, \ie $t'$ is well defined).  \fig
  \ref{fig:q_spread}(a) shows an example for $a=5$. Let $c$ be the count
  of the negatives (\ie when item $i$ is not observed) in this time
  span, \ie $[t-\Delta, t]$ (from $t-\Delta$ to $t$ inclusive). We
  have $Y_i$ is $c+a-1$ ($a$ positives and $c$ negatives), and
  $\oat{\Q}(i)=\frac{a-1}{c+a-1}$.

  We will show that we have to upper bound $c$, call it $\bar{c}$, if
  we want $\Q(i)$ to be sufficiently high, and the bound $\bar{c}$ in
  turn upper bounds the length of the time span $\Delta$, \ie $t'$ to
  current time $t$ (how much into the past we can go). As each
  such item requires $a$ positives (positive observations) in the same
  span, the maximum span is $\bar{c}+a$ for any such item, or
  $[t-(\bar{c}+a), t]$, and at any time point we get only one positive
  observation, we deduce we cannot ``fit'', or have too many such high
  \pr items in the same time span.
  
  For example, for threshold $p=\frac{1}{2}$ (showing $N(\Q,
  \frac{1}{2}) \le 1$), we must have $\bar{c} \le a-1$, as if $c \ge a$,
  then $\oat{\Q}(i)\le \frac{a-1}{a+a-1} = \frac{a-1}{2a-1} <
  \frac{a}{2a}$ (for the last, we used $1< a < 2a$). Therefore, any
  such item has $a$ positives in the last $\Delta \le 2a-1$ time
  points. We can have at most one item with $a$ positives in that span
  (leaving $a-1$ positives for all other items implying for any other
  item, its negative count $c$ is at least $a$ or its \pr at $t$ is $\le 1/2$).

  More generally, for any item $i$ with $\Q(i) > \frac{1}{k}$, $k\ge
  2$, then we must have $c\le \bar{c}=(k-1)a-1$ (if $c \ge (k-1)a$,
  $\Q(i)\le \frac{a-1}{(k-1)a+a-1} \le \frac{1}{k}$), or the maximum
  span $\Delta = ka-1$ for all such items. Any such item, within the
  same span of at most $[t-ka+1, t]$ requires $a$ positives. There can
  be at most $k-1$ such items.
\end{proof}


When \qcap $= 2$, like the case of Lemma \ref{lem:qcap1} (\ie using
one cell and the MLE), the (positive) \prs in $\Q(i)$ are the harmonic
fractions, but there is more variation here compared to the case of
MLE with \qcap of 1. For instance, it is possible that no item gets
\pr of 1 at certain time points $t\ge2$, while we can have two items
with \pr $\frac{1}{2}$, and in general, up to $k$ items with \pr
$\frac{1}{k}$, at which case we get a perfect \di (the sum adds to
1). In other cases, such as the sequence $AABBCCDD\cdots$, the sum can
exceed 1 substantially.  The underlying cause for violating the \di
property is that different items have different starting time points
(to keep a bounded memory while reacting to new items, and
non-stationarities), thus one recent item, using its own frame of
reference (starting time point), can have \pr of 1 (item $D$ in the
example $[AABBCCDD]$), while simultaneously, another item, seen
earlier in time, has a positive \pr such as $\frac{1}{2}$ for $C$.

As expected, with larger \qcap $> 2$, the fractions can be more
granular: There can be at most 1 \pr greater than $\frac{1}{2}$ at any
time point (from above Lemma), and with \qcap $=2$, the only
possibility is 1.0, but with \qcap $= 3$, both $1.0$ and $\frac{2}{3}$
are possible.

With the non-uniform \qun, there can still be at most one item with
\pr of 1, but the non-uniform \qu allows for additional degrees of
freedom for integer $k > 1$: \fig \ref{fig:q_spread}(b) shows a way of
building a sequence that results in a large set $S$ of items with \pr
above $\frac{1}{k}$.  For simplicity we can assume the occurrence of
items in $S$, to be described, forms the whole sequence (though the
actual sequence could be longer going further into the past with items
not in $S$).  First take the item with the highest number of positives in
$S$, call it item 1: item 1 will have $x_1=a$ positives in the
sequence and the largest possible negatives $r_1$, such that
$\frac{x_1-1}{x_1+r_1-1} > \frac{1}{k}$ (so roughly $r_1 <
(k-1)x_1$). We can think of the positives of item 1 appearing together
and before the $r_1$ negatives, the occurrence of other items of $S$.
\footnote{Though only the oldest occurrence of item 1 needs to occur
before any other item in $S$, so that presence of other items are
counted as negatives for item 1. A similar consecutive property can be
assumed for other items of $S$.}  The negatives for item 1, $r_1$ is
then split into $x_2$ ($2\le x_2 \le x_1$), $x_2$ being the count of
positives for item 2, and remainder $r_2$ (count of negatives for item
2), and again we could assume positives of item 2 appear before its
negatives (and after item 1's positives).  In general, the $x_i$
(count of positives for item $i$) should be no less than 2, and
$x_i+r_i=r_{i-1}, i\ge 2$. We want each $r_i$ to be as large as
possible, or $r_{i+1}$ be not much smaller than $r_i$, so we can fit
many items in $S$. Thus $x_i$, $i\ge 2$ should be as small as
possible, subject to $\frac{x_i-1}{r_i+x_i-1}>1/k$ and $x_i \ge 2$.
Thus each $r_{i+1} \approx \min(\frac{k-1}{k}r_i, r_i-2)$.

For instance with $k=2$ and $a=8$, we can get $x_1=8$ and $r_1=6$
($\frac{7}{13}>1/2$), and $r_1$ is split into $x_2=4, r_2=2$, and
finally $x_3=2, r_3=0$ ($|S|=3$).  The corresponding sequence would
be: $[11111111222233]$ (item 1 appears 8 times, then item 2 and item
3).  More generally, we can get roughly $\lg_2{a}$ items with \pr
exceeding $\frac{1}{2}$ when $k=2$. Higher $k$ (and capacity $a$)
leads to more items, up to $\lg_{b}{(k-1)a}$, where base
$b=\frac{k}{k-1}$. \footnote{The size $|S|$ will be less than
$\lg_{k/(k-1)}{(k-1)a}$, as $r_{i+1} := min(\frac{k-1}{k}r_i, r_i-2)$
(we need to allocate at least $k=2$ to $x_{i+1}$)}.



\section{An Efficient Time-Stamp Variant of \qs and Queue-Less \qd }
\label{app:qvars}


We describe the slightly different time-stamp method, which allows for
a more efficient $O(1)$ updating for \qun.  We then discuss
alternative (queue-less) possibilities for \qd (\sec \ref{sec:emas}).


\co{
We describe two variants of queuing, the slightly different time-stamp
method, which allows for a more efficient $O(1)$ updating, and the
multiclass box (fixed window) predictor, a static baseline akin to
fixed-rate EMA.
}

%

\subsection{A Time-Stamp Method for Sparse Updates}
\label{sec:timestamps} 

\co{
Another queue approach, again appropriate for nonstationarity, is what
we will referee to as the {\em time-stamp method}. But this works iff
we have a global clock and our predictor has to update on every tick
of the global clock.  Thus this method does not work for the above
scenario where we have many predictors, but only a small subset are
active (\ie are predicting and need to update) in an episode.
}


A small change to the values kept in the queues of the \qu method
makes the updating more efficient. In this {\em time-stamp} variant,
\fig \ref{fig:time_stamp}, each predictor also keeps a single counter,
or its own private {\em clock}.  Upon an update, the predictor
increments its clock. Nothing is done to the queues of items not
observed (thus, yielding a sparse efficient update). For the observed
item, a new \cz is allocated, similar to the case of plain \qun, and
the current
clock value is copied into it (instead of being initialized to 1,
which was the case for plain \qun). Existing cells of this queue, if
any, are shifted right, as before. Thus each queue cell simply carries
the value of the clock at the time the cell was allocated, and the
difference between consecutive cells is in effect the count of
negative outcomes (the gaps), from which proportions can be derived.
When predicting,
any item with no queue or a single-cell queue in $\qmap$, gets 0 \prn,
as before. An item with more than one cell gets a positive \prn, using
a close variant of the GetPR() function of \fig \ref{fig:queues} (\fig
\ref{fig:time_stamp}). The count corresponding to a queue is the
current clock value minus the clock value of \ck (the oldest queue
cell). The count is guaranteed to be positive, and is equivalent to
the denominator used in GetCount().  Then the \pr is the ratio of
number of cells in the item's queue, $k$, to its count.

This variation has the advantage that updating is $O(1)$ only with the
mild assumption that \qu operations and counting take constant time
(\ie incrementing
clock and the operations on a single item's
queue), while for the plain \qu technique, recall that it is
$O(|\qmap|)$. In this sense, this extension is strictly
superior.\footnote{The time-stamps can get large, taking log of stream
length. One can shift and reset periodically (\sec
\ref{sec:overflow}).} However, during prediction, if that involves
predicting all items' \prs in the map, complexity remains
$O(|\qmap|)$, so within the online cycle of predict-observe-update,
time costs would not change asymptotically.  However, the time-stamp
variant is useful in scenarios where, instead of asking for all \prsn,
one queries for the probability of one or a few items only, \eg when
one seeks the (approximate non-stationary) prior of a few items. In
particular, when the number of items for which one seeks to keep a
prior for is large, \eg millions (thus, we are interested in
estimating small probabilities, such as $p \le 10^{-6}$), a sparse
update is attractive.

\subsubsection{Overflow (of Counts)}
\label{sec:overflow}
Incrementing the clock with each update can lead to large clock values
(log of stream length) and require significant memory for a predictor
that frequently updates. One can manage such, and in effect limit the
window size and the prediction capacity, \ie the number of items for
which an approximate \pr is kept at any point, by periodically pruning
and shifting the update count value to a lower value.




\co{
On the other hand, this method is specially suited for computing the
top-level (non-stationary) priors of concepts. Unlike above, the
number of items (predictands) can be large here (while the number of
predictors is small), but only a small number of them are seen in a
given episode: for such observed items, their proportions need to get
updated.

}
\co{
that's incremented each time
the predictor updates (and no other time, so basically the predictor
updates on every clock tick), and this queue method works better with
a universe of many many items: the predictor updates on every tick,
but does sparse updates, ie only a few (from millions) need to be
updated in each episode:

old: Another queue approach, again appropriate for
nonstationarity.. however, works iff we have a clock that's
incremented each time the predictor updates (and no other time, so
basically the predictor updates on every clock tick), and this queue
method works better with a universe of many many items: the predictor
updates on every tick, but does sparse updates, ie only a few (from
millions) need to be updated in each episode:
}

\co{
Description: Each queue cell, for a single item, keep tracks of the
time (global clock value) it was created, and need not keep count: the
number of negatives is the current time minus its time, if its the
newest cell (back of the queue), and otherwise the time of the cell
added immediately after it minus its time. Again, we can ignore the
partial most-recent queue cell, and count estimates from others can be
pulled together, like before, and the numerator is the number of
complete queue cells.
}

\begin{figure}[t]
\hspace*{.1in}\begin{minipage}[t]{0.5\linewidth}
{\bf UpdateTimeStampQs}($\qmap, o$)  \\
\hspace*{0.3cm}  // Each predictor has its own clock (initially 0).\\ 
\hspace*{0.3cm}  // Update the clock.\\ 
\hspace*{0.3cm}  $clock \leftarrow clock+1$ \\
\hspace*{0.3cm} If item $o \not \in \qmap$: // when $o \not \in \qmap$, insert.\\
\hspace*{0.6cm}   // Allocate \& insert q for $o$. \\
\hspace*{0.6cm}   $\qmap[o] \leftarrow Queue()$ \\ 
\hspace*{0.3cm} // Exactly one positive update. \\
\hspace*{0.3cm} // (No explicit negative update) \\
\hspace*{0.3cm} PositiveUpdate(\qmap[o]) \\ \\ 
\end{minipage}
\hspace{0.3cm}
\begin{minipage}[t]{0.55\linewidth}
  \fontsmall{
        {\bf PositiveUpdate($q$) } // Adds a new (back) cell. \\ 
\hspace*{0.25cm} // Existing cells shift one position. Oldest cell \\
\hspace*{0.25cm} // is discarded, in when $q$ is at capacity. \\
        \hspace*{0.3cm}If $q.nc < q.qcap$:\\
        \hspace*{0.7cm}         $q.nc \leftarrow q.nc + 1$ // Grow the queue $q$.\\
        \hspace*{0.3cm}For $i$ in [1, $\cdots, q.nc-1$]: // Inclusive.\\
        \hspace*{0.7cm} $q.cells[i] \leftarrow q.cells[i - 1]$ // shift contents. \\
        \hspace*{0.3cm} $q.cells[0] \leftarrow clock$ // set newest, $\cz$, to clock. \\ \\
        {\bf GetPR}($q$) // Extract a \prn, using $q.nc$  \\
\hspace*{.3cm}         // (number of cells) and the time-stamps. \\
\hspace*{0.3cm}  If $q.nc \le 1$: Return 0 // Too few? (grace period).\\
\hspace*{0.3cm}  Return $\frac{q.nc - 1}{clock - q.cells[q.nc-1]}$ \\ \\ 
  }
\end{minipage}
\caption{Time-stamp queuing, with $O(1)$ updates, showing those
  function that are different from plain \qu technique of \fig
  \ref{fig:queues}: queue fields, \eg $q.nc$, and the {\bf Queue()}
  functions remain the same. There is no {\bf NegativeUpdate()}
  here. Others such as {\bf GetCount()} could be adapted too.  }
\label{fig:time_stamp}
\end{figure}


\subsection{Discussion: Why not multiple EMAs with different rates?}
\label{sec:emas}


Why can't we use two (or more) tiers of EMA, one with high fixed rate,
so agile and adaptive, another with a low and fixed rate, so stable
for fine tuning, rather than the current hybrid two-tier approach in
\qd that also uses queues and predictand-specific $\lr$s?  Queues are
an effective way to differentiate between noise items (below
$\minprob$) from salient items (with good likelihood of success). On
the other hand, EMA with a high fixed rate $\lr$ is too coarse
(non-differentiating) for learning \prs below the $\lr$.

It may be possible to use multiple {\em non-fixed} rate, ``scanning''
EMAs, where the learning rate for each scanner is repeatedly reset to
high but decreased over time (such as via harmonic decay), and using
more than one rate per predictand, to achieve similar goals (change
detection and convergence). We leave exploring this to future work.

\co{
With EMA at a high rate (\eg 0.1), every time an item is seen, its new
\pr is the rate (0.1) even if it is noise (should be 0) or
substantially lower eg 0.02. EMA with a high-rate is too coarse
(non-differentiating) for small probabilities.  With queues, for noise
and non-noise items, we can get a substantially better and nearly
unbiased initial \pr estimate, to base our decision on.  For plain
EMA, initially there is no way of distinguishing the learning rate
from the \pr estimate: initially the two are the same and once the
item is observed more, the \pr may separate and become bigger, and/or
$\lr$ may be lowered (with harmonic decay). So if we set EMA to a high
rate $\lr$, for it to be agile and detect new items or changes, it
fails to estimate and differentiate well for items with \pr lower than
the $\lr$ rate (whether \ns or not).
}



\section{Additional Synthetic Experiments} 
\label{sec:app_syn}

\subsection{Tracking One Item}


Table \ref{tab:devs_nonst_other_paparams} reports under the same
synthetic setting of Table \ref{tab:devs_non1}, \ie tracking the single item
1 in binary sequences, with a few different
parameter values:
deviation-rates with $d=1.1$ (convergence to within 10\%), \qu with
\qcap of 3 (as \qd uses this as its default), and \qdn with lower
rates.  As expected, $d=1.1$ yields high deviation rates, and
$\minobs$ of 50 helps lower the rate for any $d$ and method. Low
minimum rates for \qd (0.0001) help it achieve lowest deviation rates
for $d=1.1$.


\begin{table}[t]  \center
  \begin{tabular}{ |c?c|c|c?c|c|c?c|c|c| }     \hline
    deviation thresh. $\rightarrow$ & 1.1& 1.5& 2 & 1.1 & 1.5 & 2 & 1.1 & 1.5 & 2   \\ \medhline
    & \multicolumn{3}{c?}{\qun, 3 }
    & \multicolumn{3}{c?}{\qdn, 0.1} &  \multicolumn{3}{c|}{\qdn, 0.0001}  \\  \hline
    [0.025, 0.25], 10 & 0.906  & 0.542  & 0.277 & 0.904 & 0.637 & 0.496 & 0.864  & 0.591  & 0.408  \\ \hline
    [0.025, 0.25], 50 & 0.907  & 0.539  & 0.265 & 0.900 & 0.625 & 0.476 & 0.529  & 0.167  & 0.088  \\ \hline
    $\mathcal{U}(0.01,1.0)$, 10 & 0.877  & 0.522  & 0.282  & 0.894  & 0.615  & 0.463 & 0.890  & 0.562  & 0.351  \\ \hline
    $\mathcal{U}(0.01,1.0)$, 50 & 0.873  & 0.507  & 0.263  & 0.849  & 0.563  & 0.424 & 0.750  & 0.303  & 0.156  \\ \medhline 
    & \multicolumn{3}{c?}{Static EMA, 0.1 }
    & \multicolumn{3}{c?}{Static EMA, 0.01} &  \multicolumn{3}{c|}{\qdn, 0.01}  \\  \hline
    [0.025, 0.25], 10  & 0.898 & 0.619 & 0.469 & 0.827  & 0.510  & 0.357  & 0.813  & 0.481  & 0.329  \\ \hline
    [0.025, 0.25], 50 & 0.896 & 0.604 & 0.451 & 0.680  & 0.257  & 0.128  & 0.673  & 0.247  & 0.126  \\ \hline
  $\mathcal{U}(0.01,1.0)$, 10 & 0.892  & 0.608  & 0.446 & 0.926  & 0.684  & 0.478  & 0.896  & 0.583  & 0.360  \\ \hline
  $\mathcal{U}(0.01,1.0)$, 50 & 0.847  & 0.543  & 0.404 & 0.830  & 0.400  & 0.206  & 0.786  & 0.315  & 0.143  \\ \hline 
  \end{tabular}
  \vspace*{.2cm}
  \caption{As in \sec \ref{sec:syn_non_one}, for an additional
    evaluation setting ($d=1.1$) and algorithm parameters.}
\label{tab:devs_nonst_other_paparams}
\end{table}

Table \ref{tab:devs_non_1000ticks} repeats the uniform-setting
experiments of Table \ref{tab:devs_non1}, but with the extra
constraint that each stable period be at least 1000 time points.

\begin{table}[t]  \center
  \begin{tabular}{ |c?c|c?c|c| }     \hline
    deviation threshold $\rightarrow$ &  1.5 & 2 & 1.5 & 2   \\ \medhline   
    & \multicolumn{2}{c?}{\qun, 5} &  \multicolumn{2}{c|}{\qun, 10}  \\ \hline
    $\mathcal{U}(0.01,1.0)$, 10 & 0.226 $\pm$ 0.036  & 0.064 $\pm$ 0.023  & 0.103 $\pm$ 0.035  & 0.021 $\pm$ 0.020  \\ \hline
  $\mathcal{U}(0.01,1.0)$, 50 & 0.233 $\pm$ 0.049  & 0.052 $\pm$ 0.022  & 0.102 $\pm$ 0.029  & 0.023 $\pm$ 0.022  \\ \medhline 
    & \multicolumn{2}{c?}{ static EMA, 0.01 } &  \multicolumn{2}{c|}{static EMA, 0.001 } \\ \hline
 $\mathcal{U}(0.01,1.0)$, 10 & 0.093 $\pm$ 0.045  & 0.041 $\pm$ 0.023  & 0.487 $\pm$ 0.114  & 0.271 $\pm$ 0.117  \\ \hline
    $\mathcal{U}(0.01,1.0)$, 50 & 0.099 $\pm$ 0.051  & 0.043 $\pm$ 0.028  & 0.462 $\pm$ 0.149  & 0.311 $\pm$ 0.164  \\ \medhline
    & \multicolumn{2}{c?}{ Harmonic EMA, 0.01 } &  \multicolumn{2}{c|}{Harmonic EMA, 0.001 } \\ \hline
$\mathcal{U}(0.01,1.0)$, 10 & 0.094 $\pm$ 0.052  & 0.038 $\pm$ 0.018  & 0.413 $\pm$ 0.125  & 0.237 $\pm$ 0.105  \\ \hline
$\mathcal{U}(0.01,1.0)$, 50 & 0.125 $\pm$ 0.084  & 0.050 $\pm$ 0.028  & 0.364 $\pm$ 0.161  & 0.241 $\pm$ 0.144  \\ \medhline 
  & \multicolumn{2}{c?}{ \qdn, 0.01 } &  \multicolumn{2}{c|}{\qdn, 0.001 } \\ \hline
$\mathcal{U}(0.01,1.0)$, 10 & 0.048 $\pm$ 0.023  & 0.025 $\pm$ 0.024  & 0.100 $\pm$ 0.039  & 0.039 $\pm$ 0.034  \\ \hline
$\mathcal{U}(0.01,1.0)$, 50 & 0.074 $\pm$ 0.060  & 0.022 $\pm$ 0.031  & 0.084 $\pm$ 0.044  & 0.036 $\pm$ 0.019  \\ \hline 
  \end{tabular}
  \vspace*{.2cm}
  \caption{Repeating the uniform setting of Table \ref{tab:devs_non1}
    but with the extra constraint that each stable period be at least
    1000 time points.  The deviation rates are averages over 20
    sequences.  Deviation rates substantially improve (compared to
    Table \ref{tab:devs_non1}), because the stable periods are
    uniformly longer. }
\label{tab:devs_non_1000ticks}
\end{table}

\co{

\begin{table}[t] 
  \begin{tabular}{ |c|c|c|c|c|c|c|c|c|c| }     \hline
    deviation & \multicolumn{3}{c|}{\qun,  3 }
    & \multicolumn{3}{c|}{\qdn, 0.1} &  \multicolumn{3}{c|}{\qdn, 0.0001}  \\
    \cline{2-10}
    threshold $\rightarrow$ & 1.1& 1.5& 2 & 1.1 & 1.5 & 2 & 1.1 & 1.5 & 2   \\ \hline  
  [0.025, 0.25], 10 & 0.906  & 0.536  & 0.264 & 0.898  & 0.616  & 0.471  & 0.703  & 0.331  & 0.204  \\ \hline

  [0.025, 0.25], 50 & 0.902  & 0.551  & 0.288 & 0.928  & 0.723  & 0.607 & 0.622  & 0.180  & 0.123  \\ \hline
  Uniform, 10 & 0.810 & 0.369 & 0.152 & 0.624 & 0.198 & 0.105 & 0.632 & 0.263 & 0.100 \\ \hline
  Uniform, 50 & 0.816 & 0.372 & 0.162 & 0.634 & 0.226 & 0.131 & 0.611 & 0.259 & 0.098 \\ \hline
  & \multicolumn{3}{c|}{ static EMA, 0.1 } & \multicolumn{3}{c|}{ static EMA, 0.01 } &
  \multicolumn{3}{c|}{\qdn, 0.01 } \\ \hline
  [0.025, 0.25], 10 & 0.893  & 0.597  & 0.443 & 0.708  & 0.313  & 0.184  & 0.703  & 0.296  & 0.171  \\ \hline
  [0.025, 0.25], 50 & 0.921  & 0.697  & 0.565 & 0.758  & 0.340  & 0.177  & 0.752  & 0.329  & 0.169  \\ \hline
  Uniform, 10 & 0.620 & 0.199 & 0.102 & 0.364  & 0.092  & 0.049   & 0.338  & 0.057 & 0.023 \\ \hline
  Uniform, 50 & 0.643 & 0.225 & 0.127 & 0.394  & 0.110  & 0.058   & 0.355  & 0.074 & 0.032 \\ \hline
  \end{tabular}
  \vspace*{.2cm}
  \caption{Like in paper section \ref{}, 0.025 $\leftrightarrow$
    0.25, for several other params: queue with capacity 3 (since we
    use it, as default, in qdyal), qdyal with a few more learning
    rates, and also static, all with 1.1 deviation too, and also, not
    showing stds.}
\label{tab:devs_non3}
\end{table}

}

\subsection{Multiple Items, $\maxprob=0.1$}
\label{app:syn_multi}



\begin{figure}[t]
\begin{center}
  \centering
  \subfloat[loss vs binomial tail in \qd]{{\includegraphics[height=5cm,width=6.5cm]{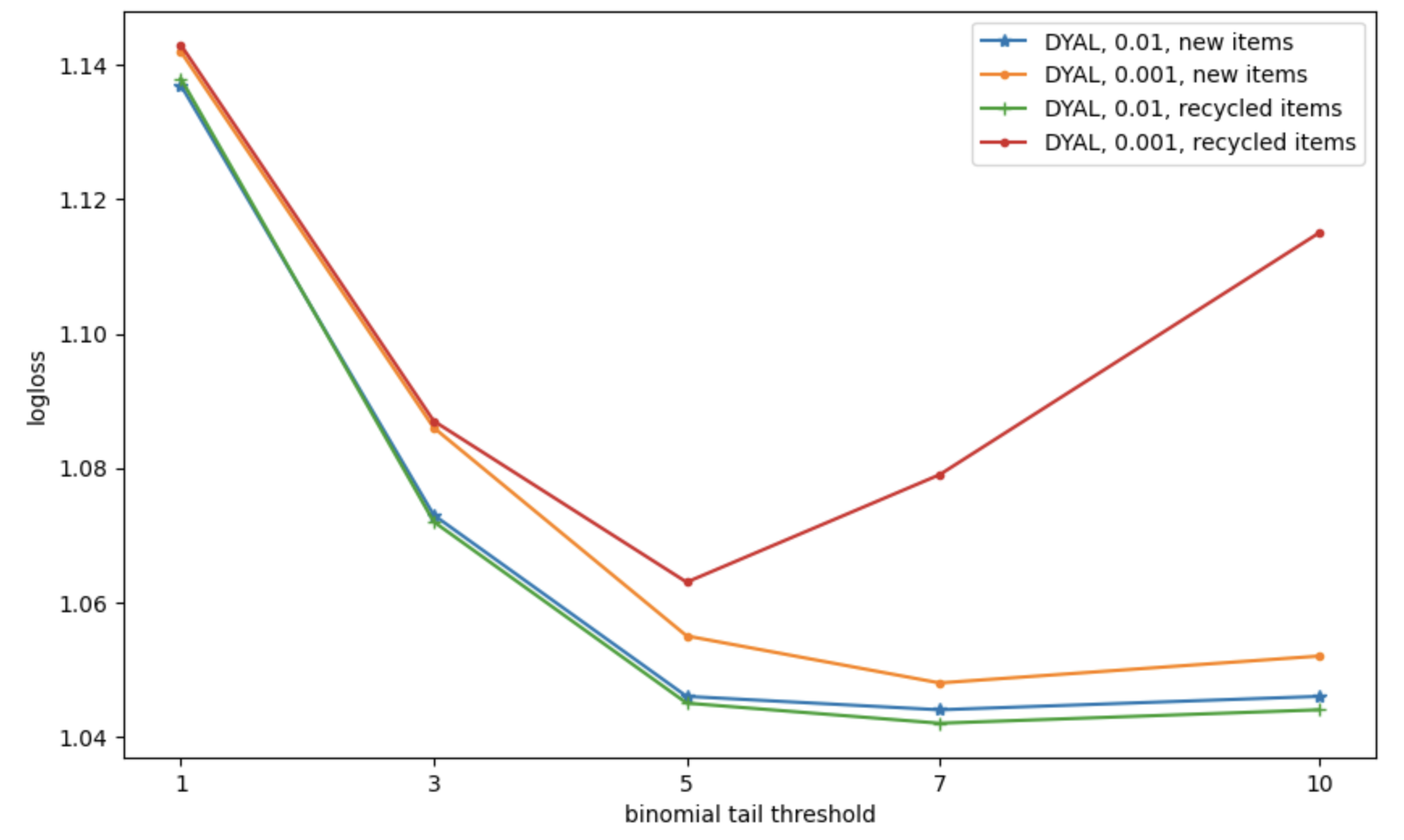}
  }}
  \subfloat[loss vs queue \qcap in \qd]{{\includegraphics[height=5cm,width=6.5cm]{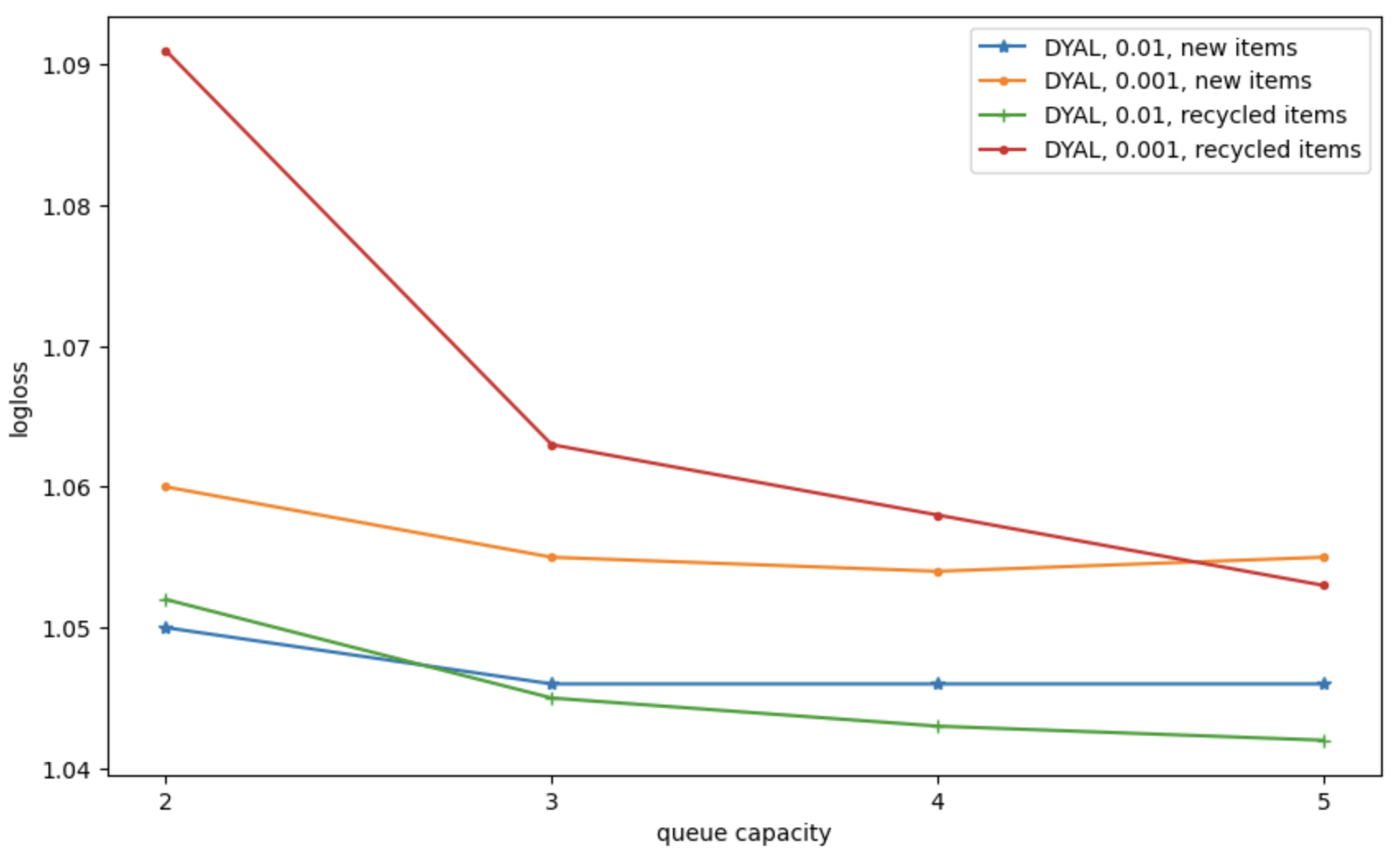}
  }}
\end{center}
\vspace{.2cm}
\caption{Sensitivity of \qd to parameters, in synthetic multi-item
  setting: (a) binomial tail threshold (controlling when to switch to
  queues) (b) the queue capacity \qcap. These are same 50 sequences as
  described in Table \ref{tab:devs_multi_non_v3}, under both new-item
  and recycle generation settings, and with $\minobs=50$). }
\label{fig:syn_binom_thrsh}
\end{figure}


Table \ref{tab:devs_multi_non2} presents multi-item synthetic experiments with
$\maxprob=0.1$ when generating the underlying true $\P$ (GenSD()).
Average number of different \sdn s, generating a sequence, in these
experiments is around 3 with $\minobs = 50$, and around 13 with
$\minobs = 10$.  We observe similar patterns to Table
\ref{tab:devs_multi_non_v3} in that \qd is less sensitive to the
choice of $\minlr$, and is competitive with the best of other EMA
variants. Compared to Table \ref{tab:devs_multi_non_v3}, with overall
lower \prs for the salient items and in a narrower range (0.01 to
0.1), the \logloss performances are worse, and static EMA with
$\lr=0.01$ is more competitive with \qdn.  Paired experiments and
counting the number of wins show that \qd with $\lrmin=0.01$ beats the
best of others convincingly, in Table \ref{tab:nwins_multi_non1},
while \qd with $\lrmin=0.001$ struggles compared to other EMA variants
with $\lr=0.01$.


\begin{table}[t]  \center
  \begin{tabular}{ |c?c|c|c?c|c|c?c| }     \hline
     & 1.5any & 1.5obs & logloss  & 1.5any & 1.5obs & logloss & opt. loss     \\ \thickhline   
new items $\downarrow$  & \multicolumn{3}{c?}{\qun, 5 } &  \multicolumn{3}{c?}{\qun, 10} &   \\ \hline
10, [0.01, 0.1]  & 1.00  & 0.46  & 3.02 & 1.00  & 0.40  & 3.05 & 2.84 $\pm$0.02  \\ \hline
50, [0.01, 0.1]  & 1.00  & 0.41  & 2.94 & 0.99  & 0.24  & 2.91 & 2.83 $\pm$0.04  \\ \medhline
  & \multicolumn{3}{c?}{static EMA, 0.01 } &  \multicolumn{3}{c?}{static EMA, 0.001} &   \\ \hline
10, [0.01, 0.1] & 1.00  & 0.28  & 3.02 & 1.00  & 0.75  & 3.68 & 2.84   \\ \hline
50, [0.01, 0.1] & 0.99  & 0.19  & 2.89 & 0.73  & 0.22  & 3.07 & 2.83  \\ \medhline
  & \multicolumn{3}{c?}{harmonic EMA, 0.01 } &  \multicolumn{3}{c?}{harmonic EMA, 0.001} &   \\ \hline
10, [0.01, 0.1]  & 1.00  & 0.28  & 3.02 & 1.00  & 0.75  & 3.70 & 2.84   \\ \hline
50, [0.01, 0.1]  & 0.99  & 0.19  & 2.89 & 0.71  & 0.22  & 3.07 & 2.83  \\ \medhline
   & \multicolumn{3}{c?}{\qdn, 0.01 } &  \multicolumn{3}{c?}{\qdn, 0.001} &   \\ \hline
10, [0.01, 0.1] & 1.00  & 0.28  & 3.01 & 0.98  & 0.27  & 3.11 & 2.84  \\ \hline
50, [0.01, 0.1]  & 1.00  & 0.20  & 2.88 & 0.73  & 0.07  & 2.88 & 2.83 \\ \thickhline  
recycle items $\downarrow$ & \multicolumn{3}{c?}{\qun, 5} &  \multicolumn{3}{c?}{\qun, 10} &   \\ \hline
10, [0.01, 0.1]  & 1.00  & 0.41  & 2.95 & 0.99  & 0.26  & 2.93 & 2.84 $\pm$0.02  \\ \hline
50, [0.01, 0.1]  & 1.00  & 0.40  & 2.94 & 0.99  & 0.21  & 2.89 & 2.84 $\pm$0.04  \\ \medhline 
  & \multicolumn{3}{c?}{static EMA, 0.01 } &  \multicolumn{3}{c?}{static EMA, 0.001} &   \\ \hline
10, [0.01, 0.1]   & 1.00  & 0.21  & 2.90 & 1.00  & 0.33  & 2.98 & 2.84  \\ \hline
50, [0.01, 0.1]   & 1.00  & 0.18  & 2.88 & 0.70  & 0.11  & 2.88 & 2.84  \\ \medhline 
  & \multicolumn{3}{c?}{harmonic EMA, 0.01 } &  \multicolumn{3}{c?}{harmonic EMA, 0.001} &   \\ \hline
10, [0.01, 0.1]   & 1.00  & 0.21  & 2.90 & 1.00  & 0.32  & 2.98 & 2.84   \\ \hline
50, [0.01, 0.1]   & 1.00  & 0.18  & 2.88 & 0.67  & 0.10  & 2.89 & 2.84  \\ \medhline
    & \multicolumn{3}{c?}{\qd, 0.01 } &  \multicolumn{3}{c?}{\qd, 0.001} &   \\ \hline
10, [0.01, 0.1]  & 1.00  & 0.22  & 2.90 & 1.00  & 0.30  & 2.95 & 2.84   \\ \hline
50, [0.01, 0.1]   & 1.00  & 0.18  & 2.88 & 0.65  & 0.08  & 2.87 & 2.84  \\ \hline
   \end{tabular}
  \vspace*{.2cm}
  \caption{Here, $\maxprob=0.1$ for GenSD() (instead of 1.0), and
    otherwise, as in Table \ref{tab:devs_multi_non_v3}: averages over
    50 sequences, about $\sim$10k length each, $\minobs$ of 10 and 50 (1st column
    of the table), uniform sampling of $\tp$ in [0.01, 0.1], and
    changing the \sd whenever {\em all items} are observed $\minobs$
    times. }
\label{tab:devs_multi_non2}
\end{table}


\begin{table}[t]  \center
  \begin{tabular}{ |c?c|c|c?c|c|c| }     \hline
\qd \vs $\ra$   & \qun, 10 & static, 0.01 & harmonic, 0.01 & \qun, 10 & static, 0.01 & harmonic, 0.01
    \\ \thickhline
 \qd, 0.01 $\downarrow$  & \multicolumn{3}{c?}{new items } & \multicolumn{3}{c|}{recycle items }   \\ \hline
 10, 0.01, [0.01, 0.10] & 0, 50 & 0, 50  & 0, 50 & 0, 50  & 0, 50 & 0, 50  \\ \hline
 50, 0.01, [0.01, 0.10] & 0, 50 & 0, 50 & 0, 50 & 0, 50  & 0, 50 & 0, 50 \\ \hline
 \thickhline 
 \qd, 0.001 $\downarrow$  & \multicolumn{3}{c?}{new items } & \multicolumn{3}{c|}{recycle items }   \\ \hline
 10, 0.01, [0.01, 0.10] & 50, 0  & 50, 0  & 50, 0 & 50, 0 & 50, 0 & 50, 0  \\ \hline
 50, 0.01, [0.01, 0.10] & 0, 50  & 43, 7  & 43, 7 & 0, 50  & 10, 40 & 7, 43 \\
 \thickhline 
   \end{tabular}
  \vspace*{.2cm}
  \caption{Number of losses and wins in 'paired' experiments for the setting of
    Table \ref{tab:devs_multi_non2}: the 2nd number in each pair is
    the number of wins of \qd (when \logloss lower), on the same set
    of 50 sequences (each $\sim$10k). For instance, in top left cell, \qu
    with \qcap=10 loses to \qd when $\minobs=10$, on all 50 sequences.
    \qd with $\minlr=0.01$ beats all others, but the results are mixed
    with $\minlr=0.001$. }
\label{tab:nwins_multi_non1}
\end{table}

\co{
  Methods: queues is with capacity 10, static is static EMA
  with fixed learning rate of 0.01, and harmonic is harmonic EMA with
  harmonic-decay with $\minlr$ of 0.01.
}


\co{ 

  \begin{table}[t] 
  \begin{tabular}{ |c|c|c|c| }     \hline
 \qd, 0.001   vs $\rightarrow$ & queues & static EMA, 0.01 & harmonic EMA, 0.01 \\ \hline
 10, 0.01, [0.01, 0.10] & 3, 50  & 50, 0  & 50, 0   \\ \hline
 50, 0.01, [0.01, 0.10] & 0, 50  & 0, 50  & 0, 50   \\ \hline
 \qd, 0.01   vs $\rightarrow$ & queues & static EMA, 0.01 & harmonic EMA, 0.01 \\ \hline
 10, 0.01, [0.01, 0.10] & 0, 50 & 32, 18  & 27, 23   \\ \hline
 50, 0.01, [0.01, 0.10] & 0, 50 & 19, 31 & 12, 38 \\ \hline
   \end{tabular}
  \vspace*{.2cm}
  \caption{Number of wins in 'paired' experiments.  Same set of 50
    sequences, each ~10k. Compare on average logloss on each of the 50 sequences,
    report number of wins (items are not constrained to be new).}
\label{tab:nwins_multi_non1}
\end{table}

}



\section{Further Material on Real-World Experiments}
\label{app:real}

Descriptions of the methods used for concept composition and
interpretation within the simplified \exped system.

\subsection{Simplified Interpretation}


Each interpretation episode begins at the character level. We do a
``bottom-up'' randomized search, where we repeatedly pick a primitive
concept and invoke and match its holonyms (bigrams that contain it) to
the input line, near where the invoking concept has matched, with the
constraint that the invoking concept should be matched (covered) too,
and repeat until there are no available holonyms or no holonyms
match. We only do exact match here, \ie no approximate matching ("ab"
matches if both "a" and "b" are present, with "b" immediately
following "a"). We keep the top two concepts along a bottom-up path,
the top-matching concepts. We repeat such bottom to top searches and
matches until all characters (positions) in the input line have been
covered, \ie have one or more matching concepts. From the set of
matching concepts, the process generates candidate interpretations and
selects a final interpretation: a candidate interpretation is a
sequence of top-matching concepts (one of the two that are end of each
path), covering all the positions (characters) in the input line, and
without any overlap, and where two new non-primitive concepts, \ie
below a frequency (observation-count) threshold, cannot be
adjacent. Often there are several candidate interpretations, and we
pick one with minimum number of concepts in it, breaking ties
randomly.

\subsection{Simplified Composition}
\label{app:simpc}

The simplified concept generation process is as follows: in every
episode, once the interpretation process is done, one obtains a final
interpretation, that is a sequences of concepts (a mix of unigrams,
bigrams, etc) covering all the characters in the line input to the
episode. Note that in the beginning of the entire learning process,
when no new concepts are generated, the final interpretation is simply
the sequences of primitive concepts corresponding to the
characters. Interpretation is easy at the character level (does not
involve search and matching).

Adjacent concepts pairs in an interpretation are processed in the
following manner to possibly generate new concepts (compositions or
holonyms): Skip the pair with some probability $p_g$ (0.9 in our
experiments). Otherwise, again skip the pair if either does not meet
the minimum observed-count requirement, where we experimented with
100, 500, and 2000. When a pair meets the count threshold, generate
the composition if it does not already exist (a hashmap look up).
Therefore, if "a" and "b" are two concepts, with "b" often occurring
after "a" in the input (lines or interpretations), with high
probability at some point the composition concept "ab" is generated.
Note that the generation probability $p_g$ as well as the threshold on
frequency affect how fast new concepts (items) are generated, or the
degree of internal non-stationarity: the higher the count threshold
the lower the generation rate of new concepts, and the more time the
system has for, in effect, evaluating and integrating a new concept.


\subsection{Discussion: Separation of Prediction from Concept Use}

Although we could use the prediction weights to influence the
generation and selection of candidate interpretations, we use the
above simple interpretation and concept generation methods to
completely divorce prediction weight learning from the trajectory of
concept generation and use, and to simulate internal
non-stationarity. We note that the bigrams and the higher n-grams that
are generated and used have good quality.  For instance, when we
define a split location, the boundary between two adjacent concepts as
{\em good} (good split) if it coincides with a blank space (or other
punctuation), and bad otherwise (splits within a word), we observe
that the ratio of bad to good splits substantially
improves over time (\eg from around 75\% to below 30\%), as larger
concepts are generated and used via the above simple processes,
suggesting that the larger n-grams learned increasingly correspond to
words, phrases, etc. And manual examination of the learned ngrams
supports this finding.

Learning and using the predictions should improve the interpretation
process, for instance approximate matching and use of predictions are
important when there is substantial noise, such as typos in the input
text, requiring inference based on larger context to fix.  However, in
the experiments of this paper, we have separated the processes to more
easily compare different prediction (weight learning) techniques.

\subsection{Further Evidence of Stationarity \vs Non-Stationarity}
\label{sec:evid}

\begin{figure}[thb]
\begin{center}
  \centering
\hspace*{-0.5cm}  \subfloat[\expedn, remain at character level.]{{
      \includegraphics[height=5cm,width=4.5cm]{
         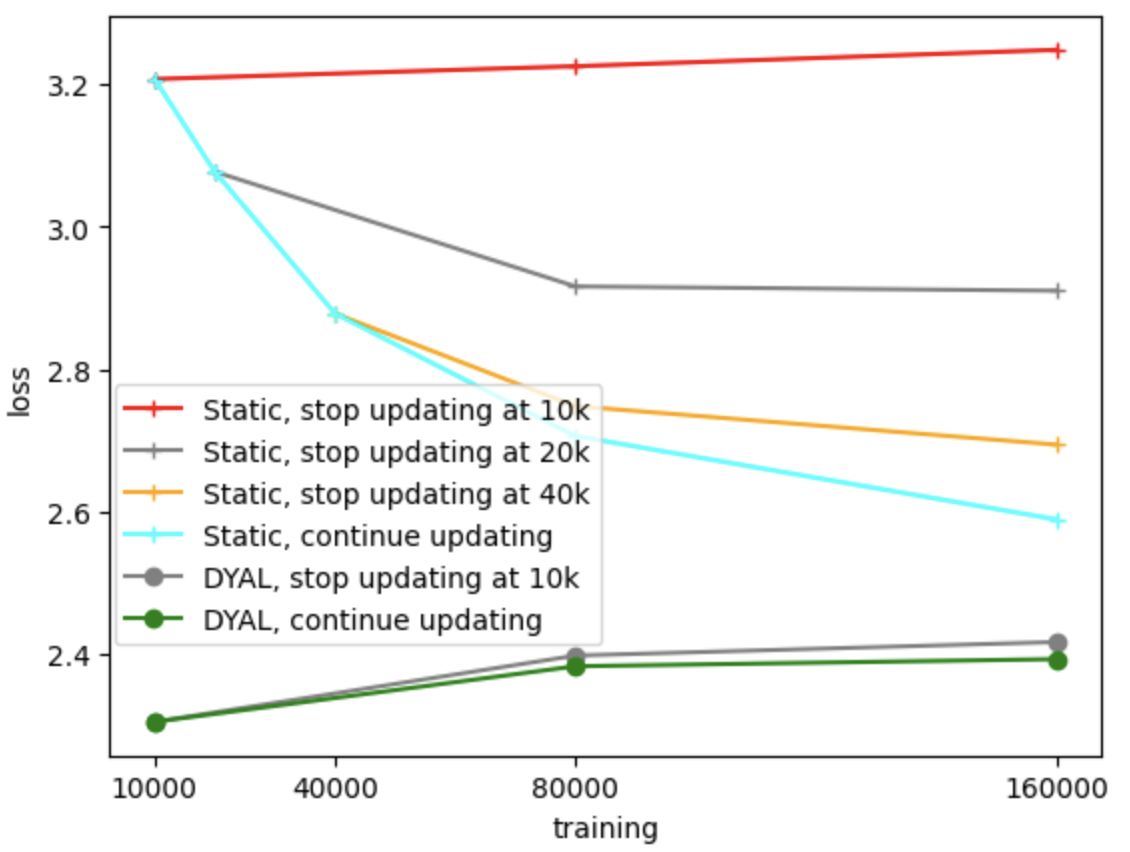}
  }}
\hspace*{.1cm}  \subfloat[52-scientists (Unix sequences).]{{
      \includegraphics[height=5cm,width=4.5cm]{
        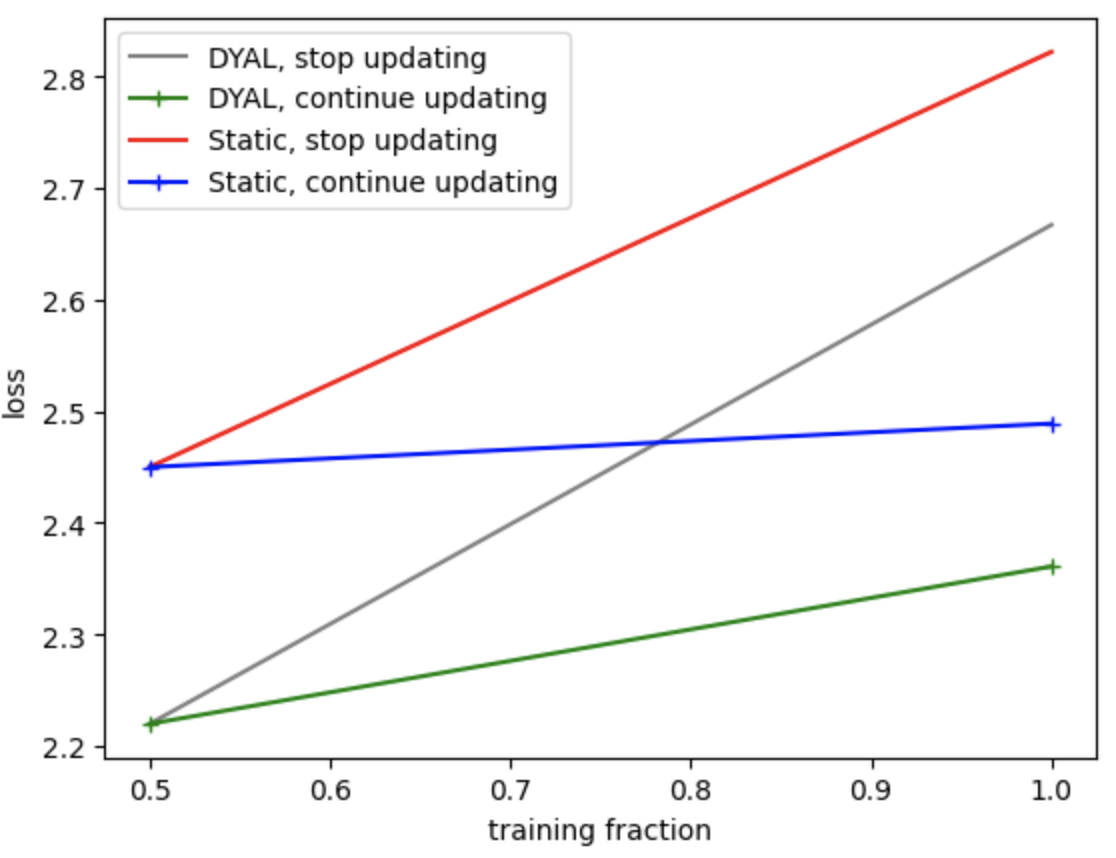}
  }}
  \subfloat[Masquerade (Unix  sequences).]{{
      \includegraphics[height=5cm,width=4.5cm]{
        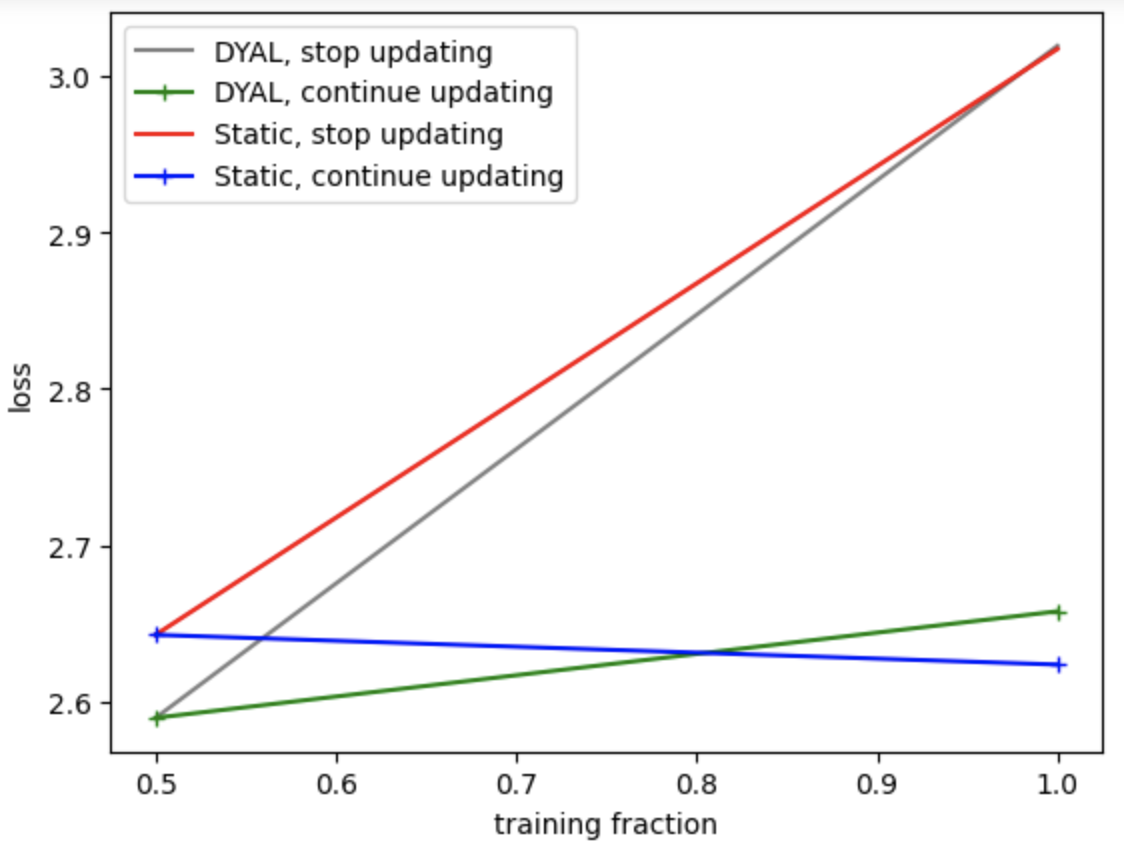}
  }}
\end{center}
\vspace{.2cm}
\caption{Performances (\loglossn) when we stop the learning at certain
  time $t$ \vs continued updating (continuing learning). (a) On the stationary
  character-level prediction. (b) and (c) on Unix
  sequences: Continuing the updates \vs stopping the updates at half
  the sequence length, and measuring performance at that point and
  once the sequence finishes (averaged \logloss over all the sequences
  in each data source). When we stop the updates, the performance
  takes a substantial hit compared to continued updating, whether using \qd or static EMA, on Unix
  sources (indicating non-stationarity), but not on the character-level \exped (indicating stationarity).  }
\label{fig:nonstat_evid}
\end{figure}

\fig \ref{fig:nonstat_evid} shows \logloss performances at two or more
time points $t$, comparing performance of continuing to update
(continued learning), \vs stopping of the updates (freezing the
current weights for prediction on the rest of the sequence).

\fig \ref{fig:nonstat_evid}(a) is on \exped at character level, the
same set up of \sec \ref{sec:expd_chars}, except that here we may stop
the learning at an early point. For static (EMA) and \qdn, using
$\lr=0.001$, we stop the updates at time 10k and we compare this to
the regime of continuing the updates. We report the loss at 10k and at
80k and 160k.  For static, we also included plots when stopping the
updates at 20k and 40k.  We observe the performance when stopping the
updates at say 10k, at 80k or 160k is close (often better) than what
it was at 10k, indicating the input distribution has not changed. The
plots should be contrasted against \fig \ref{fig:nonstat_evid}(b-c)
described next. We suspect the reason the performance of static at
times improves, even though we have stopped the updates, is that due
to its low fixed learning rate, and that we report the average over
all of the times till that point, the effect of latest learning is not
fully reflected, and as we allow more time (report \logloss at higher
$t$), the result is more reflective of more recent and better
performance. For \qdn, its performance converges quickly (by $t$=10k)
on this stationary data and the performance only slightly degrades
with higher $t$ as some new items are observed (similar to \fig
\ref{fig:loss_vs_time_expedition}).

For \fig \ref{fig:nonstat_evid}(b) and \ref{fig:nonstat_evid}(c), we
are comparing continued learning \vs stopping at half the sequence
length, on 52-scientists and Masquerade, using static EMA and \qd with
$\lr$ of $0.05$ (similar setting and results to \sec \ref{sec:unix3}). For
Masquerade, all sequences are 5k long, thus we report (average)
performance at $t=$2.5k (the average log-loss performance up to time
$t=$2.5k), and then either stop or continue the learning till the end,
\ie $t=$5k, and report performance at 5k as well for both
situations. As in previous results, the reported losses (at different
time snapshots) are the result of averaging over all the 50 sequences
(users) of Masquerade.  For 52-scientists, sequences have different
lengths, so the times at which we report performances (use for
averaging) are sequence dependent. For instance, for a sequence of
total length 200, we use the performance at $t=100$, then also use the
performances at $t=200$ for the two cases of continued updates and
stopping the updates at $t=100$ (and average each over the 52
sequences). We observe that in all cases, whether for \qd or static
EMA, stopping the learning leads to substantial increase in loss
compared to continuing the updates (learning), and worse than the
average performance at half the length, the latter observation
implying presence of (external) non-stationarity. We have tried
several learning rates (not just the best performing rate of 0.05),
and obtained similar results.

Another way to see evidence of non-stationarity is to look at the
max-rate, the maximum of the learning-rate, in the $\lrmap$ of \qdn,
as a function of time.  Whenever the max-rate goes up, it is evidence
that an item is changing \prn, for instance a new item is being
added. Furthermore, here for each sequence, we concatenate it with
itself $k$ times, $k=10$ or $k=3$ in \fig \ref{fig:concats}. For
example, if the original sequence has the three items [A, C] (\ie
$\ott{o}{2}=$\itm{C}), then the 3x replication (self-concatenation) is
[A,C,A,C,A,C]. If the start of a sequence has a distribution
substantially different from the ending then, when concatenated, we
should repeatedly see max-rate jump up. We looked at 10s of such
plots. The max-rate repeatedly go up in plots of all sequences from
the 52-scientists (whether with default parameters or with $\lrmin$ of
0.05), and all except 1 of the 50 Masquerade sequences, and similar
pattern on all except the shortest handful of sequences from
104-\exped. These short ones are around a 100 time points long, and in
the first pass, the predictor learns the distribution and does not
need to change it substantially. \fig \ref{fig:concats} shows max-rate
on 6 self-concatenated sequences, one example where the max-rate does
not change much after the first pass, the other 5 where we see
max-rate jumping up repeatedly. These sequences do not necessarily
have many (near 100) unique salient items (occurring a few times),
implying that in some, there exist salient items with high proportion
that are replaced with other such over time. For instance, there are
13 sequences in the 52-scientists with number of items seen 3 or more
times below 30. Replicating the longer sequences leads to too many ups
in the plots (spikes) to be discernable, such as the right example
plot from 52-scientists.\footnote{Note also that the number of times
max-rate goes up is a subset of the times that an item changes
substantially (when its \pr and $\lr$ are set according to the queue),
\ie an item's $\lr$ may go up but that event may not change the
maximum.}

As one concrete example, in one sequence from the 52-scientists, with
length 1490, the number of increases in max-rate was 10 on the
original sequence, and then 7 increases in all subsequent
replications. The times of increase, the command (item) observed, and
its rate (changing the maximum rate, and the EMA \pr in the first pass
were: [(15, 'quit', '0.11', '0.12'), (28, 'more', '0.20', '0.25'),
  (43, '/bin/pdplk', '0.14', '0.17'), (59, '/bin/pdp60', '0.07',
  '0.08'), (105, 'pdp60', '0.12', '0.14'), (170, 'mv', '0.20',
  '0.25'), (230, 'rm', '0.07', '0.07'), (607, 'users', '0.33',
  '0.40'), (1408, 'rlogin', '0.06', '0.12'), (1409, 'quit', '0.08',
  '0.17')], thus at time t=50, the item 'quit' gets the learning rate
of 0.11 (increasing the max-rate) and obtains \pr of 0.12. The maximum rate
before the increase is often 0.05 (the minimum rate in these
experiments).  In the 2nd pass, and all subsequent passes, the 7
increase points are: [(1492, 'mail', '0.33', '0.33'), (1519, 'more',
  '0.17', '0.37'), (1601, 'pdp60', '0.07', '0.15'), (1663, 'mv',
  '0.12', '0.29'), (2097, 'users', '0.33', '0.40'), (2898, 'rlogin',
  '0.06', '0.12'), (2899, 'quit', '0.08', '0.17')].








\begin{figure}[htb]
 \vspace*{-1cm}
\begin{center}
\hspace*{-0.75cm}  \subfloat[\expedn, sequence 5, original length of 127.]{{
      \includegraphics[height=5cm,width=7cm]{
        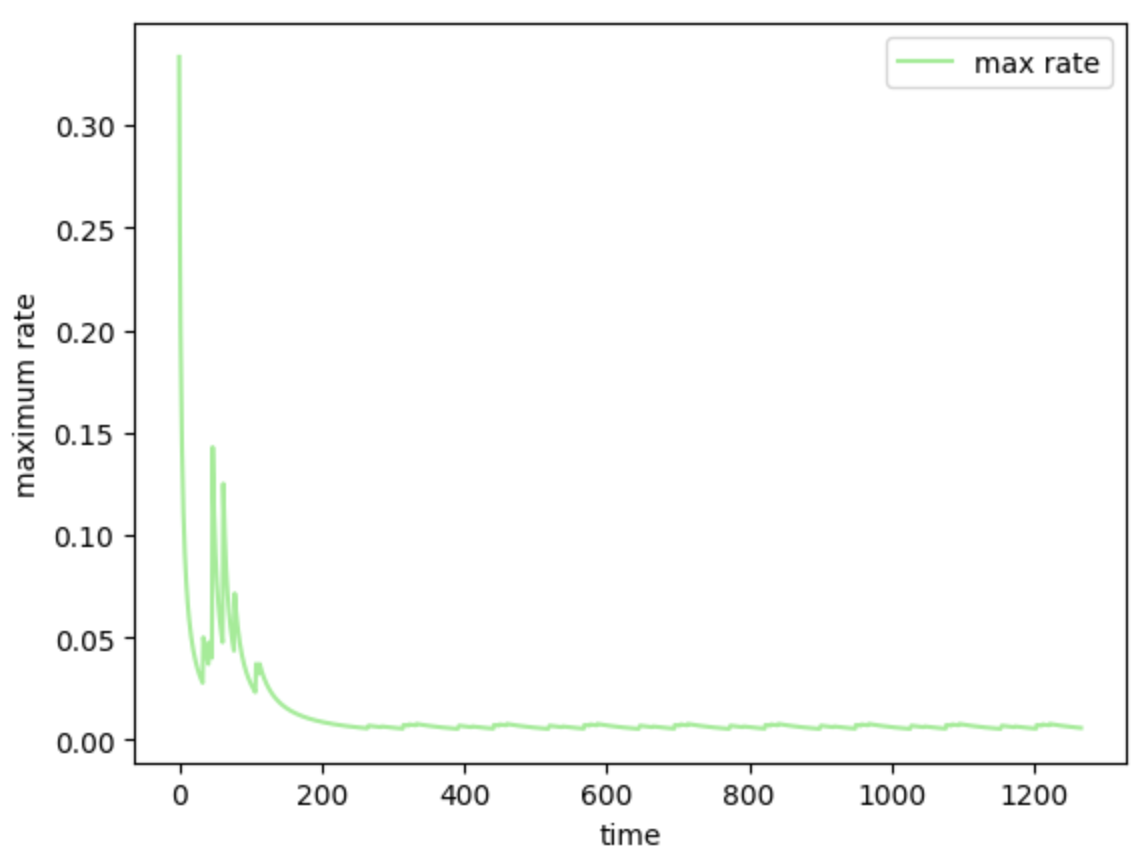}
  }}
  \subfloat[\expedn, sequence 36, original length 684. ]{{
      \includegraphics[height=5cm,width=7cm]{
        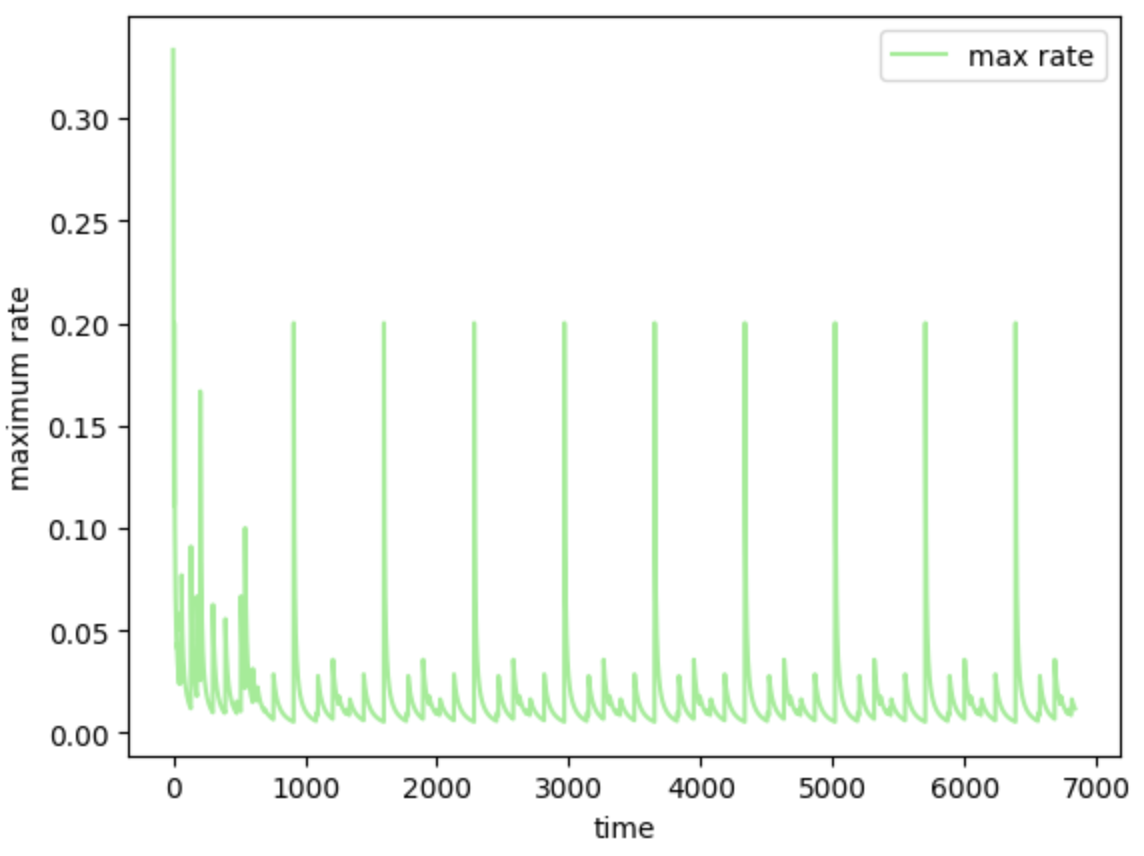}
  }} \\
\hspace*{-0.75cm}  \subfloat[52-scientists, shortest sequence.]{{
      \includegraphics[height=5cm,width=7cm]{
        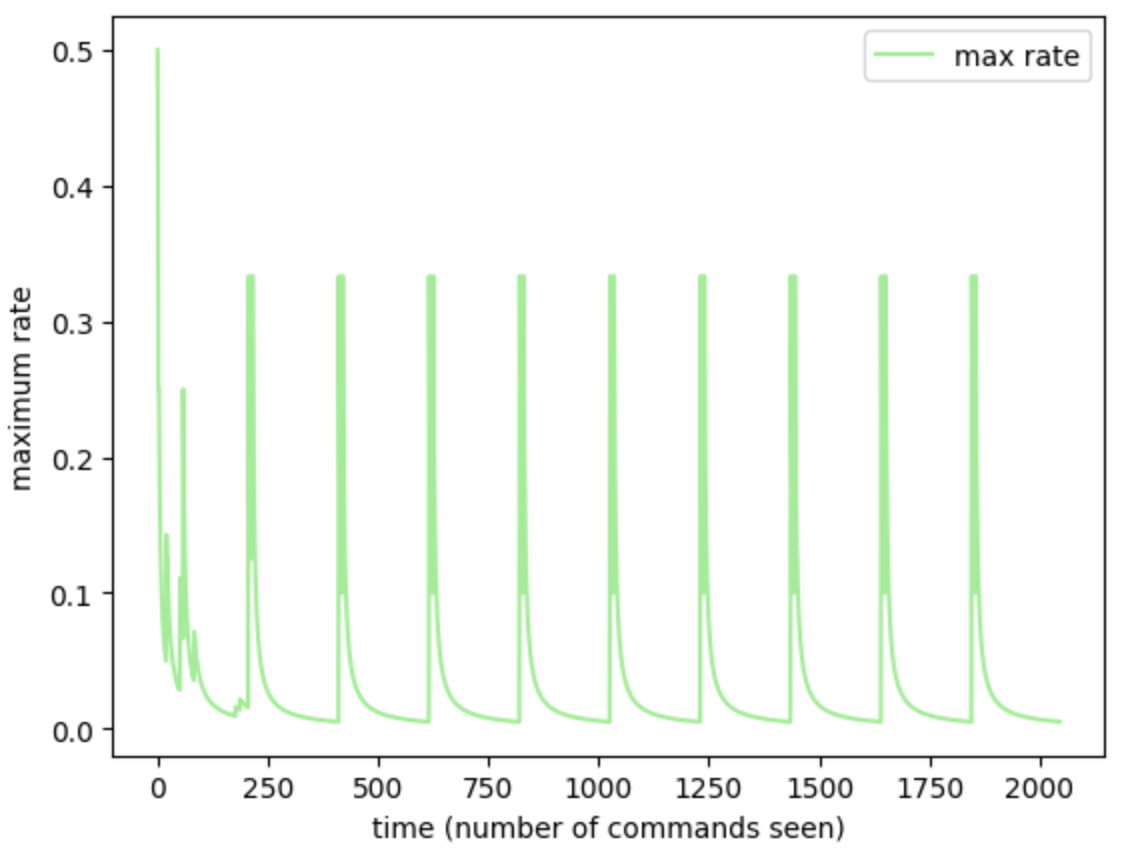}
  }}
  \subfloat[52-scientists, sequence 20. ]{{
      \includegraphics[height=5cm,width=7cm]{
        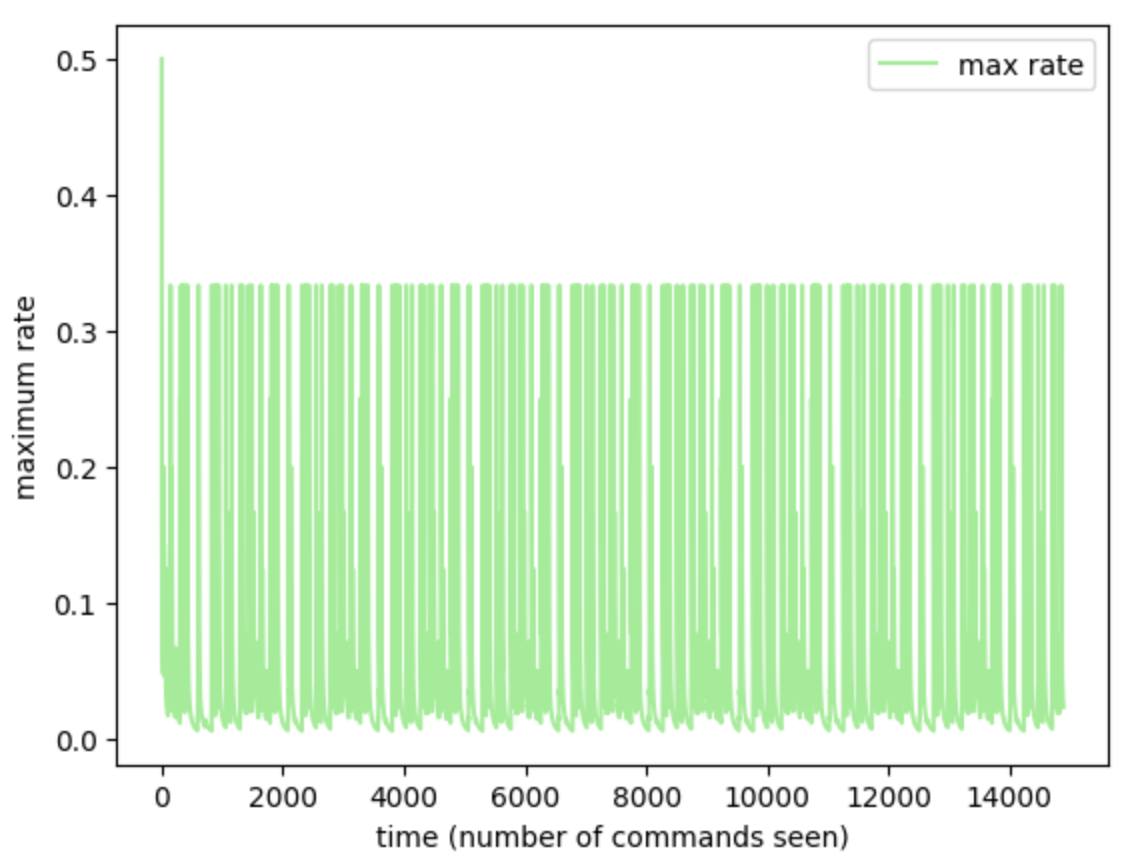}
  }} \\
\hspace*{-0.75cm}  \subfloat[Masquerade, 3x.]{{
      \includegraphics[height=5cm,width=7cm]{
        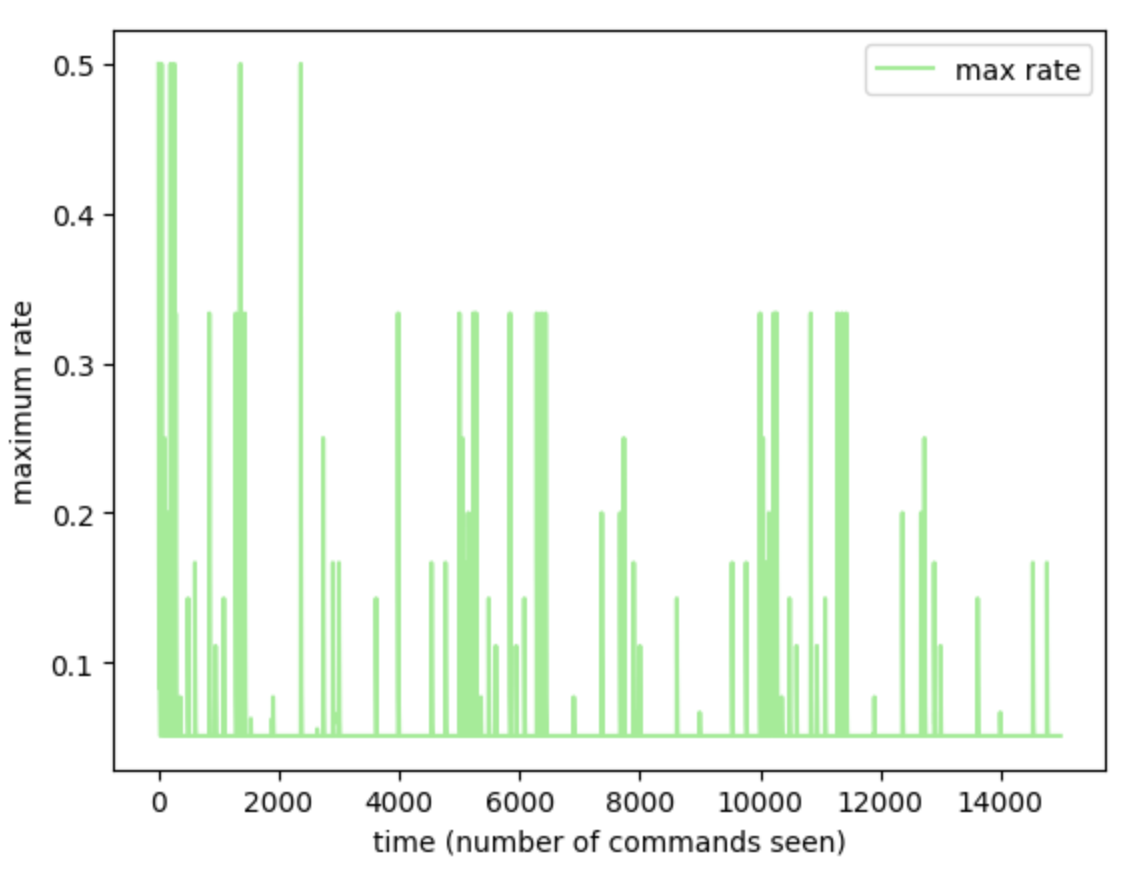}
  }}
  \subfloat[Masquerade, 3x. ]{{
      \includegraphics[height=5cm,width=7cm]{
        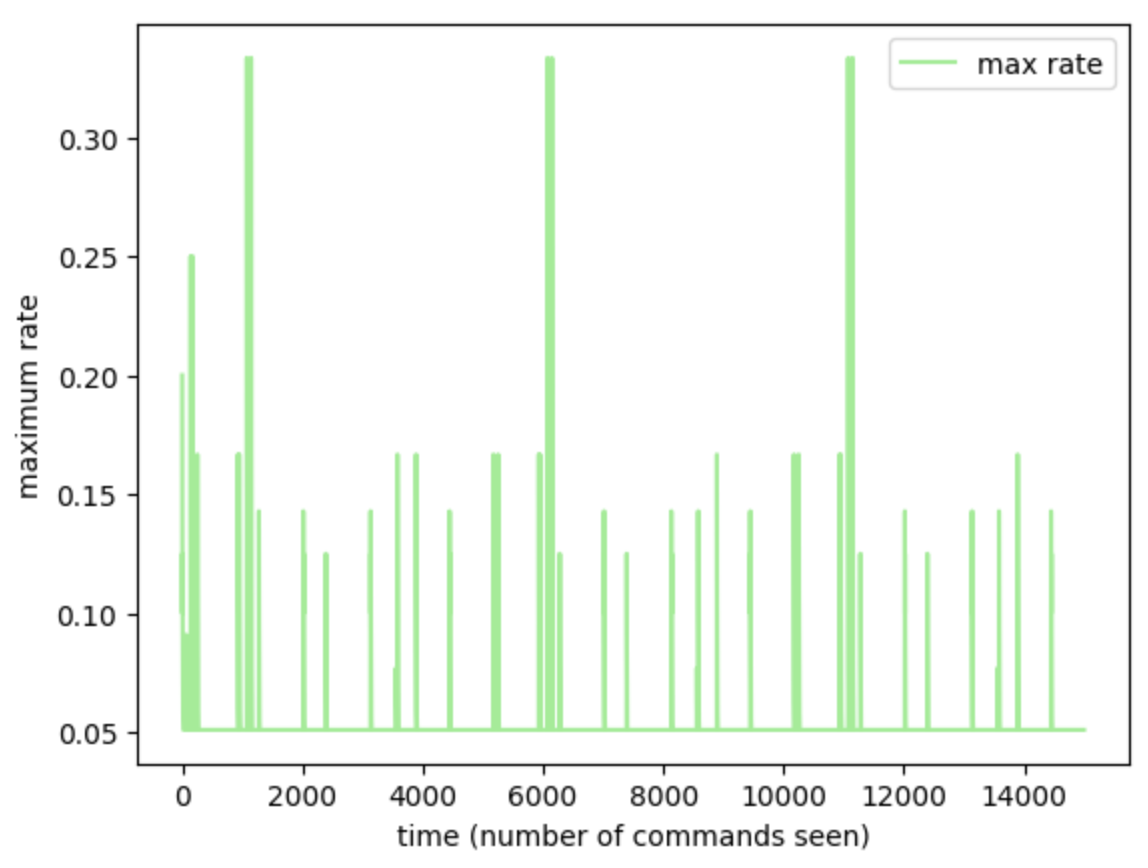}
  }}
\end{center}
\vspace{.0cm}
\caption{The maximum learning rate within \qd (max over the entries in
  $\lrmap$) as a function of time, on two sequences each from 
  104-\expedn, 52-scientists, and Masquerade, where each sequence is
  concatenated with itself 10 times, 3x for Masquerade, thus the length
  of the original sequence is 127 in top left, but after 10x
  concatenation, it is 1270 ('sequence 5' means the 5th shortest
  sequence). We observe that the max-rate repeatedly jumps up,
  exhibiting pulsing or spiking patterns, indicating change, except
  for a handful of very short sequences of the 104-\exped sequences,
  and one sequence with only a few unique items from Masquerade.}
\label{fig:concats}
\end{figure}


\end{document}